\documentclass{article}
\usepackage[utf8]{inputenc}
\usepackage[nonatbib, preprint]{neurips_2021}

\usepackage[numbers]{natbib}


\usepackage{hyperref}       
\usepackage{url}            
\usepackage{booktabs}       
\usepackage{amsfonts}       
\usepackage{nicefrac}       
\usepackage{microtype}      
\usepackage{color}          

\usepackage{graphicx} 
\usepackage{amssymb}
\usepackage{amsmath}
\usepackage{amsthm}
\usepackage{mathtools}
\usepackage{bbm}

\usepackage{stmaryrd}
\usepackage{enumitem}
\usepackage{multirow}
\usepackage{algorithm}
\usepackage{algorithmic}
\usepackage{tabu}
\usepackage{tikz}
\usetikzlibrary{shapes,backgrounds}
\usepackage{url}
\usepackage{xspace}
\usepackage{thmtools, thm-restate}
\usepackage{diagbox}
\usepackage{xspace}
\usepackage{graphicx} 
\usepackage{caption}
\usepackage{subcaption}


\theoremstyle{definition}
\declaretheorem{definition}
\theoremstyle{plain}
\declaretheorem{lemma}
\declaretheorem{theorem}

\newcommand{\NDCGk}{\operatorname{NDCG}\!\mathit{@k}}
\newcommand{\Pk}{\operatorname{P@k}}

\makeatletter
\DeclareRobustCommand\onedot{\futurelet\@let@token\@onedot}
\def\@onedot{\ifx\@let@token.\else.\null\fi\xspace}

\def\iid{{i.i.d}\onedot}
\def\eg{{e.g}\onedot} 
\def\ie{{i.e}\onedot} 
 
\def\etc{{etc}\onedot}

\makeatother

\newcommand{\myparagraph}[1]{\noindent\textbf{#1.}\ }

\graphicspath{ {images/} }

\title{Fairness Through Regularization \\ for Learning to Rank}

\author{%
  Nikola Konstantinov \\
  IST Austria \\
  \texttt{nikola.konstantinov@ist.ac.at} \\
  \And
  Christoph H. Lampert\\
  IST Austria \\
  \texttt{chl@ist.ac.at}\\
}

\begin{document}

\maketitle

\allowdisplaybreaks
\setlength{\abovedisplayskip}{4pt plus 1.0pt minus 2.0pt}
\setlength{\belowdisplayskip}{3pt plus 1.0pt minus 2.0pt}
\setlength{\abovedisplayshortskip}{0.0pt plus 2.0pt}
\setlength{\belowdisplayshortskip}{3.0pt plus 2.0pt minus 2.0pt}
\setlength{\textfloatsep}{8pt plus 1.0pt minus 2.0pt}
\setlength{\dbltextfloatsep}{9pt plus 1.0pt minus 2.0pt}

\begin{abstract}
Given the abundance of applications of ranking in recent years, 
addressing fairness concerns around automated ranking systems 
becomes necessary for increasing the trust among end-users. 
Previous work on fair ranking has mostly focused on application-specific 
fairness notions, often tailored to online advertising, 
and it rarely considers learning as part of the process. In this work, we show how to transfer numerous fairness notions 
from binary classification to a learning to rank setting. 
Our formalism allows us to design methods for incorporating 
fairness objectives with provable generalization guarantees. 
An extensive experimental evaluation shows that our method can 
improve ranking fairness substantially with no or only little 
loss of model quality.
\end{abstract}

\section{Introduction}
\label{sec:introduction}

Ranking problems are abundant in many contemporary subfields of machine learning and artificial intelligence, including web search, question answering, candidate/reviewer allocation, recommender systems and bid phrase suggestions~\citep{manning2008introduction}. Decisions taken by such ranking systems affect our everyday life and this naturally leads 
to concerns about the fairness of ranking algorithms.

Indeed, ranking systems are typically designed to maximize utility and return the results most likely correct 
for each query~\cite{robertson1977probability}.
This can have potentially harmful down-stream effects. 
For example, in 2015 Google became the target of heavy 
criticism after news reports that when searching for "CEO" 
in Google's image search, the first image of a women appeared 
only in the twelfth row, requiring two page scrolls to reach, 
and it actually did not show a real person but a Barbie doll~\cite{ceo2}.
Similar problems still exist in other ranking applications, 
such as product recommendations or online dating. 

Potentially biased or otherwise undesirable results are particularly 
problematic in the \emph{learning to rank (LTR)} 
setting~\cite{liu2011learning, mitra2018introduction}, where 
a machine learning model is trained to predict the relevances 
of the items for any query at test time.
Training data for these systems is typically obtained from users 
interacting with another ranking system. Therefore, a biased 
selection of items can lead to disadvantageous winner-takes-it-all 
and rich-get-richer dynamics~\cite{tabibian2020design, tsintzou2018bias}.

A number of ranking-related fairness notions were proposed to 
make ranking systems more fair~\cite{biega2018equity, singh2018fairness, zehlike2017fa}. 
However, these were tailored to specific applications, such as 
online advertising. Some are also not well suited to the 
LTR situation, because their associated algorithms 
work by directly manipulating the order of returned items for a 
given query. 
In contrast, in the LTR setting one would hope that
the system \emph{learns to be fair}, such that a manipulation 
of the predicted relevance scores or the order of items are not 
necessary.
This latter aspect brings up the problem of generalization for 
fairness: a ranking method could appear fair on the training data 
but turn out unfair at prediction time.

In this paper, we address these challenges and develop 
fairness-aware algorithms for LTR that 
provably generalize.
To this end, we exploit connections between ranking and classification. 
Indeed, in contrast to fair learning in information retrieval, fair classification is a widely studied area where both the algorithmic and learning theoretic challenges of learning fair models are rather well understood \cite{barocas-hardt-narayanan, cotter2019training, woodworth2017learning}. 
Importantly, many different notions of fairness have been proposed, 
which describe different properties that are desirable in various applications \cite{mehrabi2019survey}. 

We provide a formalism for translating such 
well-established and well-understood fairness notions from 
classification to ranking by phrasing the LTR problem as a binary classification problem for every query-item pair. 
We exemplify our approach on three fairness notions that emerge 
naturally in the ranking setting and correspond to popular concepts 
in classification: 
\emph{demographic parity}, \emph{equalized odds}, and \emph{equality of opportunity}.
We then formulate corresponding \emph{fairness regularization terms},
which can be incorporated with minor overhead into many standard 
LTR algorithms.

Besides its flexibility, another advantage of our approach is that it 
makes the task of fair LTR readily amendable to a learning-theoretic analysis. 
Specifically, we show generalization bounds for the three considered 
fairness notions, using a chromatic concentration bound for sums of 
dependent random variables~\cite{janson2004large} to overcome the 
challenge that training samples for the same query are not independent.

Finally, we demonstrate the practical usefulness of our method 
for training fair models. Experiments on two ranking datasets 
confirm that training with our regularizers indeed 
yields models with greatly improved fairness at prediction 
time, often with little to no reduction of ranking quality. In contrast, prior fair ranking methods are unable to consistently improve our fairness notions.

\vspace{-0.15cm}
\section{Related work}
\label{sec:related_work}
\vspace{-0.15cm}
\myparagraph{Fairness in classification} Algorithmic fairness is well explored in the context of binary classification, see~\cite{barocas-hardt-narayanan} for a detailed introduction. 
In this work we show how to extend three popular group fairness notions -- demographic parity, equalized odds and equality of opportunity \cite{hardt2016equality} -- to the ranking setting. In principle, our formalism is applicable to other group fairness notions, as well as individual \cite{dwork2012fairness} and causal \cite{kusner2017counterfactual} fairness notions. 
We defer the exploration of these to future work. On the methodological level, we opt for a highly adaptive and scalable regularization approach, inspired by successful regularization methods for fair classification \cite{kamishima2011fairness, kamishima2012fairness, zafar2017fairness}. More generally any other fair classification technique, \eg \cite{baharlouei2019renyi, cho2020fair, kim2020fact, rezaei2020fairness, tan2020learning, zhang2018mitigating}, may be applicable to our framework.

\myparagraph{Fairness in ranking}
Fairness in ranking has so far received less attention 
than fairness in classification. For an overview of recent 
techniques, see~\cite{castillo2019fairness}.
Most existing works concentrate on application-specific 
(single-purpose) fairness notions.
One popular concept is \emph{fairness of exposure}~\cite{biega2018equity, gorantla2020ranking, morik2020controlling, sapiezynski2019quantifying, singh2018fairness, singh2019policy, yadav2019fair, zehlike2020reducing}.
It states that exposure/attention received by a group of items or an individual item should be proportional to its utility. 
Other works aim at ensuring sufficient representation of items 
from different groups in the top-$k$ positions of a 
ranking~\cite{celis2018ranking, celis2020interventions, geyik2019fairness, yang2017measuring, zehlike2017fa}. 
Besides group fairness also fair treatment of individuals 
has been studied in the context of ranking~\cite{bower2021individually,yang2019balanced}.

Among papers considering broader notions of fairness in 
ranking, \cite{asudeh2019designing} designs learning algorithms 
that can work with any fairness oracle. 
The framework however is limited to linear classifiers and 
the authors do not propose specific fairness notions. 
\cite{singh2017equality} introduces a number of fair ranking 
definitions and draws parallels to equalized odds and 
demographic parity from fair classification. 
However, it does not provide a formal framework from studying 
the correspondence between the two setups, and does not study 
how to optimize these measures in a learning to rank context. 
Moreover, its fairness measures concern fair rankings for a fixed 
query, which also holds for the causal fairness notion in~\cite{wu2018discrimination}.
In contrast, our notion of ranking fairness is amortized across 
queries, similarly to~\cite{biega2018equity}. 

Another related line of work is the one of pairwise 
fairness~\cite{beutel2019fairness, kuhlman2019fare, narasimhan2020pairwise}.
These works also describe ranking as a classification task in 
order to define fairness.
However, the considered task is the proxy commonly
employed by pairwise ranking methods, namely predicting which one of 
two items is more relevant than the other for a given query. 
In contrast, we define fairness in direct relation to the downstream 
task of deciding whether to return an item as relevant for a query or not. 
\cite{kallus2019fairness, vogel2020learning} introduce fairness 
notions for bipartite ranking. These are also based on pairwise 
comparisons between points, but aim at learning fair continuous 
risks scores.

Overall, the main difference of our work to previous ones on 
ranking fairness is that we do not introduce a new fairness 
notion or algorithm. Instead, the formalism we introduce allows 
transferring existing fairness notions from classification to 
ranking. 
A second distinction is that only a minority of prior works 
considers fairness in the context of learning, and those who
do usually propose new training techniques.
Instead, the fairness regularizers we introduce can be combined 
with any existing training procedure that can be formulated as 
learning a score function by minimizing a cost function.
Finally, no prior works provides generalization guarantees for fair ranking as we do.

\myparagraph{Fairness in recommender systems}
For recommender systems, fairness can be studied with respect to the 
consumers/users (known as C-fairness) or with respect to the 
providers/items (known as P-fairness)~\cite{burke2017multisided}.
\cite{steck2018calibrated, tsintzou2018bias} consider calibration 
and bias disparity within recommender systems with respect to 
recommended items. 
In \cite{burke2018balanced, farnadi2018fairness, zhu2018fairness, chakraborty2019equality, peysakhovich2019fair} various hybrid approaching for achieving both C-fairness and P-fairness are presented.
In contrast to our paper, these works are specific to collaborative filtering or 
tensor-based algorithms and do not carry over to approaches based
on supervised learning.

A concept from recommender systems related to demographic parity fairness is that of \textit{neutrality} \cite{kamishima2012enhancement}, in which one aims to provide recommendations that are independent of a certain viewpoint. In particular, \cite{kamishima2012enhancement, kamishima2014correcting} apply a neutrality enhancing regularizer to a recommender system model. The focus of these works, however, lies on dealing with filter bubble problems and no formal links to classification or fairness are made.

\myparagraph{Diversity in ranking}
Another related topic is the one of diversifying the output 
set of ranking system, see,\eg,~\cite{radlinski2009redundancy}.
However, diversifying rankings generally has the goal of 
improving the user experience, not a fair treatment of items. 
A discussion on the relationship between fairness and 
ranking diversity can be found in~\cite{singh2018fairness}.

\section{Preliminaries}
\label{sec:preliminaries}

In this section we introduce some background information
on the learning to rank (LTR) task.
A thorough introduction can be found, \eg, in~\cite{liu2011learning, mitra2018introduction}.

\myparagraph{Learning to rank}
Let $\mathcal{Q}$ be a set of possible \emph{queries} to a ranking system,
and let $\mathcal{D}$ be a set of \emph{items} (historically \emph{documents}) 
that are meant to be ranked according to their relevance for any query. 
Typically, we think of the query set as practically infinite, 
\eg natural language phrases, whereas the item set is finite 
and fixed, \eg a database of products or customers. 
These are not fundamental constraints, though, and extensions are 
possible, \eg items appearing or disappearing over time. 

A dataset in the LTR setting typically has the form 
$S = \{(q_i, d^i_j, r_{j}^i)\}_{i\in [N], j\in [m_i]}$, \ie 
for each of $N$ queries, $q_1, \ldots, q_N\in\mathcal{Q}$, a subset of 
the items $D_{q_i} = \{d^i_1, d^i_2, \ldots, d^i_{m_i}\} \subset \mathcal{D}$ 
are annotated with binary labels $r_j^i = r(q_i,d^i_j) \in \{0,1\}$ that
indicate if item $d^i_j$ is relevant to query $q_i$ or not.
In most real-world scenarios, 
$m_i$ will be much smaller than $|\mathcal{D}|$,
since it is typically impractical to determine the relevance of every item
for a query.

The goal of learning to rank is to use a given training set 
to learn a \emph{ranking procedure} that, for any future query, 
can return a set of items as well as their order. 
That is, the learner has to construct a \emph{subset selection function},
\begin{equation}
R:\mathcal{Q} \to \mathfrak{P}(\mathcal{D}),  
\end{equation}
where $\mathfrak{P}$ denotes the powerset operation, as 
well as an ordering of the predicted item set. 
In practice, both steps are typically combined by learning a 
\emph{score function}, $s: \mathcal{Q}\times \mathcal{D}\to \mathbb{R}$.
For any fixed $q$, $s(q,\cdot)$ induces a total ordering of 
$\mathcal{D}$, and the set of predicted items is obtained by 
thresholding or top-$k$ prediction. The function $s$ is usually learned by minimizing a loss function on the quality of the resulting ranking on the train data. 
Classic examples of this construction are 
SVMRank~\cite{joachims2002optimizing}
or WSABIE~\cite{weston2011wsabie}. 
Most other pointwise, pairwise and listwise methods can 
also be phrased in the above way, with differences mainly
in how the loss is defined and how the score 
function is learned numerically~\cite{liu2011learning}.

\myparagraph{Evaluation measures}
Many measures exist for evaluating the 
quality of a ranking system. Arguably the simplest 
is to measure the fraction of correctly predicted 
relevant items. 
\begin{definition}\label{def:precision_at_k}
Let $S$ be a test set in the format introduced above. 
For any query $q_i$, let $d^i_1,d^i_2,\dots$ be a 
ranking of the items in $\mathcal{D}_{q_i}$ with 
associated ground-truth values $r(q_i,d^i_j)$. 
Then, for any $k\in\mathbb{N}\setminus\{0\}$, 
the \emph{precision at $k$} is defined as 
$\Pk = \frac{1}{N}\sum_{i=1}^N P@k(q_i)$ with
\begin{equation}\label{eqn:precision_at_k}
\Pk(q_i) = \frac{1}{k}\sum_{j=1}^k r(q_i, d^i_j).
\end{equation}
\end{definition}

For any $k$, the $\Pk$ value reflects only which 
items appear in the top-$k$ list, but not their ordering. 
Furthermore, $\Pk$ is automatically small for datasets 
in which queries have only few relevant documents.
To mitigate these shortcomings, one can add position-dependent 
weights and normalize by the score of a \emph{best-possible} ranking.
\begin{definition}\label{def:ndcg_at_k}
In the same setting as for Definition~\ref{def:precision_at_k},
the \emph{normalized discounted cumulative gain at $k$} is defined as 
$\NDCGk = \frac{1}{N}\sum_{i=1}^N \NDCGk(q_i)$ for 
\begin{equation}\label{eqn:NDCG_at_k}
\NDCGk(q_i) = \Big(\sum_{j=1}^k \frac{r(q_i, d^i_j)}{\log_2(j+1)}\Big) \left.  \middle/ \right. \Big(\sum_{j=1}^{\min(k,K_i)}\frac{1}{\log_2(j+1)}\Big),
\end{equation}
where $K_i=|\{d\in\mathcal{D}_{q_i}\ : \ r(q_i,d)=1\}|$ 
is the number of relevant items for query $q_i$.
Queries with no relevant items are excluded from the average, 
as the measure is not well-defined for these. 
\end{definition}

\section{Fairness in Learning-to-Rank}
\label{sec:fairness_notions}

We now introduce our framework for group fairness in ranking. 
The main step is to exploit a correspondence between ranking and 
multi-label learning, a view that has previously been employed for 
practical tasks, \eg, in \emph{extreme classification}~\cite{bengio2019extreme},
but not --to our knowledge-- to make LTR benefit from prior work 
on classification fairness.

Specifically, we study how a set of relevant items for any query 
can be selected in a fairly.  
Analogously to the discussion in Section~\ref{sec:preliminaries}, 
this originally means learning a \emph{subset selection function}
$R:\mathcal{Q}\to\mathfrak{P}(\mathcal{D})$,
where $R(q)$ is the predicted set of selected items for a query $q$.
The objects for which we want to impose fairness, the items, occur as 
outputs of the learned functions. 
This makes it hard to leverage fairness notions from classification, where fairness is defined with 
respect to the inputs. 

We advocate an orthogonal viewpoint: for any fixed 
query $q$, we treat the items not as elements of the predictor's 
output, but as the inputs to a query-dependent classifier: 
$f_q:\mathcal{D}\to\{0,1\}$, where $f_q(d)=1$, if item $d$ 
is should be returned for query $q$, and $f_q(d)=0$ otherwise. 
As the query is a priori unknown, this means one ultimately has 
to find an \emph{item selection function}
\begin{equation}\label{eqn:indicator_function}
f: \mathcal{Q}\times \mathcal{D} \to \{0,1\}.
\end{equation}
While, of course, both views are equivalent, the latter 
one allows us to readily integrate notions of classification fairness 
into the LTR paradigm. 
Here we focus on the inclusion of \emph{group fairness},
and leave the derivation of \emph{individual fairness}~\cite{dwork2012fairness} 
or \emph{counterfactual fairness}~\cite{kusner2017counterfactual} 
to future work.

Note that even though the item selection function $f(q,d)$ and the relevance 
label $r(q,d)$ have the same signature, their roles are different. 
$r$ specifies if an item is relevant for a query or not. $f$ indicates 
if the item should be returned as a result. 
These concepts differ when other aspects besides relevance are 
meant to influence the ranking, such as an upper bound on how many items can be retrieved per query or fairness and diversity considerations.

\subsection{Group fairness in learning to rank}
Notions of group fairness in classification are typically 
based on an underlying probabilistic framework that allows 
statements about (conditional) independence relations~\cite{barocas-hardt-narayanan}.
The same is true in the ranking situation, where we assume 
$\mathbb{P}\in\mathcal{P}(\mathcal{Q}\times\mathcal{D}\times\{0,1\})$ 
to be an unknown but fixed distribution over query/document/relevance 
triplets.
In the rest of our work, all statements about probabilities of 
events, denoted by $\Pr$, will be with respect to $\mathbb{P}(q,d,r(q,d))$. 
Note that $\mathbb{P}$ characterizes only the marginal distribution
of observing individual data points. 
It does not further specify how sets of many points, \eg a training 
dataset, would be sampled.  In particular, as we will discuss later, 
datasets for ranking tasks are typically not sampled \iid from $\mathbb{P}$, 
but exhibit strong statistical dependencies.

Analogously to the situation of classification, we assume that any 
item $d\in\mathcal{D}$ has a \emph{protected attribute}, $A(d)$, 
which denotes the group membership for which fairness should be 
ensured. For example, $A(d)$ can correspond to gender, when the retrieved items are images of people, or to the country of origin of an Amazon product.
In this work, we assume binary-valued protected attributes, but 
this is only for simplicity of presentation, not a fundamental 
limitation of our framework.

A plausible notion of fairness in the context of ranking is: 
\textbf{For any relevant item the probability of being included 
in the ranker's output should be independent of its protected attribute}. 
This intuition is easy to formulate in our formalism, 
resulting a direct analog of the \emph{equality of opportunity} 
principle from fair classification~\cite{hardt2016equality}.

\begin{definition}[\textbf{Equality of opportunity for LTR}]\label{def:equal_opportunity}
An item selection function $f:\mathcal{Q}\times\mathcal{D}\to \{0,1\}$
fulfills the \emph{equality of opportunity} condition, if 
\begin{equation}\label{eqn:equal_opportunity_ours_population}
\begin{split}
&\Pr(f(q, d)  = 1 | A(d) = 0, r(q,d) = 1) = \Pr(f(q, d) = 1| A(d) = 1, r(q,d) = 1).
\end{split}
\end{equation}
where $A(d)$ denotes the protected attribute of a document $d$. 
\end{definition}

The above definition provides a formal criterion of
what it means for a ranking system to be fair. 
In practice, a ranker will rarely achieve 
perfect fairness. 
Therefore, we also introduce a quantitative version 
of Definition~\ref{def:equal_opportunity} in the form of 
\emph{a fairness deviation measure}~\cite{williamson2019fairness,woodworth2017learning}
that reports a ranking procedure's \emph{amount of unfairness} 
(or \emph{lack of fairness}) by means of its 
\emph{mean difference score}~\cite{calders2010three}.

\begin{definition}\label{def:eop-violation}
The \emph{equality of opportunity (EOp) violation} of 
any item selection function, $f:\mathcal{Q}\times\mathcal{D}\to \{0,1\}$, 
is 
\begin{equation*}\label{eqn:equal_opportunity_ours_measure}
\begin{split}
\Gamma^{\textrm{EOp}}(f) &= 
\Big|\Pr(f(q, d)\!=\!1| A(d) \!=\! 0, r(q,d)\!=\!1) - \Pr(f(q, d)\!=\!1| A(d) \!=\! 1, r(q,d)\!=\!1) \Big|. 
\end{split}
\end{equation*}
\end{definition}

Clearly, $f$, is fair in the sense of 
Definition~\ref{def:equal_opportunity} if and only if 
it fulfills $\Gamma^{\textrm{EOp}}(f)=0$. 

\myparagraph{Other fairness measures}
As discussed extensively in the literature, different notions 
of fairness are appropriate under different circumstances. 
For example, to check the \emph{equality of opportunity} condition 
one needs to know which items are relevant for a query, and this 
can be problematic, \eg, if the available data itself exhibits a 
bias in this respect.

A major advantage of our formalism compared to prior fair ranking methods is that it is not partial to a 
specific fairness measure. 
Besides \emph{equality of opportunity}, many other notions of 
group fairness can be expressed by simply translating 
the corresponding expressions from classification. 

For example, one can avoid the problem of a data bias 
by simply demanding:
\textbf{The probability of any item to be selected 
should be independent of its protected attribute}
(disregarding its relevance to the
query). 
In our formalism, this condition is a direct analog 
of \emph{demographic parity} \cite{calders2009building}.

\begin{definition}[\textbf{Demographic Parity for LTR}]\label{def:demographic_parity}
An item selection function $f:\mathcal{Q}\times\mathcal{D}\to \{0,1\}$
fulfills the \emph{demographic parity} condition, if 
\begin{align}
\label{eqn:demographic_parity_ours_population}
\Pr(f(q, d) \!=\! 1| A(d) \!=\! 0) &\!=\! \Pr(f(q, d)  \!=\! 1| A(d) \!=\! 1).
\end{align}
As associated quantitative measure we define 
the \emph{demographic parity (DP) violation} of $f$ as 
\begin{align*}
\begin{split}
\label{eqn:demographic_parity_deviation}
\Gamma^{\textrm{DP}}(f) &= \Big|\Pr(f(q, d)=1| A(d) = 0) - \Pr(f(q, d)=1| A(d) = 1) \Big|.
\end{split}
\end{align*}
\end{definition}

Another meaningful notion of fairness in ranking is: \textbf{The 
probability of any item to be selected should be independent 
of its protected attribute, individually for all relevant and 
for all irrelevant items.}  
This condition yields the ranking analog 
of the \emph{equality odds} criterion ~\cite{hardt2016equality}.

\begin{definition}[\textbf{Equalized Odds for LTR}]\label{def:equal_odds}
An item selection function $f:\mathcal{Q}\times\mathcal{D}\to \{0,1\}$
fulfills the \emph{equalized odds} condition, if for all $r\in \{0,1\}$:
\begin{equation}\label{eqn:equal_odds_ours_population}
\begin{split}
&\Pr(f(q, d) = 1 | A(d) = 0, r(q,d) = r) = \Pr(f(q, d) = 1| A(d) = 1, r(q,d) = r)
\end{split}
\end{equation}
The \emph{equalized odds (EOd) violation} of $f$ is 
\begin{align*}\label{eqn:equal_odds_ours_measure}
\Gamma^{\textrm{EOd}}(f) &= \frac12\!\!\!\sum_{r\in\{0,1\}}\!\!\!
\Big|\Pr(f(q, d)\!=\!1| A(d) \!=\! 0, r(q,d)\!=\!r)- \Pr(f(q, d)\!=\!1| A(d) \!=\! 1, r(q,d)\!=\!r) \Big|. 
\end{align*}
\end{definition}

\subsection{Training fair rankers}\label{subsec:training_fair_rankers}
The above definitions do not only allow measuring the fairness of 
a fixed ranking system, but any of them can also be used to enforce 
the fairness of a LTR system during the training phase.
For this, we create empirical variants of the fairness violation 
measures and add them as a regularizer during the training step ~\cite{agarwal2018reductions,kamishima2011fairness}.
For this construction to make sense, we have to answer two 
questions: \emph{Can we solve the resulting optimization 
efficiently?} and \emph{Does the inclusion  
of a regularizer generalize, \ie ensure fairness also on future predictions?}  
In rest of this section, we will answer the first question.
The second question we will address in Section~\ref{sec:concentration}.

To allow for gradient-based optimization, we parametrize 
the binary-valued item selection function in a differentiable way 
using a real-valued score function $s:\mathcal{Q}\times \mathcal{D}\to [0,1]$,
similar as introduced in Section~\ref{sec:preliminaries}.
Our inspiration, however, comes from the classification 
setting, such as logistic regression, and we assume that $s$ is 
not arbitrary real-valued, but that it parameterizes the probability that $d$ is selected for $q$, 
\ie $s(q,d)=\Pr(f(q,d)=1)$. 

\myparagraph{Empirical fairness measures}
For a given training set, $S$, in the format discussed in 
Section~\ref{sec:preliminaries}, we obtain empirical estimates 
of the previously introduced fairness violation measures.
For any $a\in\{0,1\}$, $r\in\{0,1\}$, denote by $S_a$ the subset of 
data points $(q,d,r(q,d))$ in $S$ with $A(d)=a$, and by $S_{a,r}$ 
the subset of data points in $S$ with $A(d)=a$ and $r(q,d)=r$.

\begin{definition}[\textbf{Empirical fairness violation measures}]\label{def:eop-violation-empirical}
For any function $s:\mathcal{Q}\times\mathcal{D}\to [0,1]$, its 
\emph{empirical equality of opportunity violation on a dataset $S$} is 
\begin{equation}\label{eqn:empirical_fairness_measures}
\begin{split}
{\Gamma^{\textrm{EOp}}}(s;S) &= \Big|
\frac{1}{|S_{0,1}|}\!\!\sum_{(q,d)\in S_{0,1}}\!\!\!\!\!\!s(q, d)
- \frac{1}{|S_{1,1}|}\!\!\sum_{(q,d)\in S_{1,1}}\!\!\!\!\!\!s(q, d)\Big|.
\end{split}
\end{equation}
The \emph{empirical demographic parity violation of $s$ on $S$} is 
\begin{align}
{\Gamma^{\textrm{DP}}}(s;S) &= 
\Big|\frac{1}{|S_0|}\!\!\sum_{(q,d)\in S_0}\!\!\!\!\!s(q, d)-
\frac{1}{|S_1|}\!\!\sum_{(q,d)\in S_1}\!\!\!\!\!s(q, d) \Big|.
\end{align}
and the \emph{empirical equalized odds violation of $s$ on $S$} is 
\begin{align}
\begin{split}
{\Gamma^{\textrm{EOd}}}(s;S) &= \frac{1}{2}\sum_{r\in\{0,1\}}\Big|
\frac{1}{|S_{0,r}|}\!\!\sum_{(q,d)\in S_{0,1}}\!\!\!\!\!s(q, d) -\frac{1}{|S_{1,r}|}\!\!\sum_{(q,d)\in S_{1,1}}\!\!\!\!\!s(q, d) \Big|.
\end{split}
\end{align}
\end{definition}

These expressions can be derived readily as approximations of the 
conditional probabilities of the individual fairness measures by 
fractions of the corresponding examples in $S$. This is done by assuming that 
the marginal probability of any data point in $S$ is $\mathbb{P}$, 
and inserting the assumed relation $s(p,q)=\Pr(f(p,q)=1)$. 
Note that Definition~\ref{def:eop-violation-empirical} applies 
also to binary-valued functions, so it can also be used to evaluate 
the fairness of a learned item selection function on a dataset.

\myparagraph{Learning with fairness regularization}
Let $L(s, S)$ be any loss function ordinarily used to 
train an LTR system. 
Instead of optimizing solely this fairness-agnostic loss, 
we propose to optimize a fairness-aware regularized objective: 
\begin{equation}\label{eqn:weighted_objective}
L^{\text{fair}}(s;S) = L(s, S) + \alpha {\Gamma}(s, S)
\end{equation}
for $\alpha\geq 0$, where ${\Gamma}(s;S)$ is any of  
the empirical measures of fairness violation. 
The larger the value of $\alpha$, the more the resulting 
rankers will take also the fairness of its decisions into 
account rather than just their utility. 
In the case that constraints on the desired fairness of the 
system are given, \eg the often cited \emph{four-fifth rule}~\cite{biddle2006adverse}, 
then a suitable value of $\alpha$ can be determined by classic 
model selection, \eg using a validation set.  
In general, we expect the desired trade-off between 
utility and fairness to be influenced also by subjective 
factors and we leave $\alpha$ as a free parameter.
However, as our experiments in Section~\ref{sec:experiments} 
show, and as it has been observed in the context of
classification \cite{NEURIPS2019_373e4c5d}, the relation between fairness
and ranking quality is not necessarily adversarial. 

\myparagraph{Optimization}
The fairness regularization terms, $\alpha\Gamma(s,S)$, are absolute
values between differences of weighted sums over the score 
functions. 
Consequently, their values and their gradients can be computed 
efficiently using standard numerical frameworks. 
In large-scale settings, where ordinary gradient descent optimization 
is infeasible due to memory and computational limitations, the 
regularized objective~\eqref{eqn:weighted_objective} can also 
be optimized by stochastic gradient steps over mini-batches, 
as long as the unregularized loss function $L(s,S)$, supports this 
as well. 
The resulting per-batch gradient updates are not unbiased estimators 
of the full gradient, though, so the characteristics of the 
fairness notion changes depending on the batch size. 
For example, if batches were always formed of a single query 
with all associated documents, fairness would be enforced 
individually for each query, while the original objective 
enforces it averaged across all queries. 
In our experiments, however, we did not observe any deleterious 
effect of stochastic training when using a moderate batch size 
of 100.

\vspace{-0.2cm}
\subsection{Generalization}
\label{sec:concentration}

In this section we show that --given enough data-- our train-time regularization procedure will also ensure fairness at prediction time. 
Specifically, we prove a generalization bound by means
of a uniform concentration argument, showing that the
fairness on future decisions is bounded by the sum
of the fairness on the training set and a complexity
term, where the latter decreases monotonically towards
zero with the number of queries in the training set. 
Our results are similar to the ones in \cite{woodworth2017learning}
for the classification setting. However, in the case of ranking 
data there is additional dependence between the samples, which 
complicates the analysis and influences the complexity term.

\myparagraph{Data generation process}
To study the generalization properties of our fairness 
measures at training time versus prediction time, we first have to formally define the statistical properties of the 
training data. 
We assume the following data generation process
which is consistent with the structure of 
LTR datasets, 
with the only simplifying assumption that the item sets
for all queries are of equal size $m$. 

For a given data distribution $\mathbb{P}(q,d,r)$, a dataset 
$S = \{(q_i, d^i_j, r_{j}^i)\}_{i\in [N], j\in [m]}$, is sampled 
as follows: 1) queries, $q_1,\dots,q_N$, are sampled \iid 
from the marginal distribution $\mathbb{P}(q)$; 2) for each query 
$q_i$ independently a set of items, $D_{q_i}=\{d^i_1,\dots,d^i_{m}\}$, 
is sampled \emph{in an arbitrary way} with the only restriction 
that the marginal distribution of each individual $d^i_j$ should be 
$\mathbb{P}(d|q_i)$; 3) for each pair $(q_i,d^i_j)$ independently, 
the relevance $r^i_j$ is sampled from $\mathbb{P}(r | q_i, d^i_j)$. 

Note that each data point of the resulting training set has marginal 
distribution $\mathbb{P}$. Nevertheless, a lot of flexibility 
remains about how the actual items per query are chosen. 
In particular, the item set can have dependencies, such as avoiding repetitions or diversity constraints. 
While this choice of generating process complicates the theoretical 
analysis, we believe that it is necessary, because we want to make 
sure that real-world ranking data is covered, which typically 
is far from \iid.

We now characterize the generalization properties of the fairness 
regularizers. Let $\mathcal{F}\subset\{ f: Q\times D \to \{0,1\}\}$
be a set of item selection functions that make independent 
deterministic decisions per item (\eg, by thresholding a learned 
score function). 
Then, the following theorem holds:

\begin{theorem}\label{thm:uniform_bound}
Let $S$ be a dataset sampled as described above with $2Nm > v$ for
$v=\text{VCdim}(\mathcal{F})$.
Let $P = \min_{r,a}\big( \mathbb{P}(r(q,d) = r \wedge A(d) = a)\big)$ and $Q = \min_{a}\big(\mathbb{P}(A(d) = a)\big)$.
Then, for any $\delta>0$, each of the following inequalities holds
with probability at least $1-\delta$ over the sampling of $S$,
uniformly for all $f\in\mathcal{F}$:
\begin{align*}\label{eqn:result_vc}
\Gamma^{\textup{EOp}}(f) &\leq {\Gamma^{\textup{EOp}}}(f, S) + C_1,
\qquad
\Gamma^{\textup{EOd}}(f) \leq {\Gamma^{\textup{EOd}}}(f, S) + C_2,
\qquad
\Gamma^{\textup{DP}}(f) \leq {\Gamma^{\textup{DP}}}(f, S) + C_3,
\\
\text{with}\quad C_1 &= C_2 = 8 \sqrt{2\frac{v\log(\frac{2eNm}{v}) +
\log(\frac{48}{\delta})}{NP^2}},
\qquad
C_3 = 8 \sqrt{2\frac{v\log(\frac{2eNm}{v}) +
\log(\frac{24}{\delta})}{NQ^2}}.
\end{align*}
\end{theorem}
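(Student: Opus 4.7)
The plan is to reduce each of the three fairness inequalities to a uniform concentration statement for averages of bounded functions of the samples, and to handle the within-query dependencies via Janson's chromatic Hoeffding inequality while controlling the function class by VC arguments.

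First, I would rewrite each conditional probability appearing in the population fairness violations as a ratio of unconditional expectations, e.g.
\[
\Pr\!\big(f(q,d)=1 \mid A(d)=a,\, r(q,d)=r\big) = \frac{\mathbb{E}\!\left[\mathbf{1}\{A(d)=a,\, r(q,d)=r\}\, f(q,d)\right]}{\Pr(A(d)=a,\, r(q,d)=r)},
\]
and similarly for its empirical counterpart on $S_{a,r}$. A short algebraic manipulation ($|\hat A/\hat B - A/B|$ bounded by $|\hat A-A|/\hat B + |A|\,|\hat B-B|/(B\hat B)$, once we know $\hat B \geq B/2$) reduces $|\Gamma^{\textup{EOp}}(f) - \Gamma^{\textup{EOp}}(f,S)|$ to additive deviations of empirical means of functions $(q,d)\mapsto \mathbf{1}\{A(d)=a,\, r(q,d)=r\}\, f(q,d)$ and of the conditioning indicators themselves, divided by a lower bound on the denominator. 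The assumption yields $\Pr(A(d)=a, r(q,d)=r)\geq P$, which is the source of the $1/P$ prefactor that becomes $1/P^2$ after squaring inside the root; for demographic parity the conditioning involves only $A(d)$, producing $Q$ in place of $P$.

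Next I would establish uniform concentration of $\tfrac{1}{Nm}\sum_{i,j} g(q_i,d^i_j)$ around its mean for every $g$ in the product class $\{(q,d)\mapsto \mathbf{1}\{A(d)=a,\, r(q,d)=r\}\, f(q,d) : f\in\mathcal{F}\}$, which inherits VC-dimension at most $v$ from $\mathcal{F}$. Under the assumed data-generation process, the dependency graph of the $Nm$ sample points consists of $N$ disjoint cliques of size $m$, so it is properly $m$-colorable. Janson's chromatic Hoeffding inequality~\cite{janson2004large} then yields, for any fixed $g$, a subgaussian tail at the effective rate $\sqrt{\chi/(Nm)}=\sqrt{1/N}$ rather than $\sqrt{1/(Nm)}$, which is the origin of $N$ in the denominator inside each $C_i$. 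To make the bound uniform, I would follow a symmetrization-plus-Sauer argument: on the realized $Nm$ points $\mathcal{F}$ induces at most $(2eNm/v)^v$ distinct dichotomies, so a union bound over these, combined with the chromatic Hoeffding tail, produces the $v\log(2eNm/v)$ term in the numerator of each $C_i$. A further union bound over the small fixed number of ingredients (two numerator/denominator pairs for EOp and DP, four for EOd) then accounts for the $\log(48/\delta)$ and $\log(24/\delta)$ constants.

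The main obstacle is the rigorous interaction between VC-style symmetrization and the dependent-sample concentration, since the textbook ghost-sample trick assumes independence. I would address this by sampling a ghost dataset with the same query-conditional structure — identical queries, and items resampled within each query according to the same arbitrary conditional law — so that the ghost sample preserves both the correct marginals and the query-clique dependency graph. The chromatic Hoeffding bound then applies directly to the symmetrized process and delivers the required uniform tail. Once this step is in place, what remains is bookkeeping: assembling the three fairness violations via the triangle inequality and matching multiplicative constants to the form stated in the theorem.
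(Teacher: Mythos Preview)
Your overall strategy is sound and relies on the same two ingredients as the paper: Janson's chromatic Hoeffding inequality to handle the within-query dependence (chromatic number $m$, hence effective sample size $N$), and a symmetrization-plus-Sauer argument for uniformity over $\mathcal{F}$. The route differs from the paper's in two places.

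First, for the fixed-$f$ part the paper does not write the conditional probability as a ratio of unconditional means. Instead it conditions on the random index set $I_{ar}=\{(i,j): A(d^i_j)=a,\, r(q_i,d^i_j)=r\}$: given $I_{ar}$, the empirical conditional probability is an honest average over $|I_{ar}|$ terms and chromatic Hoeffding (with $\chi\le m$) controls its deviation; a second application shows $|I_{ar}|\ge \tfrac{1}{2}P_{ar}Nm$ with high probability, and the two pieces are combined by marginalizing over $I_{ar}$. Your ratio decomposition is a legitimate alternative and arguably more modular; both produce the same $1/P^2$ (resp.\ $1/Q^2$) dependence.

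Second --- and this is a genuine concern --- the paper's ghost sample $S'$ is drawn \emph{fully independently} of $S$ (fresh queries and fresh items), not with identical queries as you propose. With shared queries $S'$ is no longer independent of $S$, and the standard symmetrization step breaks: one needs, for the data-dependent maximizer $f^*=f^*(S)$, that $\mathbb{P}_{S'}\big(|\Gamma(f^*)-\Gamma(f^*,S')|<t/2\big)\ge \tfrac{1}{2}$, i.e.\ that $\Gamma(f^*,S')$ concentrates around the \emph{population} value conditionally on $S$. If the queries of $S'$ coincide with those of $S$, then conditionally on $S$ the quantity $\Gamma(f^*,S')$ only concentrates around a query-conditional analogue of $\Gamma(f^*)$, not $\Gamma(f^*)$ itself, so an extra layer of argument would be required that you do not supply. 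The paper sidesteps this by taking $S'$ independent; after symmetrization it union-bounds over the at most $(2eNm/v)^v$ dichotomies on $S\cup S'$ and then, rather than performing a Rademacher swap (which the clique dependencies would complicate), triangles back through the population value and re-applies the fixed-$f$ chromatic bound. If you replace your ghost sample by a fully independent one, the remainder of your plan goes through essentially as written.
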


\textit{Proof sketch.}\quad 
The proof consists of two parts. First, for any fixed item selection
function a bound is shown on the gap between the conditional 
probabilities contributing to fairness measure and their empirical 
estimations. 
For this, we build on the technique of \cite{woodworth2017learning} 
for showing concentration of fairness quantities. We combine this with  
the large deviations bounds for sums of dependent random variables in terms of the chromatic number of their dependence graph
of~\cite{janson2004large}. 
Next, the bounds are extended to hold uniformly over the full 
hypothesis space by evoking a variant of the classic symmetrization 
argument (\eg~\cite{vapnik2013nature}), while carefully accounting 
for the dependence between the samples. 
A complete proof can be found in the supplementary material.  

\myparagraph{Discussion}
Theorem~\ref{thm:uniform_bound} bounds the fairness violation
on future data by the fairness on the training set plus an 
explicit complexity term, uniformly over all item selection
functions. 
Consequently, any item selection function with low fairness 
violation on the training set will have a similarly low fairness 
violation on new data, provided that enough data was used 
for training. Indeed, the complexity term decreases like $\sqrt{\log N/N}$ as $N\to \infty$, which is the 
expected behavior for a VC-based bound. We refer to the supplementary material for a more detailed discussion of the bound.

\vspace{-0.2cm}
\section{Experiments}
\label{sec:experiments}

We report on some experiments to validate the 
practicality and performance of our method for training 
fair LTR systems, including a large-scale setting.
Our emphasis lies on studying the interaction between model 
quality and fairness, the effectiveness of our 
proposed method for optimizing both of these notions on 
real data and on the comparison to previous fair ranking algorithms. 
For space reasons, we only provide a high-level description 
of the experimental setting here. Technical details, \eg on 
feature extraction, can be found in 
the supplemental material. 

\subsection{Datasets and experimental setup}\label{subsec:data_description}
We experiment on two datasets: the TREC Fairness data and MSMARCO.
As a measure of ranking quality we use $\NDCGk$ for 
$k\in\{1,2,3,4,5\}$, but also report results for 
$\Pk$ in the supplemental material.
To quantify fairness, we evaluate the three different 
empirical measures of fairness violation.

\myparagraph{TREC Fairness data} 
We use the training data of TREC 2019 Fairness track 
dataset~\cite{trec-fair-ranking-2019}. It consists of 
652 real-world queries taken from the Semantic Scholar 
search engine, together with a set of scientific papers 
for each query and binary labels for the relevance of 
every query-paper pair. 
The average number of labeled papers per query is $7.1$,
out of which $3.4$ are relevant on average. 
Because of the rather small number of queries, we 
use five-fold cross-validation to evaluate our method
and report averages and standard errors across the 
folds. As an exemplary \textit{protected attribute} we use a proxy of the 
authors' seniority. We split the set of documents 
into two groups based on whether the mean of their authors' 
$i10$-index proxies (as provided in the TREC data) 
exceeds a threshold $t$ or not. For $t\in\{3,4,5\}$ we 
get different amounts of group imbalance, with the 
minority group consisting of approximately $46\%, 26\%$ 
and $9\%$ of all papers, respectively.

\myparagraph{MSMARCO} We use the passage ranking dataset v2.1 of MSMARCO~\cite{nguyen2016ms}. 
It  consists of approximately one million natural language questions, which serve as queries, 
associated sets of potentially relevant passages 
from Internet sources, and binary relevance labels for all provided 
query-document pairs. 
On average, there are $8.8$ passages per question, 
and the average number of relevant ones is $0.65$.
For training and evaluation we use the default 
train-development split and report average and 
standard deviation over $10$ random seeds. To create a \textit{protected attribute}, we split the passages into 
two groups based on their top-level domains, thinking of it 
as a proxy of the answers' geographic origin.
Specifically, we split by \texttt{".com vs other"} 
(denoted by \emph{com}) and by \texttt{".com/.org/.gov/.edu/.net vs other"}
(denoted by \emph{ext}). 
Their minority groups are of size 32\% and 5\% of all
passages, respectively.

\begin{figure*}[t]
\begin{subfigure}{.24\textwidth}
  \centering
  \includegraphics[width=.95\linewidth]{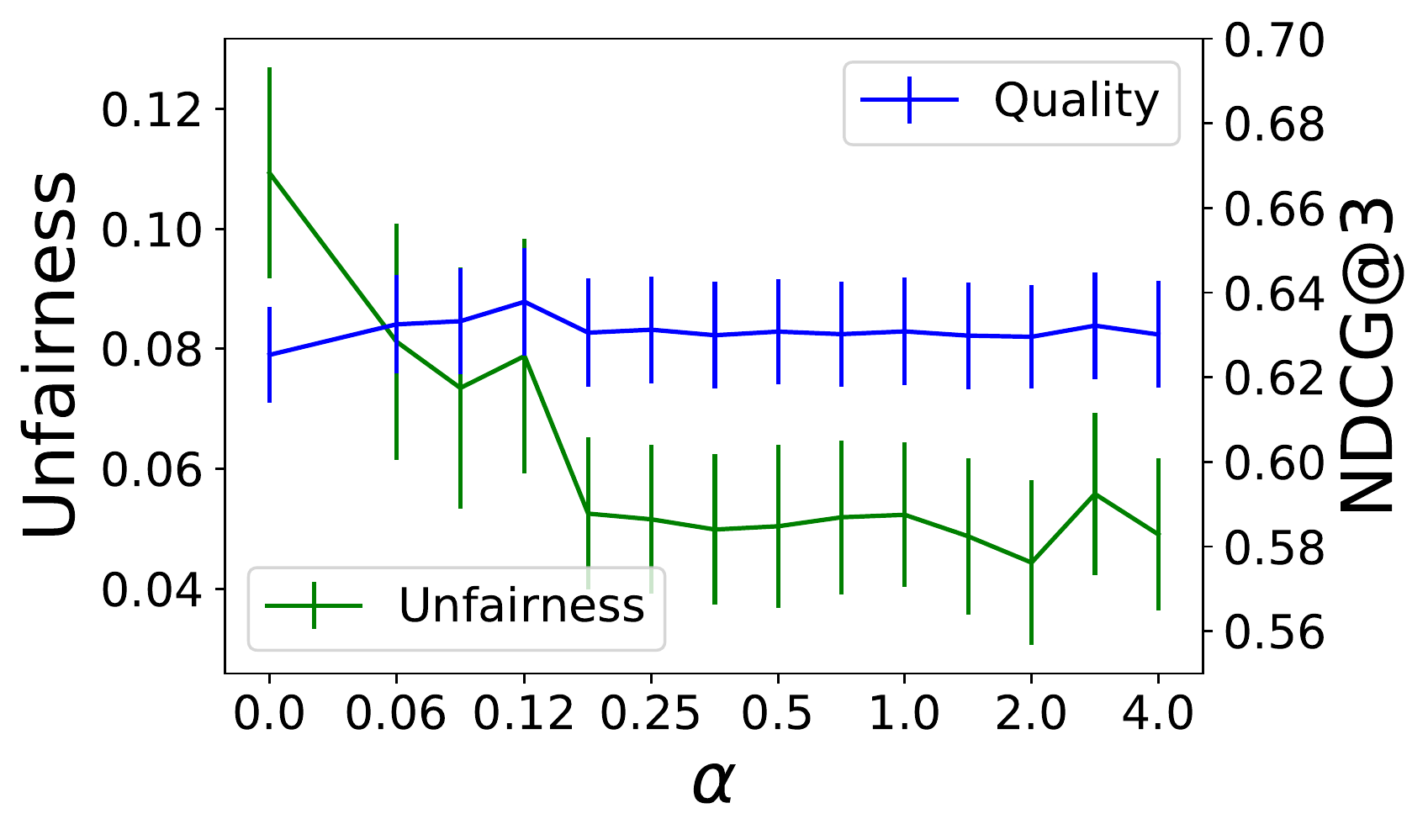}
  \caption{Ours, TREC}
  \label{fig:trec_amortized_demographic_parity}
\end{subfigure}%
\hfill
\begin{subfigure}{.24\textwidth}
  \centering
  \includegraphics[width=.95\linewidth]{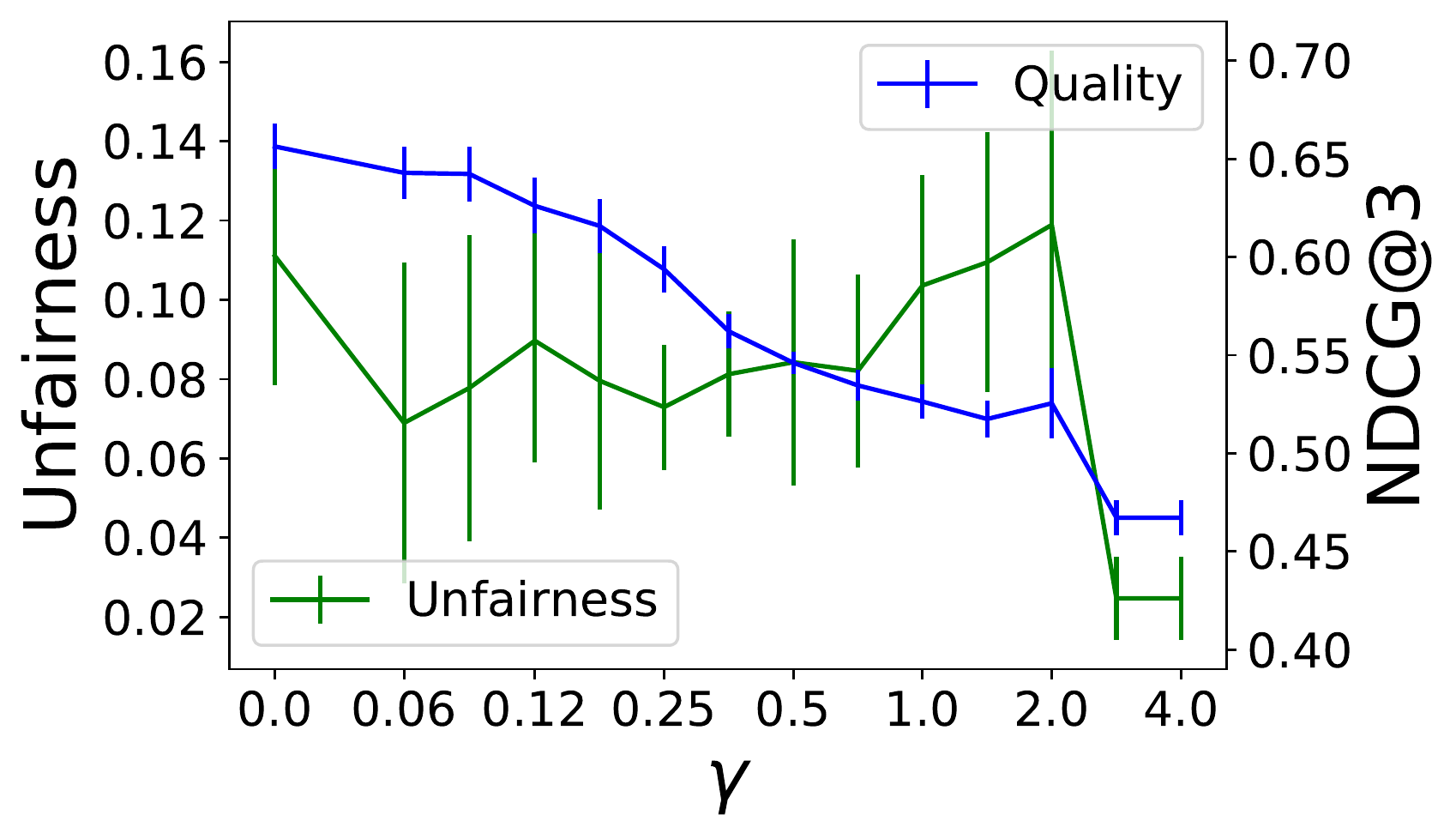}
  \caption{DELTR, TREC}
  \label{fig:trec_amortized_equal_odds}
\end{subfigure}%
\hfill
\begin{subfigure}{.24\textwidth}
  \centering
  \includegraphics[width=.95\linewidth]{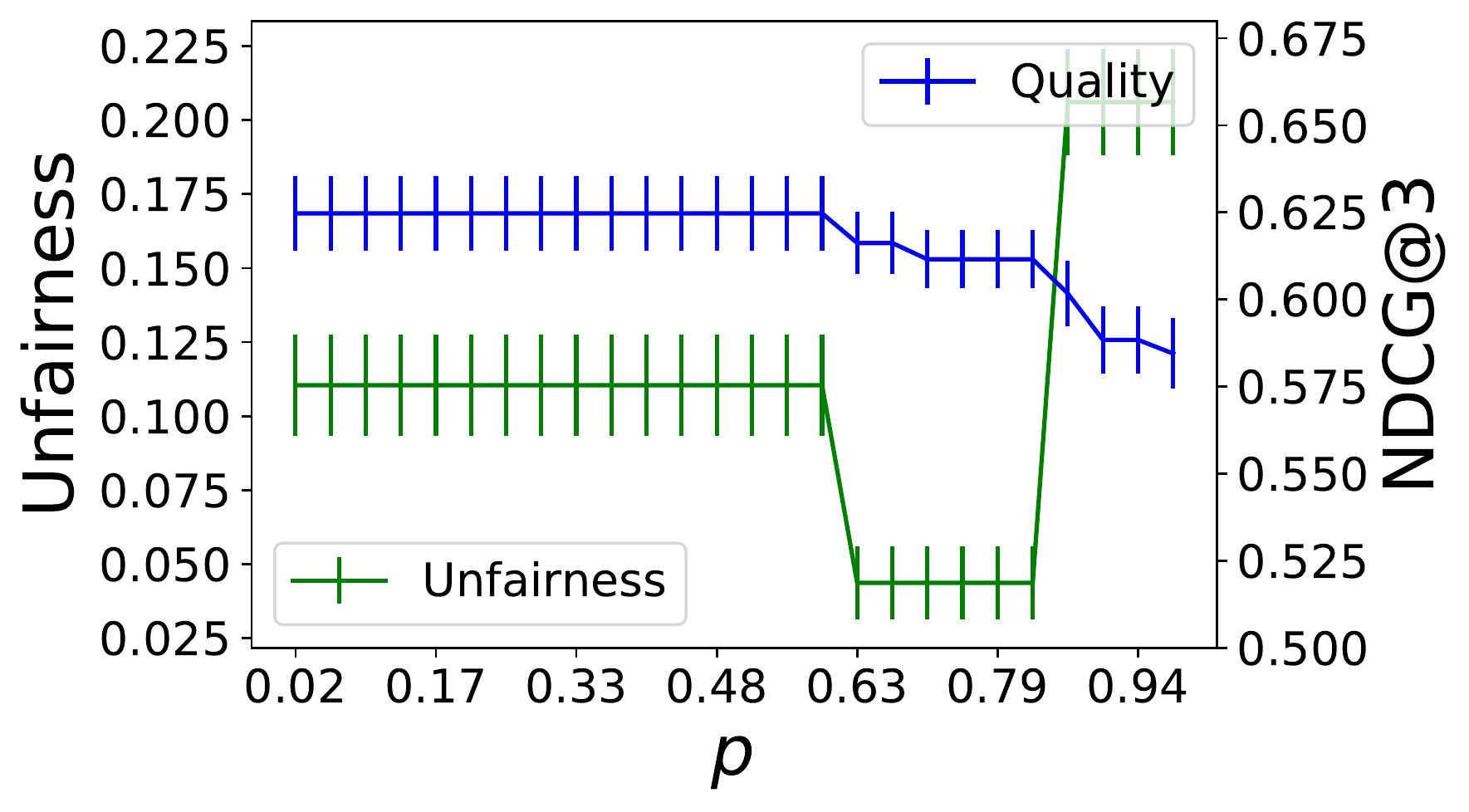}
  \caption{FA*IR, TREC}
  \label{fig:trec_amortized_equal_opp}
\end{subfigure}
\begin{subfigure}{.24\textwidth}
  \centering
  \includegraphics[width=.95\linewidth]{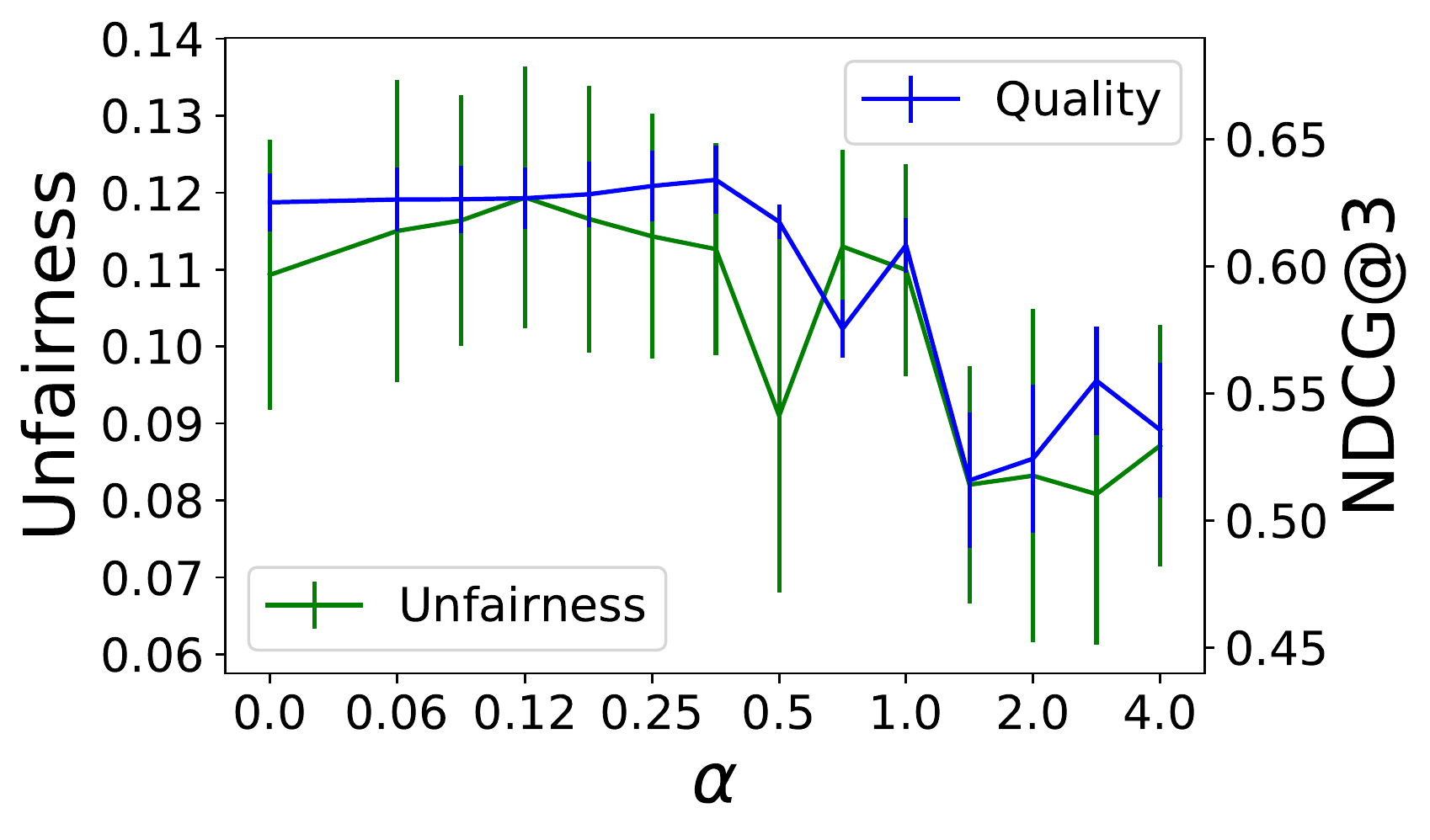}
  \caption{Per-query, TREC}
  \label{fig:trec_amortized_demographic_parity}
\end{subfigure}%
\vskip\baselineskip
\begin{subfigure}{.33\textwidth}
  \centering
  \includegraphics[width=.691\linewidth]{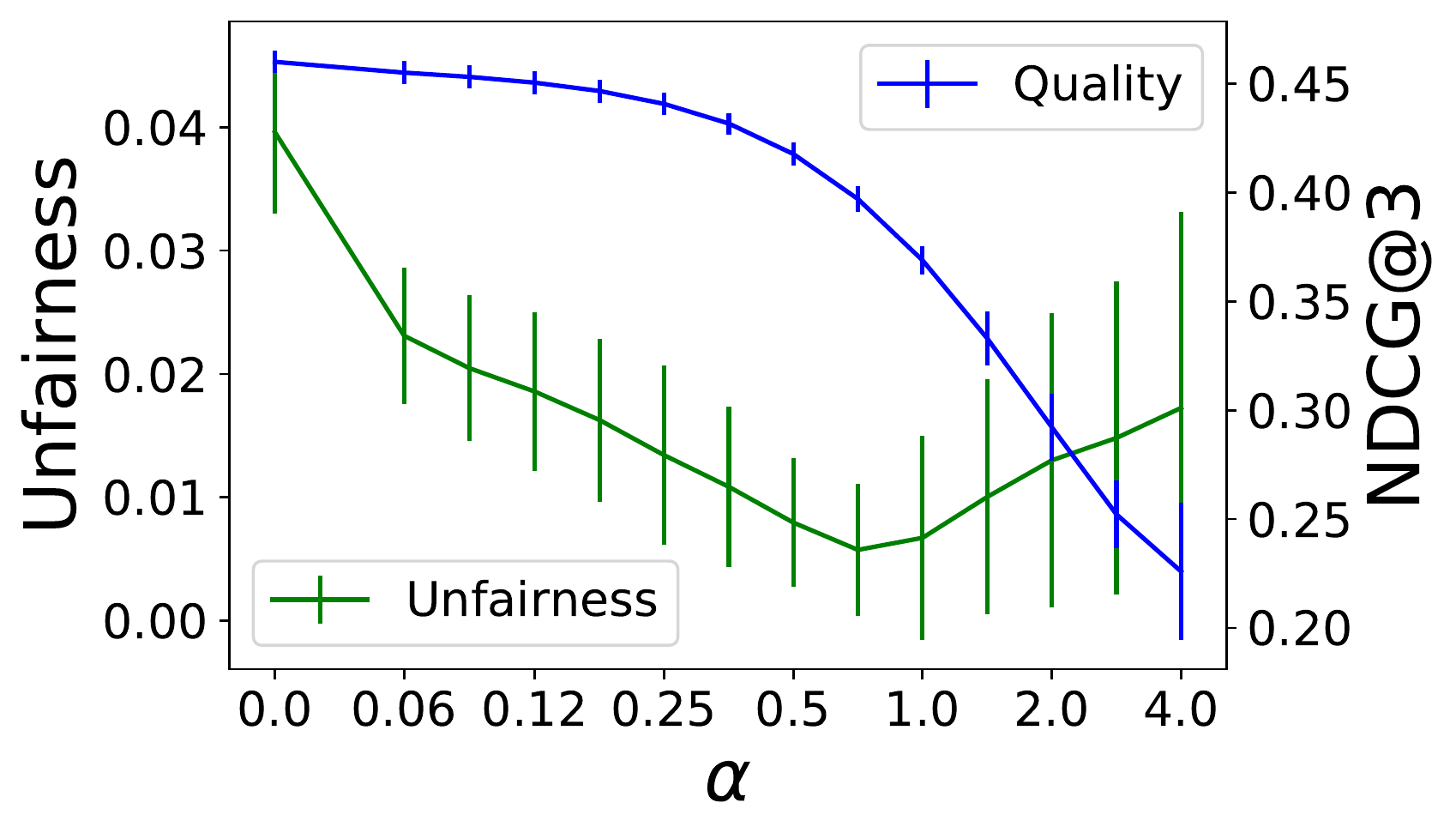}
  \caption{Ours, MSMARCO}
  \label{fig:trec_amortized_demographic_parity}
\end{subfigure}%
\hfill
\begin{subfigure}{.33\textwidth}
  \centering
  \includegraphics[width=.691\linewidth]{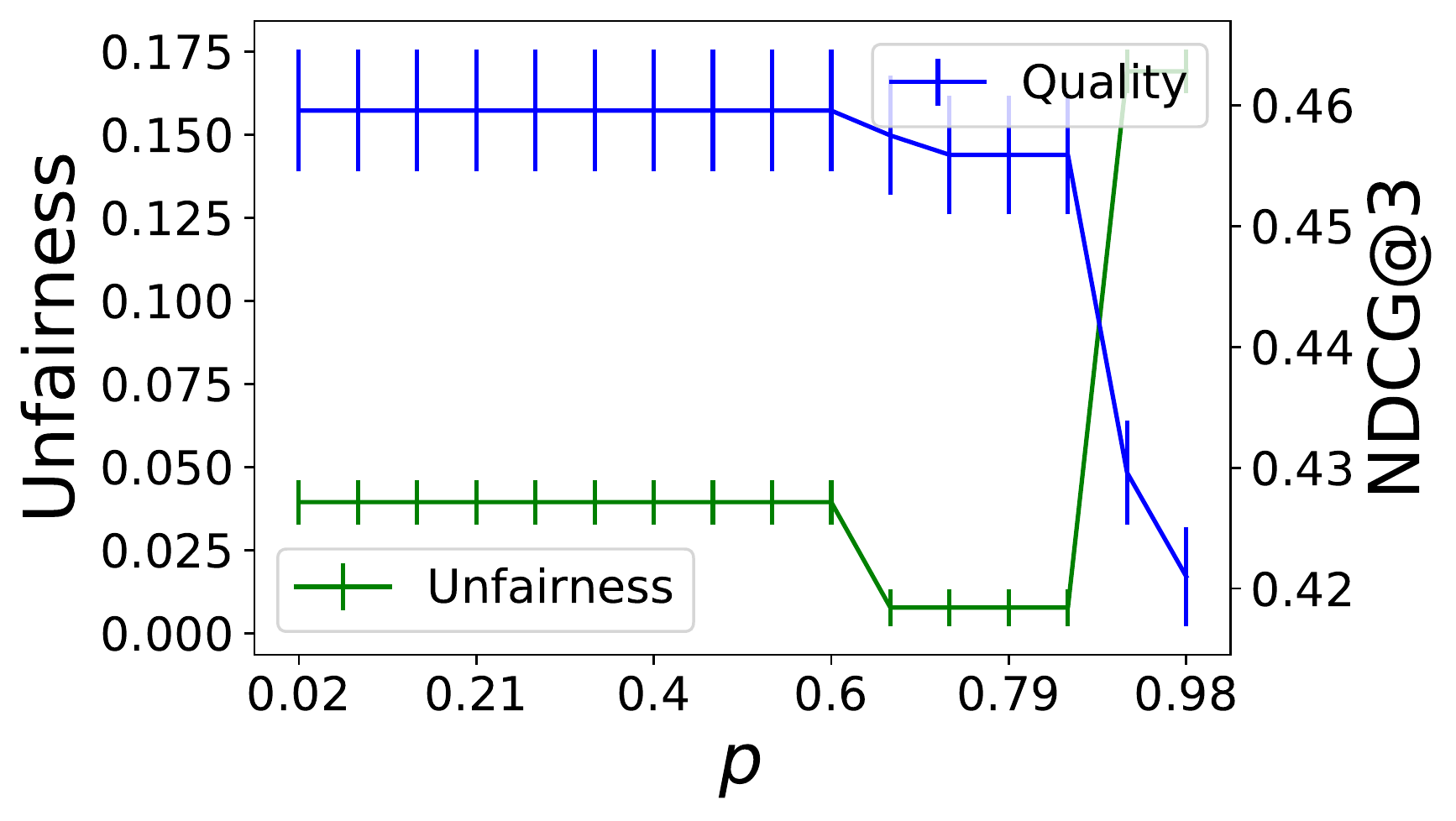}
  \caption{FA*IR, MSMARCO}
  \label{fig:trec_amortized_equal_opp}
\end{subfigure}
\begin{subfigure}{.33\textwidth}
  \centering
  \includegraphics[width=.691\linewidth]{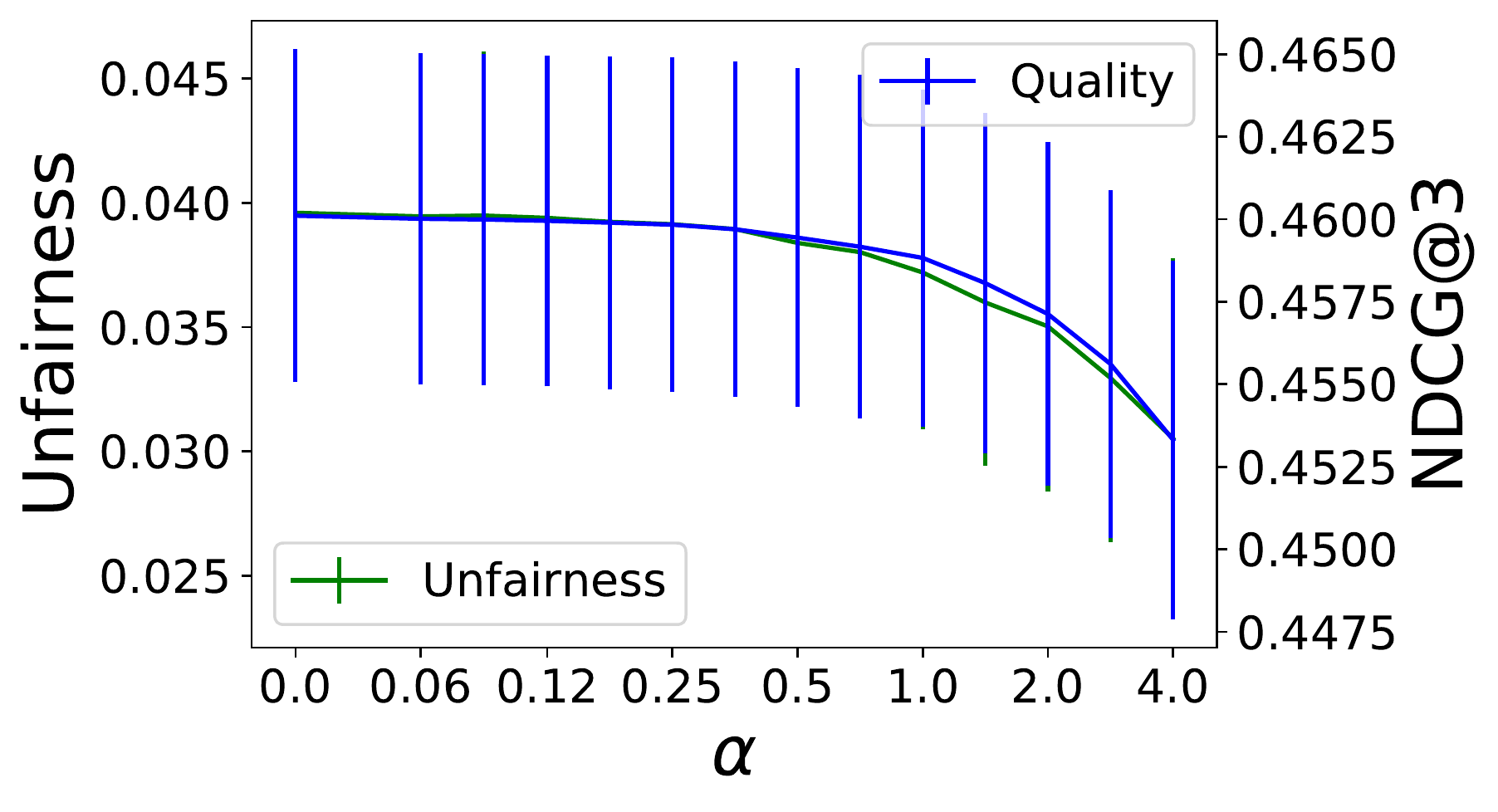}
  \caption{Per-query, MSMARCO}
  \label{fig:trec_amortized_demographic_parity}
\end{subfigure}%
\caption{Test-time performance of fair rankers with equal opportunity fairness, achieved by our algorithm and the baselines: unfairness (left $y$-axes) and NDCG@3 ranking 
quality (right $y$-axes); after training with different regularization 
strengths ($x$-axis). See the supplementary material for larger versions and for results for demographic parity and equalized odds.}
\label{fig:plots}
\end{figure*}

\vspace{-0.25cm}
\subsection{Learning to rank models} 
\myparagraph{Our algorithm} We adopt a classical pointwise LTR approach with 
a generalized linear score function, $s(d,q)=\langle \theta,\phi(q,d)\rangle$, for 
a predefined feature function, $\phi:\mathcal{Q}\times\mathcal{D}\to\mathbb{R}^D$
(see supplemental material). 
As loss function of ranking quality, $L(s,S)$, we use the squared loss 
between the relevance labels and the predictions of $s$ over all data. To optimize for both ranking quality and fairness, we train 
with a weighted loss, as in equation (\ref{eqn:weighted_objective}). 
For TREC we train all models by $1500$ steps of gradient 
descent with a learning rate of $0.003$. 
In the MSMARCO experiments we train with $5$ epochs of SGD with 
a batch size of $100$ queries and $10$ passages per query and a 
learning rate of $0.0001$.

\myparagraph{Baselines and ablation studies}
Our method is the first to enforce the well-established fairness notions from classification in a LTR setting. Hence previously developed methods for fair ranking aim to optimize for other (often single-purporse) fairness notions. Nevertheless, it is informative to see how such algorithms perform against our method, in order to understand the relationship between ours and previous fair ranking works. Therefore, we consider two recent methods for fair ranking, DELTR \cite{zehlike2020reducing} and FA*IR \cite{zehlike2017fa}, using the implementation provided by the authors \cite{zehlike2020fairsearch}.

DELTR is a state-of-the-art algorithm for fair LTR. At train time, a linear version of ListNet is trained, together with a regularizer tailored to a notion of disparate exposure \cite{calders2009building, singh2018fairness}. We use the same feature representations as for our method, as well as the same range for their regularization parameter $\gamma$, to ensure a fair comparison. Unfortunately, the implementation of \cite{zehlike2020fairsearch} does not scale to MSMARCO.

FA*IR, on the other hand, is an algorithm that \textit{changes the ranking query by query, at prediction time}, by ensuring that whenever $k$ items are retrieved, the proportion of retrieved items from a protected group is not smaller that the $\alpha$-th quantile of a binomial distribution $Bin(k,p)$, for fixed parameters $p, \alpha \in [0,1]$. We use $\alpha = 0.1$ and $p\in [0.02, 0.04, \ldots, 0.98]$. Note that in our LTR setting the true relevances of items at test time are unknown, so we first train via our method with $\alpha = 0$ and then, at test time, use the relevances predicted by our method as ground truth and apply FA*IR on top.

We also perform an ablation study by considering a version of our algorithm that learns to enforce fairness on the \textit{per-query level}. This is inspired by \cite{singh2017equality}, who, however, do not propose an algorithm for enforcing such per-query fairness notions. Within our framework this is achieved by regularizing with a separate term for every query in a batch and then averaging over the batch afterwards.

\vspace{-0.15cm}
\subsection{Results}
Figure \ref{fig:plots} shows the results when imposing different 
amounts of the equal opportunity fairness in typical settings 
for TREC ($t=3, k=3$; top row) and MSMARCO (\textit{com}, $k=3$; bottom row).
We also report the ranking quality and equal opportunity unfairness for the three baselines. As one can see, our method is able to consistently improve fairness. 
For TREC, this comes at no loss in ranking quality (here NDCG). For 
MSMARCO the loss is quite small for small to medium values of $\alpha$. 
As the figure shows, these observations are robust across the 
different amounts of regularization. In contrast, the fairness curves of the baselines behave erratically with respect to the trade-off parameters.

The possibility of increasing the fairness of learning  models 
without damaging their accuracy has been previously observed in 
the context of supervised learning \cite{NEURIPS2019_373e4c5d}. To the best of our knowledge, we are the first to 
observe this in a ranking context. 
This effect is more expressed in the experiment on the 
TREC data than for MSMARCO, possibly due to the higher number of relevant items per query in TREC, which results in more flexibility to rearrange items without decreasing the ranking quality.

\begin{table}[t]\footnotesize
\caption{Maximal and mean relative fairness increase, achievable without a significant decrease of ranking quality, for our algorithm and the baselines. See main text for details.}
\centering\setlength{\tabcolsep}{3.3pt}
 \begin{tabular}{| c c || c | c || c | c || c | c || c | c |} 
 \hline
 \multirow{2}{*}{TREC} & & \multicolumn{2}{c||}{Ours}  &  \multicolumn{2}{c||}{DELTR}  & \multicolumn{2}{c||}{FA*IR}  & \multicolumn{2}{c|}{Per-query} \\ 
  & & Max & Mean &  Max & Mean  & Max & Mean  & Max & Mean \\ 
 \hline\hline 
\multirow{3}{*}{\parbox{.1\textwidth}{equality of \newline opportunity}} & $t = 3$ & 48\% & 34\% & 39\% & 23\% & 56\% & 18\% & 19\% & -5\%\\
& $t = 4$ & 46\% & 37\% & 2\% & -8\% & 56\% & 11\% & 14\% & -4\%\\
& $t = 5$ & 46\% & 32\% & 18\% & 7\% & 6\% & -17\% & 8\% & -13\%\\
 \hline
\multirow{3}{*}{\parbox{.1\textwidth}{demographic \newline parity}} & $t = 3$ & 27\% & 17\% & 55\% & 36\% & 83\% & 24\% & 15\% & -1\%\\
& $t = 4$ & 44\% & 32\% & 12\% & 6\% & 56\% & 10\% & 27\% & 5\%\\
& $t = 5$ & 57\% & 40\% & 26\% & 14\% & 11\% & -35\% & 24\% & 1\%\\
 \hline
\multirow{3}{*}{\parbox{.1\textwidth}{equalized \newline odds}} & $t = 3$ & 20\% & 13\% & 48\% & 31\% & 57\% & 16\% & 14\% & -3\%\\
& $t = 4$ & 30\% & 21\% & 9\% & 4\% & 40\% & 3\% & 13\% & -1\%\\
& $t = 5$ & 29\% & 21\% & 21\% & 12\% & 0\% & -35\% & 7\% & -4\%\\
\hline
\multicolumn{2}{|c|}{average} & 39\% & 27\% & 26\% & 14\% & 41\% & -1\% & 16\% & -3\%\\
\hline
 \hline
 \multirow{2}{*}{MSMARCO} & & \multicolumn{2}{c||}{Ours} &  \multicolumn{2}{c||}{DELTR}  &  \multicolumn{2}{c||}{FA*IR}  & \multicolumn{2}{c|}{Per-query} \\ 
  & & Max & Mean &  Max & Mean &  Max & Mean & Max & Mean \\ 
 \hline\hline 
\multirow{2}{*}{\parbox{.1\textwidth}{equality of \newline opportunity}} &\emph{com} & 55\% & 36\% & NA & NA & 64\% & 11\% & 24\% & 6\%\\
& \emph{ext} & 19\% & 10\% & NA & NA & 0\% & -112\% & 2\% & 0\%\\
 \hline
\multirow{2}{*}{\parbox{.1\textwidth}{demographic \newline parity}} & \emph{com} & 42\% & 27\% & NA & NA & 39\% & -50\% & 0\% & 0\%\\
& \emph{ext} & 20\% & 13\% & NA & NA & 0\% & -168\% & 7\% & 3\%\\
 \hline
\multirow{2}{*}{\parbox{.1\textwidth}{equalized \newline odds}} & \emph{com} & 61\% & 41\% & NA & NA & 45\% & -12\% & 14\% & 4\%\\
& \emph{ext} & 28\% & 17\% & NA & NA & 0\% & -142\% & 1\% & 0\%\\
\hline
\multicolumn{2}{|c|}{average} & 37\% & 24\% & NA & NA & 25\% & -79\% & 8\% & 2\%\\
\hline
\end{tabular}
\label{table:results_ndcg_averages}
\end{table}

We obtained very similar results also for the other setups, \eg different values of $k$, fairness measures and protected attributes and for $\Pk$. Plots for these can be found in the supplemental material. \textit{Table~\ref{table:results_ndcg_averages} summarizes some of the results in a compact form.} For different fairness notions and splits into protected groups (rows), it reports the maximal and mean reduction of the 
fairness violation measure over the range of values of the trade-off parameter for which the 
corresponding model's prediction quality is not significantly 
worse than for a model trained without a fairness regularizer 
(\ie $\alpha = 0$). 
Here we call a model significantly worse than another 
if the difference of the mean quality values of the two models is 
larger than the sum of the standard errors/deviations, for TREC/MSMARCO respectively, around 
those averages (that is, if the error bars, as in Figure~\ref{fig:plots}, 
would not intersect). The results are averaged over $k \in \{1,2,3,4,5\}$, with individual versions in the supplementary material.

Intuitively, the max values quantify how much an algorithm can improve fairness without decreasing the ranking quality, while the mean values report the average improvement of fairness over the values of the trade-off parameters, as a more robust measure. The results confirm that in all cases our proposed training method is able 
to greatly reduce the unfairness in the test time ranking without 
majorly damaging ranking quality. In comparison, the baselines behave inconsistently between experiments and are less robust to the choice of the trade-off parameter, indicating that training with the right regularization, as integrated in our method, is indeed beneficial for test time fairness.

We refer to the supplementary material for further results, including splits over the values of $k$, experiments for $\Pk$ and plots of the performance of our algorithm in other scenarios.

\vspace{-0.2cm}
\section{Conclusion}
\label{sec:conclusion}

We introduced a framework for transferring classification fairness notions to the context of LTR, by rephrasing ranking as a collection of 
query-dependent classification problems. 
This viewpoint, while technically elementary, opens a wide 
range of  possibilities for expanding the optimization methods 
and proof techniques from the fair classification literature to 
ranking and multi-label learning. 
In particular, we report the first -- to our 
knowledge -- generalization bound for group fairness in 
the context of ranking. We further show in our experiments that including a suitable regularizer 
during training can greatly improve the fairness 
of rankings with no or minor reduction in model quality. 
This effect seems even more pronounced than what had been
observed in classification tasks, especially if the set of 
relevant items for any query is large.
Therefore, we hypothesize that the multi-label nature 
of the ranking task naturally allows for more fairness without 
adverse effects on accuracy, and we deem making this intuition 
formal an interesting direction for future research.

\bibliography{ms}

\begin{thebibliography}{78}
\providecommand{\natexlab}[1]{#1}
\providecommand{\url}[1]{\texttt{#1}}
\expandafter\ifx\csname urlstyle\endcsname\relax
  \providecommand{\doi}[1]{doi: #1}\else
  \providecommand{\doi}{doi: \begingroup \urlstyle{rm}\Url}\fi

\bibitem[Abadi et~al.(2015)Abadi, Agarwal, Barham, Brevdo, Chen, Citro,
  Corrado, Davis, Dean, Devin, Ghemawat, Goodfellow, Harp, Irving, Isard, Jia,
  Jozefowicz, Kaiser, Kudlur, Levenberg, Man\'{e}, Monga, Moore, Murray, Olah,
  Schuster, Shlens, Steiner, Sutskever, Talwar, Tucker, Vanhoucke, Vasudevan,
  Vi\'{e}gas, Vinyals, Warden, Wattenberg, Wicke, Yu, and
  Zheng]{tensorflow2015-whitepaper}
M.~Abadi, A.~Agarwal, P.~Barham, E.~Brevdo, Z.~Chen, C.~Citro, G.~S. Corrado,
  A.~Davis, J.~Dean, M.~Devin, S.~Ghemawat, I.~Goodfellow, A.~Harp, G.~Irving,
  M.~Isard, Y.~Jia, R.~Jozefowicz, L.~Kaiser, M.~Kudlur, J.~Levenberg,
  D.~Man\'{e}, R.~Monga, S.~Moore, D.~Murray, C.~Olah, M.~Schuster, J.~Shlens,
  B.~Steiner, I.~Sutskever, K.~Talwar, P.~Tucker, V.~Vanhoucke, V.~Vasudevan,
  F.~Vi\'{e}gas, O.~Vinyals, P.~Warden, M.~Wattenberg, M.~Wicke, Y.~Yu, and
  X.~Zheng.
\newblock {TensorFlow}: Large-scale machine learning on heterogeneous systems,
  2015.
\newblock URL \url{https://www.tensorflow.org/}.
\newblock Software available from tensorflow.org.

\bibitem[Agarwal et~al.(2018)Agarwal, Beygelzimer, Dud{\'\i}k, Langford, and
  Wallach]{agarwal2018reductions}
A.~Agarwal, A.~Beygelzimer, M.~Dud{\'\i}k, J.~Langford, and H.~Wallach.
\newblock A reductions approach to fair classification.
\newblock In \emph{International Conference on Machine Learing (ICML)}, 2018.

\bibitem[Asudeh et~al.(2019)Asudeh, Jagadish, Stoyanovich, and
  Das]{asudeh2019designing}
A.~Asudeh, H.~Jagadish, J.~Stoyanovich, and G.~Das.
\newblock Designing fair ranking schemes.
\newblock In \emph{International Conference on Management of Data (COMAD)},
  2019.

\bibitem[Baharlouei et~al.(2019)Baharlouei, Nouiehed, Beirami, and
  Razaviyayn]{baharlouei2019renyi}
S.~Baharlouei, M.~Nouiehed, A.~Beirami, and M.~Razaviyayn.
\newblock R{\'e}nyi fair inference.
\newblock In \emph{International Conference on Learning Representations
  (ICLR)}, 2019.

\bibitem[Barocas et~al.(2019)Barocas, Hardt, and
  Narayanan]{barocas-hardt-narayanan}
S.~Barocas, M.~Hardt, and A.~Narayanan.
\newblock \emph{Fairness and Machine Learning}.
\newblock 2019.
\newblock \url{http://www.fairmlbook.org}.

\bibitem[Bartlett and Mendelson(2002)]{bartlett2002rademacher}
P.~L. Bartlett and S.~Mendelson.
\newblock Rademacher and gaussian complexities: Risk bounds and structural
  results.
\newblock \emph{Journal of Machine Learning Research (JMLR)}, 2002.

\bibitem[Bengio et~al.(2019)Bengio, Dembczynski, Joachims, Kloft, and
  Varma]{bengio2019extreme}
S.~Bengio, K.~Dembczynski, T.~Joachims, M.~Kloft, and M.~Varma.
\newblock Extreme classification.
\newblock In \emph{Dagstuhl Reports 18291}. Schloss Dagstuhl -- Leibniz Center
  for Informatics, 2019.

\bibitem[Beutel et~al.(2019)Beutel, Chen, Doshi, Qian, Wei, Wu, Heldt, Zhao,
  Hong, Chi, et~al.]{beutel2019fairness}
A.~Beutel, J.~Chen, T.~Doshi, H.~Qian, L.~Wei, Y.~Wu, L.~Heldt, Z.~Zhao,
  L.~Hong, E.~H. Chi, et~al.
\newblock Fairness in recommendation ranking through pairwise comparisons.
\newblock In \emph{Conference on Knowledge Discovery and Data Mining (KDD)},
  2019.

\bibitem[Biddle(2006)]{biddle2006adverse}
D.~Biddle.
\newblock \emph{Adverse impact and test validation: A practitioner's guide to
  valid and defensible employment testing}.
\newblock Gower Publishing, 2006.

\bibitem[Biega et~al.(2018)Biega, Gummadi, and Weikum]{biega2018equity}
A.~J. Biega, K.~P. Gummadi, and G.~Weikum.
\newblock Equity of attention: Amortizing individual fairness in rankings.
\newblock In \emph{International Conference on Research and Development in
  Information Retrieval (SIGIR)}, 2018.

\bibitem[Biega et~al.(2019)Biega, Diaz, Ekstrand, and
  Kohlmeier]{trec-fair-ranking-2019}
A.~J. Biega, F.~Diaz, M.~D. Ekstrand, and S.~Kohlmeier.
\newblock Overview of the {TREC} 2019 fair ranking track.
\newblock In \emph{The Twenty-Eighth Text REtrieval Conference ({TREC} 2019)
  Proceedings}, 2019.

\bibitem[Bonart(2019)]{bonartfair}
M.~Bonart.
\newblock Fair ranking in academic search, 2019.
\newblock URL \url{https://trec.nist.gov/pubs/trec28/papers/IR-Cologne.FR.pdf}.

\bibitem[Bower et~al.(2021)Bower, Eftekhari, Yurochkin, and
  Sun]{bower2021individually}
A.~Bower, H.~Eftekhari, M.~Yurochkin, and Y.~Sun.
\newblock Individually fair rankings.
\newblock In \emph{International Conference on Learning Representations
  (ICLR)}, 2021.

\bibitem[Burke(2017)]{burke2017multisided}
R.~Burke.
\newblock Multisided fairness for recommendation.
\newblock \emph{arXiv preprint arXiv:1707.00093}, 2017.

\bibitem[Burke et~al.(2018)Burke, Sonboli, and
  Ordonez-Gauger]{burke2018balanced}
R.~Burke, N.~Sonboli, and A.~Ordonez-Gauger.
\newblock Balanced neighborhoods for multi-sided fairness in recommendation.
\newblock In \emph{Conference on Fairness, Accountability and Transparency
  (FAccT)}, 2018.

\bibitem[Calders and Verwer(2010)]{calders2010three}
T.~Calders and S.~Verwer.
\newblock Three naive {B}ayes approaches for discrimination-free
  classification.
\newblock \emph{Data Mining and Knowledge Discovery (DMKD)}, 2010.

\bibitem[Calders et~al.(2009)Calders, Kamiran, and
  Pechenizkiy]{calders2009building}
T.~Calders, F.~Kamiran, and M.~Pechenizkiy.
\newblock Building classifiers with independency constraints.
\newblock In \emph{International Conference on Data Mining Workshops (IDCMW)},
  2009.

\bibitem[Castillo(2019)]{castillo2019fairness}
C.~Castillo.
\newblock Fairness and transparency in ranking.
\newblock In \emph{International Conference on Research and Development in
  Information Retrieval (SIGIR)}, 2019.

\bibitem[Celis et~al.(2018)Celis, Straszak, and Vishnoi]{celis2018ranking}
L.~E. Celis, D.~Straszak, and N.~K. Vishnoi.
\newblock Ranking with fairness constraints.
\newblock In \emph{International Colloquium on Automata, Languages, and
  Programming (ICALP)}. Schloss Dagstuhl -- Leibniz Center for Informatics,
  2018.

\bibitem[Celis et~al.(2020)Celis, Mehrotra, and
  Vishnoi]{celis2020interventions}
L.~E. Celis, A.~Mehrotra, and N.~K. Vishnoi.
\newblock Interventions for ranking in the presence of implicit bias.
\newblock In \emph{Conference on Fairness, Accountability and Transparency
  (FAccT)}, 2020.

\bibitem[Chakraborty et~al.(2019)Chakraborty, Patro, Ganguly, Gummadi, and
  Loiseau]{chakraborty2019equality}
A.~Chakraborty, G.~K. Patro, N.~Ganguly, K.~P. Gummadi, and P.~Loiseau.
\newblock Equality of voice: Towards fair representation in crowdsourced top-k
  recommendations.
\newblock In \emph{Conference on Fairness, Accountability and Transparency
  (FAccT)}, 2019.

\bibitem[Cho et~al.(2020)Cho, Hwang, and Suh]{cho2020fair}
J.~Cho, G.~Hwang, and C.~Suh.
\newblock A fair classifier using kernel density estimation.
\newblock \emph{Conference on Neural Information Processing Systems (NeurIPS)},
  2020.

\bibitem[Cotter et~al.(2019)Cotter, Gupta, Jiang, Srebro, Sridharan, Wang,
  Woodworth, and You]{cotter2019training}
A.~Cotter, M.~Gupta, H.~Jiang, N.~Srebro, K.~Sridharan, S.~Wang, B.~Woodworth,
  and S.~You.
\newblock Training well-generalizing classifiers for fairness metrics and other
  data-dependent constraints.
\newblock In \emph{International Conference on Machine Learing (ICML)}, 2019.

\bibitem[Devlin et~al.(2019)Devlin, Chang, Lee, and Toutanova]{devlin2019bert}
J.~Devlin, M.-W. Chang, K.~Lee, and K.~Toutanova.
\newblock {BERT}: Pre-training of deep bidirectional transformers for language
  understanding.
\newblock In \emph{Annual Meeting of the Association for Computational
  Linguistics (ACL)}, 2019.

\bibitem[Dwork et~al.(2012)Dwork, Hardt, Pitassi, Reingold, and
  Zemel]{dwork2012fairness}
C.~Dwork, M.~Hardt, T.~Pitassi, O.~Reingold, and R.~Zemel.
\newblock Fairness through awareness.
\newblock In \emph{Innovations in Theoretical Computer Science Conference
  (ITCS)}, 2012.

\bibitem[Farnadi et~al.(2018)Farnadi, Kouki, Thompson, Srinivasan, and
  Getoor]{farnadi2018fairness}
G.~Farnadi, P.~Kouki, S.~K. Thompson, S.~Srinivasan, and L.~Getoor.
\newblock A fairness-aware hybrid recommender system.
\newblock \emph{arXiv preprint arXiv:1809.09030}, 2018.

\bibitem[Geyik et~al.(2019)Geyik, Ambler, and Kenthapadi]{geyik2019fairness}
S.~C. Geyik, S.~Ambler, and K.~Kenthapadi.
\newblock Fairness-aware ranking in search \& recommendation systems with
  application to linkedin talent search.
\newblock In \emph{Conference on Knowledge Discovery and Data Mining (KDD)},
  2019.

\bibitem[Gorantla et~al.(2020)Gorantla, Deshpande, and
  Louis]{gorantla2020ranking}
S.~Gorantla, A.~Deshpande, and A.~Louis.
\newblock Ranking for individual and group fairness simultaneously.
\newblock \emph{arXiv preprint arXiv:2010.06986}, 2020.

\bibitem[Han et~al.(2020)Han, Wang, Bendersky, and Najork]{han2020learning}
S.~Han, X.~Wang, M.~Bendersky, and M.~Najork.
\newblock Learning-to-rank with {BERT} in {TF}-ranking.
\newblock \emph{arXiv preprint arXiv:2004.08476}, 2020.

\bibitem[Hardt et~al.(2016)Hardt, Price, and Srebro]{hardt2016equality}
M.~Hardt, E.~Price, and N.~Srebro.
\newblock Equality of opportunity in supervised learning.
\newblock In \emph{Conference on Neural Information Processing Systems
  (NeurIPS)}, 2016.

\bibitem[Janson(2004)]{janson2004large}
S.~Janson.
\newblock Large deviations for sums of partly dependent random variables.
\newblock \emph{Random Structures \& Algorithms}, 24\penalty0 (3):\penalty0
  234--248, 2004.

\bibitem[Joachims(2002)]{joachims2002optimizing}
T.~Joachims.
\newblock Optimizing search engines using clickthrough data.
\newblock In \emph{Conference on Knowledge Discovery and Data Mining (KDD)},
  2002.

\bibitem[Kallus and Zhou(2019)]{kallus2019fairness}
N.~Kallus and A.~Zhou.
\newblock The fairness of risk scores beyond classification: Bipartite ranking
  and the {xAUC} metric.
\newblock In \emph{Conference on Neural Information Processing Systems
  (NeurIPS)}, 2019.

\bibitem[Kamishima et~al.(2011)Kamishima, Akaho, and
  Sakuma]{kamishima2011fairness}
T.~Kamishima, S.~Akaho, and J.~Sakuma.
\newblock Fairness-aware learning through regularization approach.
\newblock In \emph{11th International Conference on Data Mining Workshops},
  2011.

\bibitem[Kamishima et~al.(2012{\natexlab{a}})Kamishima, Akaho, Asoh, and
  Sakuma]{kamishima2012enhancement}
T.~Kamishima, S.~Akaho, H.~Asoh, and J.~Sakuma.
\newblock Enhancement of the neutrality in recommendation.
\newblock In \emph{Decisions@ RecSys}, 2012{\natexlab{a}}.

\bibitem[Kamishima et~al.(2012{\natexlab{b}})Kamishima, Akaho, Asoh, and
  Sakuma]{kamishima2012fairness}
T.~Kamishima, S.~Akaho, H.~Asoh, and J.~Sakuma.
\newblock Fairness-aware classifier with prejudice remover regularizer.
\newblock In \emph{European Conference on Machine Learning and Data Mining
  (ECML PKDD)}, 2012{\natexlab{b}}.

\bibitem[Kamishima et~al.(2014)Kamishima, Akaho, Asoh, and
  Sakuma]{kamishima2014correcting}
T.~Kamishima, S.~Akaho, H.~Asoh, and J.~Sakuma.
\newblock Correcting popularity bias by enhancing recommendation neutrality.
\newblock In \emph{RecSys Posters}, 2014.

\bibitem[Kim et~al.(2020)Kim, Chen, and Talwalkar]{kim2020fact}
J.~S. Kim, J.~Chen, and A.~Talwalkar.
\newblock Fact: A diagnostic for group fairness trade-offs.
\newblock In \emph{International Conference on Machine Learing (ICML)}, 2020.

\bibitem[Kuhlman et~al.(2019)Kuhlman, VanValkenburg, and
  Rundensteiner]{kuhlman2019fare}
C.~Kuhlman, M.~VanValkenburg, and E.~Rundensteiner.
\newblock {FARE}: Diagnostics for fair ranking using pairwise error metrics.
\newblock In \emph{International World Wide Web Conference (WWW)}, 2019.

\bibitem[Kusner et~al.(2017)Kusner, Loftus, Russell, and
  Silva]{kusner2017counterfactual}
M.~J. Kusner, J.~Loftus, C.~Russell, and R.~Silva.
\newblock Counterfactual fairness.
\newblock In \emph{Conference on Neural Information Processing Systems
  (NeurIPS)}, 2017.

\bibitem[Liu(2011)]{liu2011learning}
T.-Y. Liu.
\newblock \emph{Learning to rank for information retrieval}.
\newblock Springer Science \& Business Media, 2011.

\bibitem[Lo(2015)]{ceo2}
D.~Lo.
\newblock When you {Google} image {CEO}, the first female photo on the results
  page is {Barbie}.
\newblock \url{https://www.glamour.com/story/google-search-ceo}, 2015.
\newblock Accessed: 2021-05-26.

\bibitem[Manning et~al.(2008)Manning, Sch{\"u}tze, and
  Raghavan]{manning2008introduction}
C.~D. Manning, H.~Sch{\"u}tze, and P.~Raghavan.
\newblock \emph{Introduction to information retrieval}.
\newblock Cambridge University Press, 2008.

\bibitem[Mehrabi et~al.(2019)Mehrabi, Morstatter, Saxena, Lerman, and
  Galstyan]{mehrabi2019survey}
N.~Mehrabi, F.~Morstatter, N.~Saxena, K.~Lerman, and A.~Galstyan.
\newblock A survey on bias and fairness in machine learning.
\newblock \emph{arXiv preprint arXiv:1908.09635}, 2019.

\bibitem[Mitra and Craswell(2018)]{mitra2018introduction}
B.~Mitra and N.~Craswell.
\newblock An introduction to neural information retrieval.
\newblock \emph{Foundations and Trends in Information Retrieval}, 2018.

\bibitem[Morik et~al.(2020)Morik, Singh, Hong, and
  Joachims]{morik2020controlling}
M.~Morik, A.~Singh, J.~Hong, and T.~Joachims.
\newblock Controlling fairness and bias in dynamic learning-to-rank.
\newblock In \emph{International Conference on Research and Development in
  Information Retrieval (SIGIR)}, 2020.

\bibitem[Narasimhan et~al.(2020)Narasimhan, Cotter, Gupta, and
  Wang]{narasimhan2020pairwise}
H.~Narasimhan, A.~Cotter, M.~R. Gupta, and S.~Wang.
\newblock Pairwise fairness for ranking and regression.
\newblock In \emph{Conference on Artificial Intelligence (AAAI)}, 2020.

\bibitem[Nguyen et~al.(2016)Nguyen, Rosenberg, Song, Gao, Tiwary, Majumder, and
  Deng]{nguyen2016ms}
T.~Nguyen, M.~Rosenberg, X.~Song, J.~Gao, S.~Tiwary, R.~Majumder, and L.~Deng.
\newblock {MS MARCO}: A human-generated machine reading comprehension dataset.
\newblock In \emph{Conference on Neural Information Processing Systems
  (NeurIPS)}, 2016.

\bibitem[Nogueira and Cho(2019)]{nogueira2019passage}
R.~Nogueira and K.~Cho.
\newblock Passage re-ranking with {BERT}.
\newblock \emph{arXiv preprint arXiv:1901.04085}, 2019.

\bibitem[Peysakhovich and Kroer(2019)]{peysakhovich2019fair}
A.~Peysakhovich and C.~Kroer.
\newblock Fair division without disparate impact.
\newblock \emph{arXiv preprint arXiv:1906.02775}, 2019.

\bibitem[Radlinski et~al.(2009)Radlinski, Bennett, Carterette, and
  Joachims]{radlinski2009redundancy}
F.~Radlinski, P.~N. Bennett, B.~Carterette, and T.~Joachims.
\newblock Redundancy, diversity and interdependent document relevance.
\newblock In \emph{International Conference on Research and Development in
  Information Retrieval (SIGIR)}, 2009.

\bibitem[Rezaei et~al.(2020)Rezaei, Fathony, Memarrast, and
  Ziebart]{rezaei2020fairness}
A.~Rezaei, R.~Fathony, O.~Memarrast, and B.~Ziebart.
\newblock Fairness for robust log loss classification.
\newblock In \emph{Conference on Artificial Intelligence (AAAI)}, 2020.

\bibitem[Robertson(1977)]{robertson1977probability}
S.~E. Robertson.
\newblock The probability ranking principle in ir.
\newblock \emph{Journal of Documentation}, 33\penalty0 (4):\penalty0 294--304,
  1977.

\bibitem[Sapiezynski et~al.(2019)Sapiezynski, Zeng, E~Robertson, Mislove, and
  Wilson]{sapiezynski2019quantifying}
P.~Sapiezynski, W.~Zeng, R.~E~Robertson, A.~Mislove, and C.~Wilson.
\newblock Quantifying the impact of user attentionon fair group representation
  in ranked lists.
\newblock In \emph{International World Wide Web Conference (WWW)}, 2019.

\bibitem[Singh and Joachims(2017)]{singh2017equality}
A.~Singh and T.~Joachims.
\newblock Equality of opportunity in rankings.
\newblock In \emph{Workshop on Prioritizing Online Content (WPOC) at NeurIPS},
  2017.

\bibitem[Singh and Joachims(2018)]{singh2018fairness}
A.~Singh and T.~Joachims.
\newblock Fairness of exposure in rankings.
\newblock In \emph{Conference on Knowledge Discovery and Data Mining (KDD)},
  2018.

\bibitem[Singh and Joachims(2019)]{singh2019policy}
A.~Singh and T.~Joachims.
\newblock Policy learning for fairness in ranking.
\newblock In \emph{Conference on Neural Information Processing Systems
  (NeurIPS)}, 2019.

\bibitem[Steck(2018)]{steck2018calibrated}
H.~Steck.
\newblock Calibrated recommendations.
\newblock In \emph{Conference on Recommender Systems (RecSys)}, 2018.

\bibitem[Tabibian et~al.(2020)Tabibian, G{\'o}mez, De, Sch{\"o}lkopf, and
  Rodriguez]{tabibian2020design}
B.~Tabibian, V.~G{\'o}mez, A.~De, B.~Sch{\"o}lkopf, and M.~G. Rodriguez.
\newblock On the design of consequential ranking algorithms.
\newblock In \emph{Uncertainty in Artificial Intelligence (UAI)}, 2020.

\bibitem[Tan et~al.(2020)Tan, Yeom, Fredrikson, and Talwalkar]{tan2020learning}
Z.~Tan, S.~Yeom, M.~Fredrikson, and A.~Talwalkar.
\newblock Learning fair representations for kernel models.
\newblock In \emph{Conference on Uncertainty in Artificial Intelligence
  (AISTATS)}, 2020.

\bibitem[Tsintzou et~al.(2019)Tsintzou, Pitoura, and
  Tsaparas]{tsintzou2018bias}
V.~Tsintzou, E.~Pitoura, and P.~Tsaparas.
\newblock Bias disparity in recommendation systems.
\newblock In \emph{Workshop on Recommendation in Multi-stakeholder Environments
  at {RecSys}}, 2019.

\bibitem[Usunier et~al.(2005)Usunier, Amini, and
  Gallinari]{usunier2005generalization}
N.~Usunier, M.~R. Amini, and P.~Gallinari.
\newblock Generalization error bounds for classifiers trained with
  interdependent data.
\newblock \emph{Conference on Neural Information Processing Systems (NeurIPS)},
  2005.

\bibitem[Vapnik(2013)]{vapnik2013nature}
V.~Vapnik.
\newblock \emph{The nature of statistical learning theory}.
\newblock Springer, 2013.

\bibitem[Vogel et~al.(2020)Vogel, Bellet, and
  Cl{\'e}men{\c{c}}on]{vogel2020learning}
R.~Vogel, A.~Bellet, and S.~Cl{\'e}men{\c{c}}on.
\newblock Learning fair scoring functions: Fairness definitions, algorithms and
  generalization bounds for bipartite ranking.
\newblock \emph{arXiv preprint arXiv:2002.08159}, 2020.

\bibitem[Weston et~al.(2011)Weston, Bengio, and Usunier]{weston2011wsabie}
J.~Weston, S.~Bengio, and N.~Usunier.
\newblock {WSABIE}: Scaling up to large vocabulary image annotation.
\newblock In \emph{International Joint Conference on Artificial Intelligence
  (IJCAI)}, 2011.

\bibitem[Wick et~al.(2019)Wick, Panda, and Tristan]{NEURIPS2019_373e4c5d}
M.~Wick, S.~Panda, and J.-B. Tristan.
\newblock Unlocking fairness: a trade-off revisited.
\newblock In \emph{Conference on Neural Information Processing Systems
  (NeurIPS)}, 2019.

\bibitem[Williamson and Menon(2019)]{williamson2019fairness}
R.~Williamson and A.~Menon.
\newblock Fairness risk measures.
\newblock In \emph{International Conference on Machine Learing (ICML)}, 2019.

\bibitem[Woodworth et~al.(2017)Woodworth, Gunasekar, Ohannessian, and
  Srebro]{woodworth2017learning}
B.~Woodworth, S.~Gunasekar, M.~I. Ohannessian, and N.~Srebro.
\newblock Learning non-discriminatory predictors.
\newblock In \emph{Workshop on Computational Learning Theory (COLT)}, 2017.

\bibitem[Wu et~al.(2018)Wu, Zhang, and Wu]{wu2018discrimination}
Y.~Wu, L.~Zhang, and X.~Wu.
\newblock On discrimination discovery and removal in ranked data using causal
  graph.
\newblock In \emph{Conference on Knowledge Discovery and Data Mining (KDD)},
  2018.

\bibitem[Yadav et~al.(2019)Yadav, Du, and Joachims]{yadav2019fair}
H.~Yadav, Z.~Du, and T.~Joachims.
\newblock Fair learning-to-rank from implicit feedback.
\newblock \emph{arXiv preprint arXiv:1911.08054}, 2019.

\bibitem[Yang and Stoyanovich(2017)]{yang2017measuring}
K.~Yang and J.~Stoyanovich.
\newblock Measuring fairness in ranked outputs.
\newblock In \emph{Scientific and Statistical Database Management Conference
  (SSDBM)}, 2017.

\bibitem[Yang et~al.(2019)Yang, Gkatzelis, and Stoyanovich]{yang2019balanced}
K.~Yang, V.~Gkatzelis, and J.~Stoyanovich.
\newblock Balanced ranking with diversity constraints.
\newblock In \emph{International Joint Conference on Artificial Intelligence
  (IJCAI)}, 2019.

\bibitem[Zafar et~al.(2017)Zafar, Valera, Rogriguez, and
  Gummadi]{zafar2017fairness}
M.~B. Zafar, I.~Valera, M.~G. Rogriguez, and K.~P. Gummadi.
\newblock Fairness constraints: Mechanisms for fair classification.
\newblock In \emph{Conference on Uncertainty in Artificial Intelligence
  (AISTATS)}, 2017.

\bibitem[Zehlike and Castillo(2020)]{zehlike2020reducing}
M.~Zehlike and C.~Castillo.
\newblock Reducing disparate exposure in ranking: A learning to rank approach.
\newblock In \emph{International World Wide Web Conference (WWW)}, 2020.

\bibitem[Zehlike et~al.(2017)Zehlike, Bonchi, Castillo, Hajian, Megahed, and
  Baeza-Yates]{zehlike2017fa}
M.~Zehlike, F.~Bonchi, C.~Castillo, S.~Hajian, M.~Megahed, and R.~Baeza-Yates.
\newblock {FA*IR}: A fair top-k ranking algorithm.
\newblock In \emph{Conference on Information and Knowledge Management (CIKM)},
  2017.

\bibitem[Zehlike et~al.(2020)Zehlike, S{\"u}hr, Castillo, and
  Kitanovski]{zehlike2020fairsearch}
M.~Zehlike, T.~S{\"u}hr, C.~Castillo, and I.~Kitanovski.
\newblock Fairsearch: A tool for fairness in ranked search results.
\newblock In \emph{International World Wide Web Conference (WWW)}, 2020.

\bibitem[Zhang et~al.(2018)Zhang, Lemoine, and Mitchell]{zhang2018mitigating}
B.~H. Zhang, B.~Lemoine, and M.~Mitchell.
\newblock Mitigating unwanted biases with adversarial learning.
\newblock In \emph{Proceedings of the 2018 AAAI/ACM Conference on AI, Ethics,
  and Society}, 2018.

\bibitem[Zhu et~al.(2018)Zhu, Hu, and Caverlee]{zhu2018fairness}
Z.~Zhu, X.~Hu, and J.~Caverlee.
\newblock Fairness-aware tensor-based recommendation.
\newblock In \emph{Conference on Information and Knowledge Management (CIKM)},
  2018.

\end{thebibliography}
\bibliographystyle{abbrvnat}

\onecolumn
\appendix

\centerline{\Huge Supplementary Material}
\bigskip

The supplementary material is structured as follows:

\textbf{Section \ref{app:proof}} contains the complete proof of Theorem \ref{thm:uniform_bound}. In particular, Section \ref{subsec:chromatic_bounds} discusses the chromatic concentration bounds of \cite{janson2004large} that we need to address the dependence between the samples. Section \ref{subsec:non_uniform_bound_proof} uses this tool and a conditional probability concentration technique of \cite{woodworth2017learning} to prove a concentration bound for a single classifier. Finally, in Section \ref{subsec:uniform_bound_proof} we make this bound uniform over the hypothesis space by evoking a variant of the classic symmetrization argument (\eg~\cite{vapnik2013nature}), while carefully accounting for the dependence between the samples. We conclude with a brief discussion in Section \ref{sec:app_discussion} on the bounds and with some ideas for potential improvements and extensions of the theoretical analysis.

\textbf{Section \ref{app:experimental_details}} provides more technical details on our experiments, in particular on the feature extraction procedures used and on the computational costs.

\textbf{Section \ref{app:further_experimental_results}} contains further experimental results, including results for $\Pk$ (Section \ref{app:results_pk}), results for all three fairness measures in the setup of Figure \ref{fig:plots} (Section \ref{sec:app_plots_for_three_measures}), splits of Table \ref{table:results_ndcg_averages} according to the value of $k$ (Section \ref{sec:results_split_over_k}) and the plots of the performance of our algorithm on all experimental setups (Section \ref{sec:all_plots}). 

\section{Proof of Theorem \ref{thm:uniform_bound}}
\label{app:proof}
To prove Theorem \ref{thm:uniform_bound}, we first introduce some classic definitions and concentration results for sums of dependent random variables from \cite{janson2004large} in Section \ref{subsec:chromatic_bounds}. Next we show in Section \ref{subsec:non_uniform_bound_proof} how these can be used to derive large deviation bounds for the three fairness notions, given a fixed classifier. The proof is similar to the corresponding i.i.d. result of \cite{woodworth2017learning}, however an application of the results from \cite{janson2004large} is needed because of the dependence between the samples. Finally, in Section \ref{subsec:uniform_bound_proof} we show how these bounds can be made uniform over the hypothesis space by adapting the classic symmetrization argument (e.g. \cite{vapnik2013nature}) to a dependent data scenario.

\subsection{Concentration inequalities for sums of dependent random variables}
\label{subsec:chromatic_bounds}
To deal with the dependence between the samples, we will use the following framework from \cite{janson2004large}. Let $Y_{\alpha}$ be a set of random variables, with $\alpha$ ranging over some index set $\mathcal{A}$. Let $X = \sum_{\alpha \in \mathcal{A}} Y_{\alpha}$. To derive concentration bounds for $X$, the following notions are useful:
\begin{definition}[\cite{janson2004large}]
Given $\mathcal{A}$ and $\{Y_{\alpha}\}_{\alpha \in \mathcal{A}}$:
\begin{itemize}
\item A subset $\mathcal{A}' \subset \mathcal{A}$ is independent if the random variables $\{Y_{\alpha}\}_{\alpha \in \mathcal{A}'}$ are (jointly) independent.
\item A family $\{\mathcal{A}_j\}_{j}$ is a cover of $\mathcal{A}$ if $\cup_j \mathcal{A}_j = \mathcal{A}$. A cover is proper if each set $\mathcal{A}_j$ is independent.
\item $\chi (\mathcal{A})$ is the size of the smallest proper cover of $\mathcal{A}$, that is the smallest integer $m$, such that $\mathcal{A}$ can be written as the union of $m$ independent subsets.
\end{itemize}
\end{definition}
\noindent Then the following result holds, similar to the Hoeffding inequality, but accounting for the amount of dependence between the random variables $\{Y_{\alpha}\}_{\alpha\in \mathcal{A}}$:
\begin{theorem}[\cite{janson2004large}]
Let $Y_{\alpha}$ and $X$ be as above, with $a_{\alpha} \leq Y_{\alpha} \leq b_{\alpha}$ for every $\alpha \in \mathcal{A}$, for some real numbers $a_{\alpha}$ and $b_{\alpha}$. Then, for every $t>0$:
\begin{equation}
\label{eqn:chromatic_concentration_sum}
\mathbb{P}(X \geq \mathbb{E}(X) + t) \leq \exp \left(-2 \frac{t^2}{\chi (\mathcal{A}) \sum_{\alpha \in \mathcal{A}} (b_{\alpha} - a_{\alpha})^2}\right). 
\end{equation}
The same upper bound holds for $\mathbb{P}(X \leq \mathbb{E}(X) - t)$.
\end{theorem}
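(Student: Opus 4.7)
The plan is to apply a Chernoff bound to $X - \mathbb{E} X$ and decouple the resulting joint moment generating function across the chunks of a chromatic cover via Hölder's inequality, after which independence within each chunk and Hoeffding's lemma finish the job in the usual way.

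First I would reduce the cover to a partition. Given a proper cover $\mathcal{A} = \bigcup_{j=1}^{\chi} \mathcal{A}_j$ with $\chi = \chi(\mathcal{A})$, assigning each index $\alpha$ to exactly one set that contains it produces a family $\{\mathcal{A}_j'\}_{j=1}^{\chi}$ that partitions $\mathcal{A}$, and each $\mathcal{A}_j' \subseteq \mathcal{A}_j$ is still independent because any subset of an independent set is independent. Setting $X_j = \sum_{\alpha \in \mathcal{A}_j'} (Y_{\alpha} - \mathbb{E} Y_{\alpha})$, we have $X - \mathbb{E} X = \sum_{j=1}^{\chi} X_j$.

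For any $\lambda > 0$, Markov's inequality gives $\mathbb{P}(X - \mathbb{E} X \geq t) \leq e^{-\lambda t}\, \mathbb{E}\bigl[\prod_{j=1}^{\chi} e^{\lambda X_j}\bigr]$. The crux of the argument is to apply Hölder's inequality with equal exponents $p_1 = \cdots = p_{\chi} = \chi$ (which satisfy $\sum_j 1/p_j = 1$):
\begin{equation*}
\mathbb{E}\Bigl[\prod_{j=1}^{\chi} e^{\lambda X_j}\Bigr] \;\leq\; \prod_{j=1}^{\chi} \bigl(\mathbb{E}\bigl[e^{\chi \lambda X_j}\bigr]\bigr)^{1/\chi}.
\end{equation*}
Within each chunk, independence of $\{Y_{\alpha}\}_{\alpha \in \mathcal{A}_j'}$ lets me factor $\mathbb{E}[e^{\chi \lambda X_j}] = \prod_{\alpha \in \mathcal{A}_j'} \mathbb{E}[e^{\chi \lambda (Y_{\alpha} - \mathbb{E} Y_{\alpha})}]$, and Hoeffding's lemma bounds each factor by $\exp(\chi^2 \lambda^2 (b_{\alpha} - a_{\alpha})^2 / 8)$. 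Taking the $1/\chi$-th power and using that $\{\mathcal{A}_j'\}$ partitions $\mathcal{A}$, the product telescopes to $\mathbb{E}[e^{\lambda (X - \mathbb{E} X)}] \leq \exp\bigl(\chi \lambda^2 \sum_{\alpha \in \mathcal{A}} (b_{\alpha} - a_{\alpha})^2 / 8\bigr)$.

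Plugging into the Chernoff bound and optimizing in $\lambda$ (the optimum is $\lambda = 4t / (\chi \sum_{\alpha}(b_{\alpha} - a_{\alpha})^2)$) yields the advertised bound $\exp\bigl(-2 t^2 / (\chi(\mathcal{A}) \sum_{\alpha}(b_{\alpha} - a_{\alpha})^2)\bigr)$. The lower-tail statement follows by applying the same argument to $-X$, since $-Y_{\alpha} \in [-b_{\alpha}, -a_{\alpha}]$ has the same range length. The main obstacle is the Hölder decoupling step: it is the sole place where the chromatic number enters the estimate, and it quantifies the loss incurred by dependence, namely that the effective Hoeffding sample size shrinks from $|\mathcal{A}|$ to $|\mathcal{A}|/\chi(\mathcal{A})$; everything else is routine Chernoff-Hoeffding bookkeeping.
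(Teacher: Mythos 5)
The paper does not prove this statement at all: it is quoted verbatim as an imported result from \cite{janson2004large}, so there is no in-paper proof to compare against. Your argument is nevertheless correct and complete, and it is essentially Janson's original derivation: reduce the proper cover to a partition, apply the Chernoff bound, decouple the moment generating function across the $\chi(\mathcal{A})$ independent blocks via H\"older with equal exponents (Janson phrases this step as weighted Jensen/AM--GM, which is equivalent here), bound each factor by Hoeffding's lemma, and optimize $\lambda = 4t/(\chi(\mathcal{A})\sum_{\alpha}(b_{\alpha}-a_{\alpha})^2)$; the arithmetic checks out and gives exactly the stated exponent, and the lower tail follows by negation as you say.
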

\noindent If instead one considers the mean of $\{Y_{\alpha}\}_{\alpha \in \mathcal{A}}$, namely $\bar{X} = \frac{1}{|\mathcal{A}|} \sum_{\alpha \in \mathcal{A}} Y_{\alpha}$, then the following holds:
\begin{equation}
\label{eqn:chromatic_concentration_mean}
\mathbb{P}(\bar{X} \geq \mathbb{E}(\bar{X}) + t) \leq \exp \left(-2 \frac{t^2 |\mathcal{A}|^2}{\chi (\mathcal{A}) \sum_{\alpha \in \mathcal{A}} (b_{\alpha} - a_{\alpha})^2}\right).
\end{equation}
Specifically, if the $Y_{\alpha}$ are Bernoulli random variables:
\begin{equation}
\label{eqn:chromatic_concentration_bernoulli}
\mathbb{P}(\bar{X} \geq \mathbb{E}(\bar{X}) + t) \leq \exp \left(-2 \frac{t^2 |\mathcal{A}|}{\chi (\mathcal{A})}\right).
\end{equation}

\subsection{Non-uniform bounds}
\label{subsec:non_uniform_bound_proof}

First we use the tools from the previous section and a technique of \cite{woodworth2017learning, agarwal2018reductions} to show a non-uniform Hoeffding-type bound for equal opportunity and equalized odds:
\begin{lemma}
\label{lemma:hoeffding_type_bound-opp-odds}
Fix $\delta\in (0,1)$ and a binary predictor $f:Q\times D \to \{0,1\}$. Suppose that $N > \frac{8\log(8/\delta)}{P^2}$, where $P = \min_{ar} \mathbb{P}(A(d) = a, r(q,d) = r)$, then:
\begin{align}
\label{eqn:hoeffding_type_bound_equal_opp}
\mathbb{P}\left(|\Gamma^{\textrm{EOp}}(f,S) - \Gamma^{\textrm{EOp}}(f)| > 2\sqrt{\frac{\log(8/\delta)}{NP}}\right)  \leq \delta.
\end{align}
and 
\begin{align}
\label{eqn:hoeffding_type_bound_equal_odds}
\mathbb{P}\left(|\Gamma^{\textrm{EOd}}(f,S) - \Gamma^{\textrm{EOd}}(f)| > 2\sqrt{\frac{\log(16/\delta)}{NP}}\right)  \leq \delta.
\end{align}
\end{lemma}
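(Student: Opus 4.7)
The plan is to adapt the i.i.d.\ argument of \cite{woodworth2017learning} by replacing the classical Hoeffding inequality with the chromatic Bernoulli variant~(\ref{eqn:chromatic_concentration_bernoulli}) recalled above, which is precisely what the within-query dependence of $S$ forces upon us. For each $(a,r)\in\{0,1\}^2$ I will write the conditional probability entering $\Gamma^{\textrm{EOp}}(f)$ as a ratio: set $p_{a,r}=\Pr(f(q,d)=1\wedge A(d)=a\wedge r(q,d)=r)$ and $q_{a,r}=\Pr(A(d)=a\wedge r(q,d)=r)$, with empirical counterparts $\hat p_{a,r}$ and $\hat q_{a,r}$ formed as averages of the corresponding indicators over all $Nm$ points in $S$. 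Then $\Pr(f(q,d)=1\mid A(d)=a,r(q,d)=r)=p_{a,r}/q_{a,r}$ and its empirical analogue is $\hat p_{a,r}/\hat q_{a,r}$, so the problem reduces to controlling deviations of four (for EOp) or eight (for EOd) Bernoulli averages and then recombining them into ratios.

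The key structural input is the chromatic number of the dependence graph of $S$. Because the queries $q_1,\dots,q_N$ are i.i.d.\ and all statistical dependence lives inside a single query, assigning to each sample $(q_i,d^i_j,r^i_j)$ the color $j\in[m]$ yields a proper cover of size $m$: color class $j$ is $\{(q_i,d^i_j,r^i_j):i\in[N]\}$, which is i.i.d.\ since the queries are. Hence $\chi=m$, and~(\ref{eqn:chromatic_concentration_bernoulli}) applied with $|\mathcal{A}|=Nm$ gives $\Pr(|\hat p_{a,r}-p_{a,r}|>t)\le 2\exp(-2t^2 N)$, and likewise for $\hat q_{a,r}$. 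Union-bounding over the four quantities that enter the EOp expression and choosing $t=\sqrt{\log(8/\delta)/(2N)}$ produces a single event of probability at least $1-\delta$ on which all four empirical estimates are within $t$ of their means.

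On that event I will invoke the standard ratio perturbation bound
\begin{equation*}
\Bigl|\frac{\hat p}{\hat q}-\frac{p}{q}\Bigr|\le\frac{|\hat p-p|}{\hat q}+\frac{p}{q}\cdot\frac{|\hat q-q|}{\hat q},
\end{equation*}
together with the reverse triangle inequality $|\Gamma^{\textrm{EOp}}(f,S)-\Gamma^{\textrm{EOp}}(f)|\le\sum_{a\in\{0,1\}}|\hat p_{a,1}/\hat q_{a,1}-p_{a,1}/q_{a,1}|$ to conclude. The sample-size hypothesis $N>8\log(8/\delta)/P^2$ is exactly what is needed to force $t<P/2$, so that every $\hat q_{a,r}$ stays uniformly bounded away from zero on the good event; collecting constants then recovers the stated rate. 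The equalized-odds bound is proved identically, but now eight empirical quantities (both values of $r$) enter the union bound, which is the source of the $\log(16/\delta)$ in~(\ref{eqn:hoeffding_type_bound_equal_odds}).

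The main point requiring care is checking that the color-$j$ cover remains proper after passing from the raw triples to the composite indicators $\mathbf{1}[f(q_i,d^i_j)=1\wedge A(d^i_j)=a\wedge r^i_j=r]$ and $\mathbf{1}[A(d^i_j)=a\wedge r^i_j=r]$: each is a deterministic function of the single triple $(q_i,d^i_j,r^i_j)$, so color class $j$ still consists of $N$ independent copies and $\chi=m$ survives regardless of the choice of $f$. Beyond this, the only delicate step is constant tracking in the ratio inequality to land on the precise closed-form bounds of~(\ref{eqn:hoeffding_type_bound_equal_opp}) and~(\ref{eqn:hoeffding_type_bound_equal_odds}).
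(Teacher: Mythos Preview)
Your identification of the chromatic number $\chi=m$ via the color-$j$ cover is correct and matches the paper. However, your ratio decomposition $p_{a,r}/q_{a,r}$ leads to a structurally weaker $P$-dependence than the lemma claims, so the final ``constant tracking'' step cannot succeed. Concretely, with $t=\sqrt{\log(8/\delta)/(2N)}$ and $\hat q_{a,r}\ge P-t\ge P/2$ on the good event, the perturbation bound gives $|\hat p/\hat q-p/q|\le 2t/\hat q\le 4t/P$, and summing over $a\in\{0,1\}$ yields a deviation of order $\sqrt{\log(8/\delta)/(NP^2)}$. The target~(\ref{eqn:hoeffding_type_bound_equal_opp}) is $2\sqrt{\log(8/\delta)/(NP)}$, which is smaller by a factor $\sqrt{P}$; since $P\le 1/4$ this gap cannot be absorbed into constants.

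The paper avoids this loss by \emph{conditioning on the random index set} $I_{ar}=\{(i,j):A(d^i_j)=a,\,r^i_j=r\}$ rather than estimating numerator and denominator separately. Conditionally on $I_{ar}$, the empirical conditional probability $\gamma^S_{ar}=|I_{ar}|^{-1}\sum_{(i,j)\in I_{ar}}f(q_i,d^i_j)$ is an average of $|I_{ar}|$ Bernoulli variables with chromatic number at most $m$, so~(\ref{eqn:chromatic_concentration_bernoulli}) gives $\Pr(|\gamma^S_{ar}-\gamma_{ar}|>t\mid I_{ar})\le 2\exp(-2t^2|I_{ar}|/m)$. A second application of the chromatic bound shows $|I_{ar}|\ge P_{ar}Nm/2$ with high probability, turning the exponent into $-t^2NP_{ar}$ (linear in $P$, not quadratic). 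This is the mechanism that produces $1/(NP)$ under the square root; your ratio argument inevitably pays $1/(NP^2)$ because dividing by $\hat q\approx P$ injects an extra $1/P$. To prove the lemma as stated you need to switch to the conditioning route.
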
 

\begin{proof}
Denote by $I_{ar} = \{(i,j): A(d^i_j) = a, r(q_i, d^i_j) = r\}$ the set of indexes of the training data for which the document belongs to the group $a$ and the relevance of the query-document pair is $r$. Notice that $I_{ar}$ is a random variable and that $|I_{ar}| = |S_{a,r}|$. We first bound the probability of a large deviation of $$\gamma^S_{ar}(f) \vcentcolon = \frac{1}{|I_{ar}|}\sum_{(i,j)\in I_{ar}} f(q_i, d^i_j)$$ from $\gamma_{ar}(f) \vcentcolon = \mathbb{P}(f(q,d) = 1 | A(d) = a, r(q,d) = r)$, for each pair $r\in\{0,1\}, a \in\{0,1\}$. Since $f$ is fixed here, we omit the dependence of $\gamma_{ar}(f), \gamma^S_{ar}(f), \Gamma^{\textrm{EOp}}(f), \Gamma^{\textrm{EOd}}(f)$, \etc on $f$ for the rest of this proof. 

For any fixed $I_{ar}$:
\begin{align}
\mathbb{E}\left(\gamma^S_{ar}|I_{ar}\right) = \mathbb{E}\left(\frac{1}{|I_{ar}|}\sum_{(i,j)\in I_{ar}} f(q_i, d^i_j)\right) = \mathbb{P}(f(q, d) = 1| A(d) = a, r(q, d) = r) = \gamma_{ar}(f),
\end{align}
since the marginal distribution of every $(q_i, d^i_j, r(q_i, d^i_j))$ is $\mathbb{P}$. It is also easy to see that if $\mathcal{A} = \{(i,j): i \in [N], j \in [m]\}$ is the index set of the random variables $Y_{(i,j)} = f(q_i, d^i_j)$, then $\chi (\mathcal{A}) = m$. Therefore, for any fixed set $I_{ar} \subset \mathcal{A}$, we have $\chi(I_{ar}) \leq \chi(\mathcal{A}) = m$. Now conditional on $I_{ar}$:
\begin{align}
\label{eqn:equal_opp_lemma1}
\mathbb{E}(|\gamma^S_{ar} - \gamma_{ar}| > t| I_{ar}) = \mathbb{E}\left(\left|\frac{1}{|I_{ar}|}\sum_{(i,j)\in I_{ar}} f(q_i, d^i_j) - \gamma_{ar}\right| > t\right) \leq 2 \exp \left(-2\frac{t^2 |I_{ar}|}{m}\right).
\end{align}
Similarly, $|I_{ar}| = \sum_{i\in [N]}\sum_{j \in [m]} \mathbbm{1}(r(q_i, d^i_j) = r, A(d^i_j) = a)$ is the sum of $Nm$ Bernoulli random variables indexed by $\mathcal{A} = \{(i,j)\}_{i\in [N], j \in [m]}$, such that $\chi (\mathcal{A}) = m$. Denote by $P_{ar} = \mathbb{P}(A(d) = a, r(q,d) = r)$ and recall the notation $P = \min_{ar}P_{ar}$. Then $\mathbb{E}(|I_{ar}|) = P_{ar}Nm$. Therefore,
\begin{align*}
\mathbb{P}\left(|I_{ar}| \leq P_{ar}Nm - t\right) \leq \exp\left(-2\frac{t^2}{Nm^2}\right).
\end{align*}
Setting $t = P_{ar}Nm/2$, we obtain:
\begin{align}
\label{eqn:equal_opp_lemma2}
\mathbb{P}\left(|I_{ar}| \leq \frac{P_{ar}}{2}Nm\right) \leq \exp \left(- \frac{P^2_{ar}N}{2}\right).
\end{align}
Now assume that $N \geq \frac{2\log(8/\delta)}{P^2}$. Then for any $r\in\{0,1\}, a\in \{0,1\}$:
\begin{align*}
\mathbb{P}(|\gamma^S_{ar} - \gamma_{ar}| > t) & = \sum_{I_{ar}} \mathbb{P}(|\gamma^S_{ar} - \gamma_{ar}| > t | I_{ar})\mathbb{P}(I_{ar}) \\ & \leq \mathbb{P}(|I_{ar}| \leq \frac{P_{ar}}{2}Nm) + \sum_{I_{ar}: |I_{ar}| \geq \frac{P_{ar}Nm}{2}}\mathbb{P}(|\gamma^S_{ar} - \gamma_{ar}| > t | I_{ar})\mathbb{P}(I_{ar}) \\ & \leq \exp \left(-\frac{P^2_{ar}N}{2}\right) + \sum_{I_{ar}: |I_{ar}| \geq \frac{P_{ar}Nm}{2}} 2 \exp\left(-2\frac{t^2|I_{ar}|}{m}\right) \mathbb{P}(S_{ar}) \\ & \leq \frac{\delta}{8} + 2\exp\left(-t^2 NP_{ar}\right).
\end{align*}
The rest of the proof proceeds as in \cite{woodworth2017learning}. For a fixed $r\in\{0,1\}$ the triangle law gives:
\begin{align*}
||\gamma^S_{0r} - \gamma^S_{1r}| - |\gamma_{0r} - \gamma_{1r}|| \leq |\gamma^S_{0r} - \gamma^S_{1r} - \gamma_{0r} + \gamma_{1r}| \leq |\gamma^S_{0r} - \gamma_{0r}| + |\gamma^S_{1r} - \gamma_{1r}|.
\end{align*}
Therefore,
\begin{align*}
\mathbb{P}(||\gamma^S_{0r} - \gamma^S_{1r}| - |\gamma_{0r} - \gamma_{1r}|| > 2t) & \leq \mathbb{P}(|\gamma^S_{0r} - \gamma_{0r}| + |\gamma^S_{1r} - \gamma_{1r}| > 2t) \\ & \leq \mathbb{P}((|\gamma^S_{0r} - \gamma_{0r}| > t) \lor (|\gamma^S_{1r} - \gamma_{1r}| > t)) \\ & \leq \mathbb{P}(|\gamma^S_{0r} - \gamma_{0r}| > t) + \mathbb{P}(|\gamma^S_{1r} - \gamma_{1r}| > t) \\ & \leq \frac{\delta}{4} + 4\exp (-t^2N P).
\end{align*}
Setting $t = t_0 = \sqrt{\frac{\log(16/\delta)}{NP}}$ gives:
\begin{align*}
\mathbb{P}\left(||\gamma^S_{0r} - \gamma^S_{1r}| - |\gamma_{0r} - \gamma_{1r}|| > 2\sqrt{\frac{\log(16/\delta)}{NP}}\right)  \leq \frac{\delta}{4} + 4\frac{\delta}{16} = \frac{\delta}{2}.
\end{align*}
Setting $r = 1$ gives the first result.

For the second result, note that taking the union bound over $r\in\{0,1\}$ shows that with probability at least $1-\delta$  both $||\gamma^S_{00} - \gamma^S_{10}| - |\gamma_{00} - \gamma_{10}|| \leq 2t_0$ and $||\gamma^S_{01} - \gamma^S_{11}| - |\gamma_{01} - \gamma_{11}|| \leq 2t_0$ hold.

Under this event we have:
\begin{align*}
|\Gamma^{\textrm{EOd}}(f,S) - \Gamma^{\textrm{EOd}}(f)| & = \left|\frac{1}{2}\left(|\gamma^S_{00} - \gamma^S_{10}| + |\gamma^S_{01} - \gamma^S_{11}|\right) - \frac{1}{2}\left(|\gamma_{00} - \gamma_{10}| + |\gamma_{01} - \gamma_{11}|\right)\right| \\
& =  \left|\frac{1}{2}\left(|\gamma^S_{00} - \gamma^S_{10}| - |\gamma_{00} - \gamma_{10}|\right) + \frac{1}{2}\left(|\gamma^S_{01} - \gamma^S_{11}| - |\gamma_{01} - \gamma_{11}| \right)\right| \\
& \leq \frac{1}{2}\left||\gamma^S_{00} - \gamma^S_{10}| - |\gamma_{00} - \gamma_{10}|\right| + \frac{1}{2}\left||\gamma^S_{01} - \gamma^S_{11}| - |\gamma_{01} - \gamma_{11}|\right| \\
& \leq 2t_0
\end{align*}
and hence the result follows.
\end{proof}

An identical argument, by conditioning on the values of the set $I_{a} = \{(i,j): A(d^i_j) = a\} $ gives a similar result for demographic parity:

\begin{lemma}
\label{lemma:hoeffding_type_bound_demog_par}
Fix $\delta\in (0,1)$ and a binary predictor $f:Q\times D \to \{0,1\}$. Suppose that $N > \frac{8\log(8/\delta)}{Q^2}$, where $Q = \min_{a} \mathbb{P}(A(d) = a)$, then:
\begin{align}
\label{eqn:hoeffding_type_bound_demog_par}
\mathbb{P}\left(|\Gamma^{\textrm{DP}}(f,S) - \Gamma^{\textrm{DP}}(f)| > 2\sqrt{\frac{\log(8/\delta)}{NQ}}\right)  \leq \delta.
\end{align}
\end{lemma}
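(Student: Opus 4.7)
The plan is to follow the proof of Lemma \ref{lemma:hoeffding_type_bound-opp-odds} almost verbatim, with the simplification that we now condition only on the protected attribute rather than on the pair (attribute, relevance). For each $a \in \{0,1\}$ introduce the index set $I_a = \{(i,j) : A(d^i_j) = a\}$, the empirical conditional mean $\gamma^S_a = \frac{1}{|I_a|}\sum_{(i,j) \in I_a} f(q_i, d^i_j)$, and its population counterpart $\gamma_a = \Pr(f(q,d) = 1 \mid A(d) = a)$, so that $\Gamma^{\textup{DP}}(f,S) = |\gamma^S_0 - \gamma^S_1|$ and $\Gamma^{\textup{DP}}(f) = |\gamma_0 - \gamma_1|$. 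Denote $Q_a = \Pr(A(d) = a)$, so $Q = \min_a Q_a$.

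Next I would condition on a realization of $I_a$ and apply the chromatic Hoeffding inequality \eqref{eqn:chromatic_concentration_bernoulli}. Exactly as in the previous proof, the full index set $\mathcal{A} = [N] \times [m]$ has chromatic number $m$ (items across different queries are independent, while those within a query can be arbitrarily dependent), so $\chi(I_a) \leq m$ for any subset $I_a \subseteq \mathcal{A}$. Since the marginal of each $(q_i, d^i_j, r^i_j)$ is $\mathbb{P}$, we have $\mathbb{E}(\gamma^S_a \mid I_a) = \gamma_a$, and hence $\Pr(|\gamma^S_a - \gamma_a| > t \mid I_a) \leq 2\exp\bigl(-2t^2 |I_a|/m\bigr)$.

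Separately I would lower bound $|I_a|$. Write $|I_a| = \sum_{i,j} \mathbbm{1}(A(d^i_j) = a)$ as a sum of $Nm$ Bernoullis indexed by $\mathcal{A}$ with chromatic number $m$ and mean $Q_a Nm$; applying \eqref{eqn:chromatic_concentration_mean} with deviation threshold $Q_a Nm/2$ yields $\Pr(|I_a| \leq Q_a Nm/2) \leq \exp(-Q_a^2 N/2)$, which under the hypothesis on $N$ is at most $\delta/8$. Marginalizing over $I_a$ exactly as in the original proof gives $\Pr(|\gamma^S_a - \gamma_a| > t) \leq \delta/8 + 2\exp(-t^2 N Q_a)$.

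Finally, the triangle inequality $\bigl||\gamma^S_0 - \gamma^S_1| - |\gamma_0 - \gamma_1|\bigr| \leq |\gamma^S_0 - \gamma_0| + |\gamma^S_1 - \gamma_1|$ together with a union bound over $a \in \{0,1\}$ and the choice $t = \sqrt{\log(8/\delta)/(NQ)}$ balances the two contributions into $\delta$ and yields the claimed bound. There is no substantive obstacle here: because the relevance label plays no role, the argument is strictly simpler than the one for equality of opportunity, and the only care needed is tracking the constants and verifying that the chromatic number argument still applies to the random subset $I_a$, which it does since $\chi$ is monotone under subsets.
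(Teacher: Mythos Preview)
Your proposal is correct and follows exactly the approach the paper indicates: it simply states that ``an identical argument, by conditioning on the values of the set $I_a = \{(i,j): A(d^i_j)=a\}$'' yields the result, and you have carried out precisely that argument with the obvious replacement of $P_{ar}$ by $Q_a$ throughout. Your tracking of the constants and the observation that $\chi$ is monotone under subsets are both correct and match what is needed.
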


\subsection{Uniform bounds}
\label{subsec:uniform_bound_proof}

In this section we show how to formally extend the non-uniform bounds from the previous section to hold uniformly over the hypothesis space $\mathcal{H}$.

Let $S' = \{(q'_i, d'^i_j, r(q'_i, d'^i_j))\}_{i\in[N], j\in [m]}$ be a ghost sample independent of $S$ and also sampled via the same procedure as $S$, as described in the main body of the paper. In the proof of Lemma \ref{lemma:hoeffding_type_bound-opp-odds} we showed that for any classifier $f$ and any $t \in (0,1)$:
\begin{align}
\label{eqn:actual_hoeffding_bound_opp}
\mathbb{P}\left(|\Gamma^{\textrm{EOp}}(f) - \Gamma^{\textrm{EOp}}(f, S)| > 2t\right) \leq 2 \exp\left(-\frac{P^2N}{2}\right) + 4\exp\left(-\frac{t^2NP}{2}\right) \leq 6\exp\left(-\frac{t^2NP^2}{2}\right)\\
\label{eqn:actual_hoeffding_bound_odds}
\mathbb{P}\left(|\Gamma^{\textrm{EOd}}(f) - \Gamma^{\textrm{EOd}}(f, S)| > 2t\right) \leq 4 \exp\left(-\frac{P^2N}{2}\right) + 8\exp\left(-\frac{t^2NP}{2}\right) \leq 12\exp\left(-\frac{t^2NP^2}{2}\right)\\
\end{align}
Similarly, from the proof of Lemma 2
\begin{align}
\label{eqn:actual_hoeffding_bound_par}
\mathbb{P}\left(|\Gamma^{\textrm{DP}}(f) - \Gamma^{\textrm{DP}}(f, S)| > 2t\right) \leq 2 \exp\left(-\frac{Q^2N}{2}\right) + 4\exp\left(-\frac{t^2NQ}{2}\right) \leq 6\exp\left(-\frac{t^2NQ^2}{2}\right)
\end{align}
We will use these in particular to prove the following symmetrization lemma:
\begin{lemma}
\label{lemma:symmetrization}
For any $1 > t \geq 4\sqrt{\frac{2\log(12)}{NP^2}}$,
\begin{align}
\label{eqn:symmetrization_opp}
\mathbb{P}_S \left(\sup_{f\in\mathcal{F}} (\Gamma^{\textrm{EOp}}(f) - \Gamma^{\textrm{EOp}}(f, S)) \geq t\right) \leq 2\mathbb{P}_{S, S'}\left(\sup_{f\in\mathcal{F}} (\Gamma^{\textrm{EOp}}(f, S') - \Gamma^{\textrm{EOp}}(f, S)) \geq t/2\right).
\end{align}
For any $1 > t \geq 4\sqrt{\frac{2\log(24)}{NP^2}}$:
\begin{align}
\label{eqn:symmetrization_odds}
\mathbb{P}_S \left(\sup_{f\in\mathcal{F}} (\Gamma^{\textrm{EOd}}(f) - \Gamma^{\textrm{EOd}}(f, S)) \geq t\right) \leq 2\mathbb{P}_{S, S'}\left(\sup_{f\in\mathcal{F}} (\Gamma^{\textrm{EOd}}(f, S') - \Gamma^{\textrm{EOd}}(f, S)) \geq t/2\right).
\end{align}
For any $1 > t \geq 4\sqrt{\frac{2\log(12)}{NQ^2}}$:
\begin{align}
\label{eqn:symmetrization_par}
\mathbb{P}_S \left(\sup_{f\in\mathcal{F}} (\Gamma^{\textrm{DP}}(f) - \Gamma^{\textrm{DP}}(f, S)) \geq t\right) \leq 2\mathbb{P}_{S, S'}\left(\sup_{f\in\mathcal{F}} (\Gamma^{\textrm{DP}}(f, S') - \Gamma^{\textrm{DP}}(f, S)) \geq t/2\right).
\end{align}
\end{lemma}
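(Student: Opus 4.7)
The plan is to run the classical two-sample symmetrization argument (as in \cite{vapnik2013nature}), observing that the only source of independence that symmetrization actually uses is the mutual independence of $S$ and the ghost sample $S'$, \emph{not} independence across data points inside either sample. Since the bounds (\ref{eqn:actual_hoeffding_bound_opp})--(\ref{eqn:actual_hoeffding_bound_par}) derived in the non-uniform step are already single-hypothesis concentration inequalities that absorb the intra-query dependence into the chromatic factor, they can be applied directly to $S'$ for any hypothesis selected as a function of $S$.

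Concretely, for the equal-opportunity case I would fix $t$ in the stated range and set $A_S = \{\sup_{f\in\mathcal{F}}(\Gamma^{\textrm{EOp}}(f) - \Gamma^{\textrm{EOp}}(f,S)) \geq t\}$. On $A_S$ I pick, in a measurable fashion, a near-maximizer $f^\star = f^\star(S) \in \mathcal{F}$; this is well-defined because a class of VC-dimension $v$ induces only finitely many distinct behaviours on any finite sample. The key set inclusion is
$$A_S \cap \bigl\{ |\Gamma^{\textrm{EOp}}(f^\star) - \Gamma^{\textrm{EOp}}(f^\star, S')| \leq t/2 \bigr\} \;\subseteq\; \Bigl\{\sup_{f\in\mathcal{F}} \bigl(\Gamma^{\textrm{EOp}}(f,S') - \Gamma^{\textrm{EOp}}(f,S)\bigr) \geq t/2\Bigr\},$$
obtained by chaining $\Gamma^{\textrm{EOp}}(f^\star,S') - \Gamma^{\textrm{EOp}}(f^\star,S) = [\Gamma^{\textrm{EOp}}(f^\star) - \Gamma^{\textrm{EOp}}(f^\star,S)] - [\Gamma^{\textrm{EOp}}(f^\star) - \Gamma^{\textrm{EOp}}(f^\star,S')]$.

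Because $S'$ is independent of $S$ and $f^\star$ is $S$-measurable, conditional on $S$ the bound (\ref{eqn:actual_hoeffding_bound_opp}) applies to $S'$ with $f=f^\star$ and $2t$ replaced by $t/2$ (i.e.\ $t \mapsto t/4$), giving
$$\mathbb{P}_{S'\mid S}\!\left(|\Gamma^{\textrm{EOp}}(f^\star) - \Gamma^{\textrm{EOp}}(f^\star,S')| > t/2\right) \;\leq\; 6\exp\!\left(-\tfrac{t^2 N P^2}{32}\right).$$
The hypothesis $t \geq 4\sqrt{2\log(12)/(NP^2)}$ is exactly what forces this right-hand side to be at most $1/2$. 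Integrating $\mathbf{1}_{A_S}$ against the complementary conditional probability therefore yields $\mathbb{P}_{S,S'}\bigl(\sup_f(\Gamma^{\textrm{EOp}}(f,S')-\Gamma^{\textrm{EOp}}(f,S))\geq t/2\bigr) \geq \tfrac12\,\mathbb{P}_S(A_S)$, which is (\ref{eqn:symmetrization_opp}).

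The equalized-odds and demographic-parity cases (\ref{eqn:symmetrization_odds})--(\ref{eqn:symmetrization_par}) follow from the identical template, substituting (\ref{eqn:actual_hoeffding_bound_odds}) or (\ref{eqn:actual_hoeffding_bound_par}) for the single-hypothesis step; the constants $24$ and $12$ appearing in the two thresholds come respectively from the prefactors $12$ and $6$ in those bounds, chosen so that the ghost-sample deviation probability is at most $1/2$ (and $Q^2$ replaces $P^2$ in the demographic-parity case). The main subtlety I expect is that $f^\star(S)$ is not independent of $S$, which is precisely why one cannot simply apply a joint concentration to $(S,S')$; the remedy is the standard "condition on $S$ first" step, legitimate here because the ghost sample independently obeys the generative process of Section \ref{subsec:non_uniform_bound_proof}, so the chromatic Hoeffding inequality of \cite{janson2004large} applies to $S'$ alone with the fixed hypothesis $f^\star$.
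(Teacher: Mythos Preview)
Your proposal is correct and follows essentially the same route as the paper's proof: both pick a (near-)maximizer $f^\star$ depending on $S$, use the decomposition $\Gamma(f^\star,S')-\Gamma(f^\star,S) = [\Gamma(f^\star)-\Gamma(f^\star,S)] - [\Gamma(f^\star)-\Gamma(f^\star,S')]$, apply the single-hypothesis bound (\ref{eqn:actual_hoeffding_bound_opp}) to $S'$ with $t\mapsto t/4$ so that the ghost-sample deviation probability is at most $1/2$, and then integrate over $S$. Your write-up is slightly more explicit about the conditioning step (treating $f^\star$ as $S$-measurable and invoking independence of $S'$ from $S$), whereas the paper phrases the same computation via products of indicator functions; the content is identical.
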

\begin{proof}
We show the result for the equal opportunity fairness measure, the rest follow in an identical manner. 

Let $f^*$ be the function achieving the supremum on the left-hand side \footnote{If the supremum is not attained, this argument can be repeated for each element of a sequence of classifiers approaching the supremum}. Note that:
\begin{align*}
\mathbbm{1}(\Gamma^{\textrm{EOp}}(f^{*}) - \Gamma^{\textrm{EOp}}(f^*,S) \geq t) & \mathbbm{1}(\Gamma^{\textrm{EOp}}(f^*) - \Gamma^{\textrm{EOp}}(f^*, S') < t/2)\\
& = \mathbbm{1}(\Gamma^{\textrm{EOp}}(f^{*}) - \Gamma^{\textrm{EOp}}(f^*,S) \geq t \wedge \Gamma^{\textrm{EOp}}(f^*,S') - \Gamma^{\textrm{EOp}}(f^*) > -t/2) \\ & \leq \mathbbm{1}(\Gamma^{\textrm{EOp}}(f^*,S') - \Gamma^{\textrm{EOp}}(f^*,S) > t/2).
\end{align*}
Taking expectation with respect to $S'$:
\begin{align*}
\mathbbm{1}(\Gamma^{\textrm{EOp}}(f^{*}) - \Gamma^{\textrm{EOp}}(f^*,S) \geq t)\mathbb{P}_{S'}(\Gamma^{\textrm{EOp}}(f^*) - \Gamma^{\textrm{EOp}}(f^*,S') < t/2) \leq \mathbb{P}_{S'}(\Gamma^{\textrm{EOp}}(f^*,S') - \Gamma^{\textrm{EOp}}(f^*,S) > t/2).
\end{align*}
Now using (\ref{eqn:actual_hoeffding_bound_opp}):
\begin{align*}
\mathbb{P}_{S'}(\Gamma^{\textrm{EOp}}(f^*) - \Gamma^{\textrm{EOp}}(f^*,S') \geq t/2) \leq 6\exp\left(-\frac{t^2NP^2}{32}\right) \leq \frac{1}{2},
\end{align*}
so:
\begin{align*}
\frac{1}{2}\mathbbm{1}(\Gamma^{\textrm{EOp}}(f^{*}) - \Gamma^{\textrm{EOp}}(f^*,S) \geq t) \leq \mathbb{P}_{S'}(\Gamma^{\textrm{EOp}}(f^*,S') - \Gamma^{\textrm{EOp}}(f^*,S) > t/2).
\end{align*}
Taking expectation with respect to $S$:
\begin{align*}
\mathbb{P}_S(\Gamma^{\textrm{EOp}}(f^{*}) - \Gamma^{\textrm{EOp}}(f^*,S) \geq t) & \leq 2 \mathbb{P}_{S, S'}(\Gamma^{\textrm{EOp}}(f^*,S') - \Gamma^{\textrm{EOp}}(f^*,S) > t/2) \\
& \leq 2 \mathbb{P}_{S, S'}(\sup_{f\in\mathcal{F}}(\Gamma^{\textrm{EOp}}(f,S') - \Gamma^{\textrm{EOp}}(f,S)) \geq t/2).
\end{align*}
\end{proof}
\noindent Given a set of $n$ input datapoints $z_1, \ldots, z_n$ with $z_i = (q_i, d_i, r(q_i, d_i))$, consider:
\begin{equation}
\mathcal{F}_{z_1, \ldots, z_n} = \{(f(q_1, d_1), \ldots, f(q_n, d_n)): f\in\mathcal{F}\}
\end{equation}
Then the growth function of $\mathcal{F}$ is defined as:
\begin{equation}
S_{\mathcal{F}}(n) = \sup_{(z_1, \ldots, z_n)}|\mathcal{F}_{z_1, \ldots, z_n}|
\end{equation}

\addtocounter{theorem}{-2}

We can now present a proof of Theorem \ref{thm:uniform_bound}:
\begin{theorem}
\label{thm:uniform_bound}
Suppose that $v = VC(\mathcal{F}) \geq 1$ and that $2Nm > v$. Then for any $\delta \in (0,1)$:
\begin{align}
\label{eqn:result_vc_opp}
\mathbb{P}_S \left(\sup_{f\in\mathcal{F}} (\Gamma^{\textrm{EOp}}(f) - \Gamma^{\textrm{EOp}}(f,S)) \geq 8 \sqrt{2\frac{d\log(\frac{2eNm}{d}) + \log(\frac{24}{\delta})}{NP^2}}\right) \leq \delta \\
\label{eqn:result_vc_DP}
\mathbb{P}_S \left(\sup_{f\in\mathcal{F}} (\Gamma^{\textrm{DP}}(f) - \Gamma^{\textrm{DP}}(f,S)) \geq 8 \sqrt{2\frac{d\log(\frac{2eNm}{d}) + \log(\frac{24}{\delta})}{NQ^2}}\right) \leq \delta \\
\label{eqn:result_vc_odds}
\mathbb{P}_S \left(\sup_{f\in\mathcal{F}} (\Gamma^{\textrm{EOd}}(f) - \Gamma^{\textrm{EOd}}(f,S)) \geq 8 \sqrt{2\frac{d\log(\frac{2eNm}{d}) + \log(\frac{48}{\delta})}{NP^2}}\right) \leq \delta 
\end{align}
\end{theorem}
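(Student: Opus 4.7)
The plan is to run the classical symmetrization-plus-VC argument, substituting Hoeffding's inequality with the chromatic concentration bound (\ref{eqn:chromatic_concentration_bernoulli}) throughout to account for the within-query dependencies. I describe it for equality of opportunity; equalized odds is identical up to an extra factor of two in the union bound (hence $48/\delta$ instead of $24/\delta$), and demographic parity is the same argument with $Q$ replacing $P$ because one conditions only on $A(d)$.

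First, I invoke Lemma \ref{lemma:symmetrization} to replace $\Gamma^{\textrm{EOp}}(f)$ by its empirical value $\Gamma^{\textrm{EOp}}(f,S')$ on an independent ghost sample $S'$ drawn via the same process. The task reduces to bounding $\mathbb{P}_{S,S'}\bigl(\sup_{f\in\mathcal{F}}(\Gamma^{\textrm{EOp}}(f,S')-\Gamma^{\textrm{EOp}}(f,S)) \geq t/2\bigr)$. Since $\Gamma^{\textrm{EOp}}(f,\cdot)$ depends on $f$ only through its values on the $2Nm$ sample points of $S\cup S'$, I restrict the supremum to at most $S_{\mathcal{F}}(2Nm) \leq (2eNm/v)^v$ effective classifiers via the Sauer-Shelah lemma (valid because $2Nm \geq v$), and reduce to a union bound over this finite family.

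For each fixed effective classifier I need a tail bound on $|\Gamma^{\textrm{EOp}}(f,S')-\Gamma^{\textrm{EOp}}(f,S)| \geq t/2$. The key observation is that the combined index set $\{(i,j): i\in[2N], j\in[m]\}$ still has chromatic number exactly $m$: the $m$ slices obtained by fixing $j$ and varying $i$ are jointly independent, since queries (and hence across-query variables) are drawn independently by construction. Following the template of Lemma \ref{lemma:hoeffding_type_bound-opp-odds}, I apply (\ref{eqn:chromatic_concentration_bernoulli}) once to ensure each of the four subset sizes $|S_{a,r}|, |S'_{a,r}|$ exceeds $P_{ar}Nm/2$ (absorbing the failure into the constants), and then a second time conditional on these sizes to control the conditional mean $\frac{1}{|S_{a,r}|}\sum_{(q,d)\in S_{a,r}} f(q,d)$ around $\gamma_{ar}(f)=\Pr(f(q,d)=1\mid A(d)=a, r(q,d)=1)$ at rate $\sqrt{m/|S_{a,r}|}$. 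Triangle inequality across the four subsets yields a single-classifier tail of the shape $C\exp(-c\,t^2 N P^2)$ with explicit constants.

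Finally, equating the union-bound failure probability to $\delta$ and solving for $t$ gives the stated rate of order $\sqrt{(v\log(2eNm/v)+\log(1/\delta))/(NP^2)}$. The main obstacle is step three: keeping the chromatic structure intact when passing to the combined sample, and juggling the two conditional applications of (\ref{eqn:chromatic_concentration_bernoulli}) so that the final rate scales as $1/(NP^2)$ rather than $1/(NmP^2)$ or $1/(NP)$. The former is guaranteed by cross-query independence in the data-generation process; the latter mirrors the conditioning scheme of \cite{woodworth2017learning} but with Hoeffding replaced by its chromatic analogue, and is what determines the squared dependence on the minimum group probability $P$ in the denominator.
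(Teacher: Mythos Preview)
Your proposal is correct and follows the same route as the paper: symmetrization (Lemma \ref{lemma:symmetrization}), restriction to at most $S_{\mathcal{F}}(2Nm)\leq(2eNm/v)^v$ effective classifiers via Sauer's lemma, a union bound, and then the single-classifier chromatic concentration of Lemma \ref{lemma:hoeffding_type_bound-opp-odds}. The only cosmetic difference is that after the union bound the paper inserts the population quantity $\Gamma^{\textrm{EOp}}(f)$ via the triangle inequality, $\{\Gamma^{\textrm{EOp}}(f,S')-\Gamma^{\textrm{EOp}}(f,S)\geq t/2\}\subset\{|\Gamma^{\textrm{EOp}}(f,S')-\Gamma^{\textrm{EOp}}(f)|\geq t/4\}\cup\{|\Gamma^{\textrm{EOp}}(f)-\Gamma^{\textrm{EOp}}(f,S)|\geq t/4\}$, and applies (\ref{eqn:actual_hoeffding_bound_opp}) separately to $S$ and $S'$; so your observation about the chromatic number of the combined $2Nm$-point sample, while true, is not actually needed.
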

\begin{proof}
Again we present the proof for equal opportunity, with the other inequalities following in an identical manner.

Note that given sets $S$ and $S'$, the values of $\Gamma^{\textrm{EOp}}(f,S)$ and $\Gamma^{\textrm{EOp}}(f,S')$ are completely determined by the values of $f$ on $S$ and $S'$ respectively. Therefore, for any $t\in \left(4\sqrt{\frac{2\log(12)}{NP^2}},1\right)$ using Lemma \ref{lemma:symmetrization} and the union bound:
\begin{align*}
\mathbb{P}_S \left(\sup_{f\in\mathcal{F}} (\Gamma^{\textrm{EOp}}(f) - \Gamma^{\textrm{EOp}}(f,S)) \geq t\right) & \leq 2\mathbb{P}_{S, S'}\left(\sup_{f\in\mathcal{F}} (\Gamma^{\textrm{EOp}}(f,S') - \Gamma^{\textrm{EOp}}(f,S)) \geq t/2\right) \\ & \leq 2S_{\mathcal{F}}(2Nm)\mathbb{P}_{S, S'}\left(\Gamma^{\textrm{EOp}}(f,S') - \Gamma^{\textrm{EOp}}(f,S) \geq t/2\right) \\ & \leq 2S_{\mathcal{F}}(2Nm)\mathbb{P}_{S, S'}\left((|\Gamma^{\textrm{EOp}}(f,S') - \Gamma^{\textrm{EOp}}(f)| \geq t/4) \right. \\
& \lor \left. (|\Gamma^{\textrm{EOp}}(f) - \Gamma^{\textrm{EOp}}(f,S)| \geq t/4)\right) \\ & \leq 4S_{\mathcal{F}}(2Nm) \mathbb{P}_S\left(|\Gamma^{\textrm{EOp}}(f) - \Gamma^{\textrm{EOp}}(f,S)| \geq t/4\right) \\ & \leq 24S_{\mathcal{F}}(2Nm)\exp\left(-\frac{t^2NP^2}{128}\right)
\end{align*}
In particular, if $d = VC(\mathcal{F})$, by Sauer's lemma $S_{\mathcal{F}}(2Nm) \leq \left(\frac{2eNm}{d}\right)^d$ whenever $2Nm > d$, so:
\begin{align*}
\mathbb{P}_S \left(\sup_{f\in\mathcal{F}} (\Gamma^{\textrm{EOp}}(f) - \Gamma^{\textrm{EOp}}(f,S)) \geq t\right) \leq 24 \left(\frac{2eNm}{d}\right)^d\exp\left(-\frac{t^2NP^2}{128}\right)
\end{align*}
It follows that:
\begin{align}
\label{eqn:results_vc_proof}
\mathbb{P}_S \left(\sup_{f\in\mathcal{F}} (\Gamma^{\textrm{EOp}}(f) - \Gamma^{\textrm{EOp}}(f,S)) \geq 8 \sqrt{2\frac{d\log(\frac{2eNm}{d}) + \log(\frac{24}{\delta})}{NP^2}}\right) \leq \delta
\end{align}
whenever:
\begin{align*}
1 > 8 \sqrt{2\frac{d\log(\frac{2eNm}{d}) + \log(\frac{24}{\delta})}{NP^2}} \geq 4\sqrt{\frac{2\log(12)}{NP^2}} 
\end{align*}
It is easy to see that the right inequality holds whenever $d\geq 1$, $2Nm \geq d$ and $\delta < 1$. In addition, inequality (\ref{eqn:results_vc_proof}) trivially holds if the left inequality is not fulfilled. Hence the result follows. 
\end{proof}

\subsection{Discussion}
\label{sec:app_discussion}

Theorem~\ref{thm:uniform_bound} bounds the fairness violation
on future data by the fairness on the training set plus an 
explicit complexity term, uniformly over all item selection
functions. 
Consequently, any item selection function with low fairness 
violation on the training set will have a similarly low fairness 
violation on new data, provided that enough data was used 
for training. Indeed, the complexity term decreases like $\sqrt{\log N/N}$ as $N\to \infty$, which is the 
expected behavior for a VC-based bound.

The same scaling behavior does not hold with respect to the 
number of items per query, $m$. 
This is unfortunate, but unavoidable, given the weak assumptions 
we make on the data generation process: because we do not restrict 
how the per-query item sets are created, each of them could 
simply consist of many copies of a single item. In that case, 
even arbitrary large $m$ would provide only as much 
information as $m=1$.  
In the current form, $m$ appears even logarithmically 
in the numerator of the complexity term. We believe this to be 
an artifact of our proof technique, and expect that a more refined
analysis will allow us to remove this dependence in the future.

Note that for real data, we do expect larger $m$ to be have a 
beneficial effect on generalization. This is the reason that 
we prefer to present the bound as it is in the theorem, \ie with 
the empirical fairness estimated from all available data, rather 
than any alternative formulation, \eg subsampling the training 
set to $m=1$, which would recover an \iid setting.  
Finding an assumption on the generating process of real-world LTR data that does allow bounds that decrease with respect to $m$ is an interesting topic for future work. 

In addition, we expect that more advanced techniques from learning theory, \eg analysis based on Rademacher complexities \cite{bartlett2002rademacher}, can be applied to obtain sharper, data-dependent guarantees. Indeed, there has been work on extending the classic Rademacher complexity generalization bounds to the case of dependent data, \eg \cite{usunier2005generalization}, and we deem the application of such techniques in the context of fair LTR an interesting direction for future work.

\section{Details of Experimental Setup}
\label{app:experimental_details}

Here we present further details about our experiments, in particular the construction of the feature embeddings $\phi:\mathcal{Q}\times\mathcal{D}\to\mathbb{R}^D$ and a discussion on the computation costs of our experiments.

\subsection{Feature extraction}

\myparagraph{TREC\footnote{\url{https://fair-trec.github.io/2019/index.html}}}
Inspired by the learning to rank approach for the TREC track of \cite{bonartfair}, we pre-compute $9$-dimensional embeddings of every query-paper pair by using the following hardcrafted features: 
\begin{itemize}
\item the BM25 score of the query with the title, abstract, authors, topics and publication venue of the paper (5 values),
\item the number of in- and out-citations (2 values),
\item the publication year of the paper (1 value)
\item the character length of the query (1 value). 
\end{itemize}
Each feature is normalized by substracting the mean of the feature 
over the dataset and dividing by its standard deviation.

\myparagraph{MSMARCO\footnote{\url{https://microsoft.github.io/msmarco/}}} 
We use pretrained $768$-dimensional BERT feature embeddings \cite{devlin2019bert} for representing the query-passage pairs. Specifically, we follow the embedding procedure described in \cite{nogueira2019passage, han2020learning}, where each query-passage pair is represented as the following token sequence:
$$ [CLS] \text{ query text } [SEP] \text{ passage text } [CLS]$$
This sequence is then processed through a pre-trained BERT model\footnote{\url{https://tfhub.dev/tensorflow/bert_en_uncased_L-12_H-768_A-12/1}} from Tensorflow Hub \cite{tensorflow2015-whitepaper}, with maximum sequence length set to $200$, and the hidden units of the first $[CLS]$ token are used as a representation of the query-passage pair.

\subsection{Computational costs}

For both datasets running a single experiment consists of training and evaluating a single ML model, with a fixed choice of training method (algorithm), type of fairness, split into protected groups and value of the regularization parameter. In both cases, training an individual model was relatively cheap, with the main computational considerations coming from the large number of experiments to be ran in total.

For \textbf{TREC} an individual experiment takes about \textit{$60$ seconds} on a CPU for our algorithm and for FA*IR and around \textit{$11$ and $17$ minutes} for DELTR and the per-query baseline respectively. Each experiment requires less than $512$MB of RAM. In total, we ran $1170$ experiments for each of these algorithms, one for every fixed type of fairness (equality of opportunity, demographic parity or equalized odds), split into protected groups (3 types of splits, corresponding to $t = 3,4,5$), value of the regularization parameter (a total of $26$ values were used), and a choice of a train-test split (a total of $5$ independent splits were used).

For \textbf{MSMARCO} an individual experiment takes about \textit{$9$ minutes} on a CPU for our algorithm and for FA*IR and about \textit{$40$ minutes} for the per-query method. Each experiment requires less than $64$GB of RAM. In total, we ran $960$ experiments for each of those models, one for every fixed type of fairness (equality of opportunity, demographic parity or equalized odds), split into protected groups (2 types of splits, \textit{com} and \textit{ext}), value of the regularization parameter (a total of $16$ values were used), and a choice of a train-test split (a total of $10$ random seeds were used to compute a random split).

Therefore, approximately $585$ and $928$ CPU hours are needed for the TREC and the MSMARCO experiments in total.

\section{Further experimental results}
\label{app:further_experimental_results}

We report on multiple additional metrics and experiments on the TREC and MSMARCO data, that were deferred to the supplementary material for space reasons. 

\subsection{Results with $\Pk$}
\label{app:results_pk}
 
We first present plots from the same experiments as in Figure \ref{fig:plots}, but with Precision@k as a metric for model performance. Specifically, Figure \ref{fig:plots-precision} shows the results when imposing different amounts of the equal opportunity fairness notions in typical settings for TREC ($t=3, k=3$; top row) and MSMARCO (\textit{com}, $k=3$; bottom row), both for our method and for the baselines. We see a very similar picture as with the NDCG metric, with no loss in precision for our method on the TREC data and little to no effect for MSMARCO, for small to medium values of $\alpha$. Again, the baselines are not able to consistently improve equal opportunity.

\begin{figure*}[h]
\begin{subfigure}{.24\textwidth}
  \centering
  \includegraphics[width=.95\linewidth]{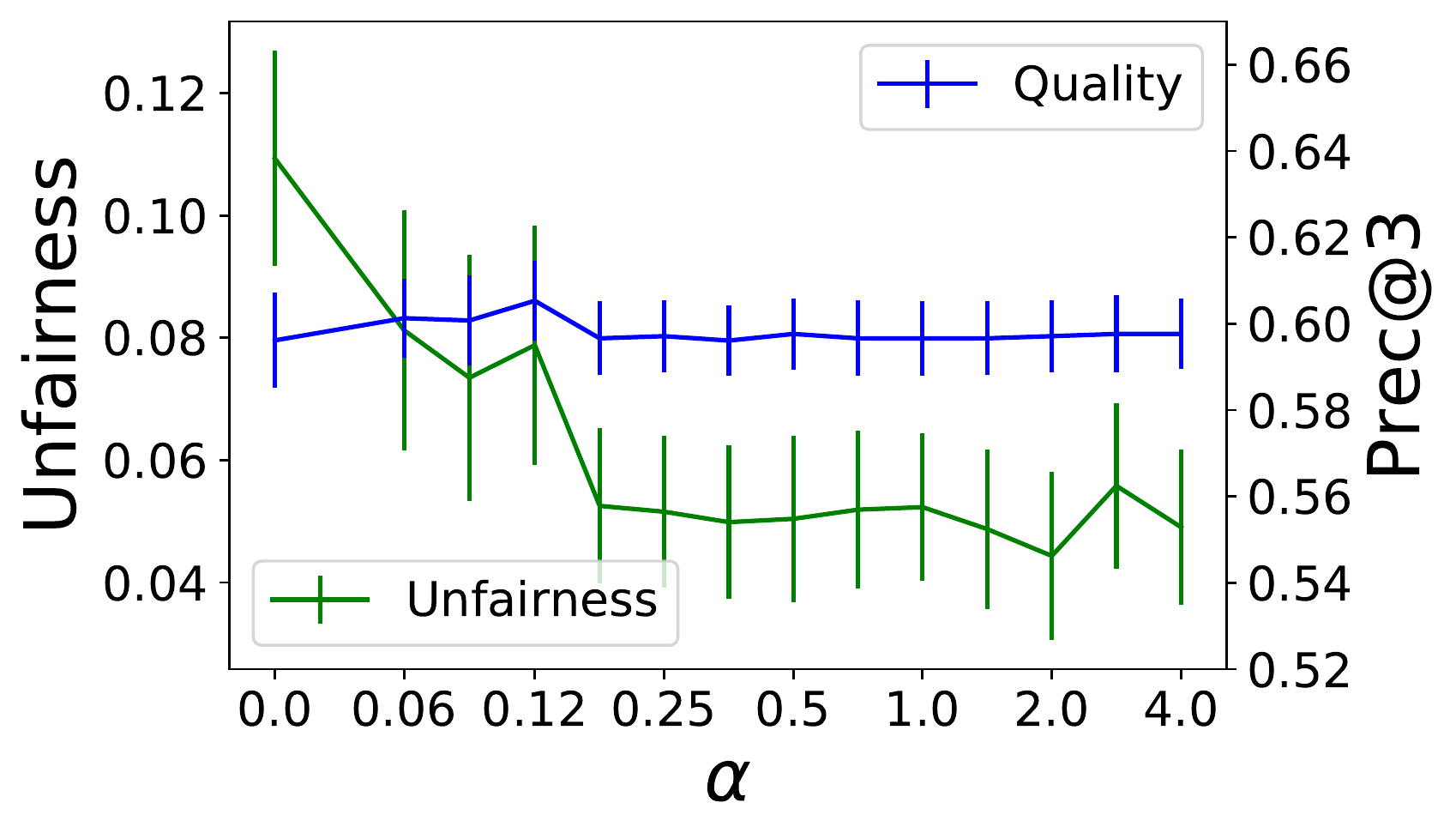}
  \caption{Ours, TREC}
  \label{fig:trec_amortized_demographic_parity}
\end{subfigure}%
\hfill
\begin{subfigure}{.24\textwidth}
  \centering
  \includegraphics[width=.95\linewidth]{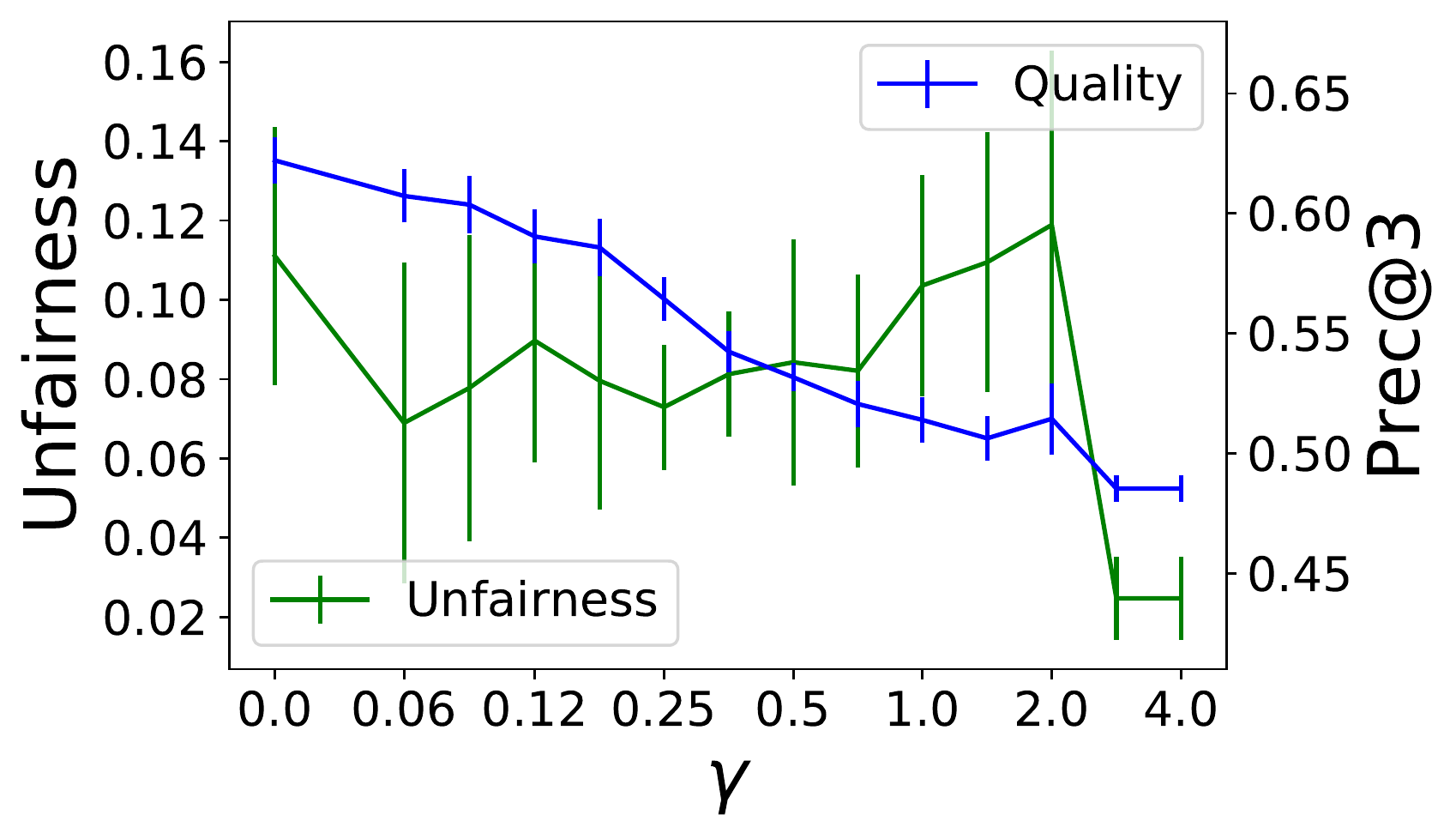}
  \caption{DELTR, TREC}
  \label{fig:trec_amortized_equal_odds}
\end{subfigure}%
\hfill
\begin{subfigure}{.24\textwidth}
  \centering
  \includegraphics[width=.95\linewidth]{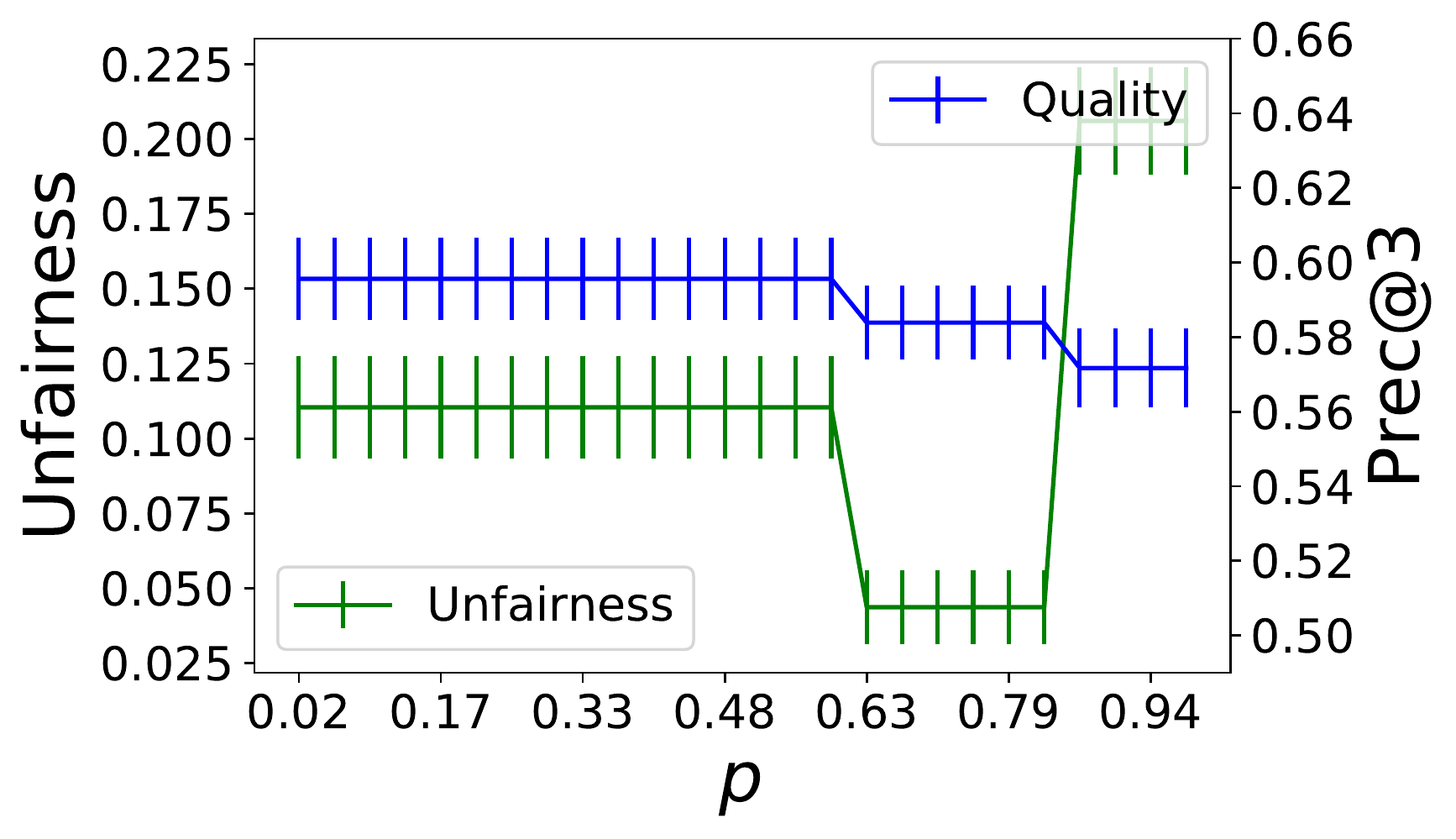}
  \caption{FA*IR, TREC}
  \label{fig:trec_amortized_equal_opp}
\end{subfigure}
\begin{subfigure}{.24\textwidth}
  \centering
  \includegraphics[width=.95\linewidth]{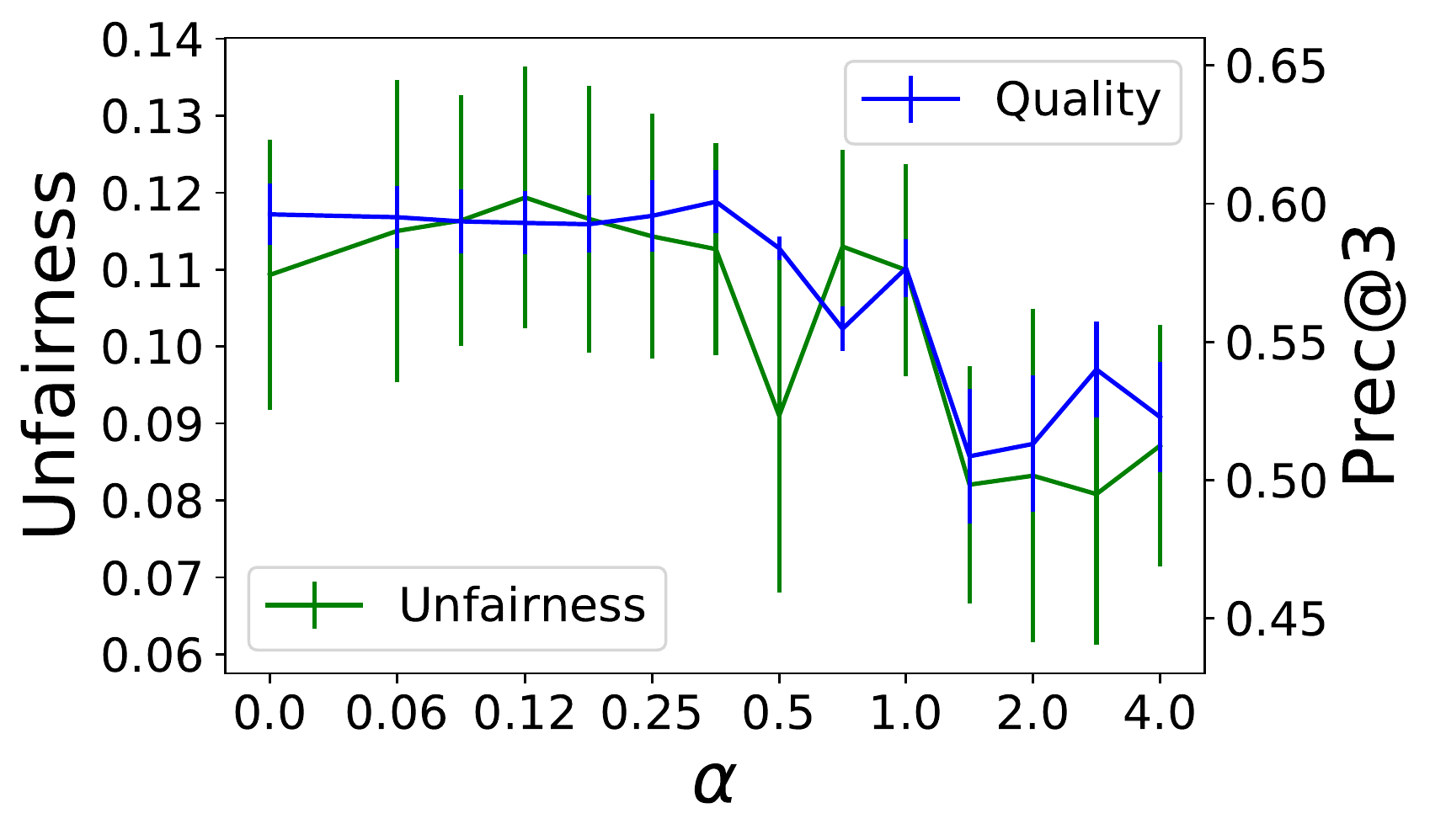}
  \caption{Per-query, TREC}
  \label{fig:trec_amortized_demographic_parity}
\end{subfigure}%
\vskip\baselineskip
\begin{subfigure}{.33\textwidth}
  \centering
  \includegraphics[width=.691\linewidth]{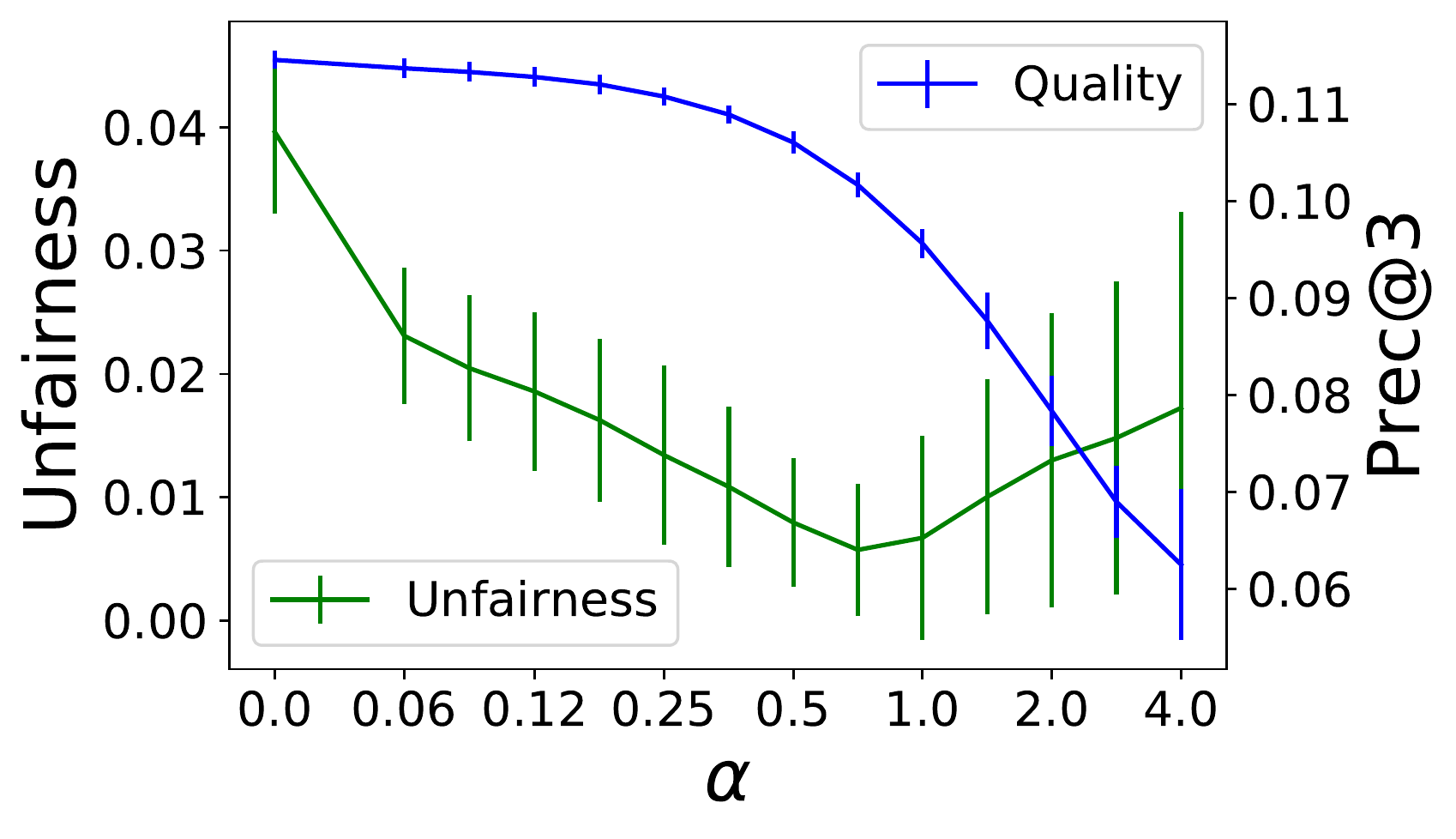}
  \caption{Ours, MSMARCO}
  \label{fig:trec_amortized_demographic_parity}
\end{subfigure}%
\hfill
\begin{subfigure}{.33\textwidth}
  \centering
  \includegraphics[width=.691\linewidth]{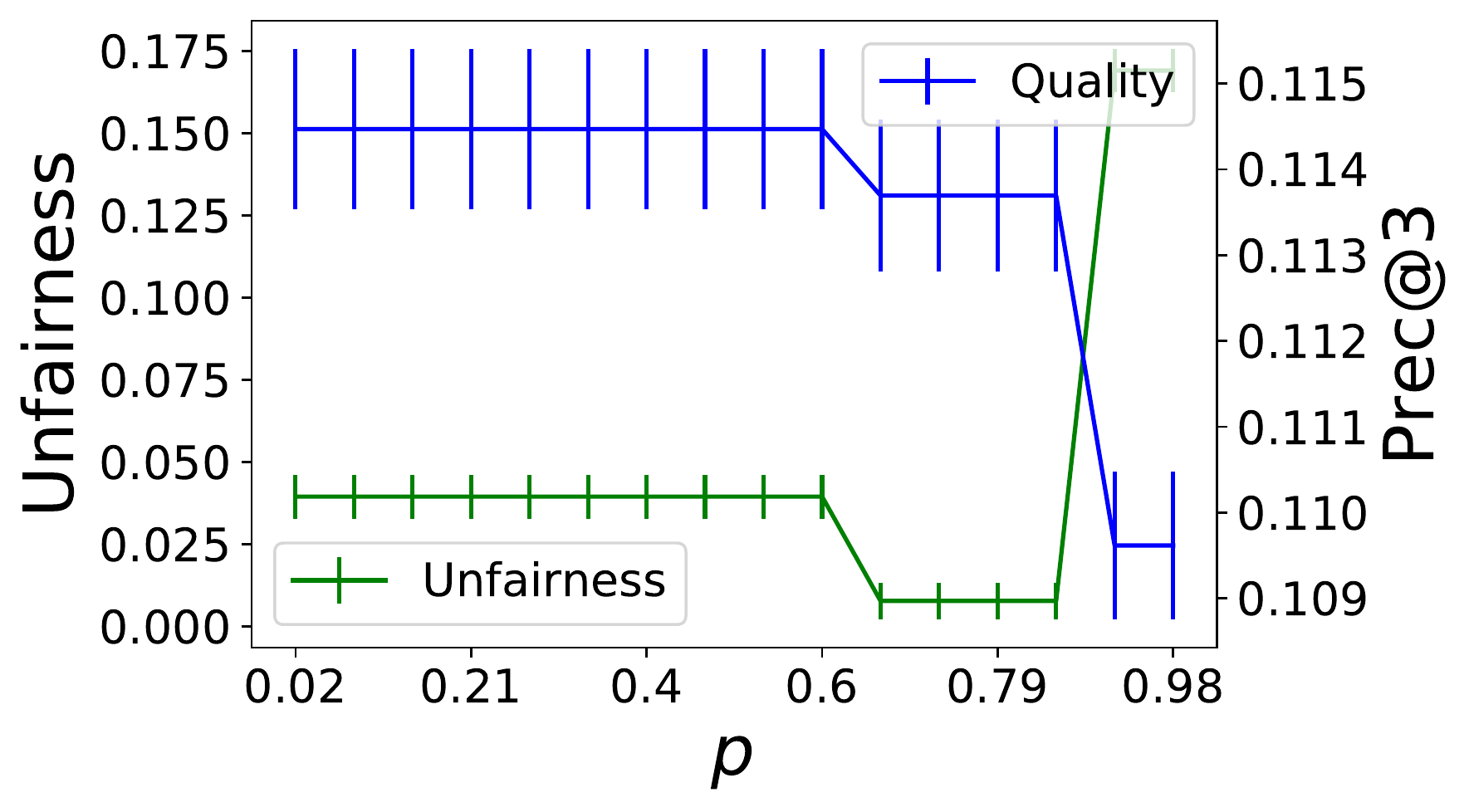}
  \caption{FA*IR, MSMARCO}
  \label{fig:trec_amortized_equal_opp}
\end{subfigure}
\begin{subfigure}{.33\textwidth}
  \centering
  \includegraphics[width=.691\linewidth]{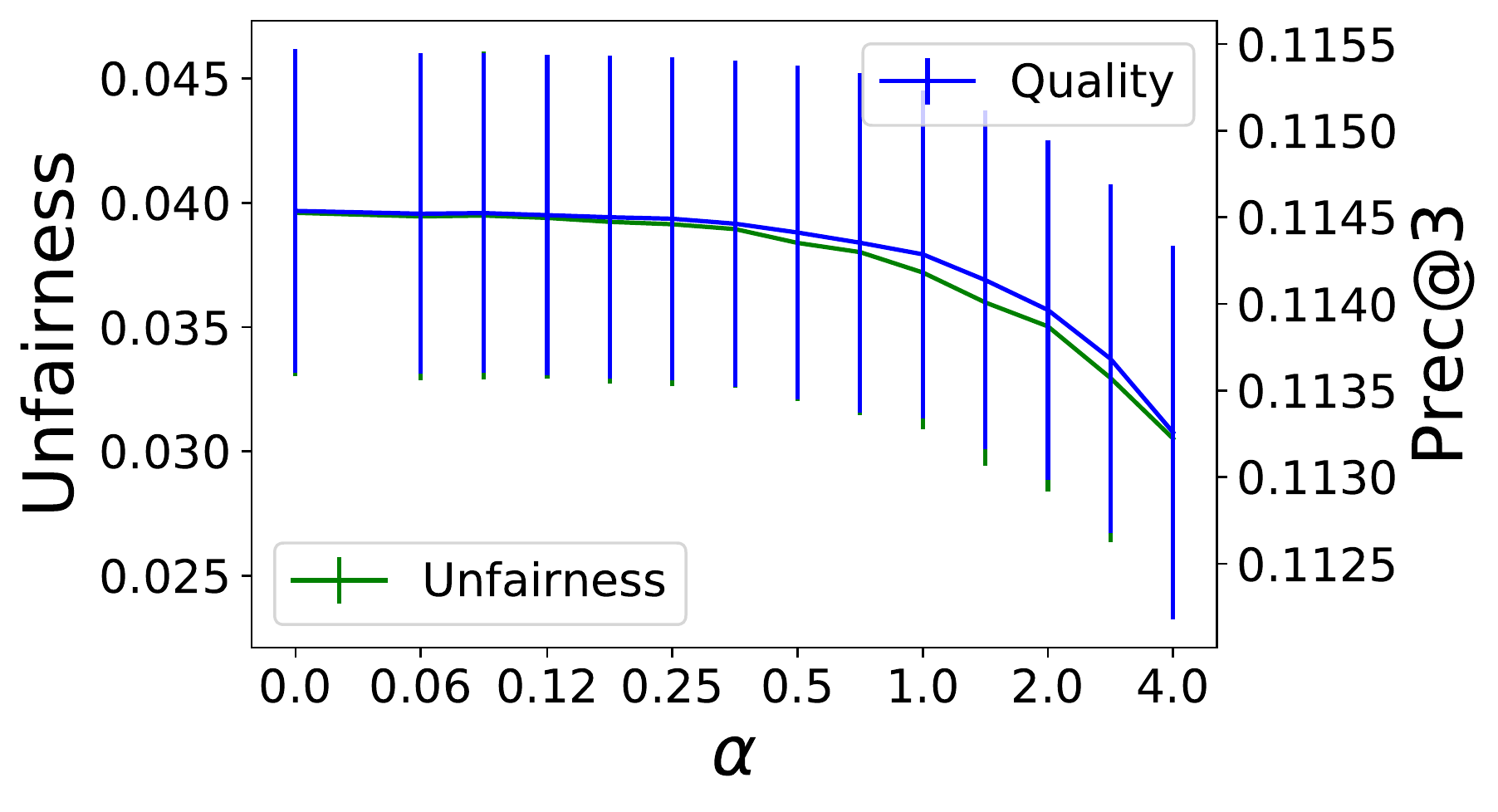}
  \caption{Per-query, MSMARCO}
  \label{fig:trec_amortized_demographic_parity}
\end{subfigure}%
\caption{Test-time performance of fair rankers with equal opportunity fairness, achieved by our algorithm and the baselines: unfairness (left $y$-axes) and Prec@3 ranking 
quality (right $y$-axes); after training with different regularization 
strengths ($x$-axis).}
\label{fig:plots-precision}
\end{figure*}

\subsection{Plots for all three fairness measures}
\label{sec:app_plots_for_three_measures}

Here we present all results in the typical settings for TREC ($t=3, k=3$) on  Figure \ref{fig:plots-trec-all} and for MSMARCO (\textit{com}, $k=3$) on Figure \ref{fig:plots-msmarco-all}. These plots complement Figure \ref{fig:plots} from the main body of the paper by showing the results for the other two fairness measures, demographic parity and equalized odds, as well. We see that also for these measures our algorithm effectively improves fairness at little to no cost in ranking performance, for small values of $\alpha$. On the other hand, the baselines perform erratically and inconsistently across the measures and the two datasets.

\begin{figure*}[h]
\begin{subfigure}{.33\textwidth}
  \centering
  \includegraphics[width=.95\linewidth]{TREC_NDCG_equal_opp_amortized_mean_i10_3_3}
  \caption{Ours, equal opportunity}
\end{subfigure}%
\hfill
\begin{subfigure}{.33\textwidth}
  \centering
  \includegraphics[width=.95\linewidth]{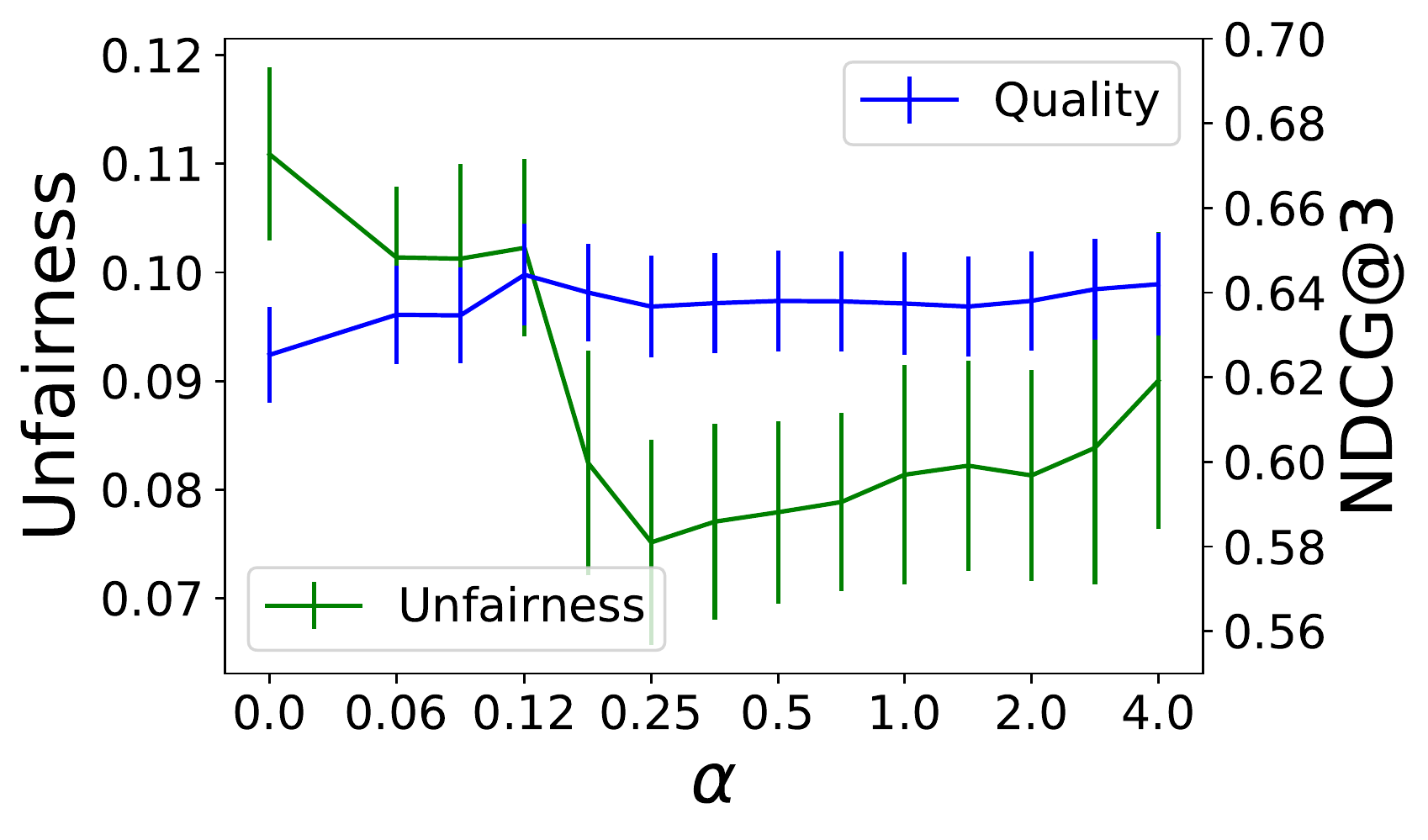}
  \caption{Ours, demographic parity}
\end{subfigure}%
\hfill
\begin{subfigure}{.33\textwidth}
  \centering
  \includegraphics[width=.95\linewidth]{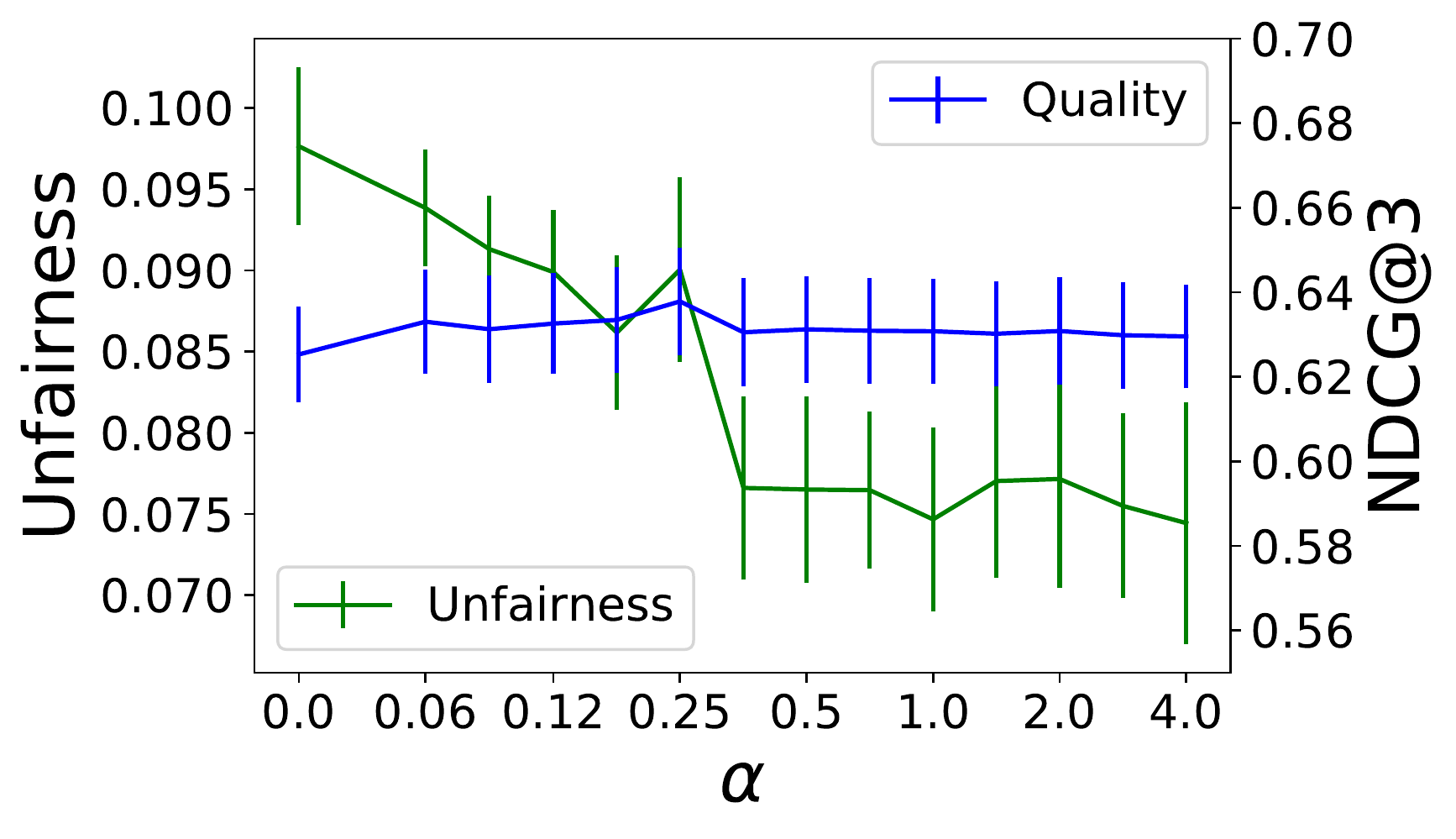}
  \caption{Ours, equal odds}
\end{subfigure}%
\vskip\baselineskip
\begin{subfigure}{.33\textwidth}
  \centering
  \includegraphics[width=.95\linewidth]{baseline_TREC_NDCG_equal_opp_amortized_mean_i10_3_3}
  \caption{DELTR, equal opportunity}
\end{subfigure}%
\hfill
\begin{subfigure}{.33\textwidth}
  \centering
  \includegraphics[width=.95\linewidth]{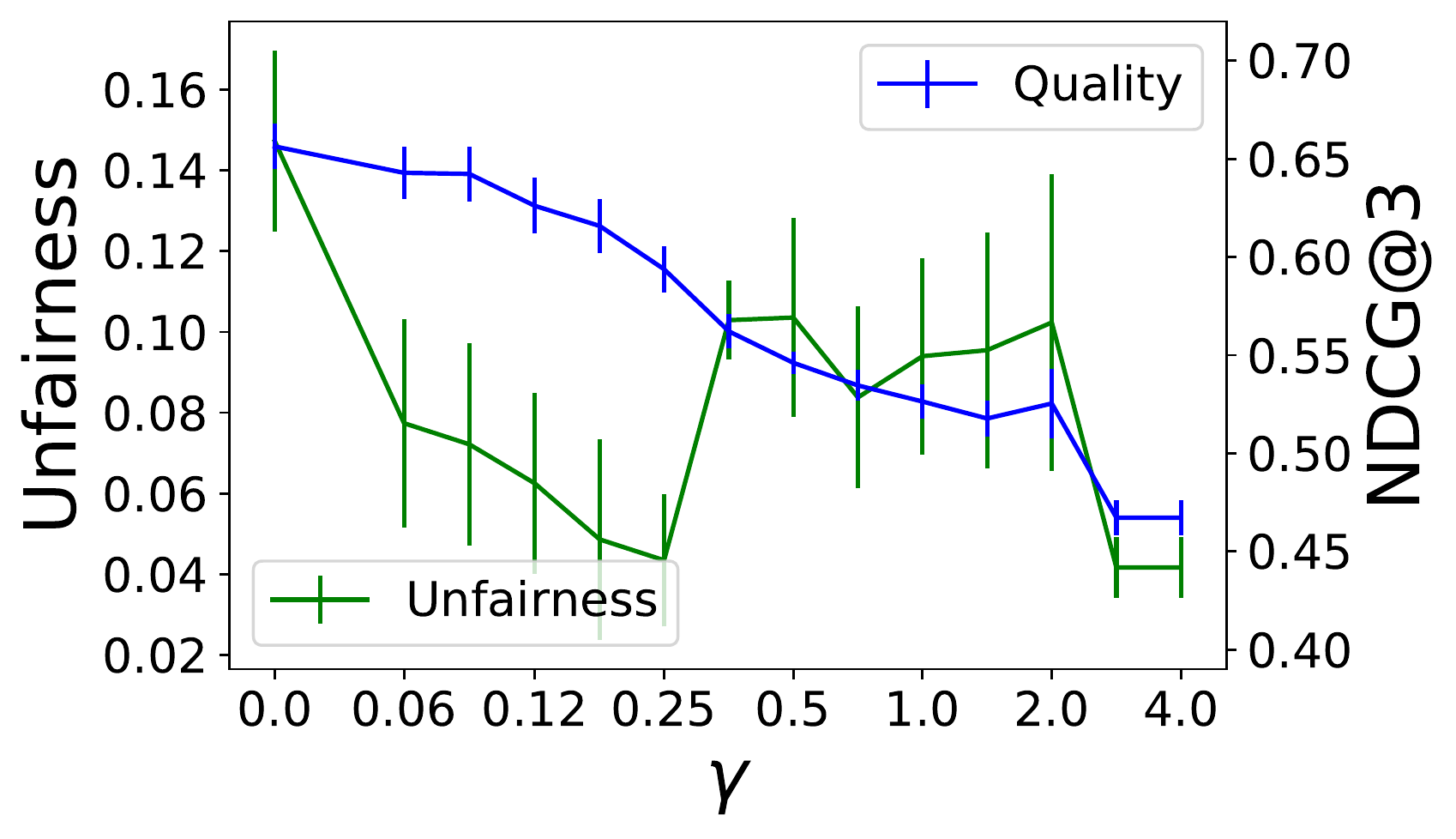}
  \caption{DELTR, demographic parity}
\end{subfigure}%
\hfill
\begin{subfigure}{.33\textwidth}
  \centering
  \includegraphics[width=.95\linewidth]{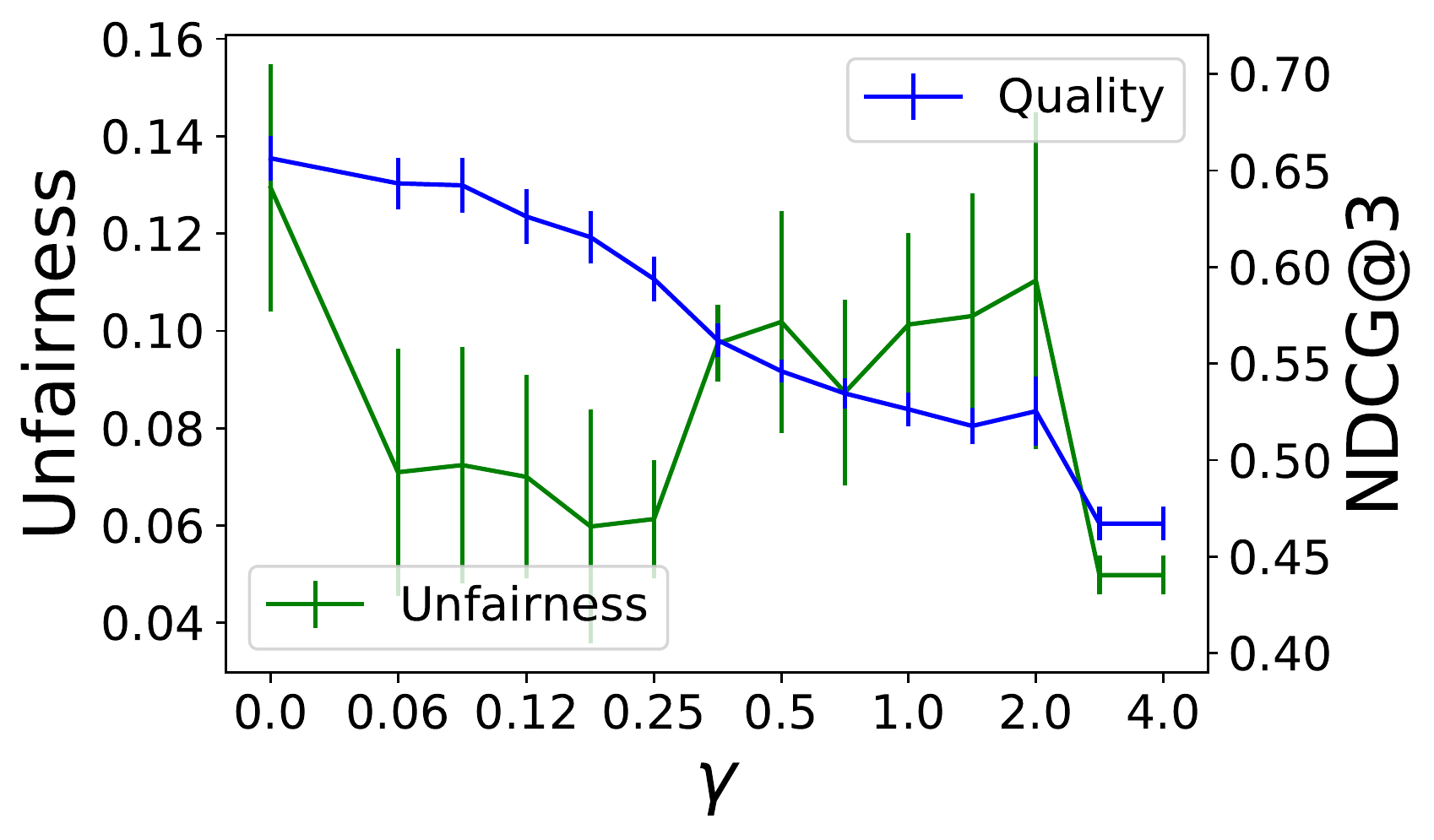}
  \caption{DELTR, equal odds}
\end{subfigure}%
\vskip\baselineskip
\begin{subfigure}{.33\textwidth}
  \centering
  \includegraphics[width=.95\linewidth]{baseline2_TREC_NDCG_equal_opp_amortized_mean_i10_3_3}
  \caption{FA*IR, equal opportunity}
\end{subfigure}%
\hfill
\begin{subfigure}{.33\textwidth}
  \centering
  \includegraphics[width=.95\linewidth]{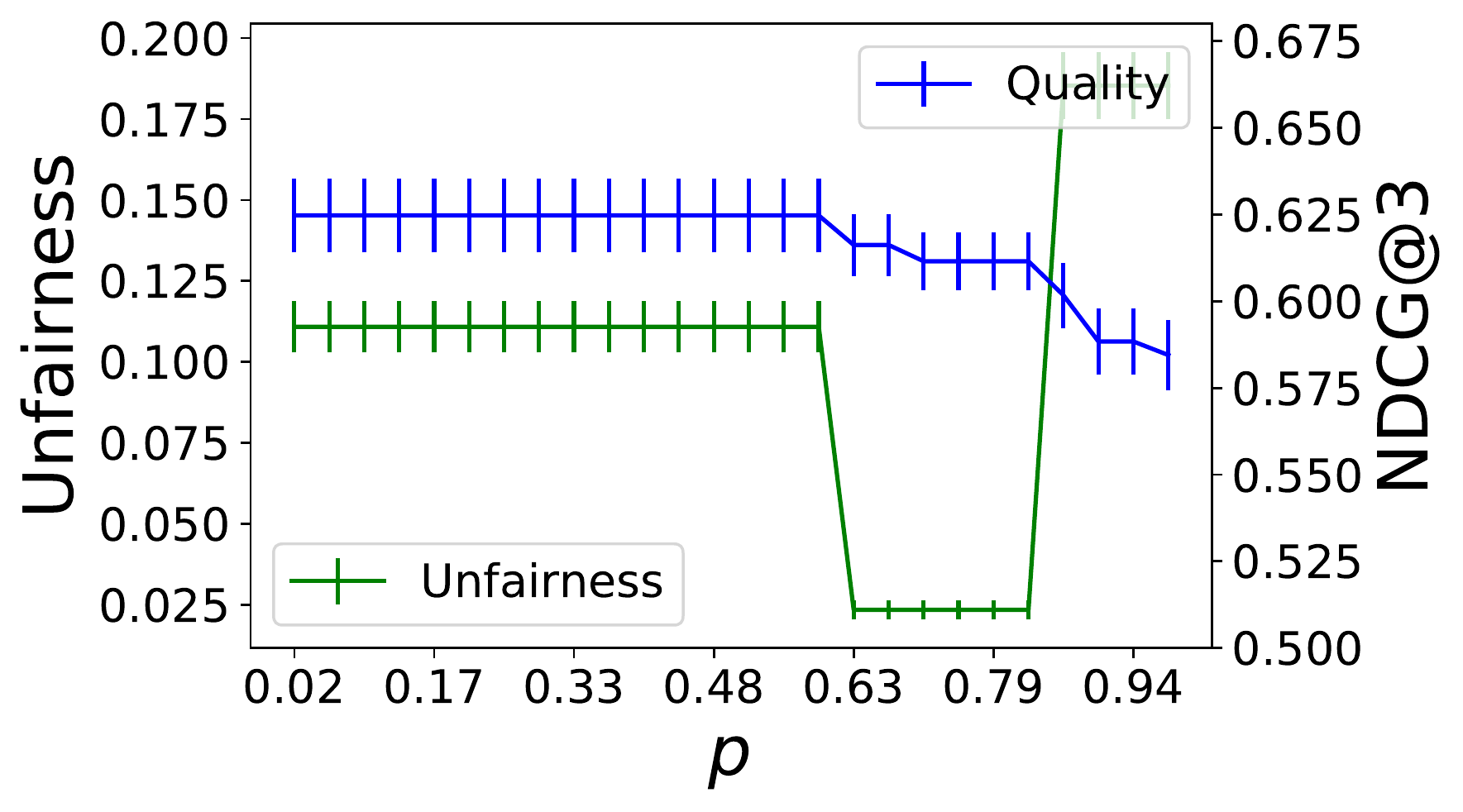}
  \caption{FA*IR, demographic parity}
\end{subfigure}%
\hfill
\begin{subfigure}{.33\textwidth}
  \centering
  \includegraphics[width=.95\linewidth]{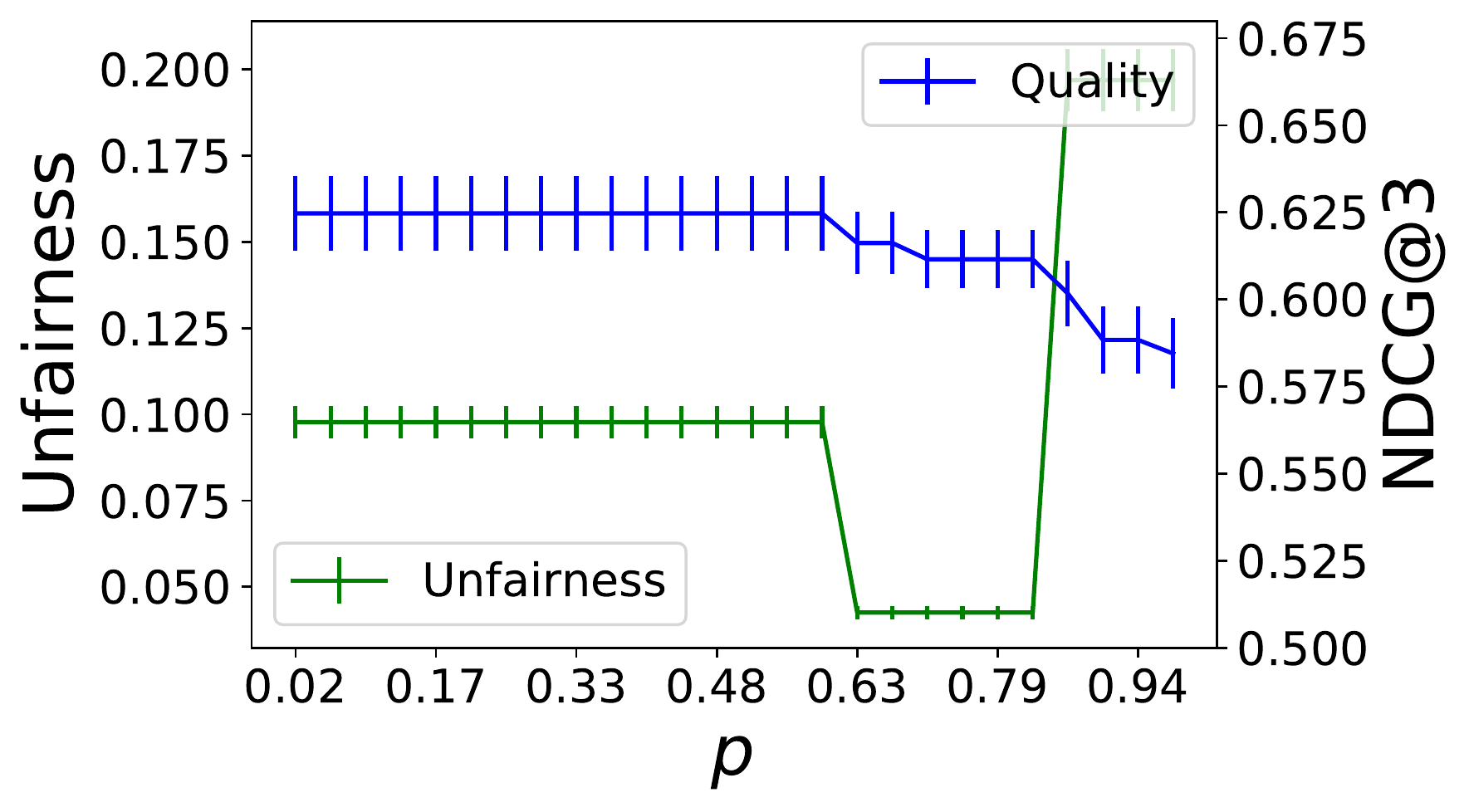}
  \caption{FA*IR, equal odds}
\end{subfigure}%
\vskip\baselineskip
\begin{subfigure}{.33\textwidth}
  \centering
  \includegraphics[width=.95\linewidth]{TREC_NDCG_equal_opp_per_query_mean_i10_3_3}
  \caption{Per-query, equal opportunity}
\end{subfigure}%
\hfill
\begin{subfigure}{.33\textwidth}
  \centering
  \includegraphics[width=.95\linewidth]{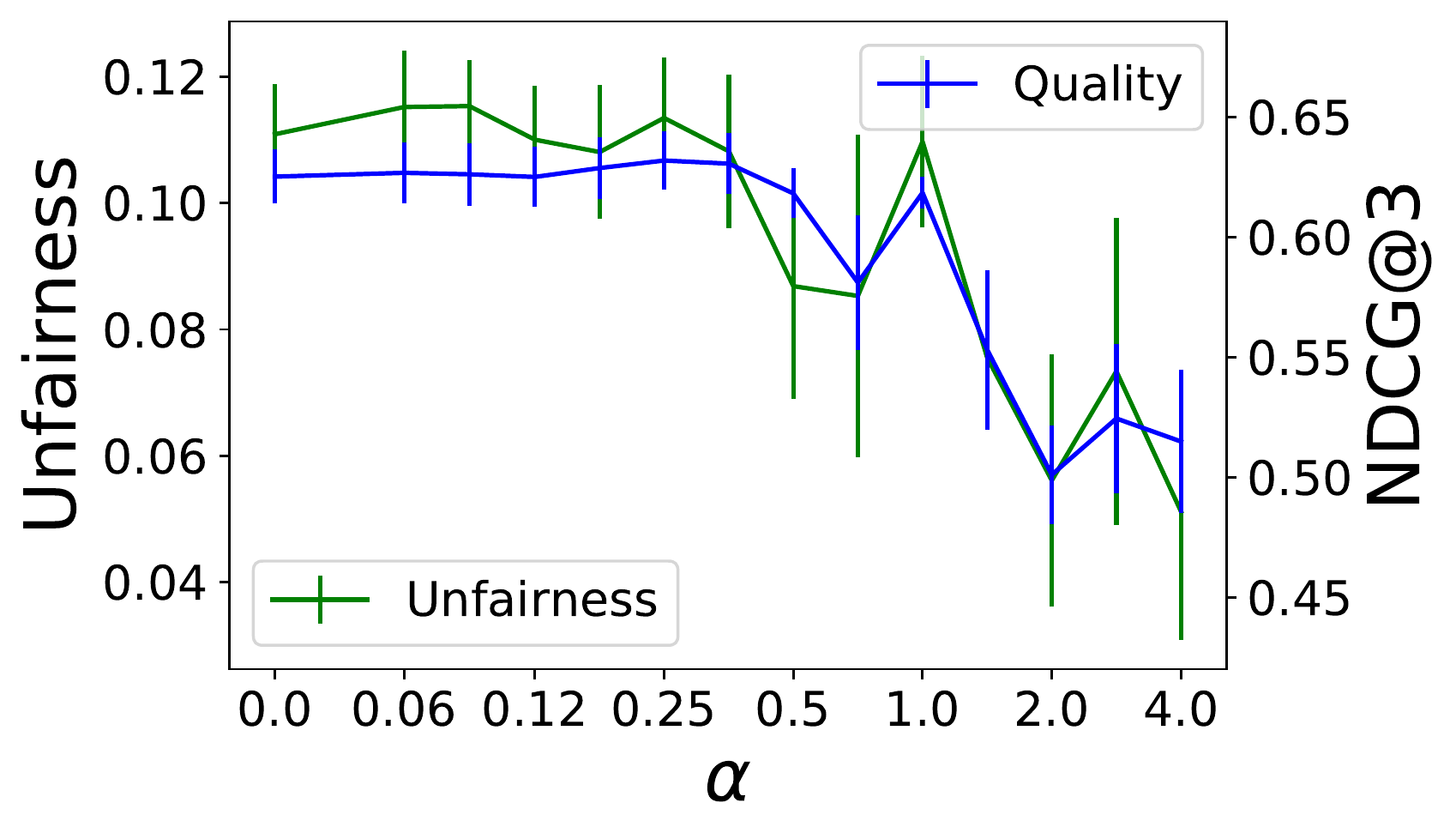}
  \caption{Per-query, demographic parity}
\end{subfigure}%
\hfill
\begin{subfigure}{.33\textwidth}
  \centering
  \includegraphics[width=.95\linewidth]{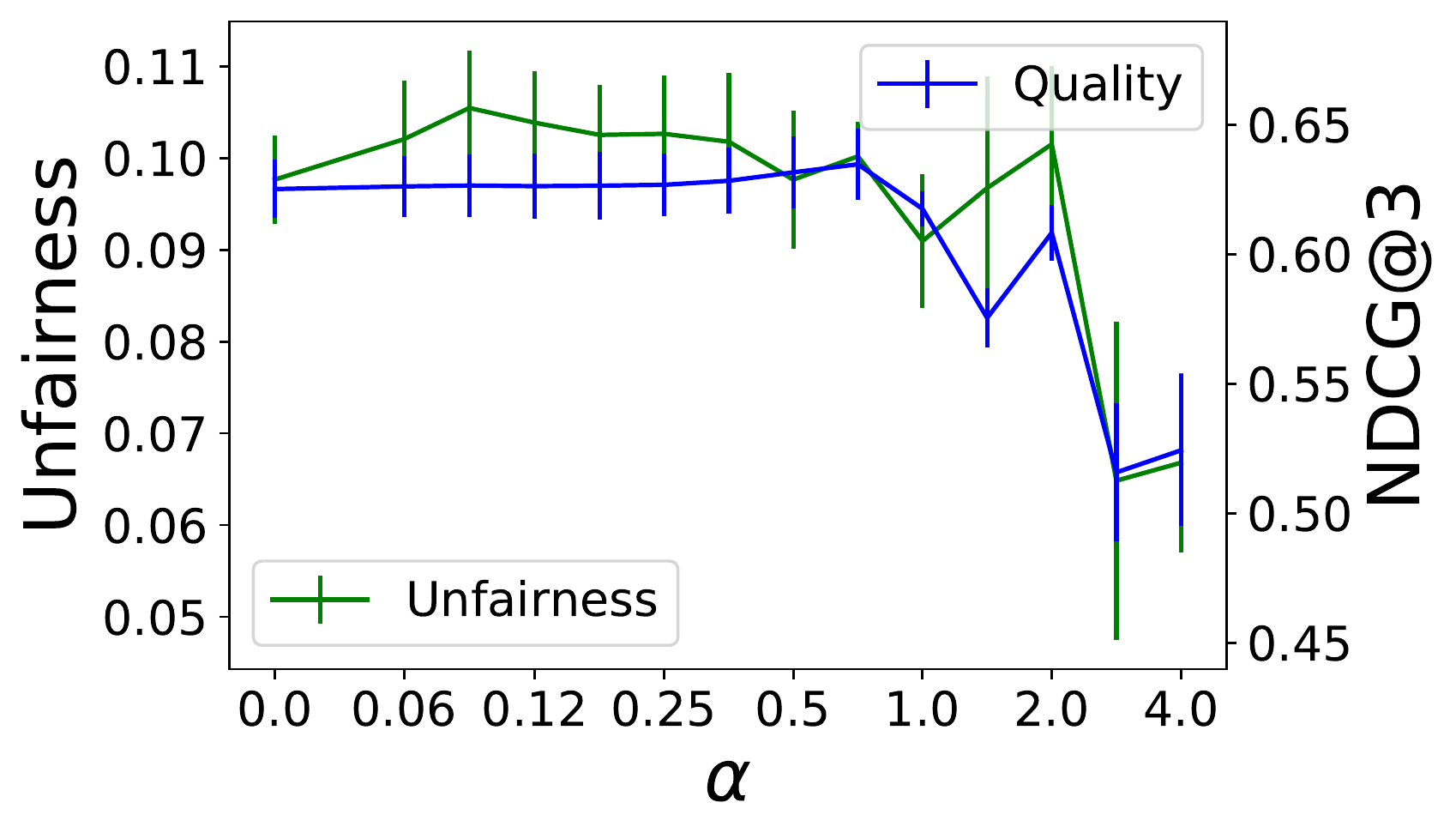}
  \caption{Per-query, equal odds}
\end{subfigure}
\vskip\baselineskip
\caption{TREC: Test-time performance of fair rankers with equal opportunity, demographic parity and equalized odds fairness, achieved by our algorithm and the baselines: unfairness (left $y$-axes) and NDCG@3 ranking 
quality (right $y$-axes); after training with different regularization 
strengths ($x$-axis).}
\label{fig:plots-trec-all}
\end{figure*}

\begin{figure*}[h]
\begin{subfigure}{.33\textwidth}
  \centering
  \includegraphics[width=.95\linewidth]{MSMARCO_NDCG_equal_opp_amortized_com_0_3}
  \caption{Ours, equal opportunity}
\end{subfigure}%
\hfill
\begin{subfigure}{.33\textwidth}
  \centering
  \includegraphics[width=.95\linewidth]{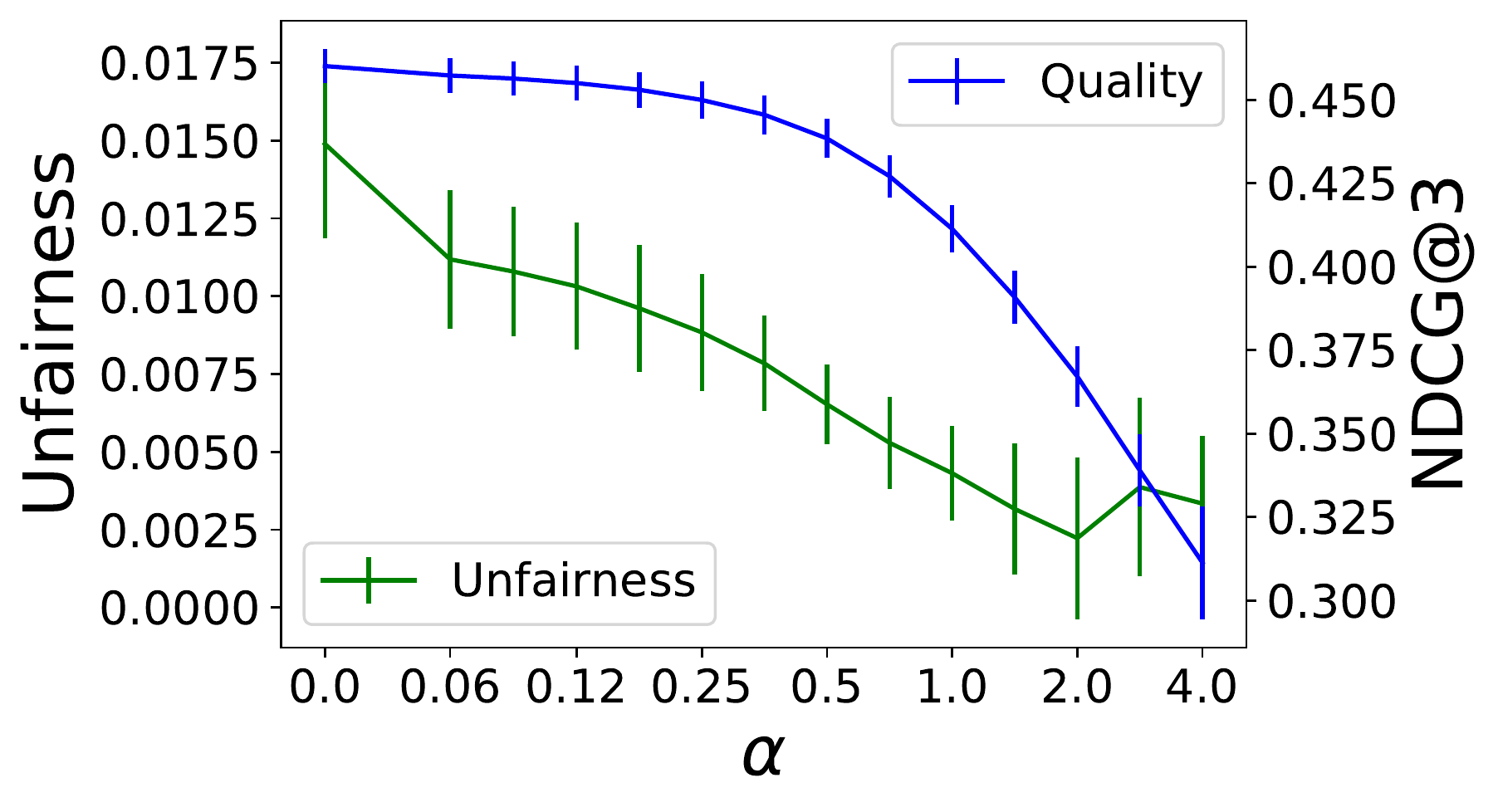}
  \caption{Ours, demographic parity}
\end{subfigure}%
\hfill
\begin{subfigure}{.33\textwidth}
  \centering
  \includegraphics[width=.95\linewidth]{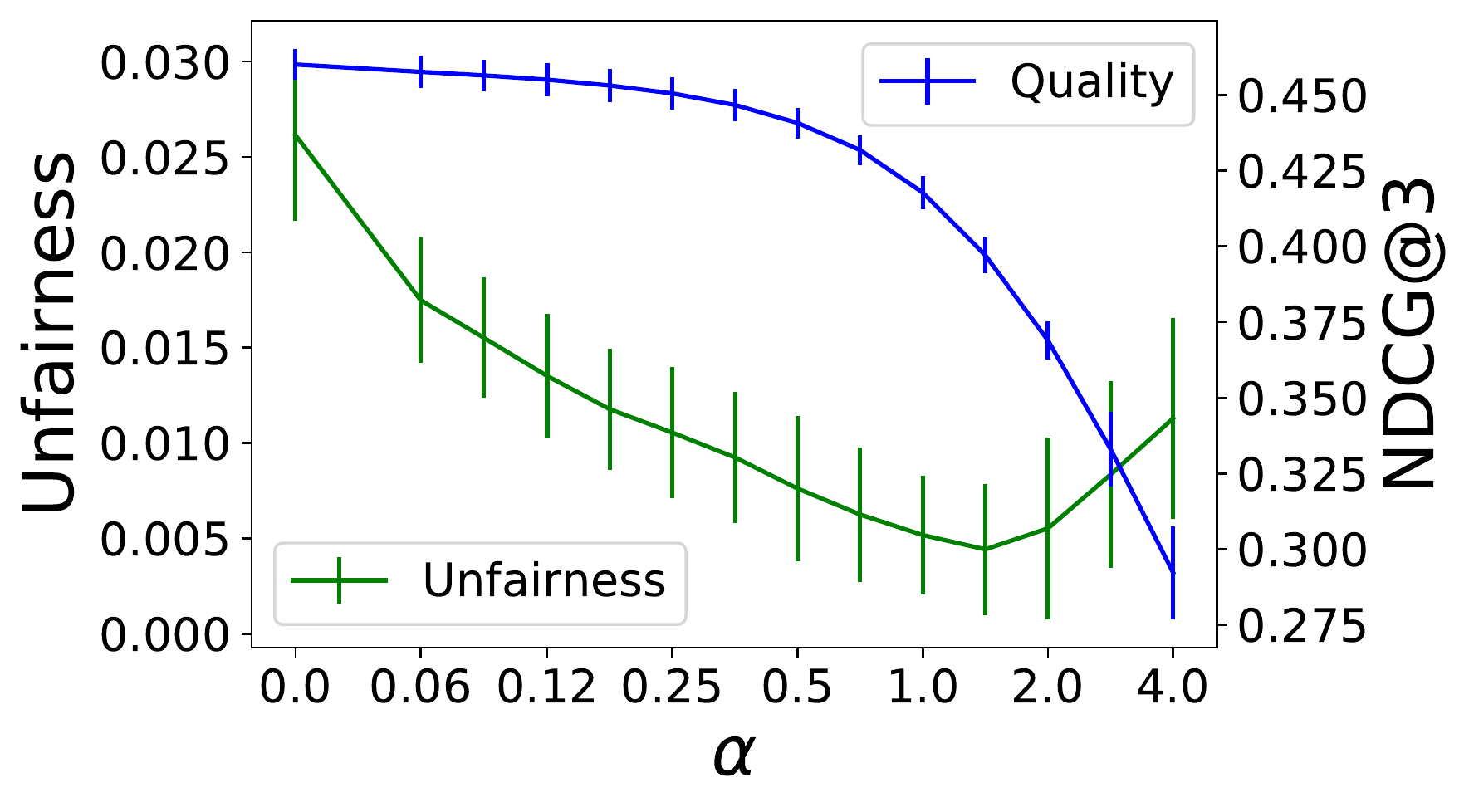}
  \caption{Ours, equal odds}
\end{subfigure}%
\vskip\baselineskip
\begin{subfigure}{.33\textwidth}
  \centering
  \includegraphics[width=.95\linewidth]{baseline2_MSMARCO_NDCG_equal_opp_amortized_com_0_3}
  \caption{FA*IR, equal opportunity}
\end{subfigure}%
\hfill
\begin{subfigure}{.33\textwidth}
  \centering
  \includegraphics[width=.95\linewidth]{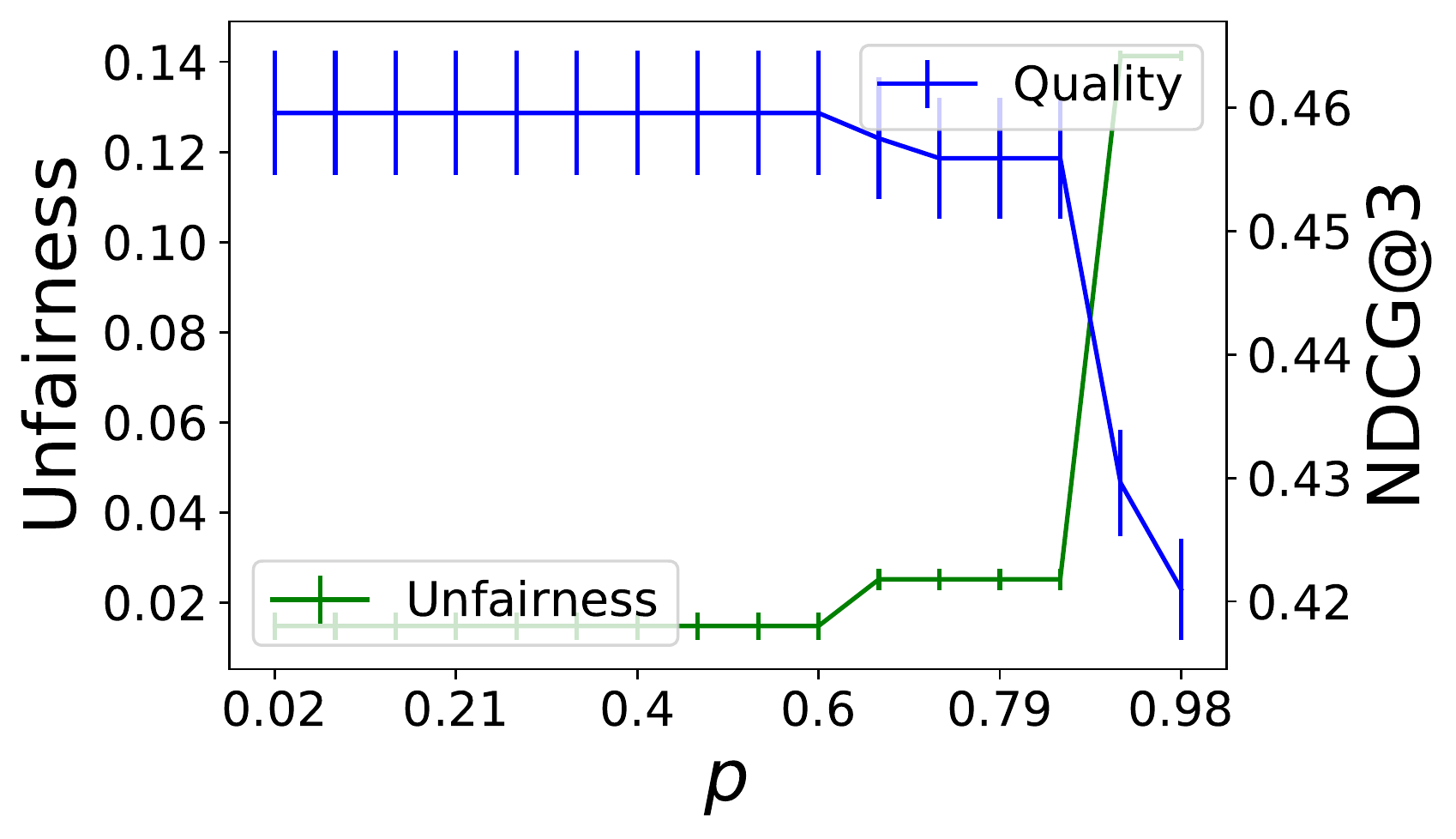}
  \caption{FA*IR, demographic parity}
\end{subfigure}%
\hfill
\begin{subfigure}{.33\textwidth}
  \centering
  \includegraphics[width=.95\linewidth]{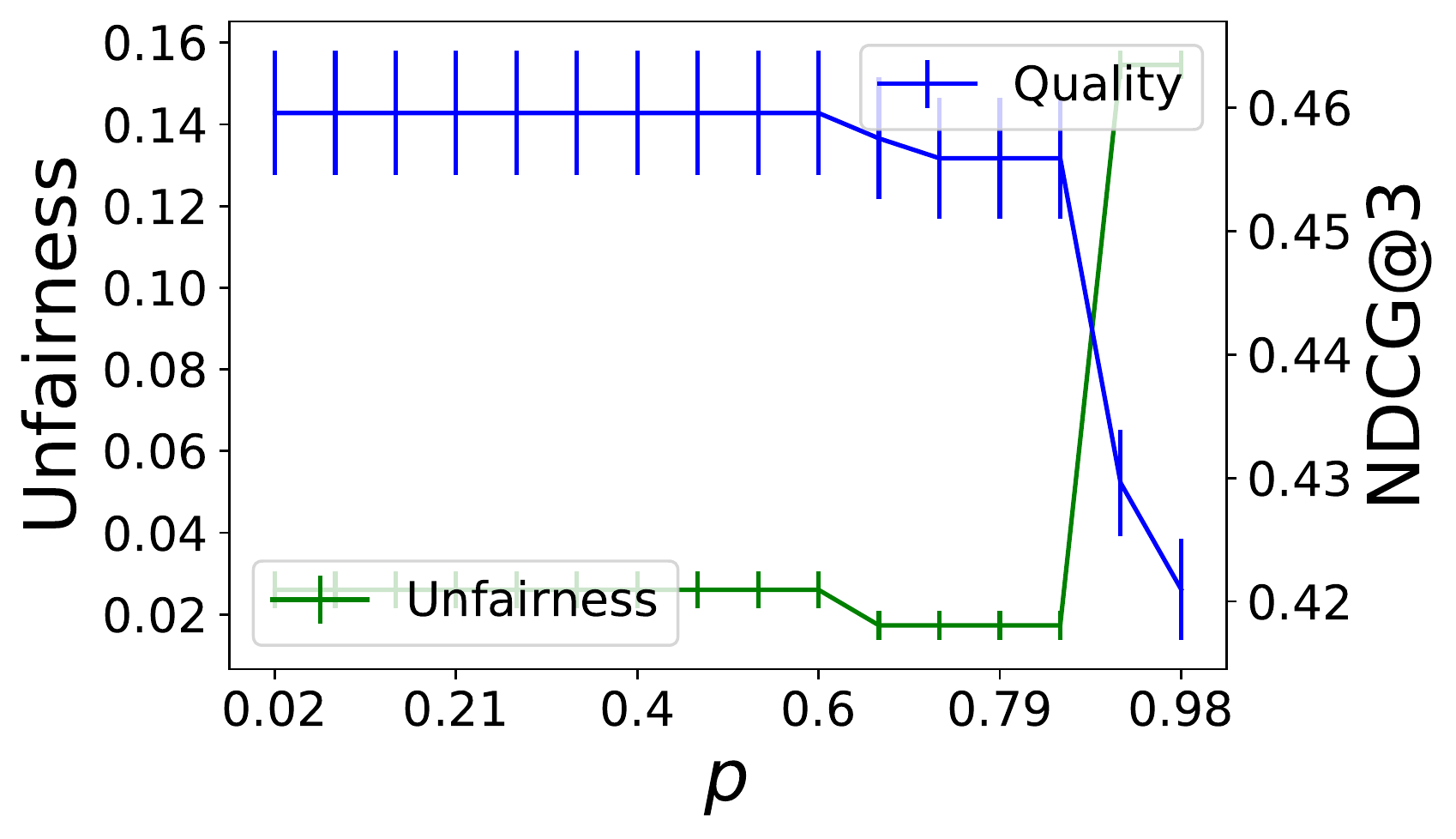}
  \caption{FA*IR, equal odds}
\end{subfigure}%
\vskip\baselineskip
\begin{subfigure}{.33\textwidth}
  \centering
  \includegraphics[width=.95\linewidth]{MSMARCO_NDCG_equal_opp_per_query_com_0_3}
  \caption{Per-query, equal opportunity}
\end{subfigure}%
\hfill
\begin{subfigure}{.33\textwidth}
  \centering
  \includegraphics[width=.95\linewidth]{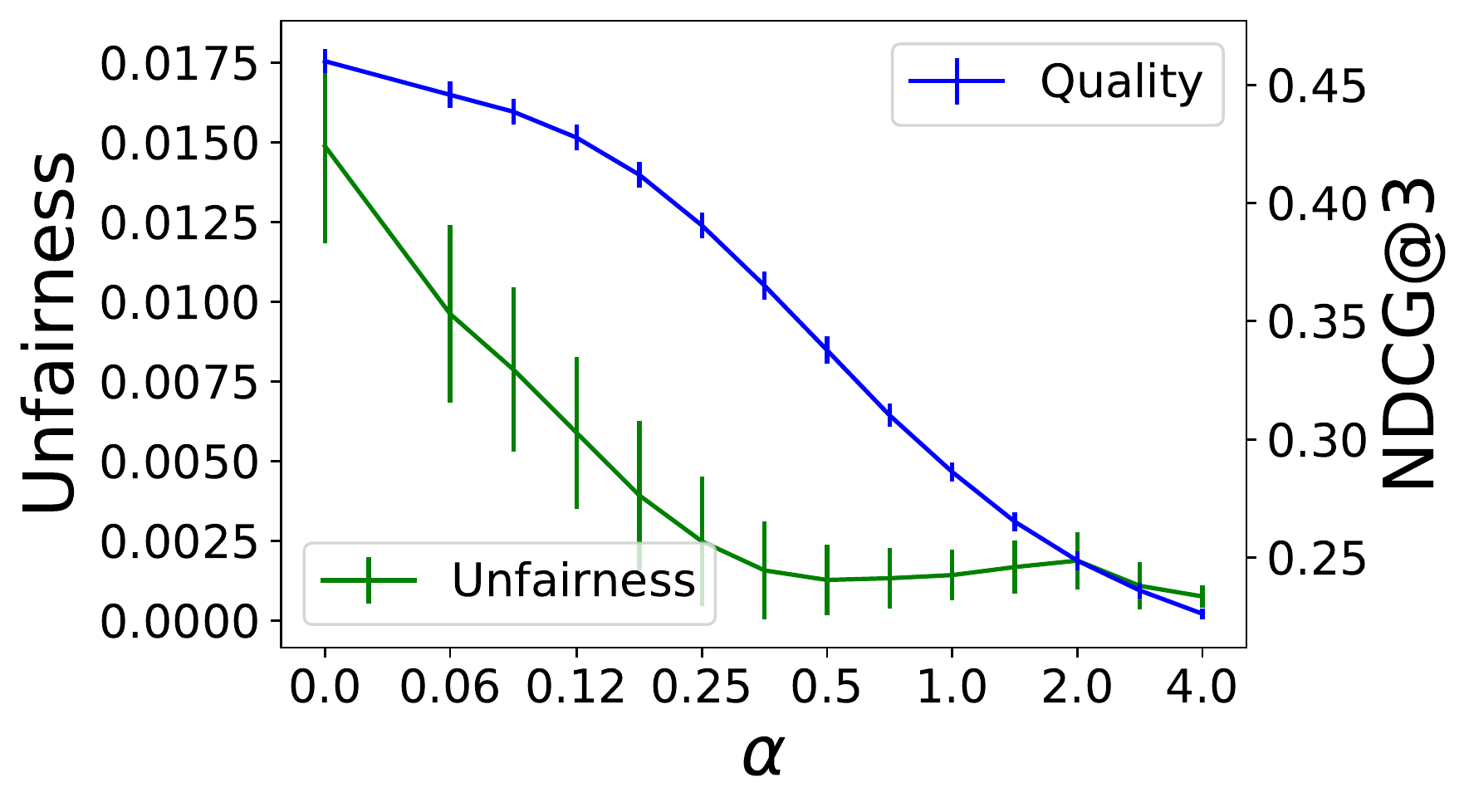}
  \caption{Per-query, demographic parity}
\end{subfigure}%
\hfill
\begin{subfigure}{.33\textwidth}
  \centering
  \includegraphics[width=.95\linewidth]{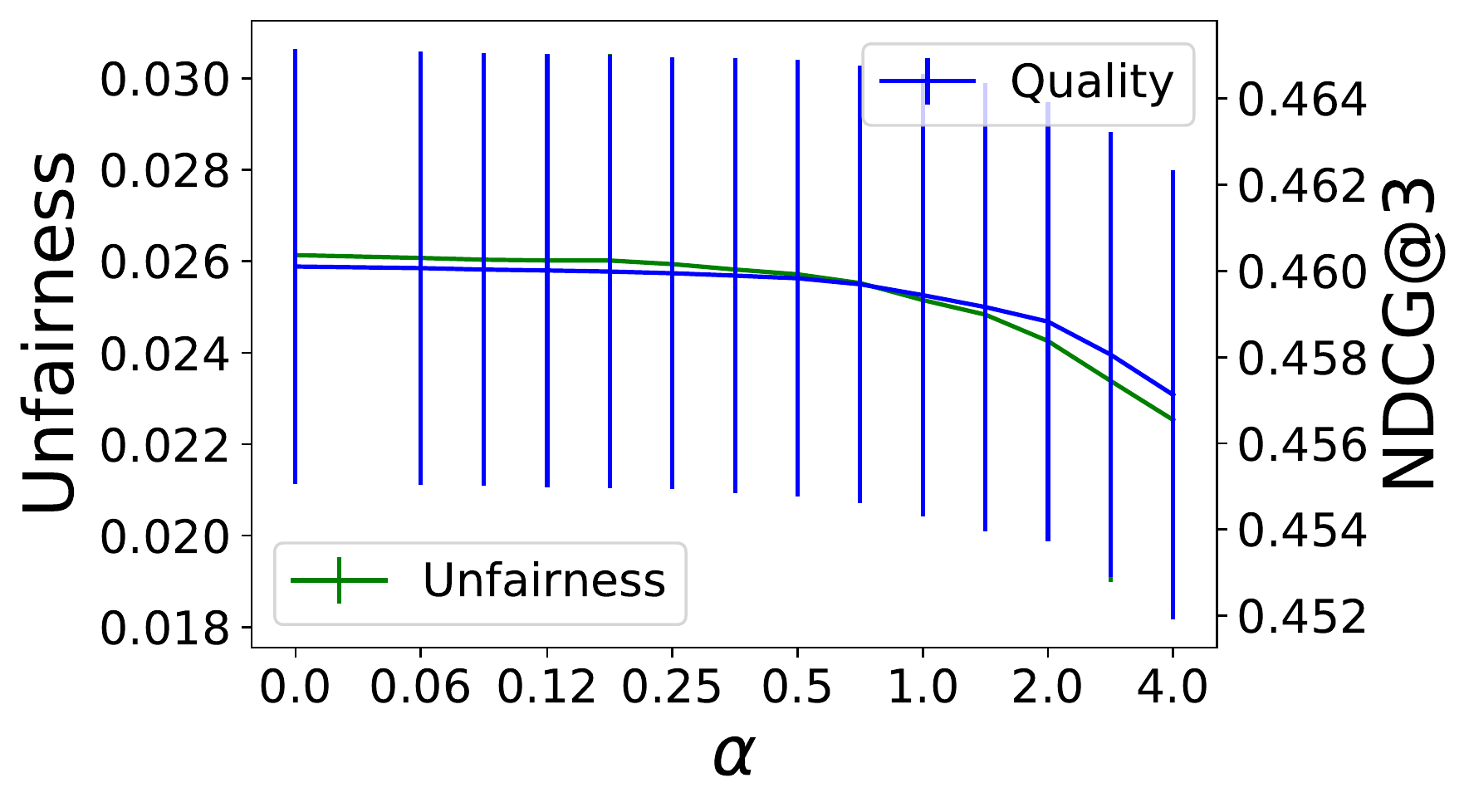}
  \caption{Per-query, equal odds}
\end{subfigure}
\vskip\baselineskip
\caption{MSMARCO: Test-time performance of fair rankers with equal opportunity, demographic parity and equalized odds fairness, achieved by our algorithm and the baselines: unfairness (left $y$-axes) and NDCG@3 ranking 
quality (right $y$-axes); after training with different regularization 
strengths ($x$-axis).}
\label{fig:plots-msmarco-all}
\end{figure*}

\subsection{Mean and maximal improvements of fairness}
\label{sec:results_split_over_k}

Next we provide estimates on how much improvement of fairness is achievable without sacrificing ranking quality, both for our algorithm and the baselines. Essentially, we provide analogs of the results in Table \ref{table:results_ndcg_averages}, but with detailed splits according to the values of $k$.

Specifically, Tables \ref{table:results_max_ours}, \ref{table:results_max_deltr}, \ref{table:results_max_fair}, \ref{table:results_max_per_query} report the maximal reduction of the 
fairness violation measure over the range of values of the trade-off parameter for which the 
corresponding model's prediction quality is not significantly 
worse than for a model trained without a fairness regularizer 
(\ie $\alpha = 0$), for our method, DELTR, FAIR and the per-query fairness variant respectively. Recall that we call a model significantly worse than another 
if the difference of the mean quality values of the two models is 
larger than the sum of the standard errors/deviations, for TREC/MSMARCO respectively, around 
those averages (that is, if the error bars, as in Figure~\ref{fig:plots}, 
would not intersect). Each of the entries in a table reports the improvement possible by the algorithm under consideration.

Next, Tables \ref{table:results_mean_ours}, \ref{table:results_mean_deltr}, \ref{table:results_mean_fair}, \ref{table:results_mean_per_query} report the mean reduction of the 
fairness violation measure over the range of values of the trade-off parameter for which the 
corresponding model's prediction quality is not significantly 
worse than for a model trained without a fairness regularizer 
(\ie $\alpha = 0$), again for our method, DELTR, FAIR and the per-query fairness variant respectively.


\begin{table}[h!]\footnotesize
\caption{Maximal relative fairness increase without a significant decrease of ranking quality for our method. See main text for details.}
\centering\setlength{\tabcolsep}{3.3pt}
 \begin{tabular}{| c c || c | c | c | c | c || c |} 
 \hline
 \multirow{2}{*}{TREC} & & \multicolumn{5}{c||}{$k$} & average \\ 
  & & 1 & 2 & 3 & 4 & 5 & over $k$ \\ 
 \hline\hline 
\multirow{3}{*}{\parbox{.1\textwidth}{equality of \newline opportunity}} & $t = 3$ & 52\% & 58\% & 59\% & 37\% & 32\% & 48\% \\
& $t = 4$ & 51\% & 46\% & 42\% & 56\% & 36\% & 46\% \\
& $t = 5$ & 53\% & 55\% & 48\% & 48\% & 27\% & 46\% \\
 \hline
\multirow{3}{*}{\parbox{.1\textwidth}{demographic \newline parity}} & $t = 3$ & 41\% & 31\% & 32\% & 17\% & 12\% & 27\% \\
& $t = 4$ & 65\% & 41\% & 45\% & 42\% & 30\% & 44\% \\
& $t = 5$ & 62\% & 43\% & 67\% & 61\% & 50\% & 57\% \\
 \hline
\multirow{3}{*}{\parbox{.1\textwidth}{equalized \newline odds}} & $t = 3$ & 23\% & 24\% & 24\% & 13\% & 16\% & 20\% \\
& $t = 4$ & 32\% & 25\% & 31\% & 38\% & 24\% & 30\% \\
& $t = 5$ & 34\% & 10\% & 38\% & 42\% & 21\% & 29\% \\
\hline
\multicolumn{2}{|c|}{average over settings} & 46\% & 37\% & 43\% & 39\% & 27\% & 39\% \\
\hline
\end{tabular}
\begin{tabular}{| c c || c | c | c | c | c || c |} 
 \hline
 \multirow{2}{*}{MSMARCO} & & \multicolumn{5}{c||}{$k$} & average \\ 
 & & 1 & 2 & 3 & 4 & 5 & over $k$ \\ 
 \hline\hline 
\multirow{2}{*}{\parbox{.1\textwidth}{equality of \newline opportunity}} & \emph{com} & 63\% & 58\% & 53\% & 49\% & 50\% & 55\% \\
& \emph{ext} & 25\% & 24\% & 19\% & 19\% & 9\% & 19\% \\
\hline
\multirow{2}{*}{\parbox{.1\textwidth}{demographic \newline parity}} & \emph{com} & 38\% & 41\% & 41\% & 46\% & 43\% & 42\% \\
& \emph{ext} & 14\% & 16\% & 19\% & 22\% & 28\% & 20\% \\
\hline
\multirow{2}{*}{\parbox{.1\textwidth}{equalized \newline odds}} & \emph{com} & 67\% & 63\% & 60\% & 58\% & 59\% & 61\% \\
& \emph{ext} & 27\% & 29\% & 27\% & 30\% & 30\% & 28\% \\
\hline
\multicolumn{2}{|c|}{average over settings} & 39\% & 38\% & 36\% & 37\% & 36\% & 37\% \\
\hline
\end{tabular}
\label{table:results_max_ours}
\end{table}

\begin{table}[h!]\footnotesize
\caption{Maximal relative fairness increase without a significant decrease of ranking quality: DELTR method. See main text for details.}
\centering\setlength{\tabcolsep}{3.3pt}
 \begin{tabular}{| c c || c | c | c | c | c || c |} 
 \hline
 \multirow{2}{*}{TREC} & & \multicolumn{5}{c||}{$k$} & average \\ 
  & & 1 & 2 & 3 & 4 & 5 & over $k$ \\ 
 \hline\hline 
\multirow{3}{*}{\parbox{.1\textwidth}{equality of \newline opportunity}} & $t = 3$ & 26\% & 43\% & 38\% & 49\% & 41\% & 39\% \\
& $t = 4$ & 10\% & 0\% & 0\% & 0\% & 0\% & 2\% \\
& $t = 5$ & 28\% & 20\% & 23\% & 18\% & 0\% & 18\% \\
 \hline
\multirow{3}{*}{\parbox{.1\textwidth}{demographic \newline parity}} & $t = 3$ & 59\% & 66\% & 51\% & 50\% & 50\% & 55\% \\
& $t = 4$ & 22\% & 22\% & 11\% & 7\% & 0\% & 12\% \\
& $t = 5$ & 40\% & 30\% & 28\% & 20\% & 11\% & 26\% \\
 \hline
\multirow{3}{*}{\parbox{.1\textwidth}{equalized \newline odds}} & $t = 3$ & 39\% & 53\% & 45\% & 54\% & 49\% & 48\% \\
& $t = 4$ & 19\% & 18\% & 4\% & 3\% & 0\% & 9\% \\
& $t = 5$ & 33\% & 26\% & 23\% & 18\% & 6\% & 21\% \\
\hline
\multicolumn{2}{|c|}{average over settings} & 31\% & 31\% & 25\% & 24\% & 17\% & 26\% \\
\hline
\end{tabular}
\label{table:results_max_deltr}
\end{table}

\begin{table}[h!]\footnotesize
\caption{Maximal relative fairness increase without a significant decrease of ranking quality: FA*IR method. See main text for details.}
\centering\setlength{\tabcolsep}{3.3pt}
 \begin{tabular}{| c c || c | c | c || c |} 
 \hline
 \multirow{2}{*}{TREC} & & \multicolumn{3}{c||}{$k$} & average \\ 
  & & 3 & 4 & 5 & over $k$ \\ 
 \hline\hline 
\multirow{3}{*}{\parbox{.1\textwidth}{equality of \newline opportunity}} & $t = 3$ & 60\% & 56\% & 53\% & 56\% \\
& $t = 4$ & 23\% & 78\% & 68\% & 56\% \\
& $t = 5$ & 0\% & 5\% & 13\% & 6\% \\
 \hline
\multirow{3}{*}{\parbox{.1\textwidth}{demographic \newline parity}} & $t = 3$ & 79\% & 77\% & 93\% & 83\% \\
& $t = 4$ & 26\% & 67\% & 75\% & 56\% \\
& $t = 5$ & 0\% & 0\% & 33\% & 11\% \\
 \hline
\multirow{3}{*}{\parbox{.1\textwidth}{equalized \newline odds}} & $t = 3$ & 56\% & 54\% & 60\% & 57\% \\
& $t = 4$ & 10\% & 48\% & 62\% & 40\% \\
& $t = 5$ & 0\% & 0\% & 1\% & 0\% \\
\hline
\multicolumn{2}{|c|}{average over settings} & 28\% & 43\% & 51\% & 41\% \\
\hline
\end{tabular}
\begin{tabular}{| c c || c | c | c || c |} 
 \hline
 \multirow{2}{*}{MSMARCO} & & \multicolumn{3}{c||}{$k$} & average \\ 
 & & 3 & 4 & 5 & over $k$ \\ 
 \hline\hline 
\multirow{2}{*}{\parbox{.1\textwidth}{equality of \newline opportunity}} & \emph{com} & 80\% & 66\% & 44\% & 64\% \\
& \emph{ext} & 0\% & 0\% & 0\% & 0\% \\
\hline
\multirow{2}{*}{\parbox{.1\textwidth}{demographic \newline parity}}& \emph{com} & 0\% & 36\% & 80\% & 39\% \\
& \emph{ext} & 0\% & 0\% & 0\% & 0\% \\
\hline
\multirow{2}{*}{\parbox{.1\textwidth}{equalized \newline odds}} & \emph{com} & 33\% & 51\% & 51\% & 45\% \\
& \emph{ext} & 0\% & 0\% & 0\% & 0\% \\
\hline
\multicolumn{2}{|c|}{average over settings} & 19\% & 26\% & 29\% & 25\% \\
\hline
\end{tabular}
\label{table:results_max_fair}
\end{table}

\begin{table}[h!]\footnotesize
\caption{Maximal relative fairness increase without a significant decrease of ranking quality: per query fairness. See main text for details.}
\centering\setlength{\tabcolsep}{3.3pt}
 \begin{tabular}{| c c || c | c | c | c | c || c |} 
 \hline
 \multirow{2}{*}{TREC} & & \multicolumn{5}{c||}{$k$} & average \\ 
  & & 1 & 2 & 3 & 4 & 5 & over $k$ \\ 
 \hline\hline 
\multirow{3}{*}{\parbox{.1\textwidth}{equality of \newline opportunity}} & $t = 3$ & 28\% & 19\% & 17\% & 20\% & 9\% & 19\% \\
& $t = 4$ & 21\% & 0\% & 2\% & 29\% & 19\% & 14\% \\
& $t = 5$ & 9\% & 14\% & 10\% & 6\% & 0\% & 8\% \\
 \hline
\multirow{3}{*}{\parbox{.1\textwidth}{demographic \newline parity}} & $t = 3$ & 4\% & 14\% & 22\% & 17\% & 17\% & 15\% \\
& $t = 4$ & 32\% & 26\% & 26\% & 25\% & 24\% & 27\% \\
& $t = 5$ & 48\% & 15\% & 35\% & 8\% & 15\% & 24\% \\
 \hline
\multirow{3}{*}{\parbox{.1\textwidth}{equalized \newline odds}} & $t = 3$ & 16\% & 14\% & 7\% & 22\% & 11\% & 14\% \\
& $t = 4$ & 11\% & 3\% & 19\% & 14\% & 16\% & 13\% \\
& $t = 5$ & 7\% & 8\% & 11\% & 11\% & 0\% & 7\% \\
\hline
\multicolumn{2}{|c|}{average over settings} & 19\% & 13\% & 17\% & 17\% & 12\% & 16\% \\
\hline
\end{tabular}
\begin{tabular}{| c c || c | c | c | c | c || c |} 
 \hline
 \multirow{2}{*}{MSMARCO} & & \multicolumn{5}{c||}{$k$} & average \\ 
 & & 1 & 2 & 3 & 4 & 5 & over $k$ \\ 
 \hline\hline 
\multirow{2}{*}{\parbox{.1\textwidth}{equality of \newline opportunity}} & \emph{com} & 28\% & 25\% & 23\% & 19\% & 23\% & 24\% \\
& \emph{ext} & 3\% & 1\% & 5\% & 1\% & 2\% & 2\% \\
\multirow{2}{*}{\parbox{.1\textwidth}{demographic \newline parity}}& \emph{com} & 0\% & 0\% & 0\% & 0\% & 0\% & 0\% \\
& \emph{ext} & 4\% & 6\% & 8\% & 6\% & 9\% & 7\% \\
\multirow{2}{*}{\parbox{.1\textwidth}{equalized \newline odds}} & \emph{com} & 16\% & 15\% & 14\% & 13\% & 14\% & 14\% \\
& \emph{ext} & 1\% & 1\% & 2\% & 1\% & 2\% & 1\% \\
\hline
\multicolumn{2}{|c|}{average over settings} & 9\% & 8\% & 9\% & 7\% & 8\% & 8\% \\
\hline
\end{tabular}
\label{table:results_max_per_query}
\end{table}


\begin{table}[b]\footnotesize
\caption{Mean relative fairness increase without a significant decrease of ranking quality for our method.}
\centering\setlength{\tabcolsep}{3.3pt}
 \begin{tabular}{| c c || c | c | c | c | c || c |} 
 \hline
 \multirow{2}{*}{TREC} & & \multicolumn{5}{c||}{$k$} & average \\ 
  & & 1 & 2 & 3 & 4 & 5 & over $k$ \\ 
 \hline\hline 
\multirow{3}{*}{\parbox{.1\textwidth}{equality of \newline opportunity}} & $t = 3$ & 29\% & 42\% & 44\% & 27\% & 26\% & 34\% \\
& $t = 4$ & 41\% & 38\% & 33\% & 44\% & 28\% & 37\% \\
& $t = 5$ & 35\% & 28\% & 41\% & 40\% & 18\% & 32\% \\
 \hline
\multirow{3}{*}{\parbox{.1\textwidth}{demographic \newline parity}}  & $t = 3$ & 30\% & 20\% & 21\% & 8\% & 9\% & 17\% \\
& $t = 4$ & 49\% & 26\% & 32\% & 30\% & 23\% & 32\% \\
& $t = 5$ & 43\% & 21\% & 56\% & 42\% & 39\% & 40\% \\
 \hline
\multirow{3}{*}{\parbox{.1\textwidth}{equalized \newline odds}} & $t = 3$ & 16\% & 14\% & 15\% & 7\% & 11\% & 13\% \\
& $t = 4$ & 25\% & 18\% & 20\% & 27\% & 17\% & 21\% \\
& $t = 5$ & 23\% & 6\% & 28\% & 29\% & 17\% & 21\% \\
\hline
\multicolumn{2}{|c|}{average over settings} & 32\% & 24\% & 32\% & 28\% & 21\% & 27\% \\
\hline
\end{tabular}
\begin{tabular}{| c c || c | c | c | c | c || c |} 
 \hline
 \multirow{2}{*}{MSMARCO} & & \multicolumn{5}{c||}{$k$} & average \\ 
 & & 1 & 2 & 3 & 4 & 5 & over $k$ \\ 
 \hline\hline 
\multirow{2}{*}{\parbox{.1\textwidth}{equality of \newline opportunity}} & \emph{com} & 42\% & 39\% & 36\% & 33\% & 33\% & 36\% \\
& \emph{ext} & 12\% & 12\% & 9\% & 10\% & 5\% & 10\% \\
\hline
\multirow{2}{*}{\parbox{.1\textwidth}{demographic \newline parity}} & \emph{com} & 24\% & 26\% & 27\% & 30\% & 29\% & 27\% \\
& \emph{ext} & 10\% & 11\% & 12\% & 13\% & 17\% & 13\% \\
\hline
\multirow{2}{*}{\parbox{.1\textwidth}{equalized \newline odds}} & \emph{com} & 44\% & 41\% & 39\% & 38\% & 40\% & 41\% \\
& \emph{ext} & 16\% & 17\% & 15\% & 18\% & 18\% & 17\% \\
\hline
\multicolumn{2}{|c|}{average over settings} & 25\% & 24\% & 23\% & 24\% & 24\% & 24\% \\
\hline
 \end{tabular}
 \label{table:results_mean_ours}
\end{table}

\begin{table}[b]\footnotesize
\caption{Mean relative fairness increase without a significant decrease of ranking quality: DELTR method.}
\centering\setlength{\tabcolsep}{3.3pt}
 \begin{tabular}{| c c || c | c | c | c | c || c |} 
 \hline
 \multirow{2}{*}{TREC} & & \multicolumn{5}{c||}{$k$} & average \\ 
  & & 1 & 2 & 3 & 4 & 5 & over $k$ \\ 
 \hline\hline 
\multirow{3}{*}{\parbox{.1\textwidth}{equality of \newline opportunity}} & $t = 3$ & 10\% & 26\% & 23\% & 28\% & 27\% & 23\% \\
& $t = 4$ & 5\% & -2\% & -9\% & -12\% & -23\% & -8\% \\
& $t = 5$ & 15\% & 10\% & 11\% & 9\% & -11\% & 7\% \\
 \hline
\multirow{3}{*}{\parbox{.1\textwidth}{demographic \newline parity}}  & $t = 3$ & 43\% & 45\% & 33\% & 31\% & 30\% & 36\% \\
& $t = 4$ & 11\% & 11\% & 5\% & 3\% & -1\% & 6\% \\
& $t = 5$ & 25\% & 15\% & 14\% & 10\% & 5\% & 14\% \\
 \hline
\multirow{3}{*}{\parbox{.1\textwidth}{equalized \newline odds}} & $t = 3$ & 24\% & 40\% & 30\% & 33\% & 30\% & 31\% \\
& $t = 4$ & 9\% & 9\% & 2\% & 2\% & -3\% & 4\% \\
& $t = 5$ & 21\% & 13\% & 12\% & 9\% & 3\% & 12\% \\
\hline
\multicolumn{2}{|c|}{average over settings} & 18\% & 19\% & 13\% & 13\% & 7\% & 14\% \\
\hline
\end{tabular}
 \label{table:results_mean_deltr}
\end{table}

\begin{table}[b]\footnotesize
\caption{Mean relative fairness increase without a significant decrease of ranking quality: FA*IR method.}
\centering\setlength{\tabcolsep}{3.3pt}
 \begin{tabular}{| c c || c | c | c || c |} 
 \hline
 \multirow{2}{*}{TREC} & & \multicolumn{3}{c||}{$k$} & average \\ 
  & & 3 & 4 & 5 & over $k$ \\ 
 \hline\hline 
\multirow{3}{*}{\parbox{.1\textwidth}{equality of \newline opportunity}} & $t = 3$ & 16\% & 17\% & 20\% & 18\% \\
& $t = 4$ & -1\% & 7\% & 26\% & 11\% \\
& $t = 5$ & -24\% & -25\% & -2\% & -17\% \\
 \hline
\multirow{3}{*}{\parbox{.1\textwidth}{demographic \newline parity}}  & $t = 3$ & 22\% & 20\% & 29\% & 24\% \\
& $t = 4$ & 1\% & 3\% & 26\% & 10\% \\
& $t = 5$ & -37\% & -52\% & -16\% & -35\% \\
 \hline
\multirow{3}{*}{\parbox{.1\textwidth}{equalized \newline odds}} & $t = 3$ & 15\% & 11\% & 22\% & 16\% \\
& $t = 4$ & -5\% & -5\% & 19\% & 3\% \\
& $t = 5$ & -40\% & -44\% & -22\% & -35\% \\
\hline
\multicolumn{2}{|c|}{average over settings} & -6\% & -7\% & 11\% & -1\% \\
\hline
\end{tabular}
\begin{tabular}{| c c || c | c | c || c |} 
 \hline
 \multirow{2}{*}{MSMARCO} & & \multicolumn{3}{c||}{$k$} & average \\ 
 & & 3 & 4 & 5 & over $k$ \\ 
 \hline\hline 
\multirow{2}{*}{\parbox{.1\textwidth}{equality of \newline opportunity}} & \emph{com} & 23\% & -5\% & 13\% & 11\% \\
& \emph{ext} & -55\% & -114\% & -166\% & -112\% \\
\hline
\multirow{2}{*}{\parbox{.1\textwidth}{demographic \newline parity}} & \emph{com} & -20\% & -90\% & -39\% & -50\% \\
& \emph{ext} & -91\% & -173\% & -240\% & -168\% \\
\hline
\multirow{2}{*}{\parbox{.1\textwidth}{equalized \newline odds}} & \emph{com} & 10\% & -35\% & -9\% & -12\% \\
& \emph{ext} & -70\% & -144\% & -210\% & -142\% \\
\hline
\multicolumn{2}{|c|}{average over settings} & -34\% & -93\% & -109\% & -79\% \\
\hline
 \end{tabular}
 \label{table:results_mean_fair}
\end{table}

\begin{table}[b]\footnotesize
\caption{Mean relative fairness increase without a significant decrease of ranking quality: per-query fairness.}
\centering\setlength{\tabcolsep}{3.3pt}
 \begin{tabular}{| c c || c | c | c | c | c || c |} 
 \hline
 \multirow{2}{*}{TREC} & & \multicolumn{5}{c||}{$k$} & average \\ 
  & & 1 & 2 & 3 & 4 & 5 & over $k$ \\ 
 \hline\hline 
\multirow{3}{*}{\parbox{.1\textwidth}{equality of \newline opportunity}} & $t = 3$ & -14\% & -8\% & -2\% & 1\% & -1\% & -5\% \\
& $t = 4$ & -14\% & -10\% & -6\% & 2\% & 7\% & -4\% \\
& $t = 5$ & -27\% & -7\% & -4\% & -9\% & -16\% & -13\% \\
 \hline
\multirow{3}{*}{\parbox{.1\textwidth}{demographic \newline parity}}  & $t = 3$ & -4\% & -1\% & 2\% & -1\% & -1\% & -1\% \\
& $t = 4$ & 4\% & 3\% & 6\% & 7\% & 6\% & 5\% \\
& $t = 5$ & -3\% & -2\% & 11\% & -5\% & 2\% & 1\% \\
 \hline
\multirow{3}{*}{\parbox{.1\textwidth}{equalized \newline odds}} & $t = 3$ & -3\% & -2\% & -3\% & -3\% & -3\% & -3\% \\
& $t = 4$ & -7\% & -3\% & 2\% & -1\% & 4\% & -1\% \\
& $t = 5$ & -11\% & -8\% & 4\% & 4\% & -9\% & -4\% \\
\hline
\multicolumn{2}{|c|}{average over settings} & -9\% & -4\% & 1\% & -1\% & -1\% & -3\% \\
\hline
\end{tabular}
\begin{tabular}{| c c || c | c | c | c | c || c |} 
 \hline
 \multirow{2}{*}{MSMARCO} & & \multicolumn{5}{c||}{$k$} & average \\ 
 & & 1 & 2 & 3 & 4 & 5 & over $k$ \\ 
 \hline\hline 
\multirow{2}{*}{\parbox{.1\textwidth}{equality of \newline opportunity}} & \emph{com} & 7\% & 6\% & 6\% & 5\% & 6\% & 6\% \\
& \emph{ext} & 0\% & 0\% & 1\% & 0\% & 0\% & 0\% \\
\hline
\multirow{2}{*}{\parbox{.1\textwidth}{demographic \newline parity}} & \emph{com} & 0\% & 0\% & 0\% & 0\% & 0\% & 0\% \\
& \emph{ext} & 2\% & 3\% & 4\% & 3\% & 5\% & 3\% \\
\hline
\multirow{2}{*}{\parbox{.1\textwidth}{equalized \newline odds}} & \emph{com} & 4\% & 4\% & 3\% & 3\% & 4\% & 4\% \\
& \emph{ext} & 0\% & 0\% & 0\% & 0\% & 0\% & 0\% \\
\hline
\multicolumn{2}{|c|}{average over settings} & 2\% & 2\% & 2\% & 2\% & 2\% & 2\% \\
\hline
 \end{tabular}
 \label{table:results_mean_per_query}
\end{table}

\clearpage
\subsection{Plots for other values of $k$ and other splits into protected groups}
\label{sec:all_plots}

All plots below show NDCG@$k$ and fairness achieved by our method, for the three fairness notions on every row.

\subsubsection{TREC results}

Different rows correspond to different values of $k$ and $t$ (the threshold for the i$10$ index).


\begin{figure*}[h]
\begin{subfigure}{.33\textwidth}
  \centering
  \includegraphics[width=.95\linewidth]{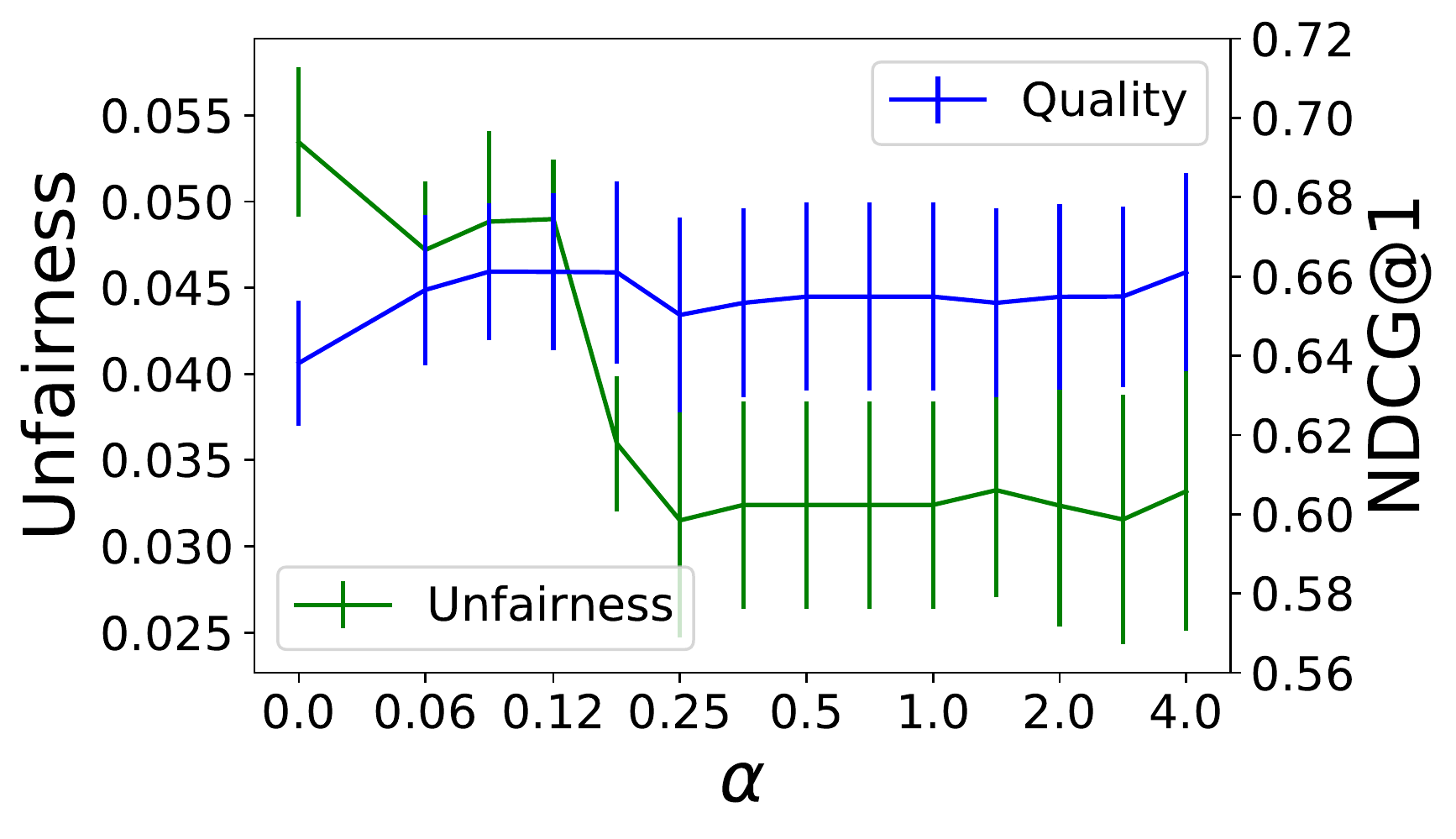}
  \caption{Demographic parity, TREC data}
\end{subfigure}%
\hfill
\begin{subfigure}{.33\textwidth}
  \centering
  \includegraphics[width=.95\linewidth]{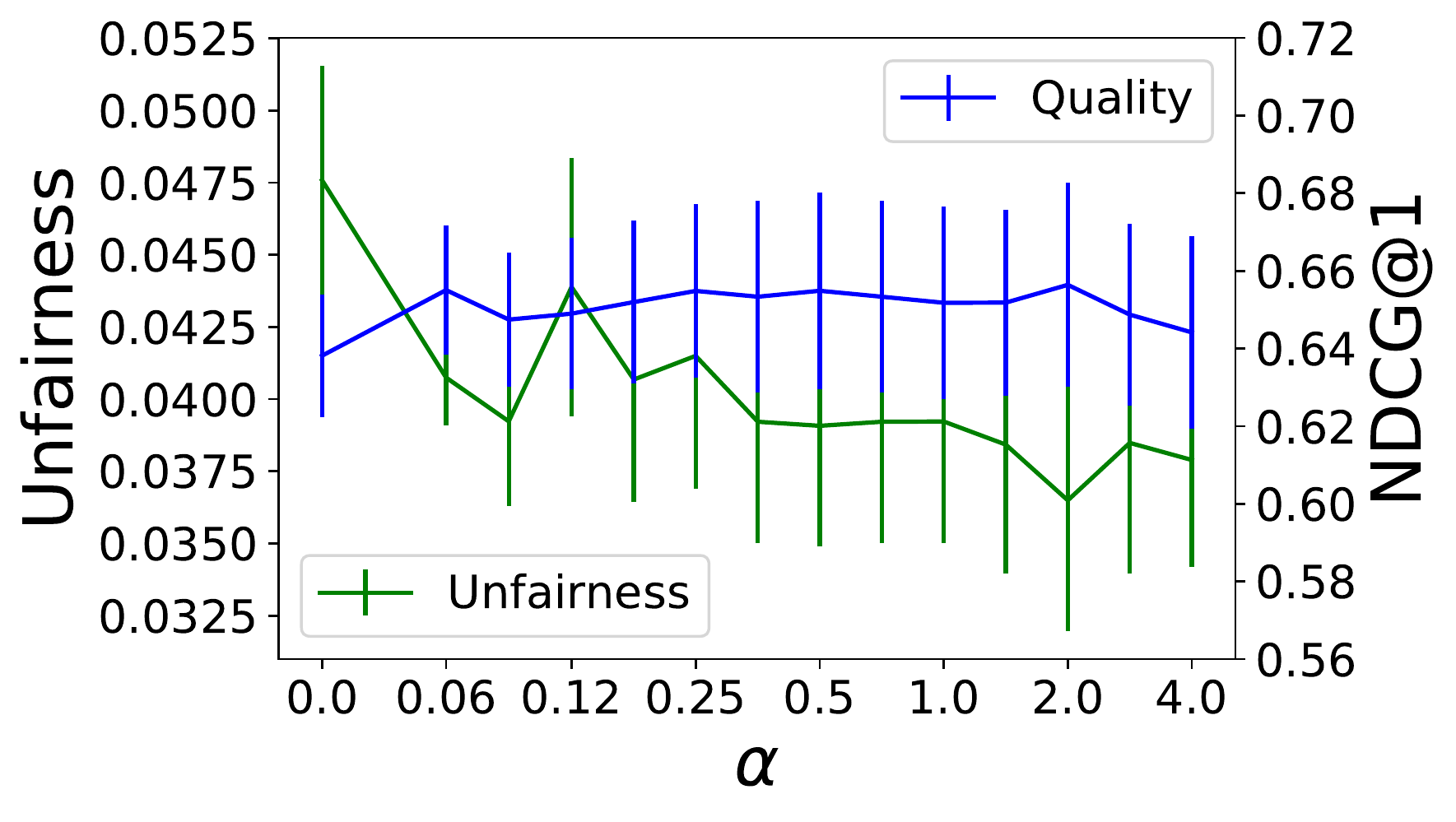}
  \caption{Equalized odds, TREC data}
\end{subfigure}%
\hfill
\begin{subfigure}{.33\textwidth}
  \centering
  \includegraphics[width=.95\linewidth]{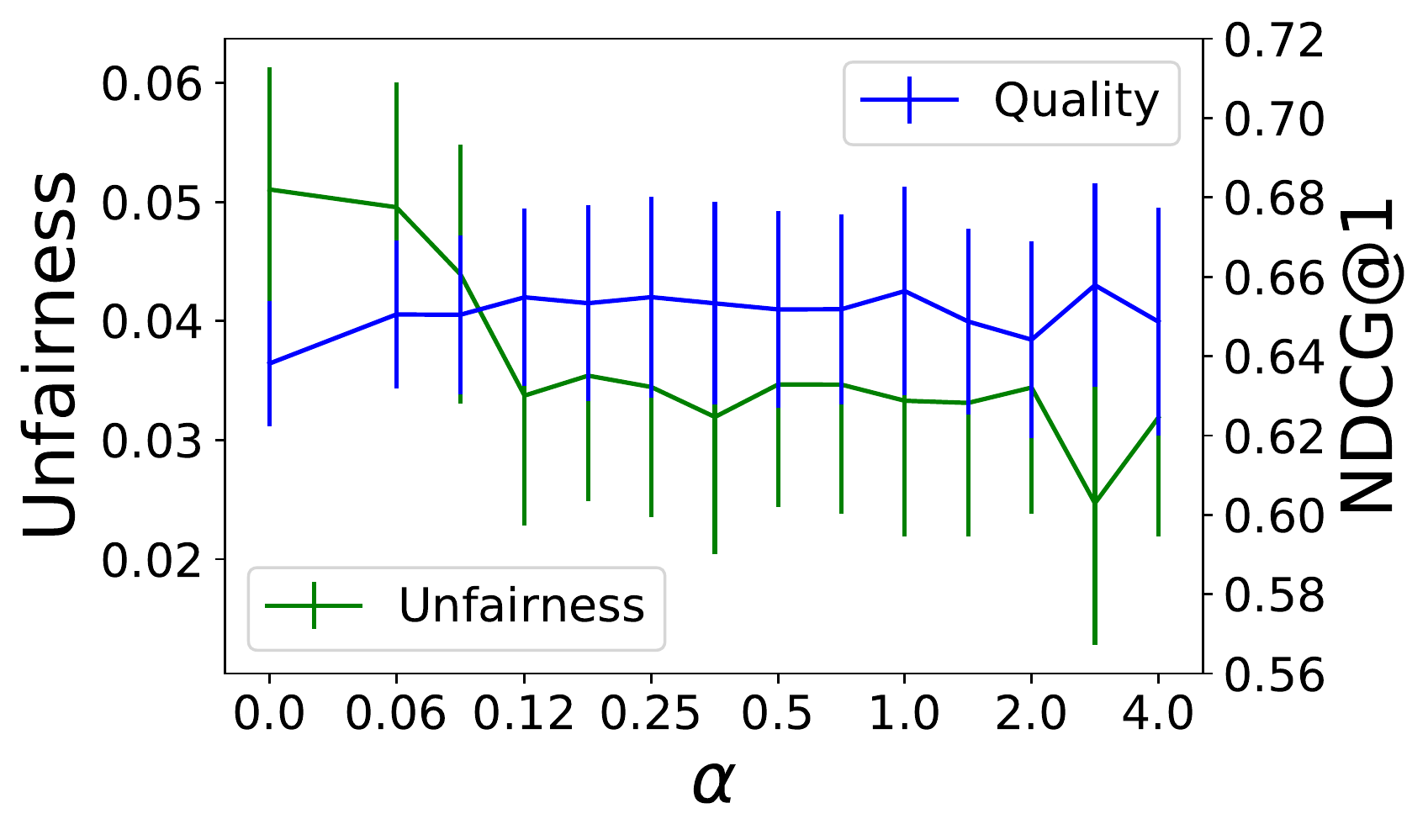}
  \caption{Equality of opportunity, TREC data}
\end{subfigure}
\caption{$k = 1, t = 3$}
\end{figure*}

\begin{figure*}[h]
\begin{subfigure}{.33\textwidth}
  \centering
  \includegraphics[width=.95\linewidth]{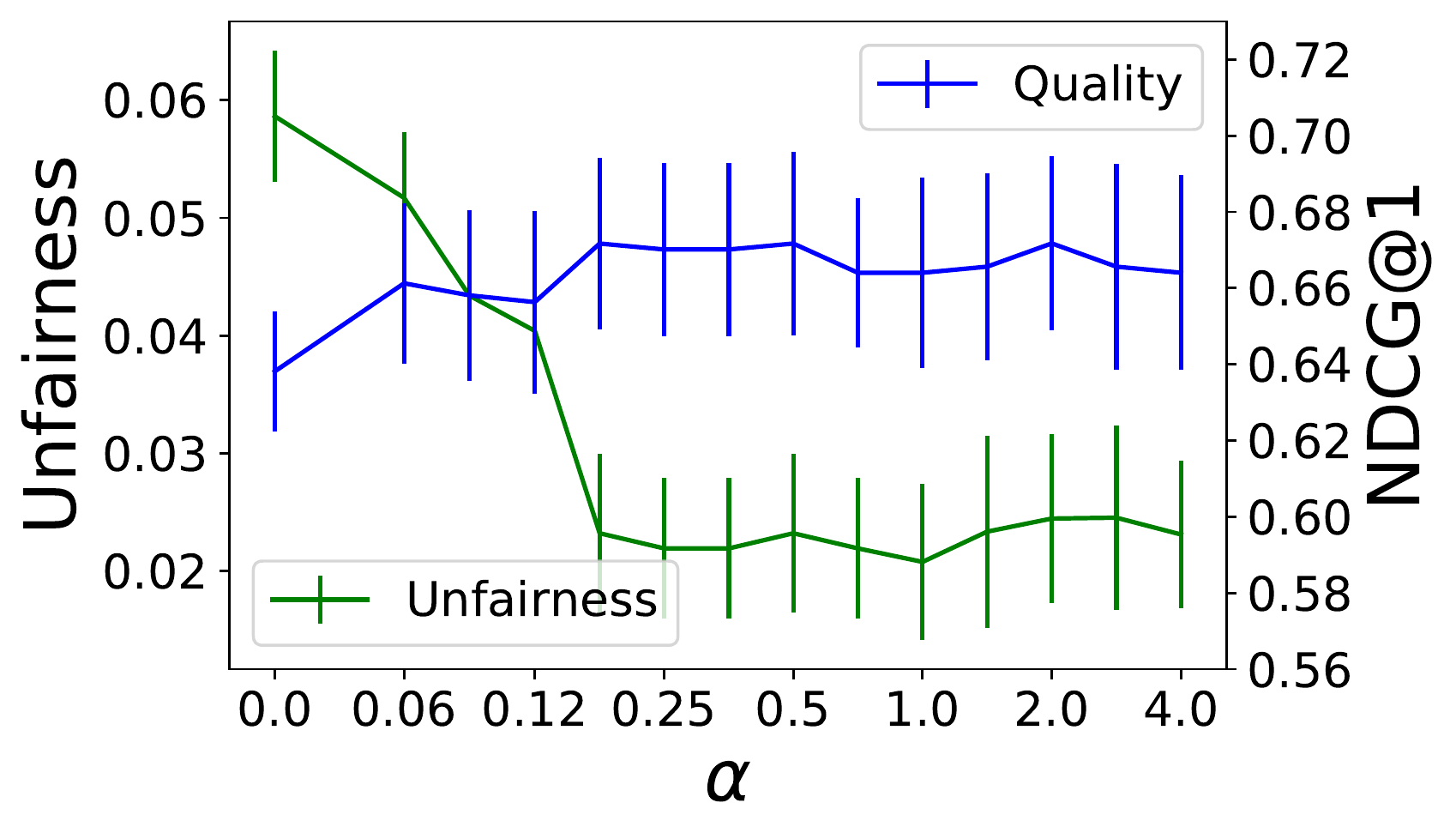}
  \caption{Demographic parity, TREC data}
\end{subfigure}%
\hfill
\begin{subfigure}{.33\textwidth}
  \centering
  \includegraphics[width=.95\linewidth]{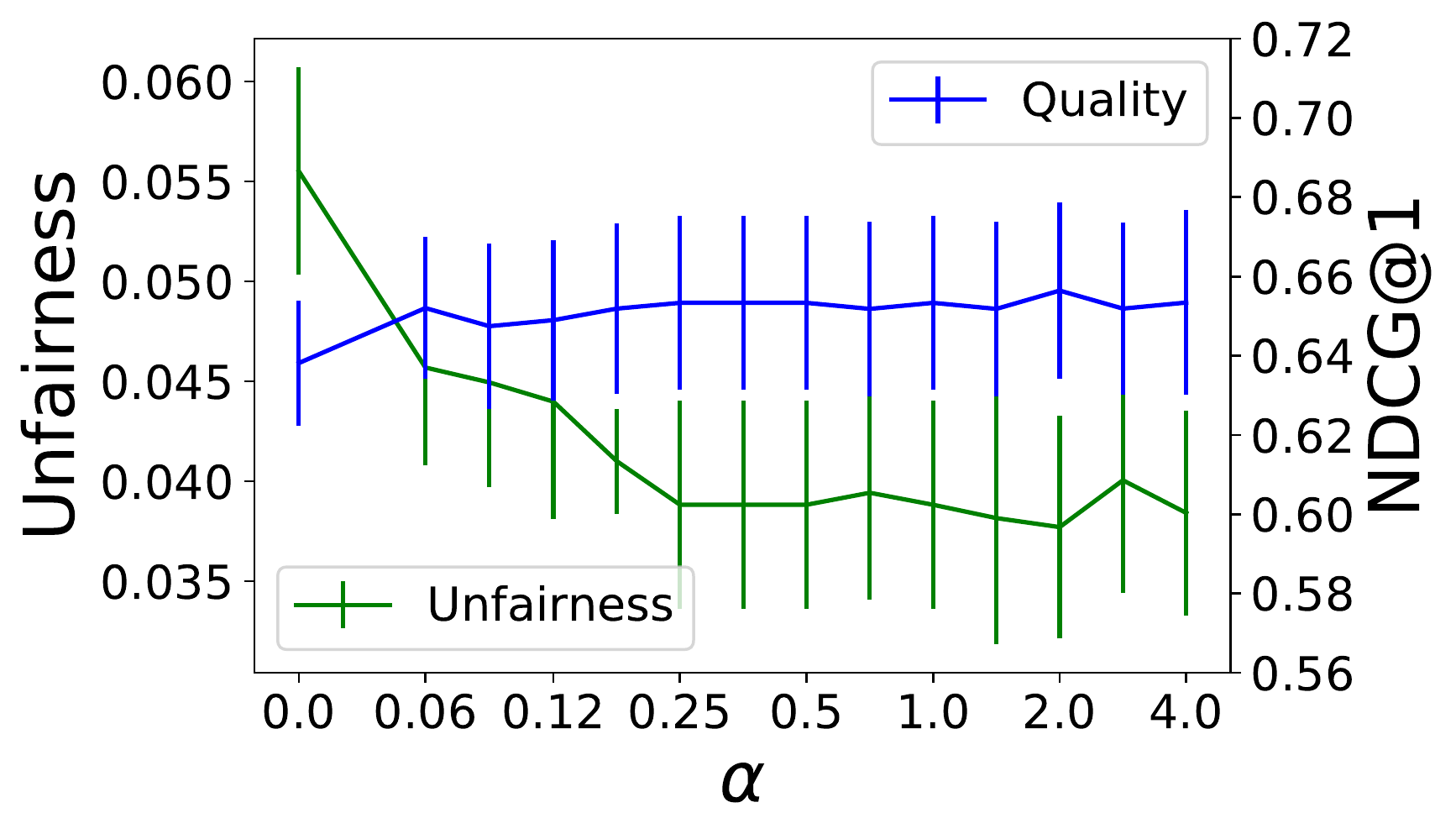}
  \caption{Equalized odds, TREC data}
\end{subfigure}%
\hfill
\begin{subfigure}{.33\textwidth}
  \centering
  \includegraphics[width=.95\linewidth]{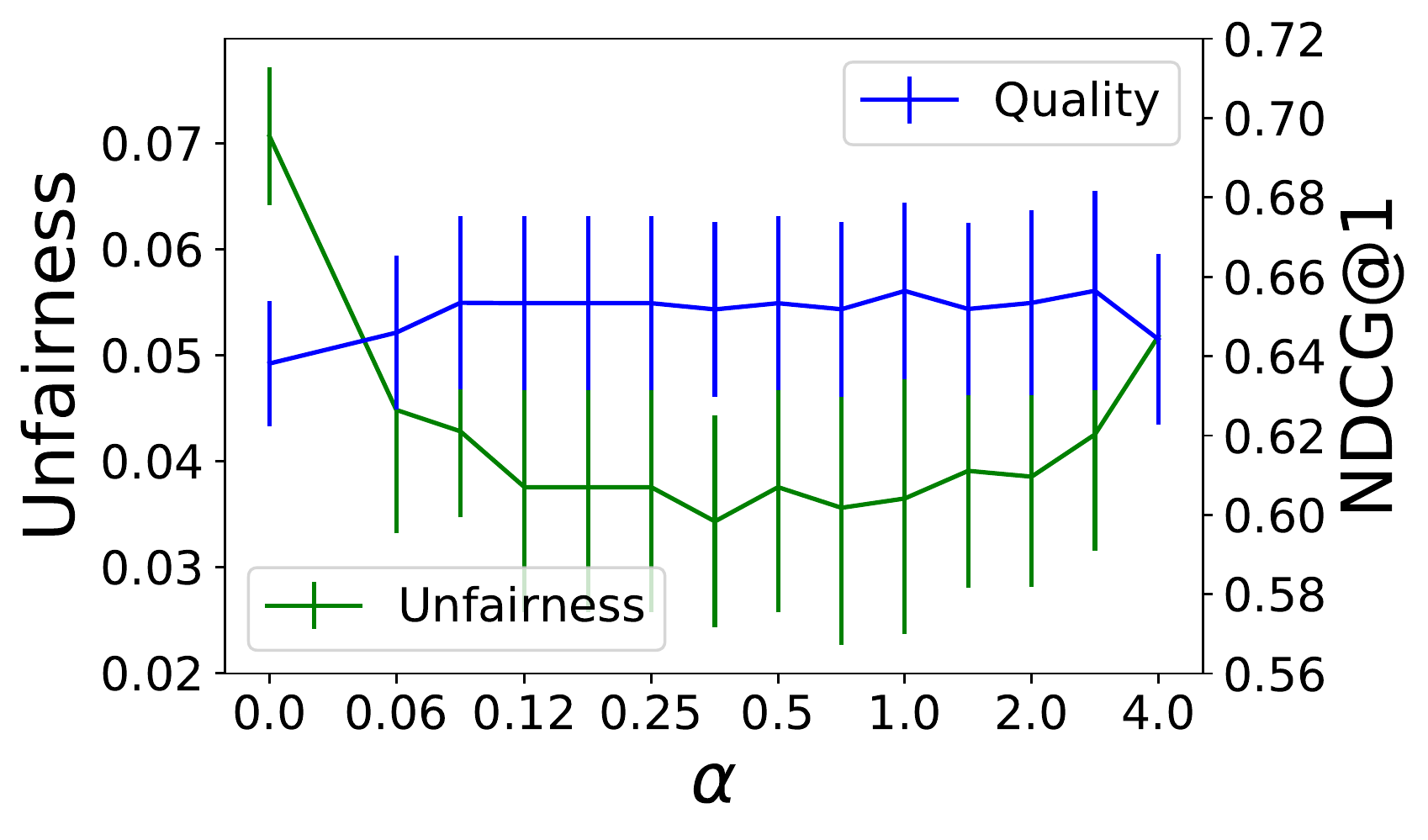}
  \caption{Equality of opportunity, TREC data}
\end{subfigure}
\caption{$k = 1, t = 4$}
\end{figure*}

\begin{figure*}[h]
\begin{subfigure}{.33\textwidth}
  \centering
  \includegraphics[width=.95\linewidth]{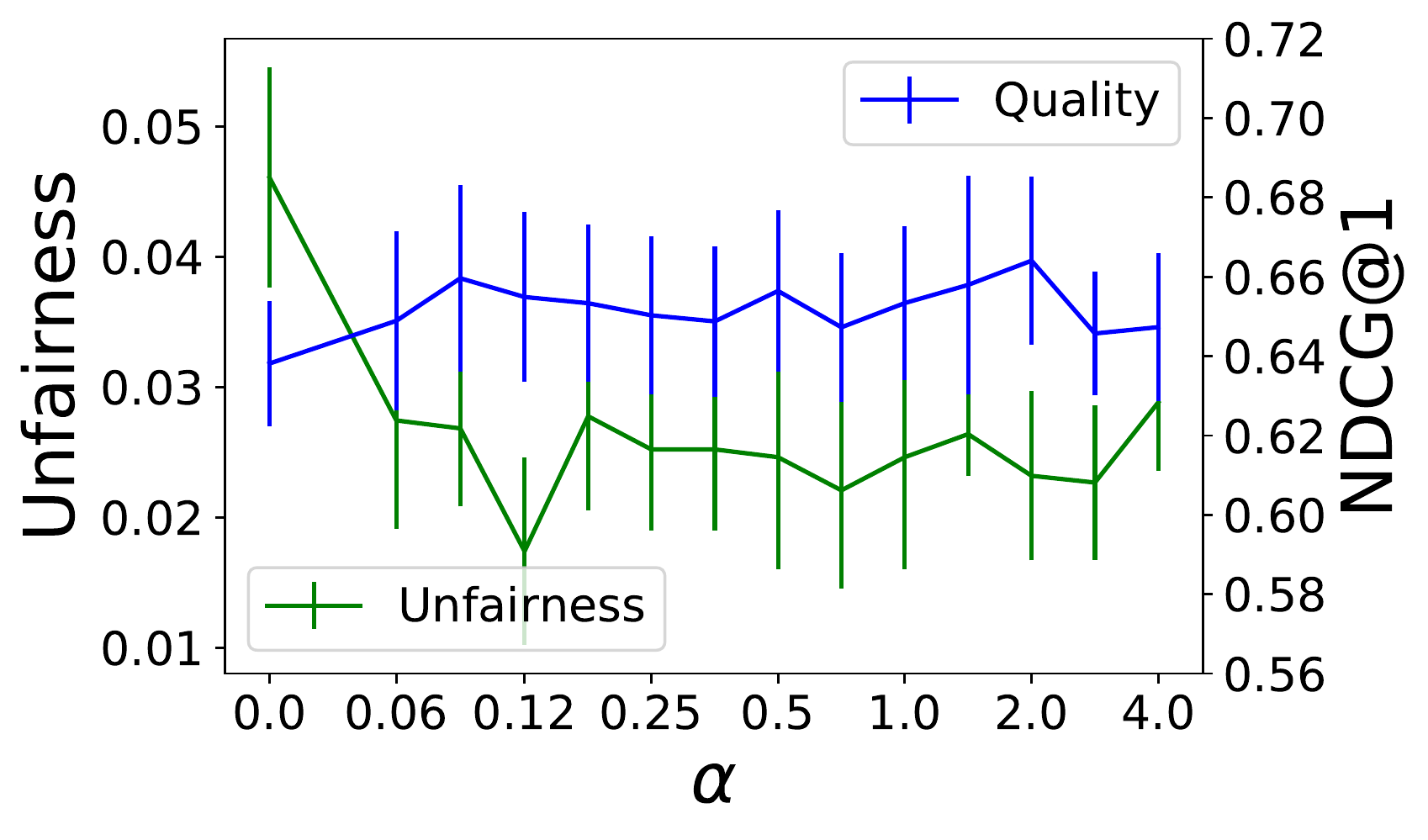}
  \caption{Demographic parity, TREC data}
\end{subfigure}%
\hfill
\begin{subfigure}{.33\textwidth}
  \centering
  \includegraphics[width=.95\linewidth]{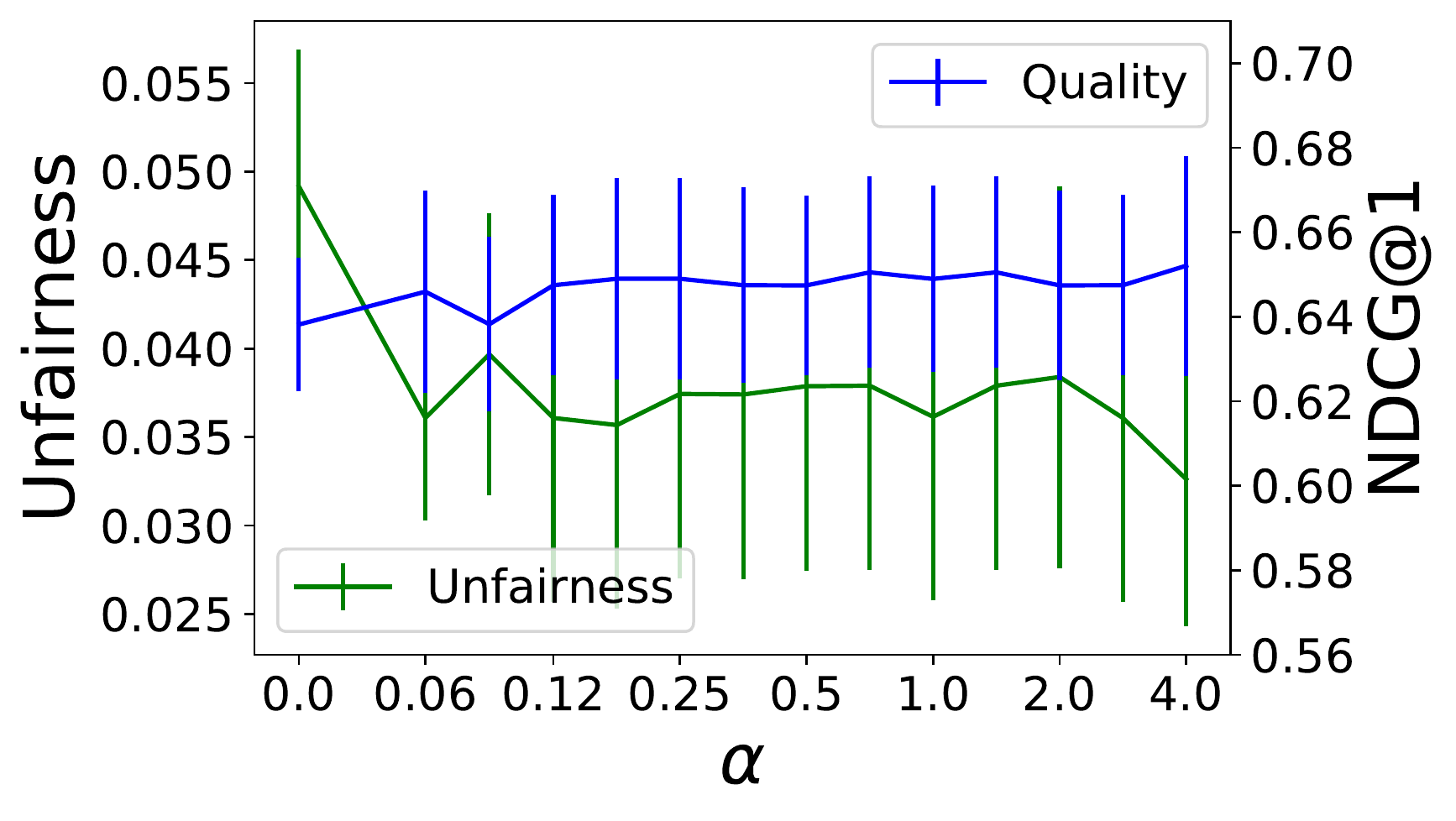}
  \caption{Equalized odds, TREC data}
\end{subfigure}%
\hfill
\begin{subfigure}{.33\textwidth}
  \centering
  \includegraphics[width=.95\linewidth]{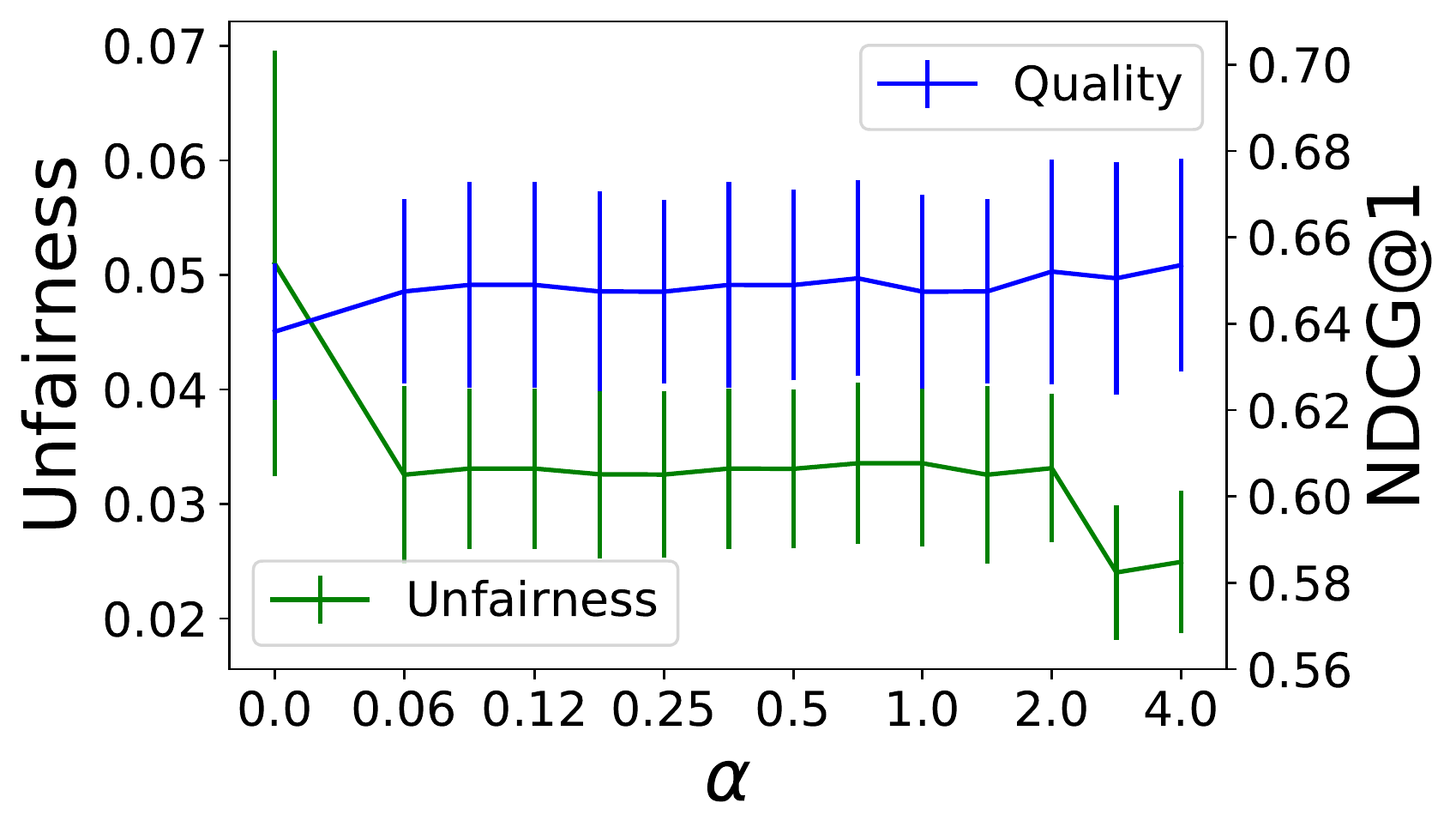}
  \caption{Equality of opportunity, TREC data}
\end{subfigure}
\caption{$k = 1, t = 5$}
\end{figure*}


\begin{figure*}[h]
\begin{subfigure}{.33\textwidth}
  \centering
  \includegraphics[width=.95\linewidth]{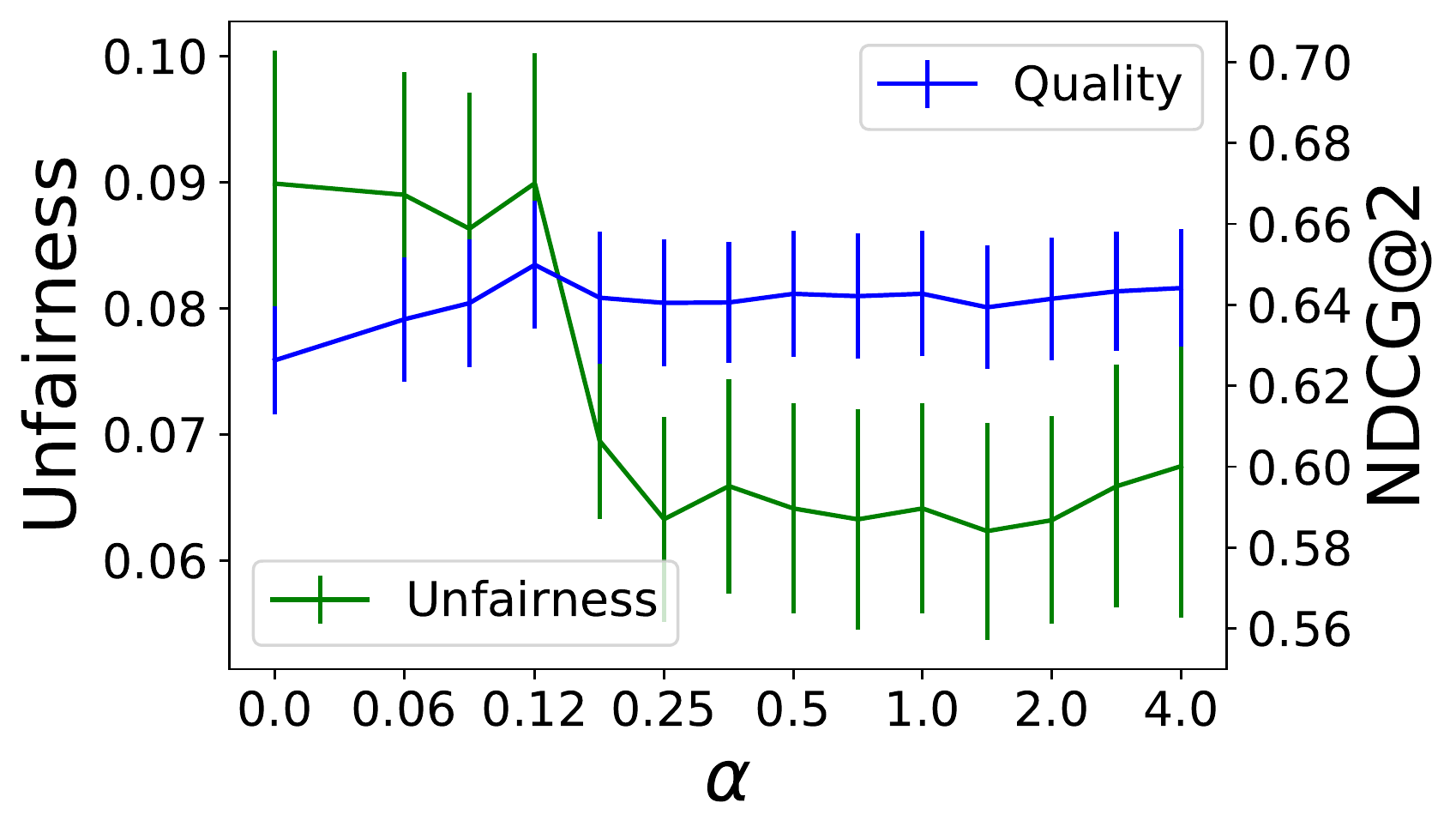}
  \caption{Demographic parity, TREC data}
\end{subfigure}%
\hfill
\begin{subfigure}{.33\textwidth}
  \centering
  \includegraphics[width=.95\linewidth]{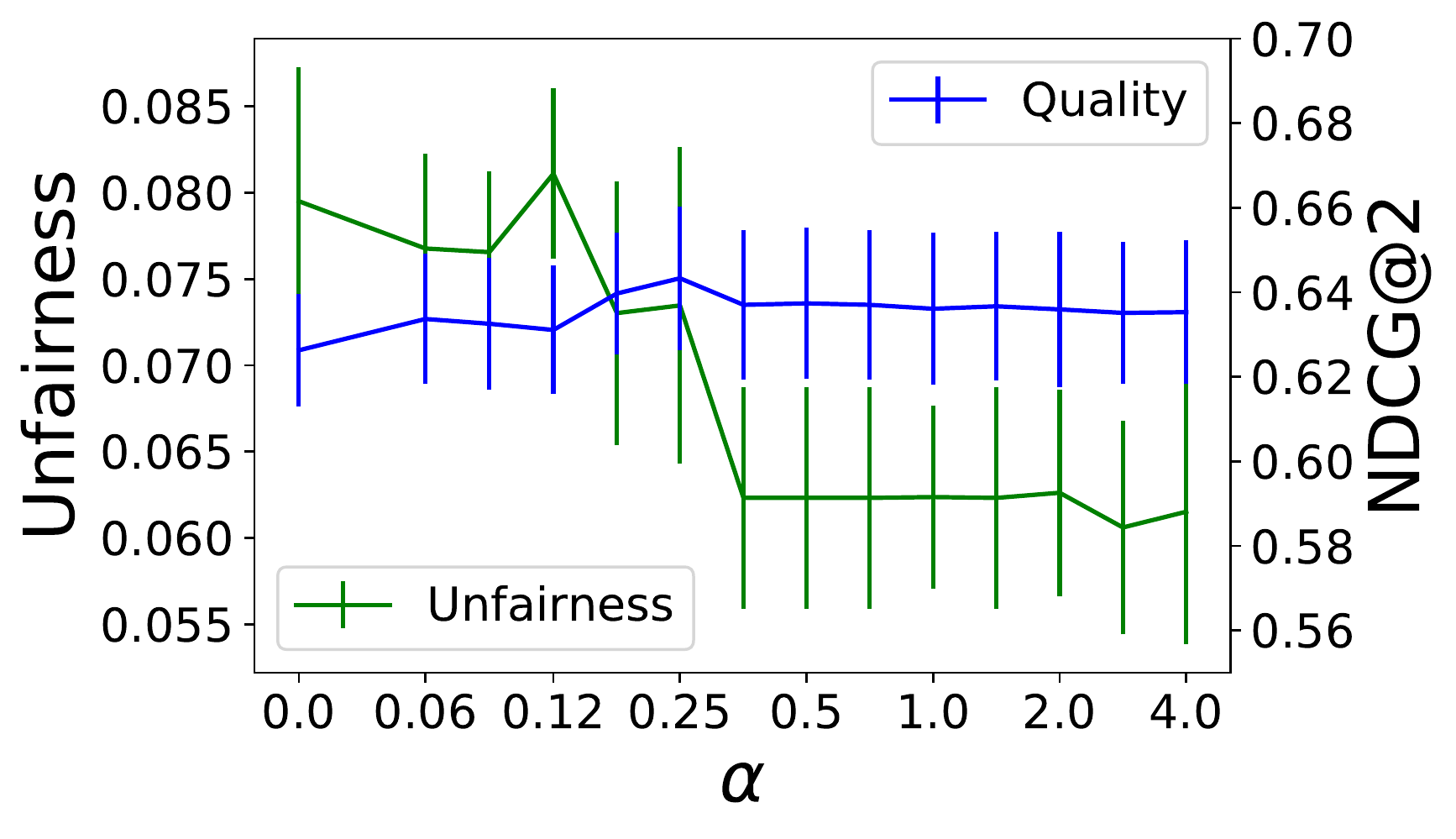}
  \caption{Equalized odds, TREC data}
\end{subfigure}%
\hfill
\begin{subfigure}{.33\textwidth}
  \centering
  \includegraphics[width=.95\linewidth]{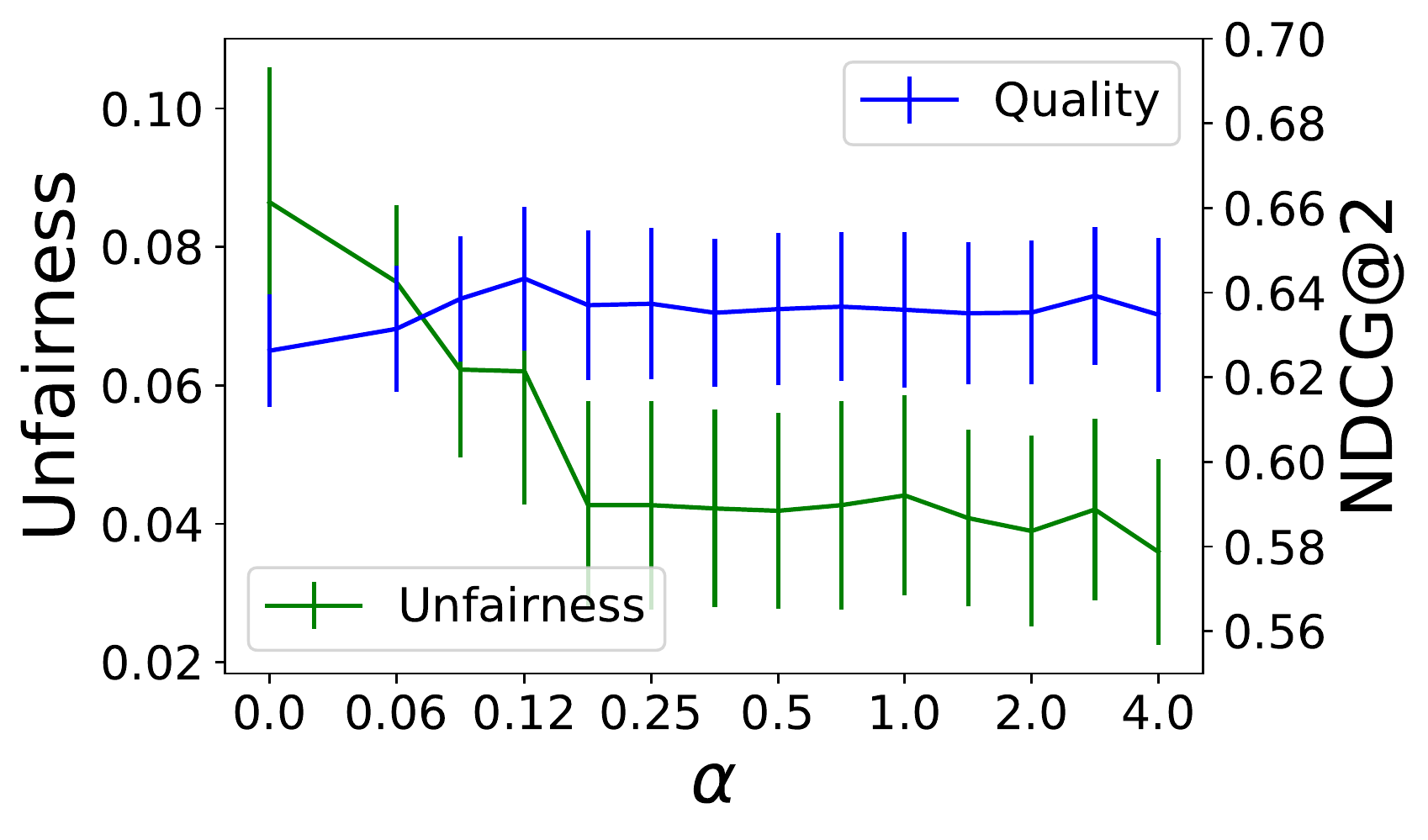}
  \caption{Equality of opportunity, TREC data}
\end{subfigure}
\caption{$k = 2, t = 3$}
\end{figure*}

\begin{figure*}[h]
\begin{subfigure}{.33\textwidth}
  \centering
  \includegraphics[width=.95\linewidth]{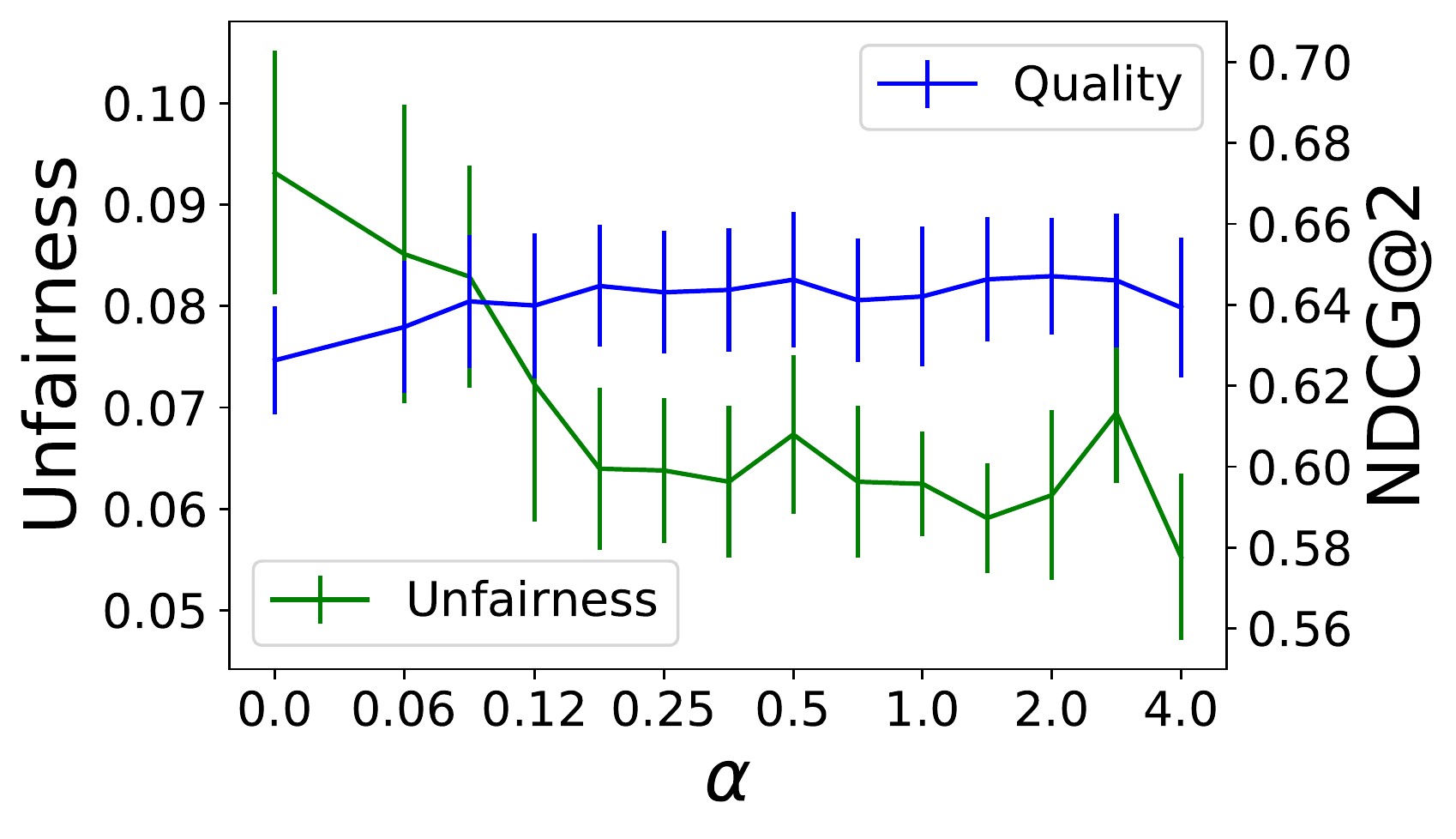}
  \caption{Demographic parity, TREC data}
\end{subfigure}%
\hfill
\begin{subfigure}{.33\textwidth}
  \centering
  \includegraphics[width=.95\linewidth]{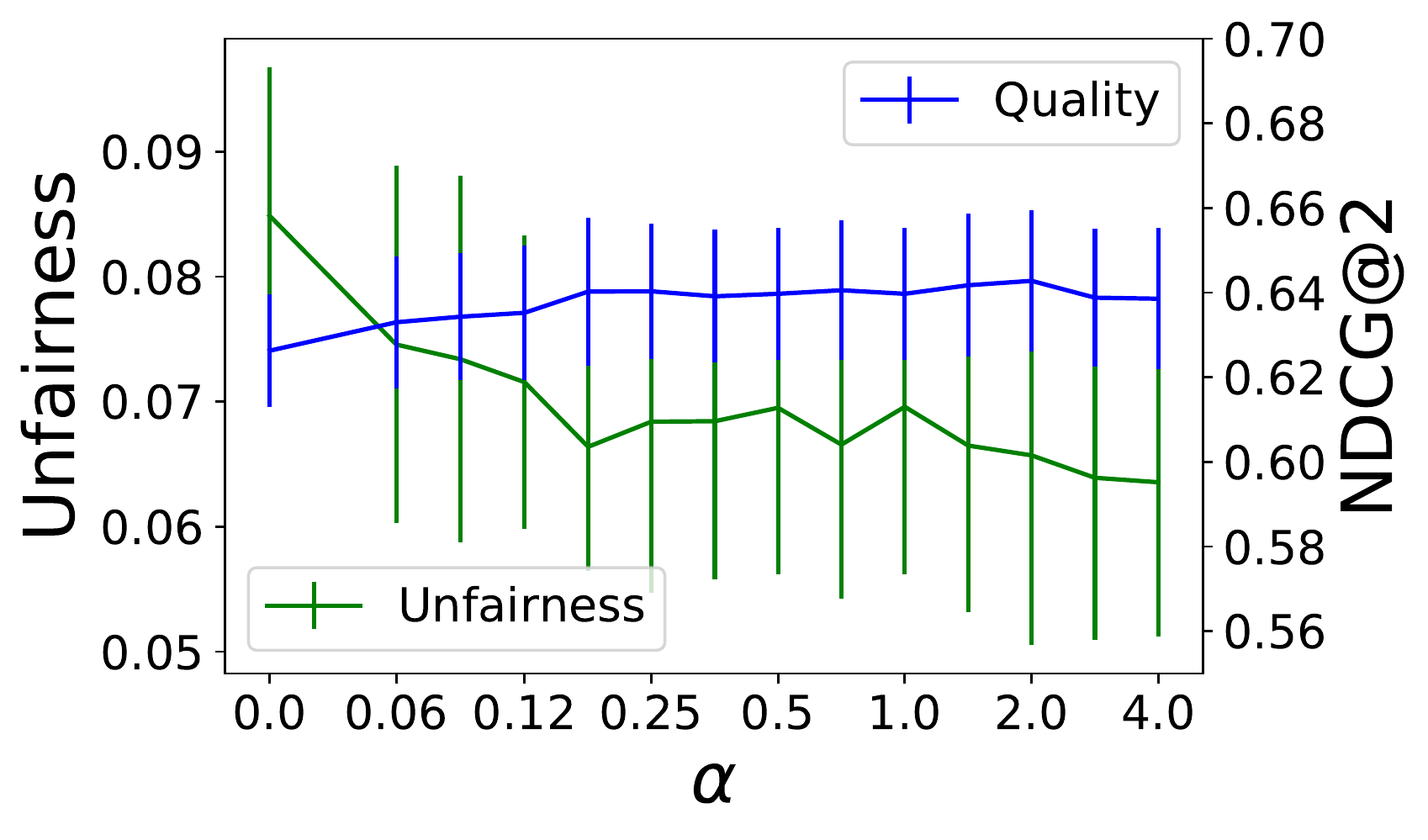}
  \caption{Equalized odds, TREC data}
\end{subfigure}%
\hfill
\begin{subfigure}{.33\textwidth}
  \centering
  \includegraphics[width=.95\linewidth]{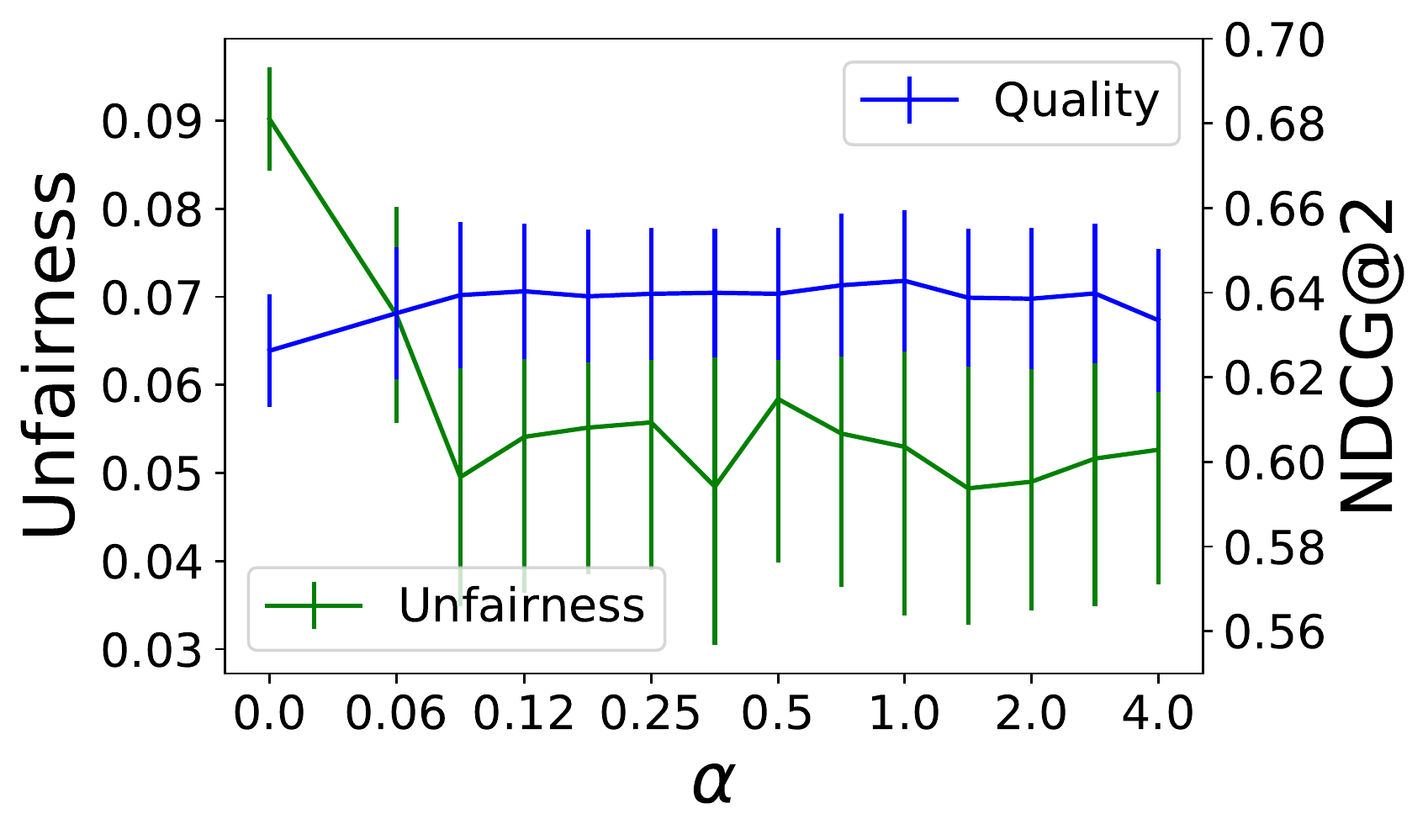}
  \caption{Equality of opportunity, TREC data}
\end{subfigure}
\caption{$k = 2, t = 4$}
\end{figure*}

\begin{figure*}[h]
\begin{subfigure}{.33\textwidth}
  \centering
  \includegraphics[width=.95\linewidth]{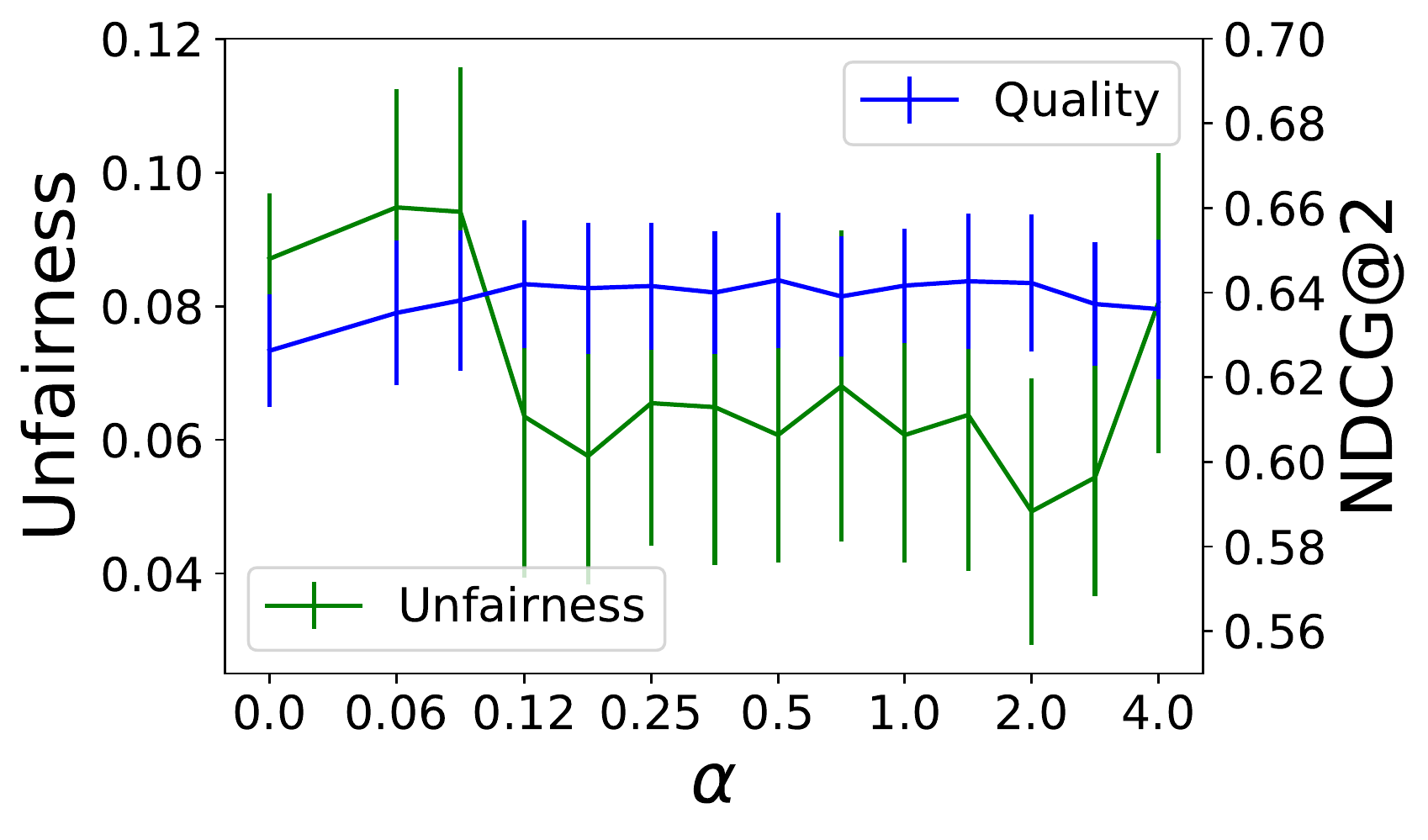}
  \caption{Demographic parity, TREC data}
\end{subfigure}%
\hfill
\begin{subfigure}{.33\textwidth}
  \centering
  \includegraphics[width=.95\linewidth]{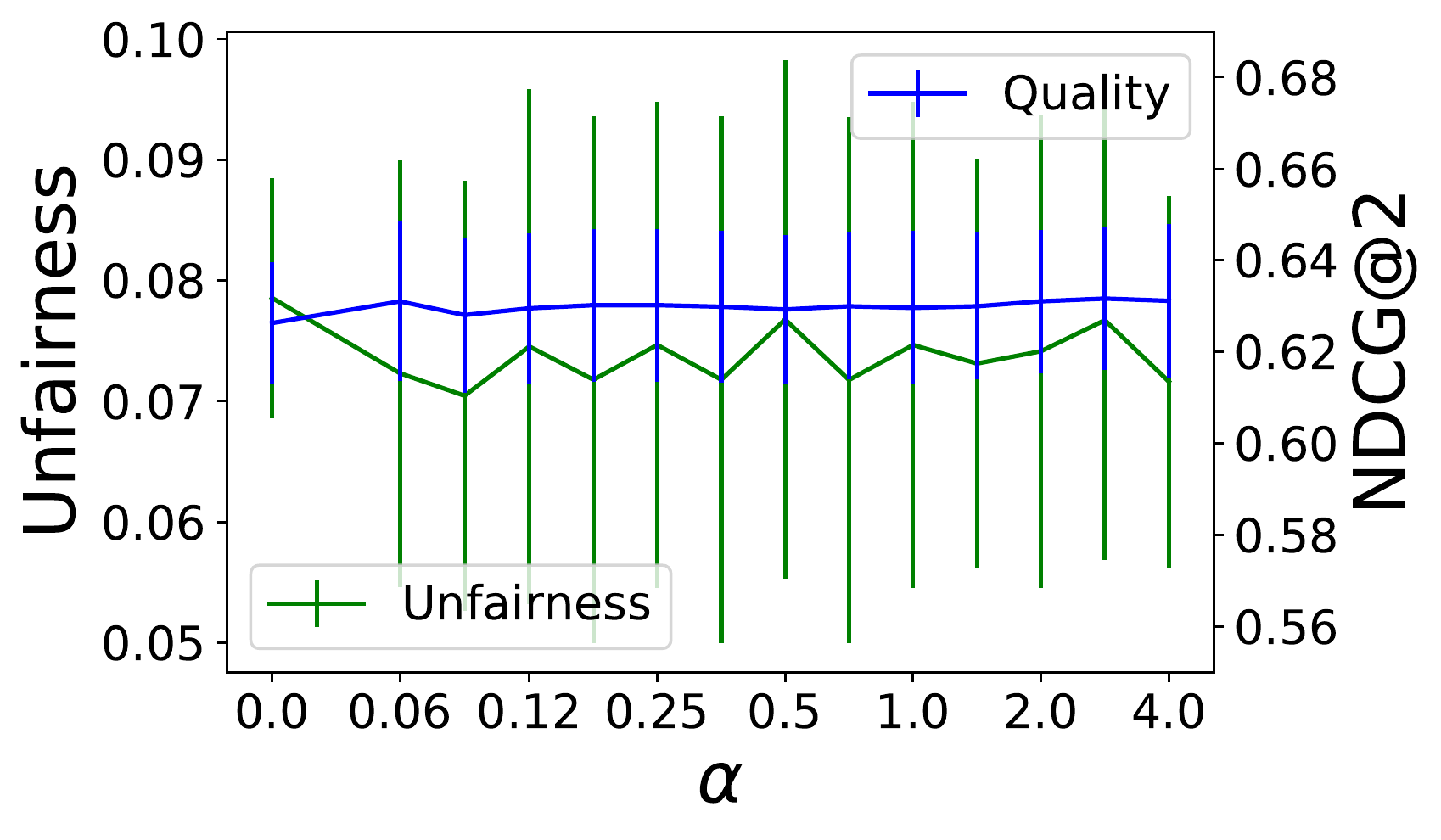}
  \caption{Equalized odds, TREC data}
\end{subfigure}%
\hfill
\begin{subfigure}{.33\textwidth}
  \centering
  \includegraphics[width=.95\linewidth]{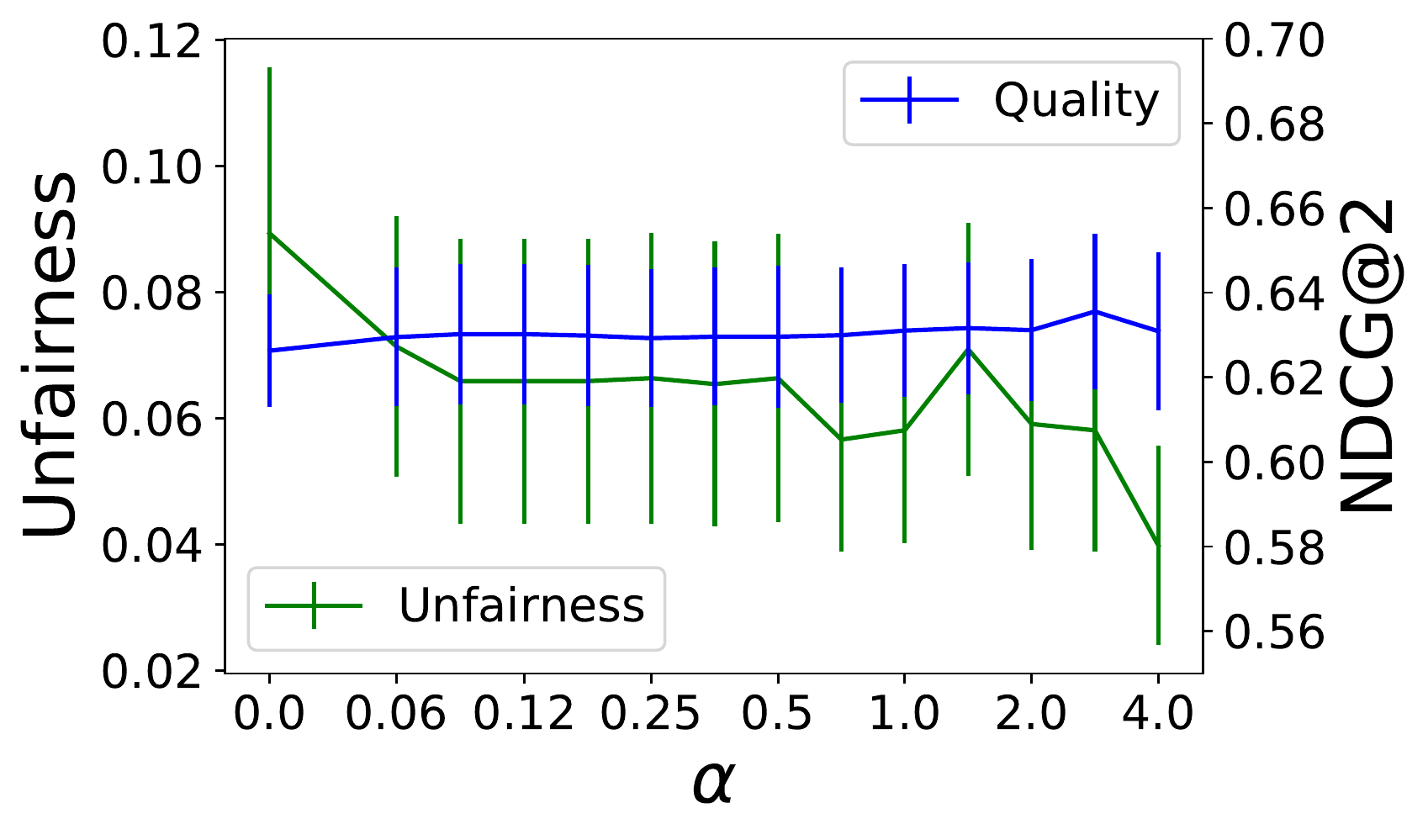}
  \caption{Equality of opportunity, TREC data}
\end{subfigure}
\caption{$k = 2, t = 5$}
\end{figure*}


\begin{figure*}[h]
\begin{subfigure}{.33\textwidth}
  \centering
  \includegraphics[width=.95\linewidth]{TREC_NDCG_demographic_parity_amortized_mean_i10_3_3}
  \caption{Demographic parity, TREC data}
\end{subfigure}%
\hfill
\begin{subfigure}{.33\textwidth}
  \centering
  \includegraphics[width=.95\linewidth]{TREC_NDCG_equal_odds_amortized_mean_i10_3_3}
  \caption{Equalized odds, TREC data}
\end{subfigure}%
\hfill
\begin{subfigure}{.33\textwidth}
  \centering
  \includegraphics[width=.95\linewidth]{TREC_NDCG_equal_opp_amortized_mean_i10_3_3}
  \caption{Equality of opportunity, TREC data}
\end{subfigure}
\caption{$k = 3, t = 3$}
\end{figure*}

\begin{figure*}[h]
\begin{subfigure}{.33\textwidth}
  \centering
  \includegraphics[width=.95\linewidth]{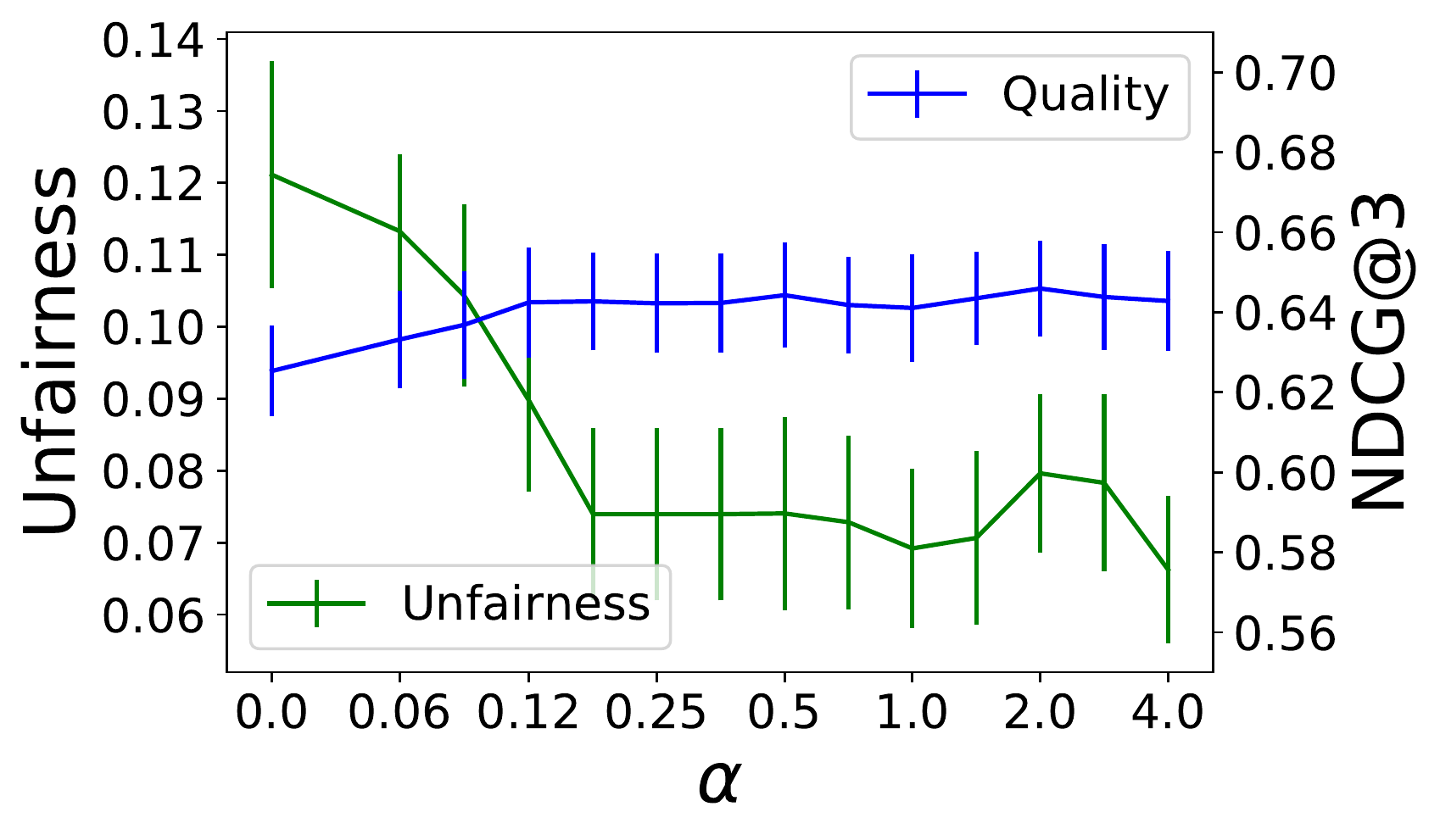}
  \caption{Demographic parity, TREC data}
\end{subfigure}%
\hfill
\begin{subfigure}{.33\textwidth}
  \centering
  \includegraphics[width=.95\linewidth]{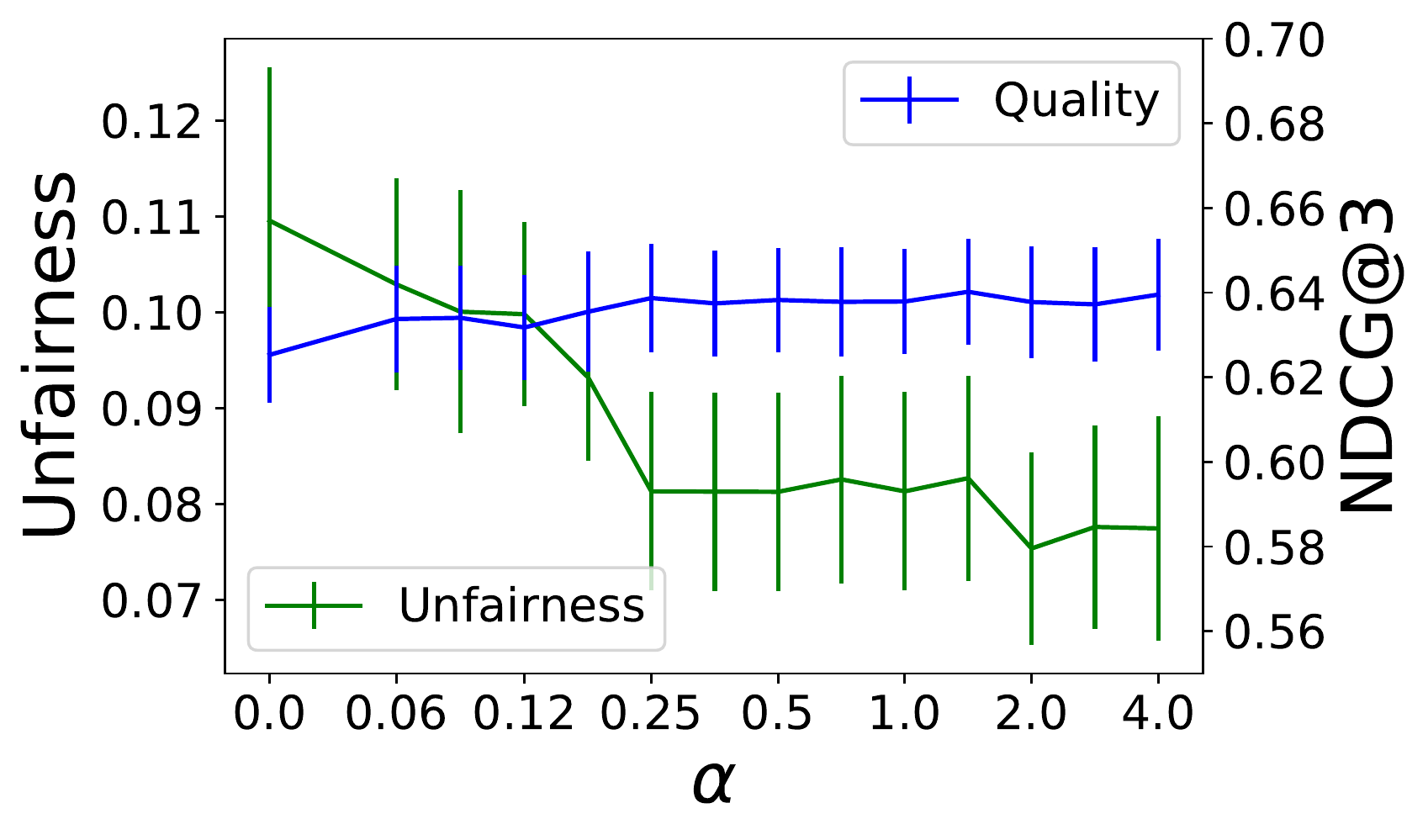}
  \caption{Equalized odds, TREC data}
\end{subfigure}%
\hfill
\begin{subfigure}{.33\textwidth}
  \centering
  \includegraphics[width=.95\linewidth]{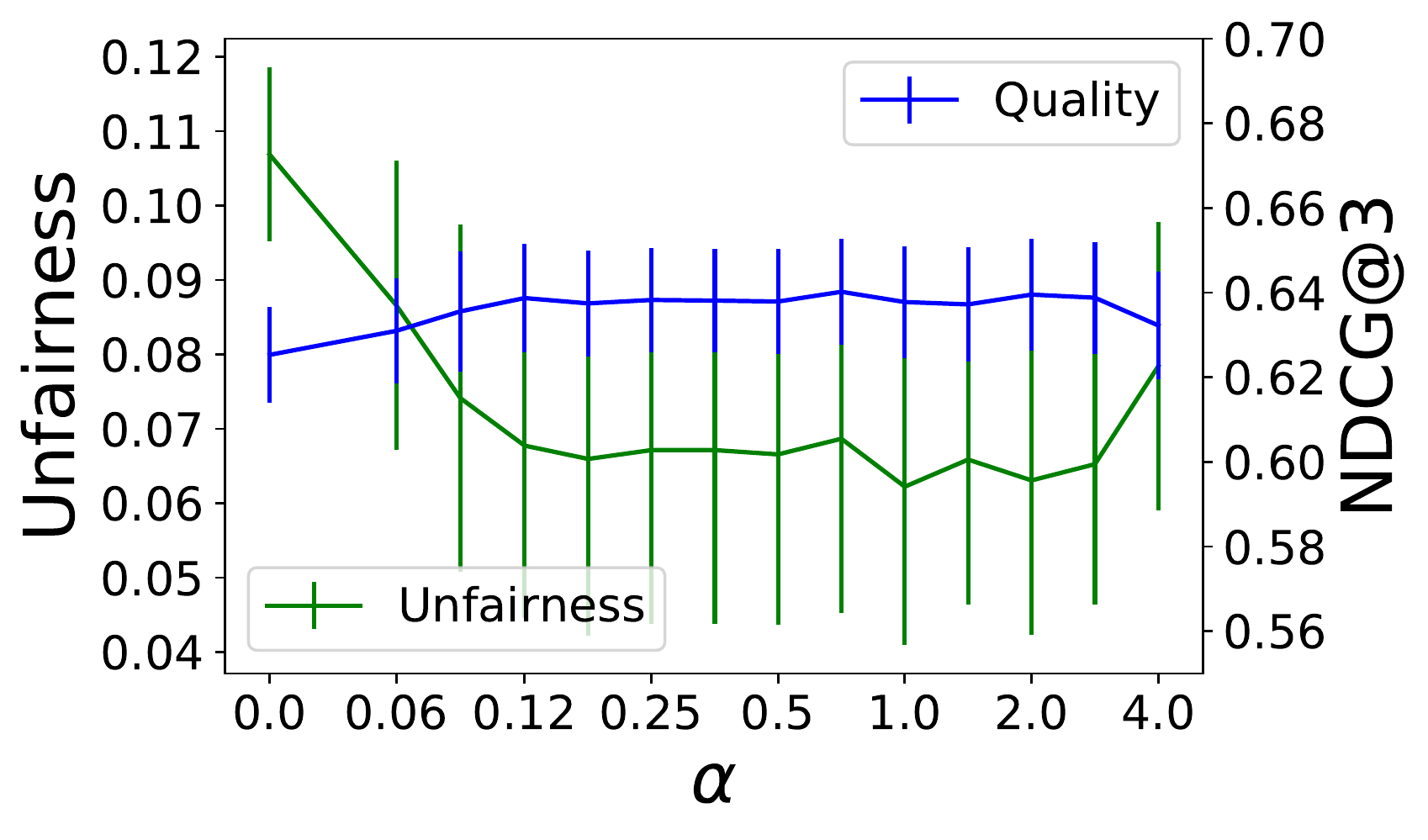}
  \caption{Equality of opportunity, TREC data}
\end{subfigure}
\caption{$k = 3, t = 4$}
\end{figure*}

\begin{figure*}[h]
\begin{subfigure}{.33\textwidth}
  \centering
  \includegraphics[width=.95\linewidth]{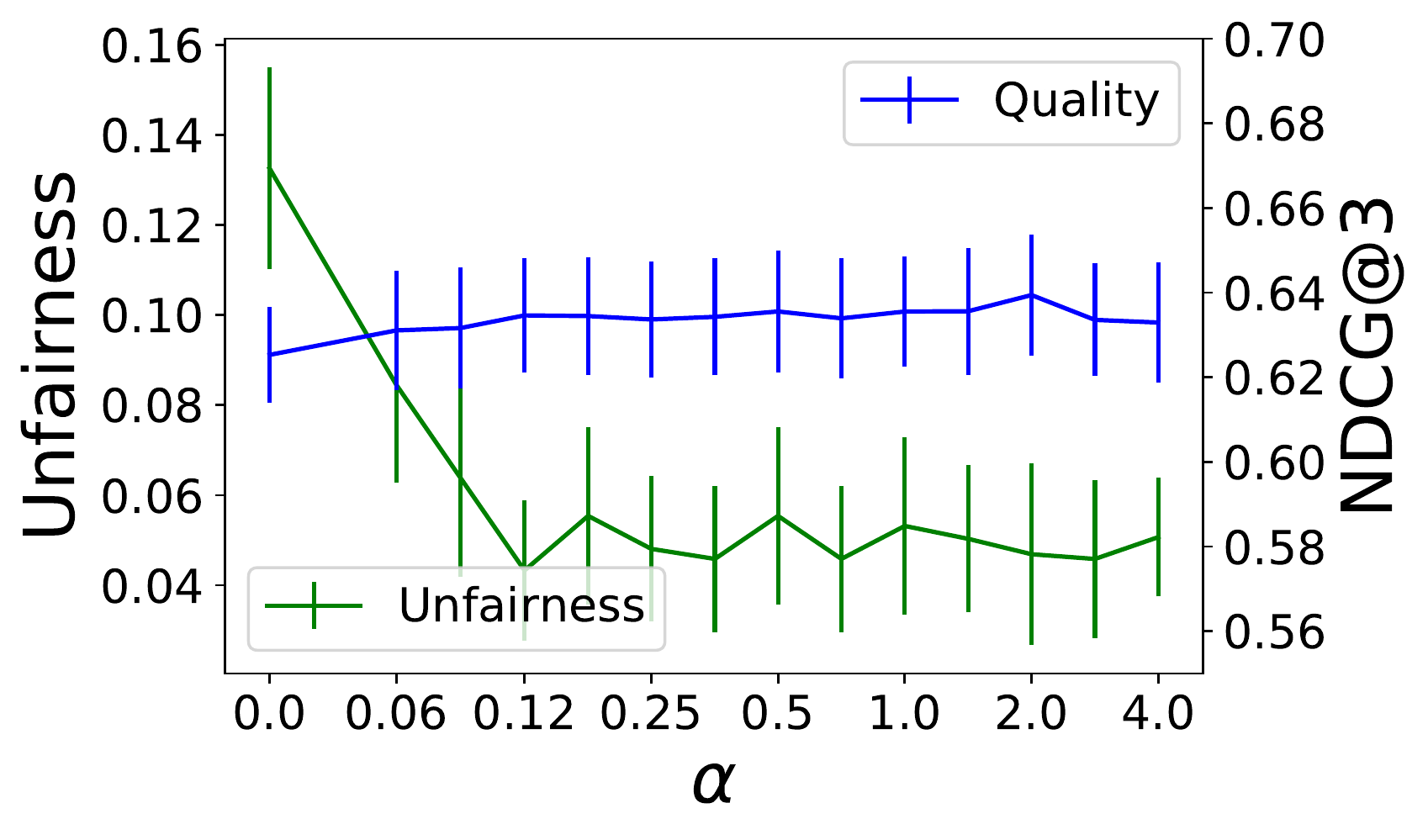}
  \caption{Demographic parity, TREC data}
\end{subfigure}%
\hfill
\begin{subfigure}{.33\textwidth}
  \centering
  \includegraphics[width=.95\linewidth]{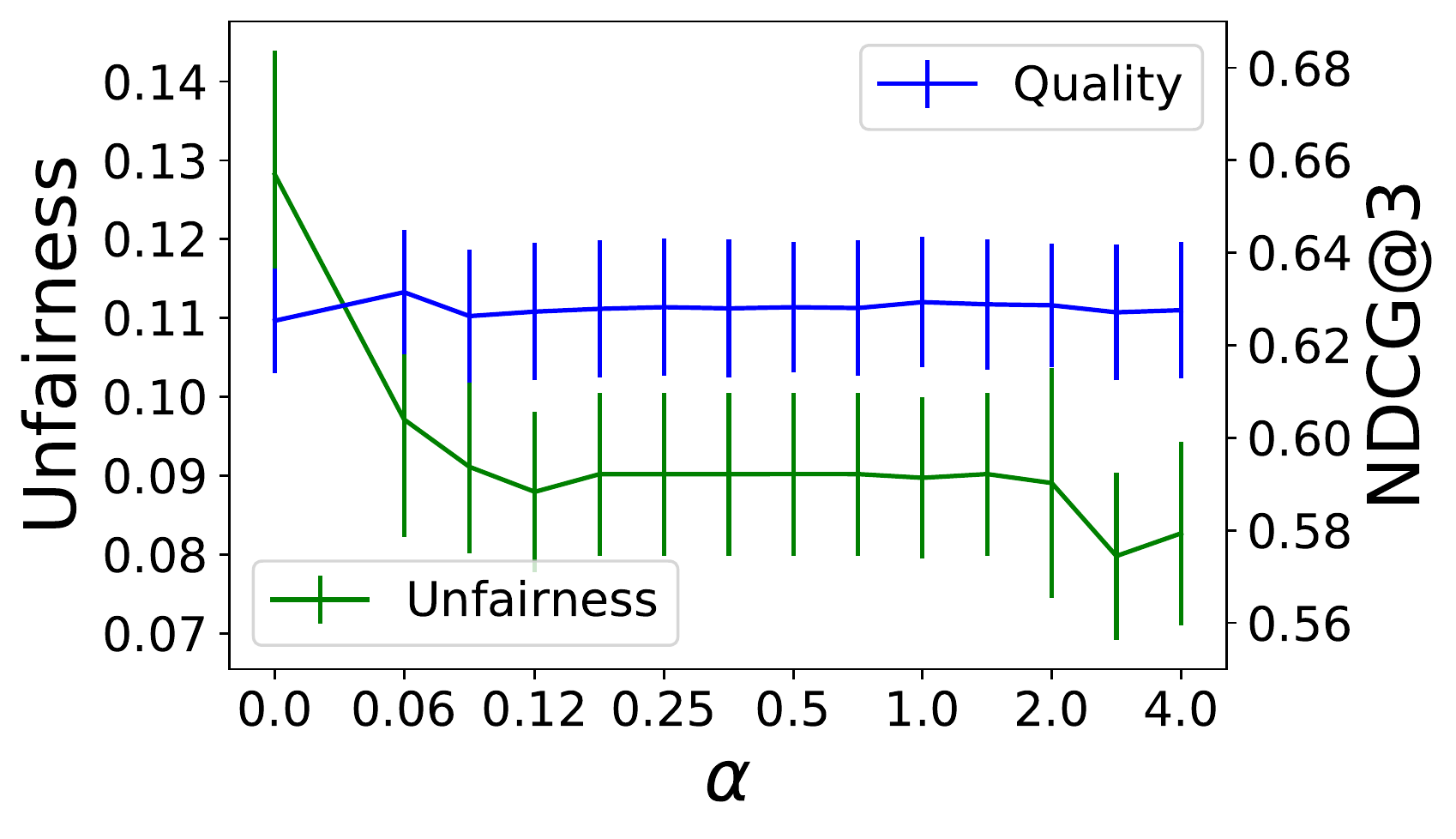}
  \caption{Equalized odds, TREC data}
\end{subfigure}%
\hfill
\begin{subfigure}{.33\textwidth}
  \centering
  \includegraphics[width=.95\linewidth]{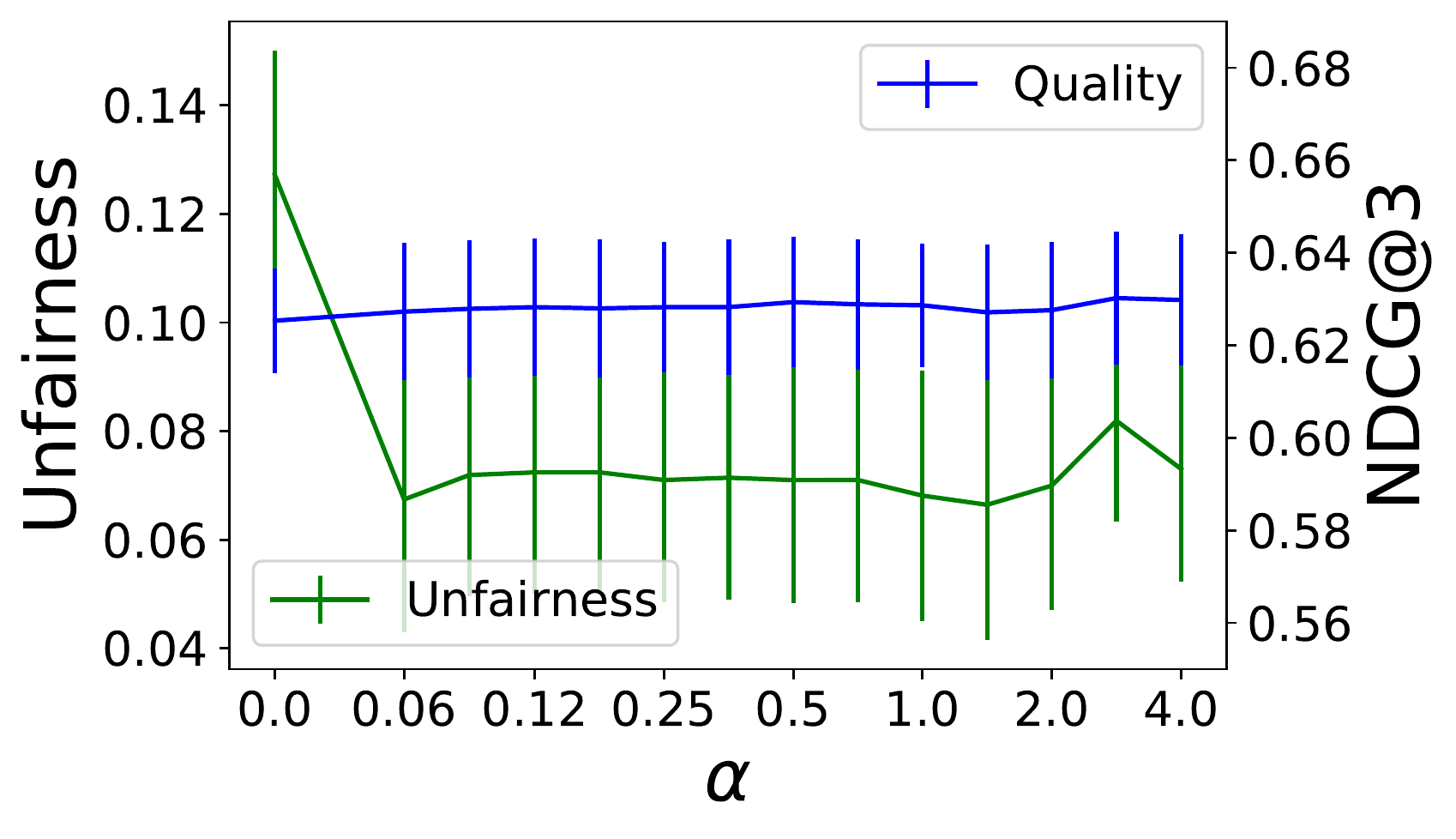}
  \caption{Equality of opportunity, TREC data}
\end{subfigure}
\caption{$k = 3, t = 5$}
\end{figure*}


\begin{figure*}[h]
\begin{subfigure}{.33\textwidth}
  \centering
  \includegraphics[width=.95\linewidth]{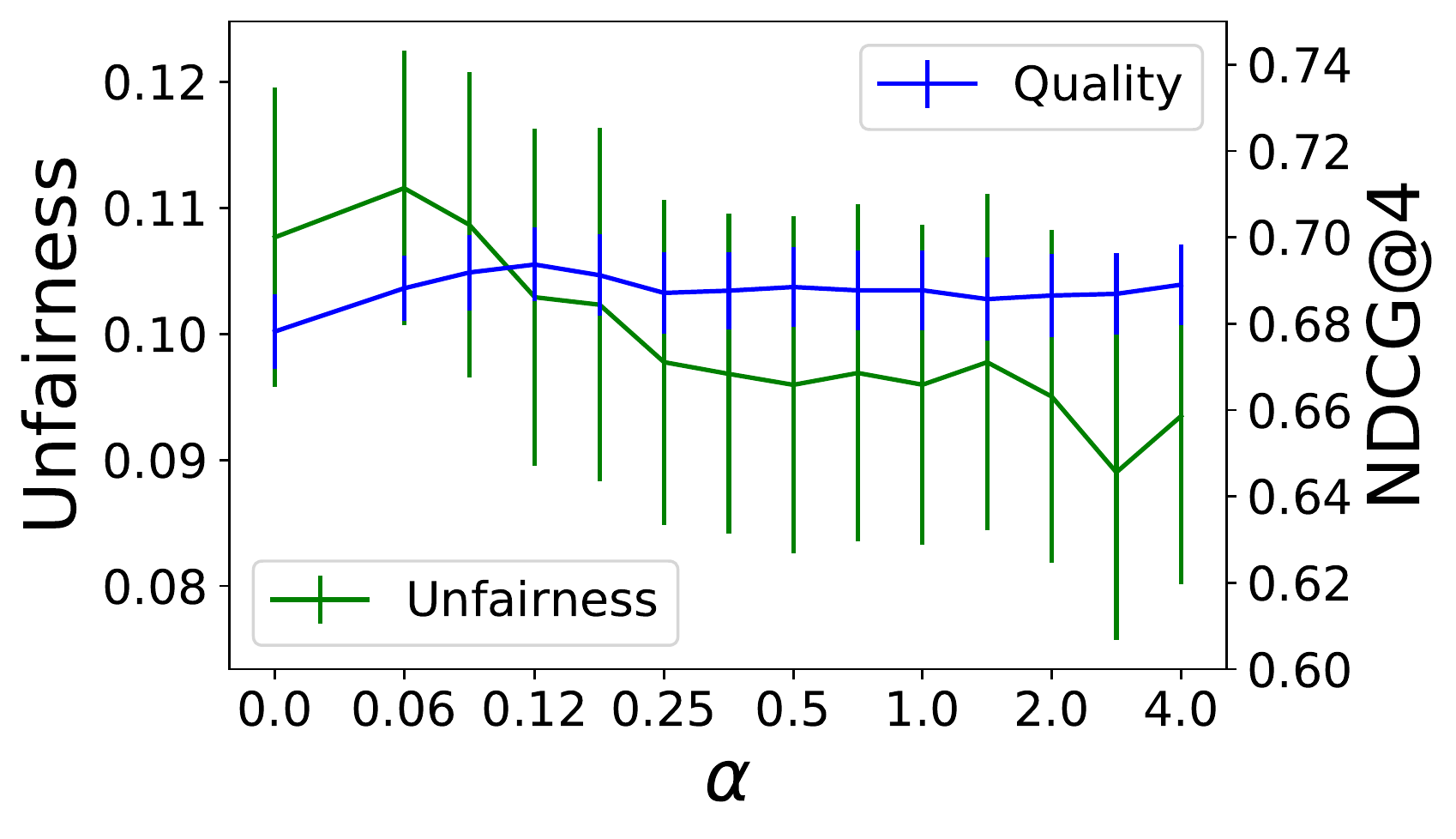}
  \caption{Demographic parity, TREC data}
\end{subfigure}%
\hfill
\begin{subfigure}{.33\textwidth}
  \centering
  \includegraphics[width=.95\linewidth]{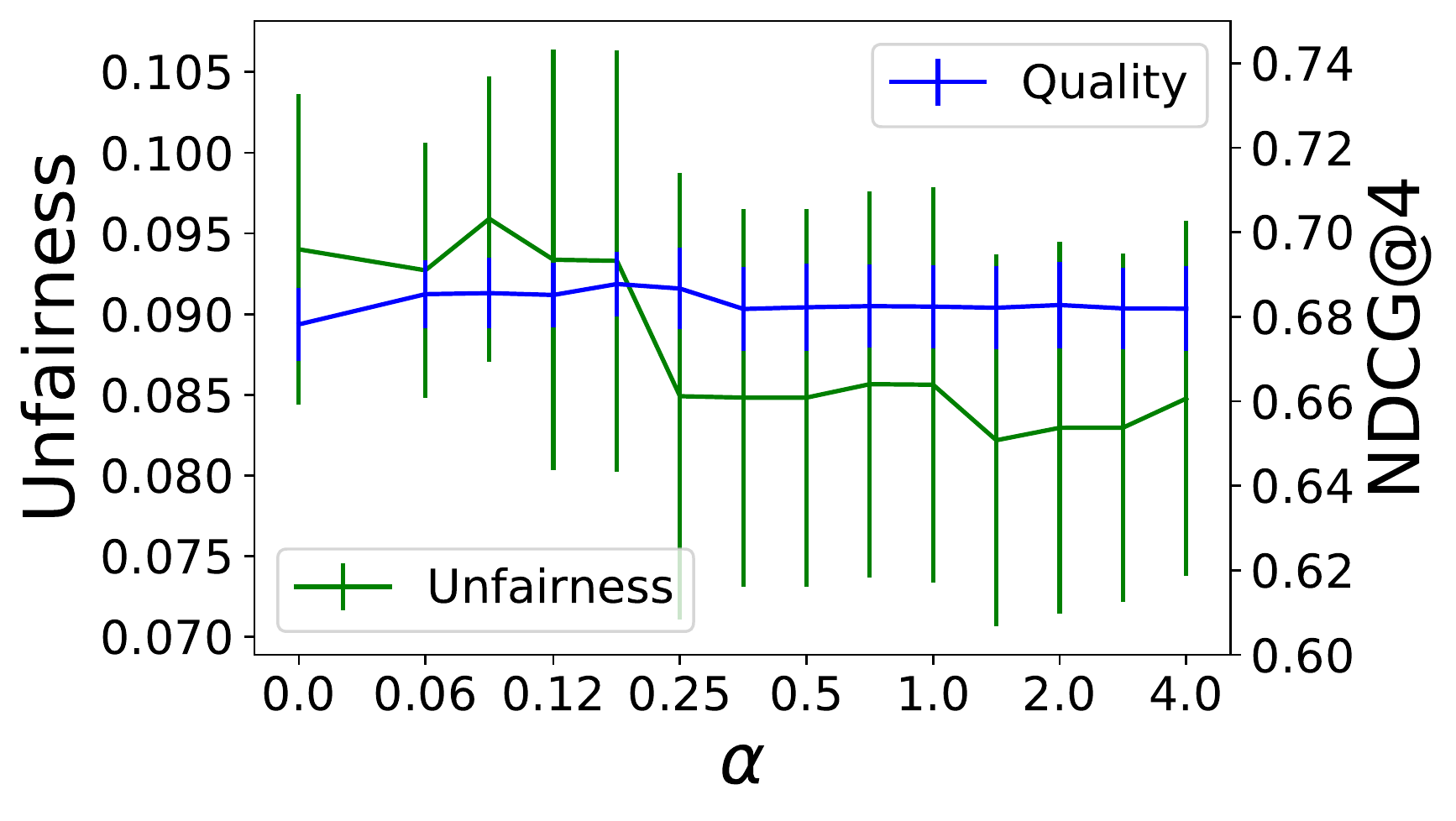}
  \caption{Equalized odds, TREC data}
\end{subfigure}%
\hfill
\begin{subfigure}{.33\textwidth}
  \centering
  \includegraphics[width=.95\linewidth]{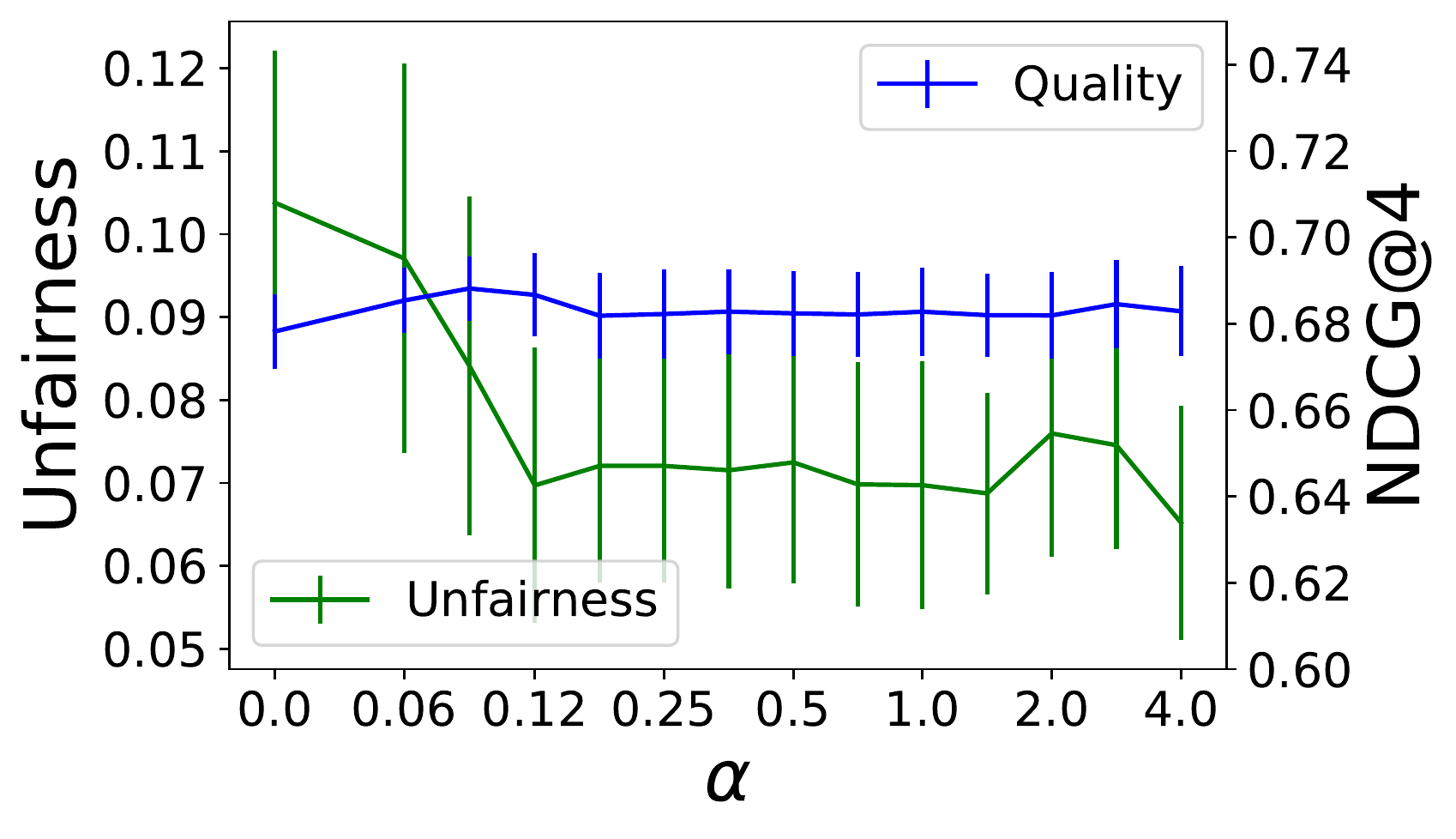}
  \caption{Equality of opportunity, TREC data}
\end{subfigure}
\caption{$k = 4, t = 3$}
\end{figure*}

\begin{figure*}[h]
\begin{subfigure}{.33\textwidth}
  \centering
  \includegraphics[width=.95\linewidth]{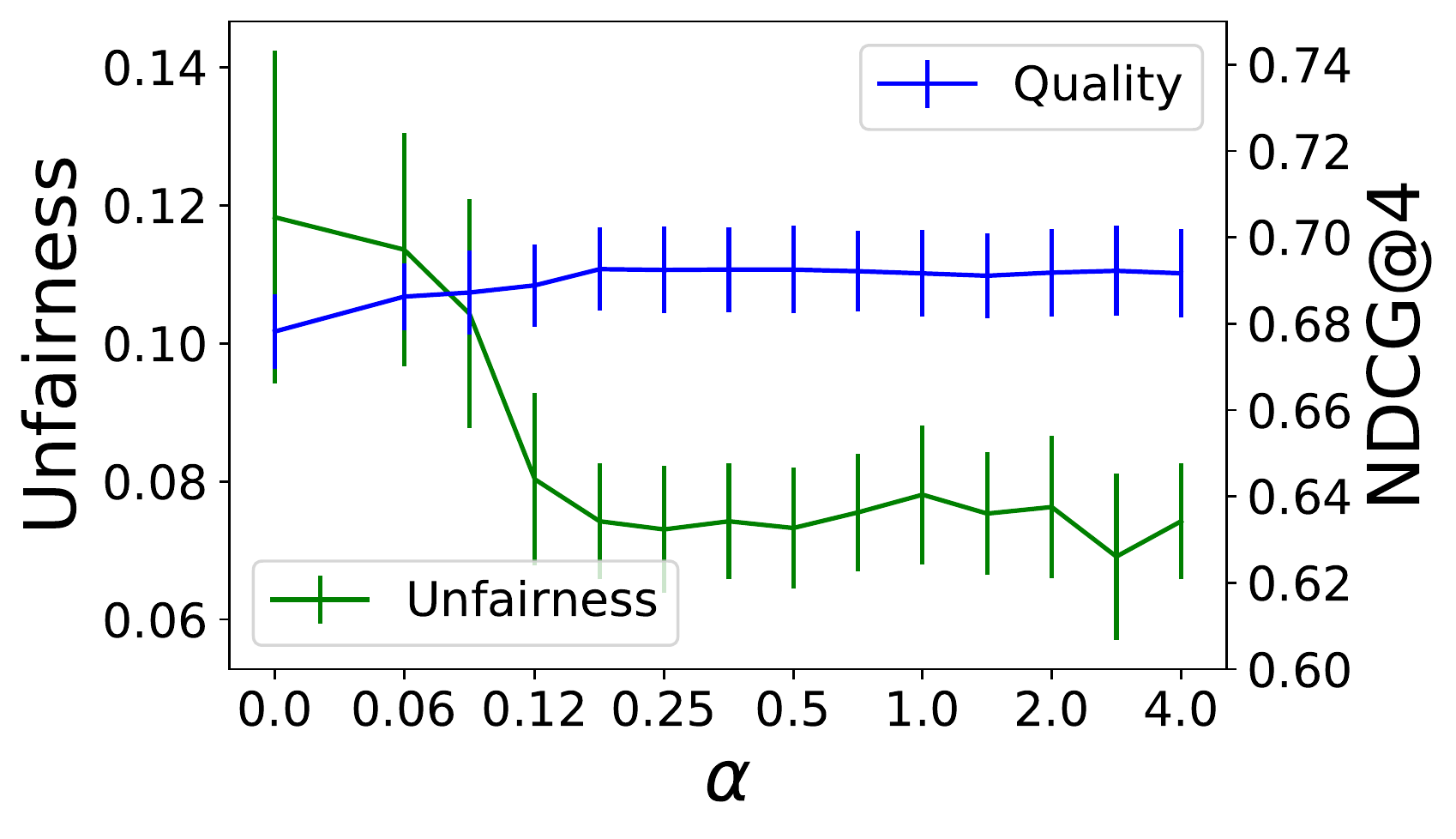}
  \caption{Demographic parity, TREC data}
\end{subfigure}%
\hfill
\begin{subfigure}{.33\textwidth}
  \centering
  \includegraphics[width=.95\linewidth]{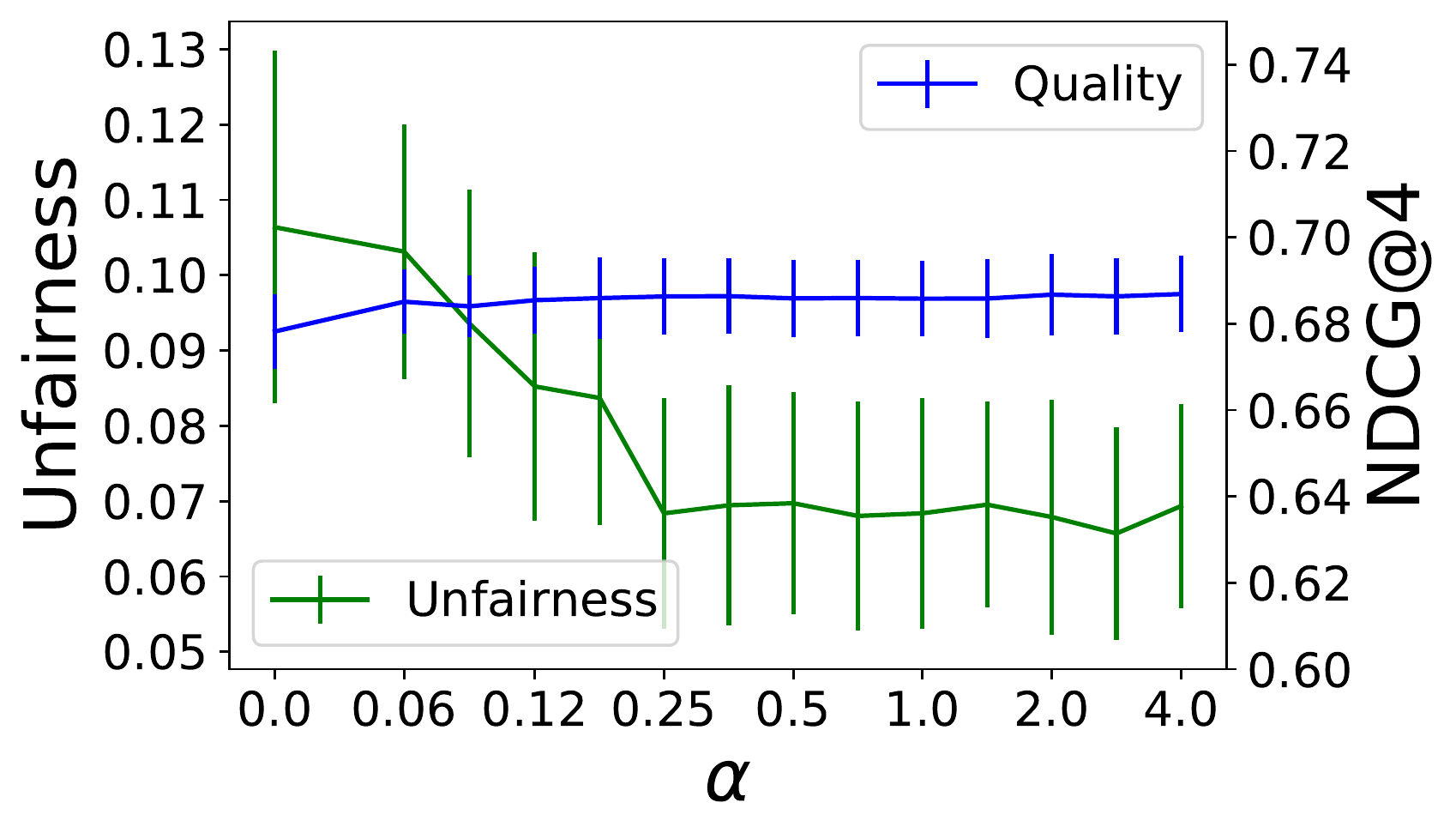}
  \caption{Equalized odds, TREC data}
\end{subfigure}%
\hfill
\begin{subfigure}{.33\textwidth}
  \centering
  \includegraphics[width=.95\linewidth]{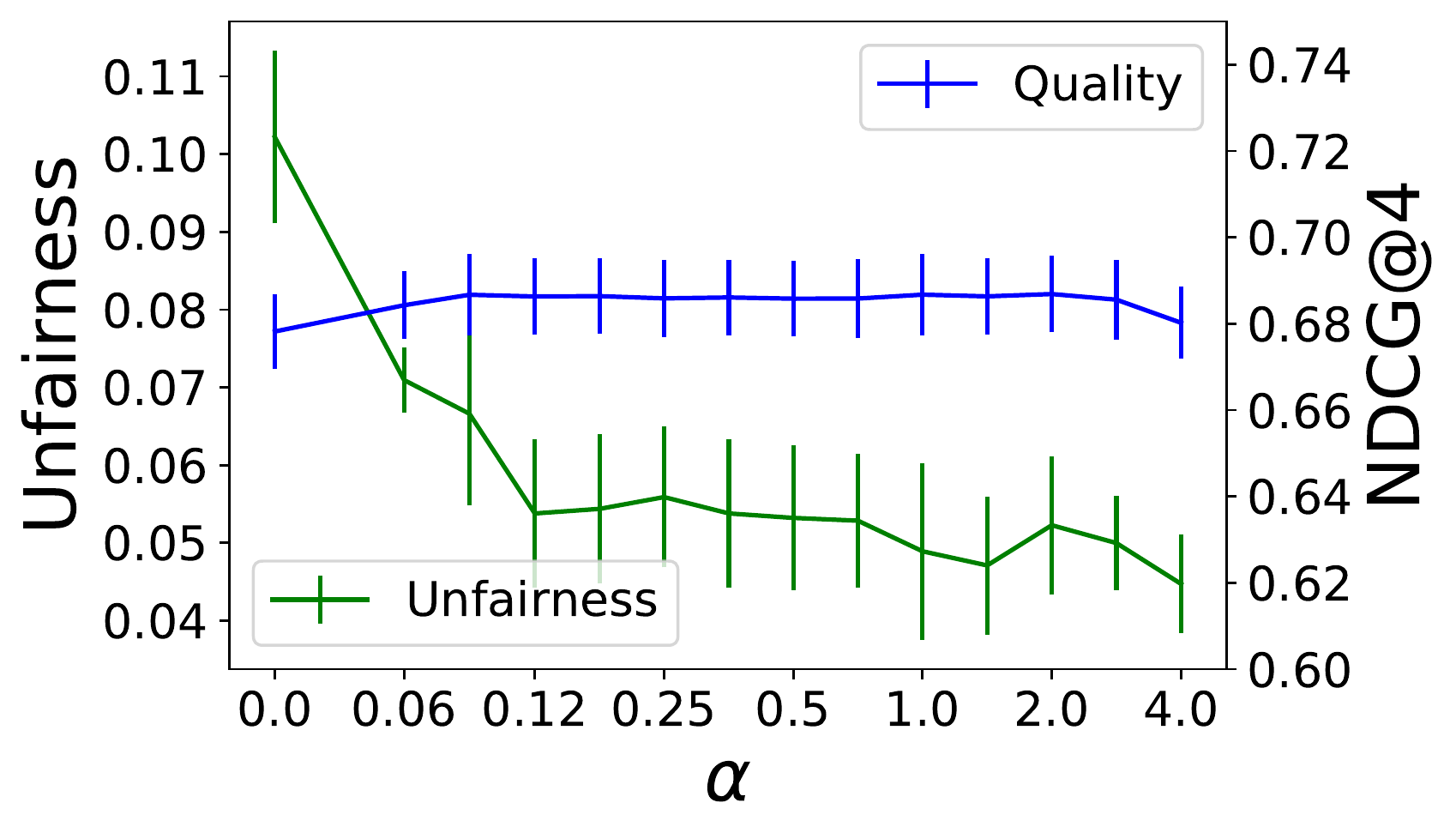}
  \caption{Equality of opportunity, TREC data}
\end{subfigure}
\caption{$k = 4, t = 4$}
\end{figure*}

\begin{figure*}[h]
\begin{subfigure}{.33\textwidth}
  \centering
  \includegraphics[width=.95\linewidth]{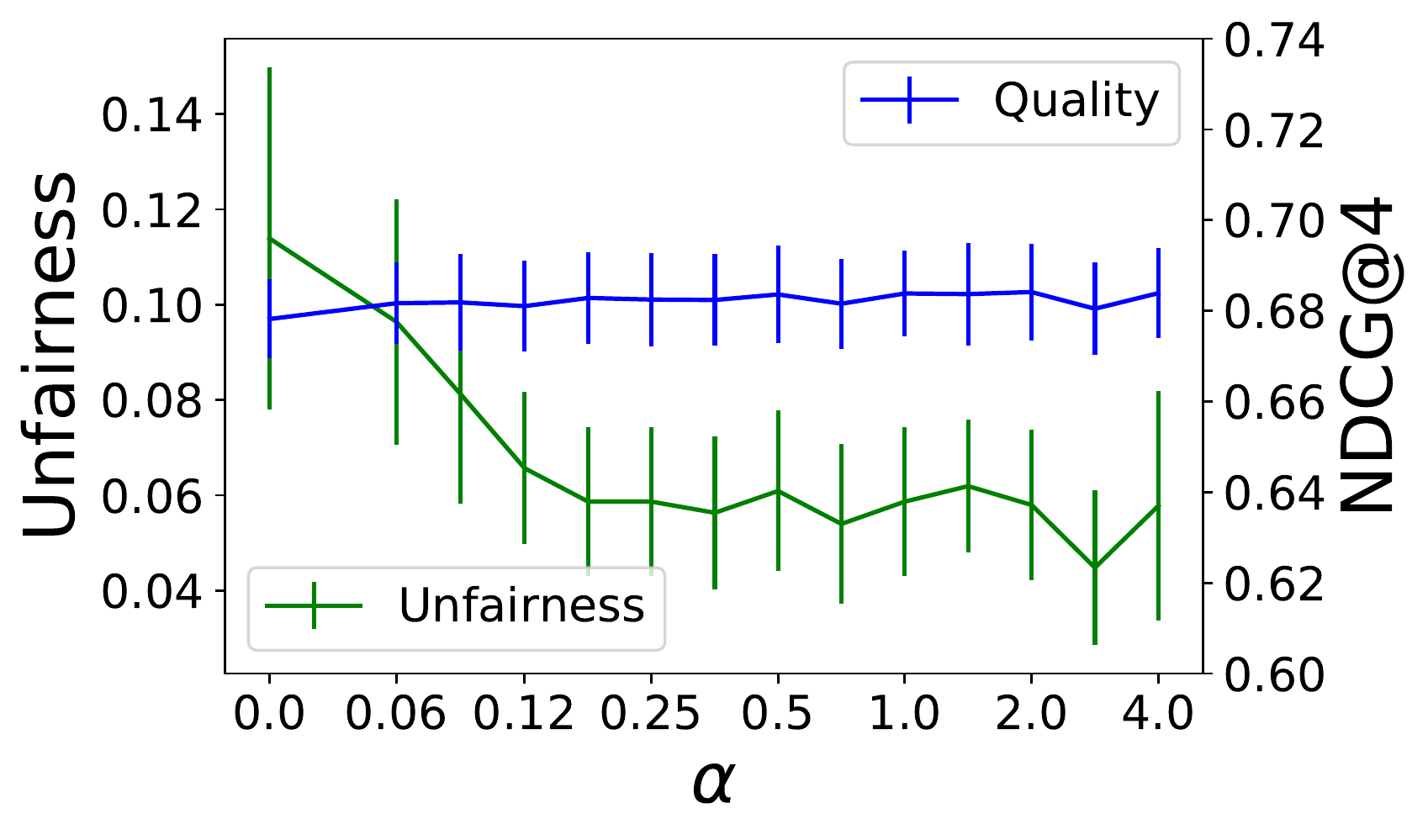}
  \caption{Demographic parity, TREC data}
\end{subfigure}%
\hfill
\begin{subfigure}{.33\textwidth}
  \centering
  \includegraphics[width=.95\linewidth]{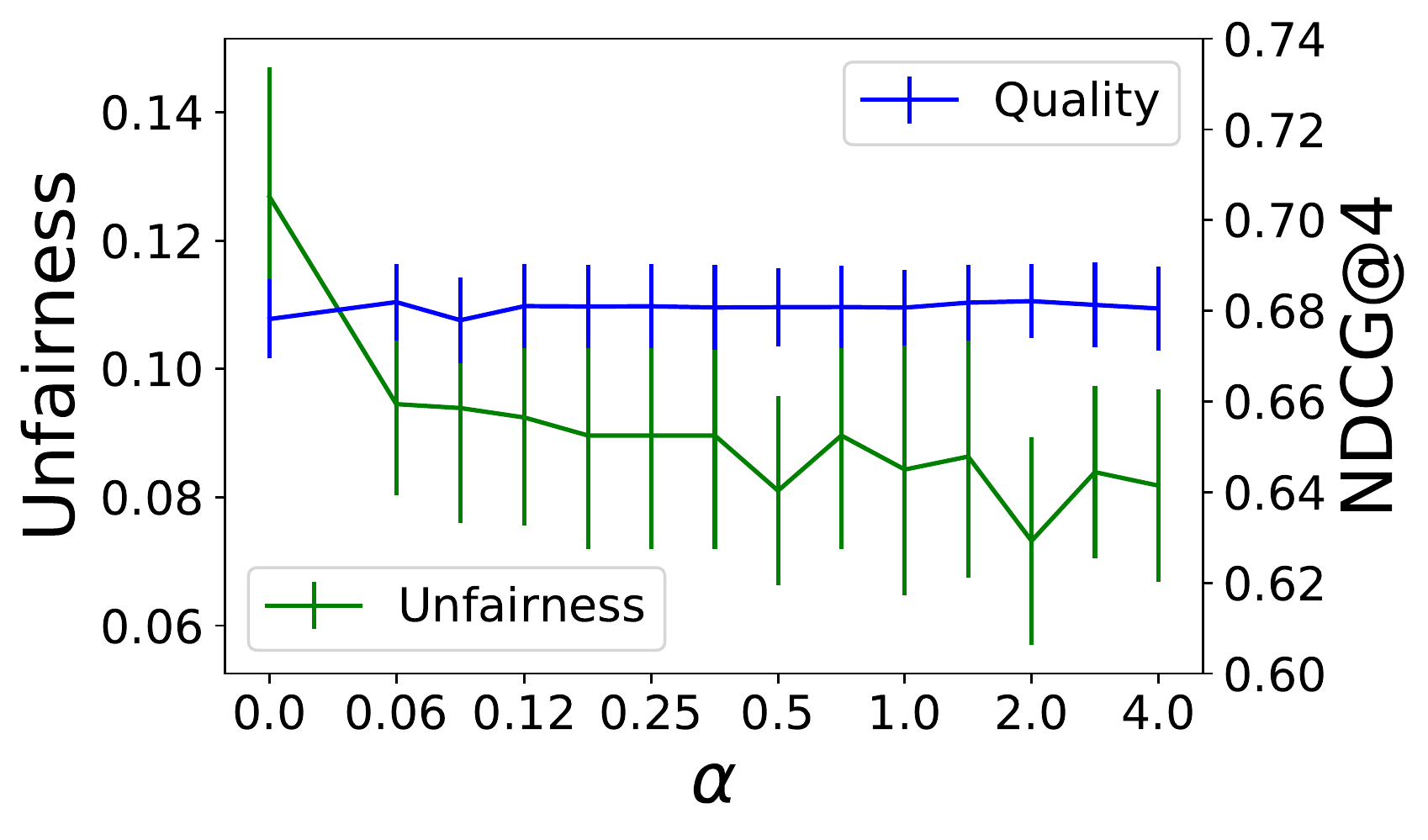}
  \caption{Equalized odds, TREC data}
\end{subfigure}%
\hfill
\begin{subfigure}{.33\textwidth}
  \centering
  \includegraphics[width=.95\linewidth]{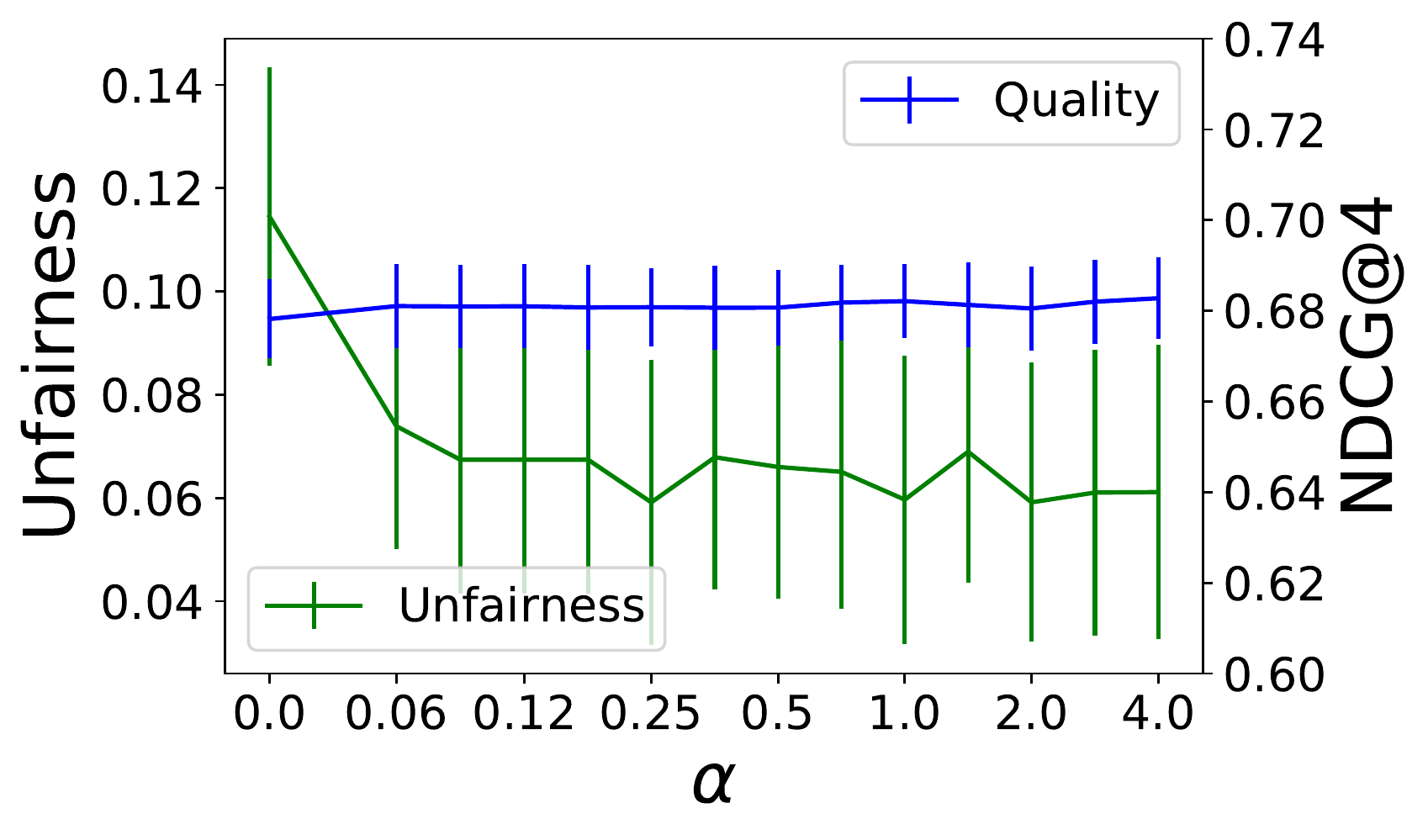}
  \caption{Equality of opportunity, TREC data}
\end{subfigure}
\caption{$k = 4, t = 5$}
\end{figure*}


\begin{figure*}[h]
\begin{subfigure}{.33\textwidth}
  \centering
  \includegraphics[width=.95\linewidth]{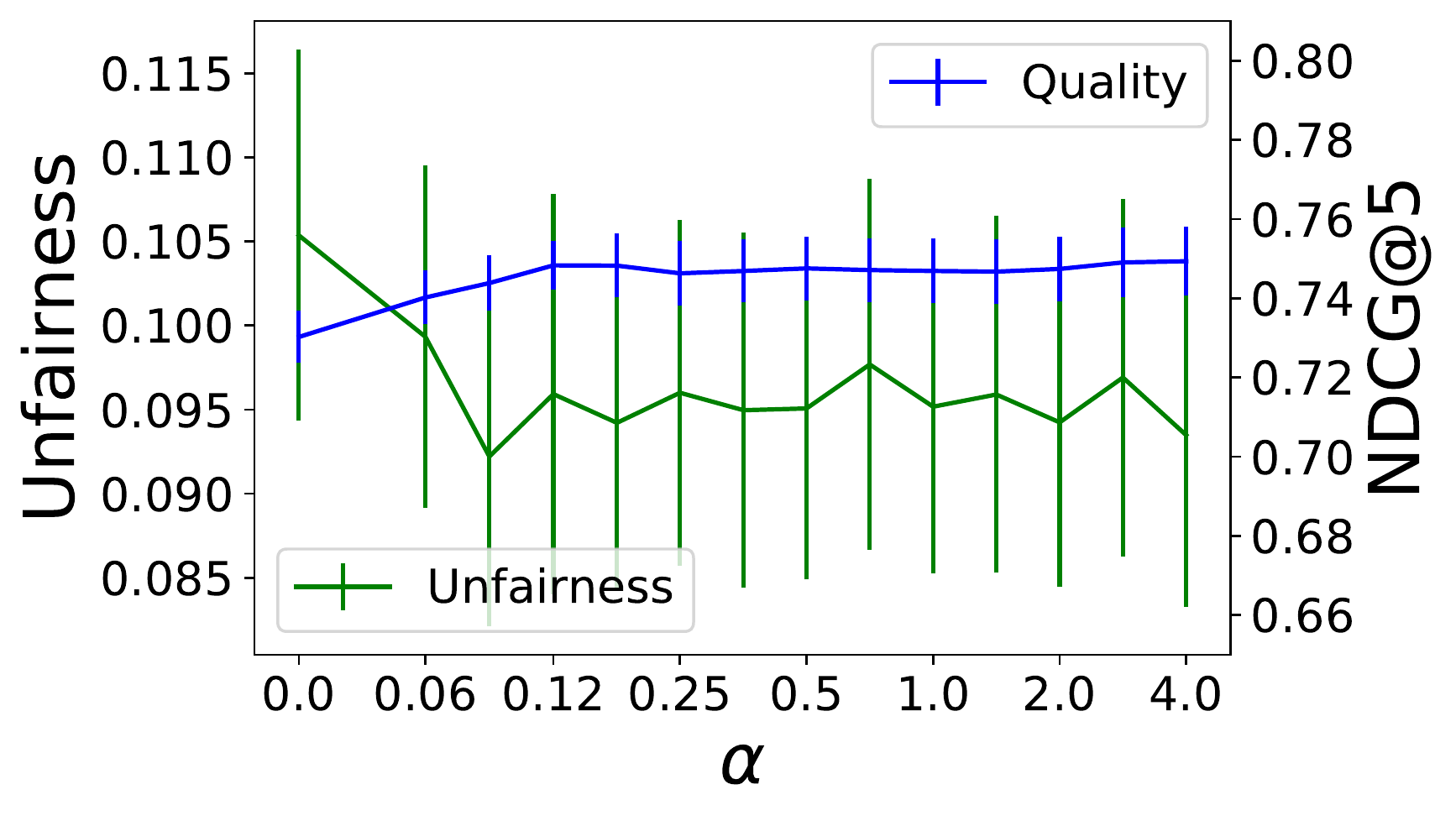}
  \caption{Demographic parity, TREC data}
\end{subfigure}%
\hfill
\begin{subfigure}{.33\textwidth}
  \centering
  \includegraphics[width=.95\linewidth]{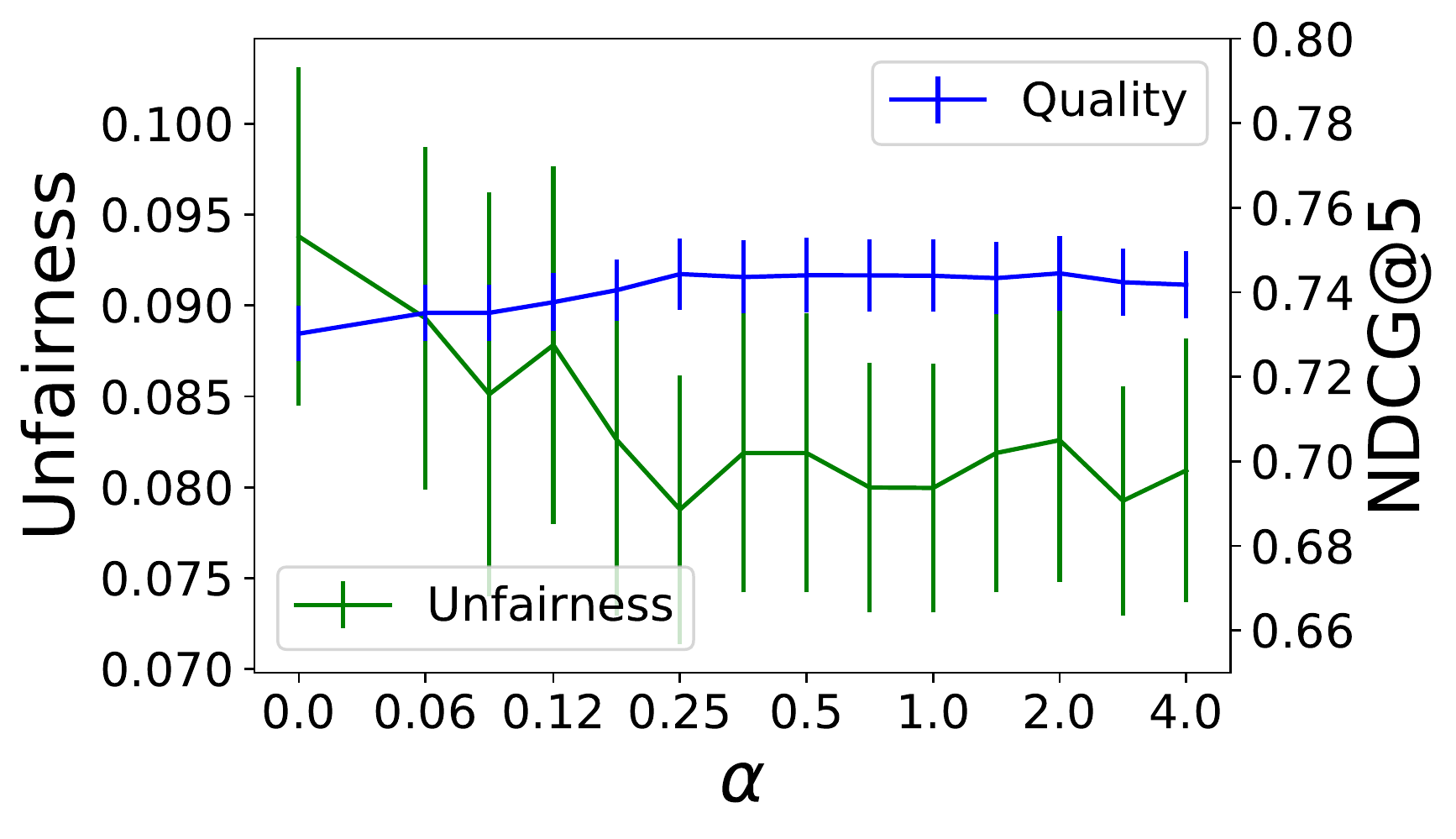}
  \caption{Equalized odds, TREC data}
\end{subfigure}%
\hfill
\begin{subfigure}{.33\textwidth}
  \centering
  \includegraphics[width=.95\linewidth]{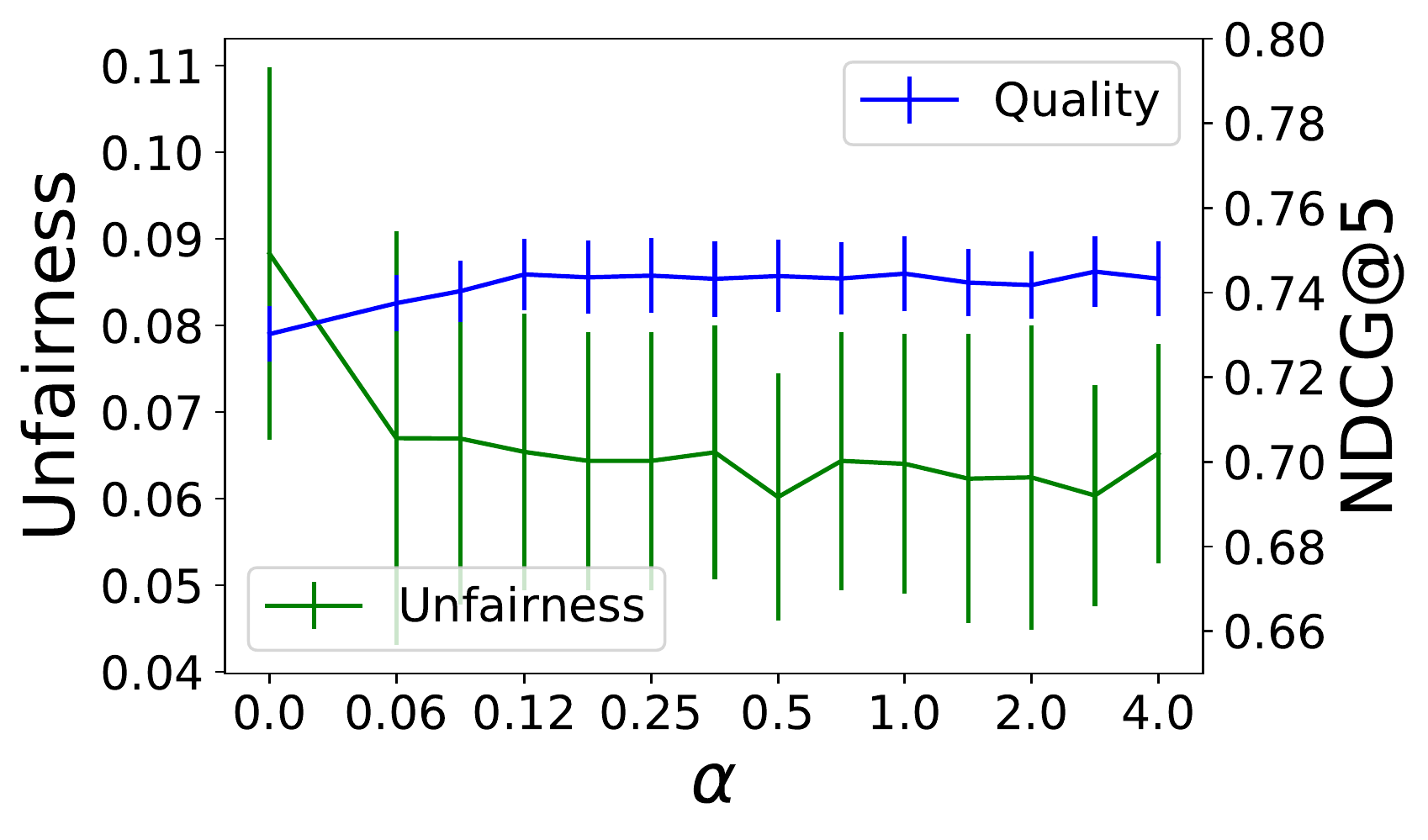}
  \caption{Equality of opportunity, TREC data}
\end{subfigure}
\caption{$k = 5, t = 3$}
\end{figure*}

\begin{figure*}[h]
\begin{subfigure}{.33\textwidth}
  \centering
  \includegraphics[width=.95\linewidth]{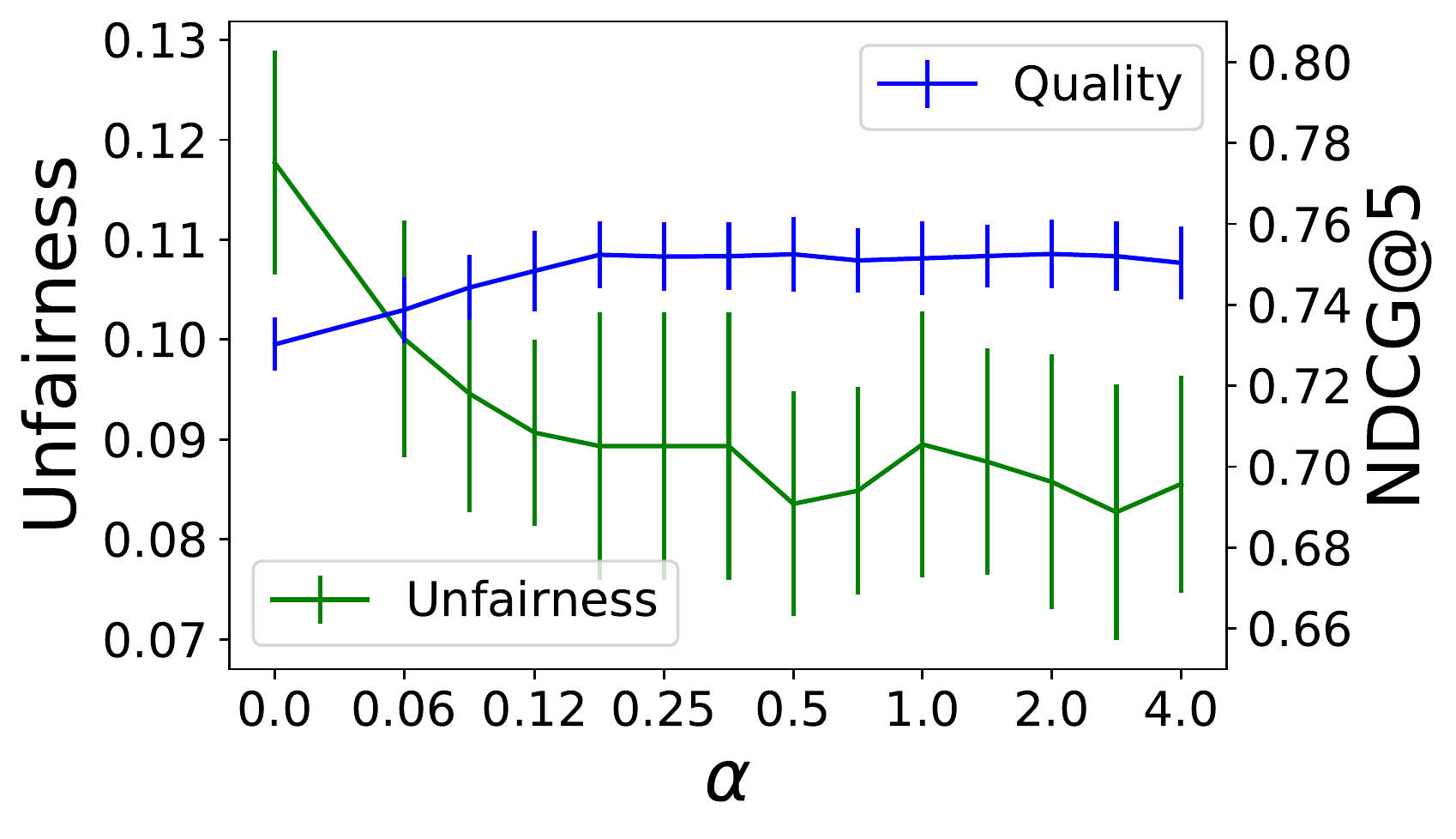}
  \caption{Demographic parity, TREC data}
\end{subfigure}%
\hfill
\begin{subfigure}{.33\textwidth}
  \centering
  \includegraphics[width=.95\linewidth]{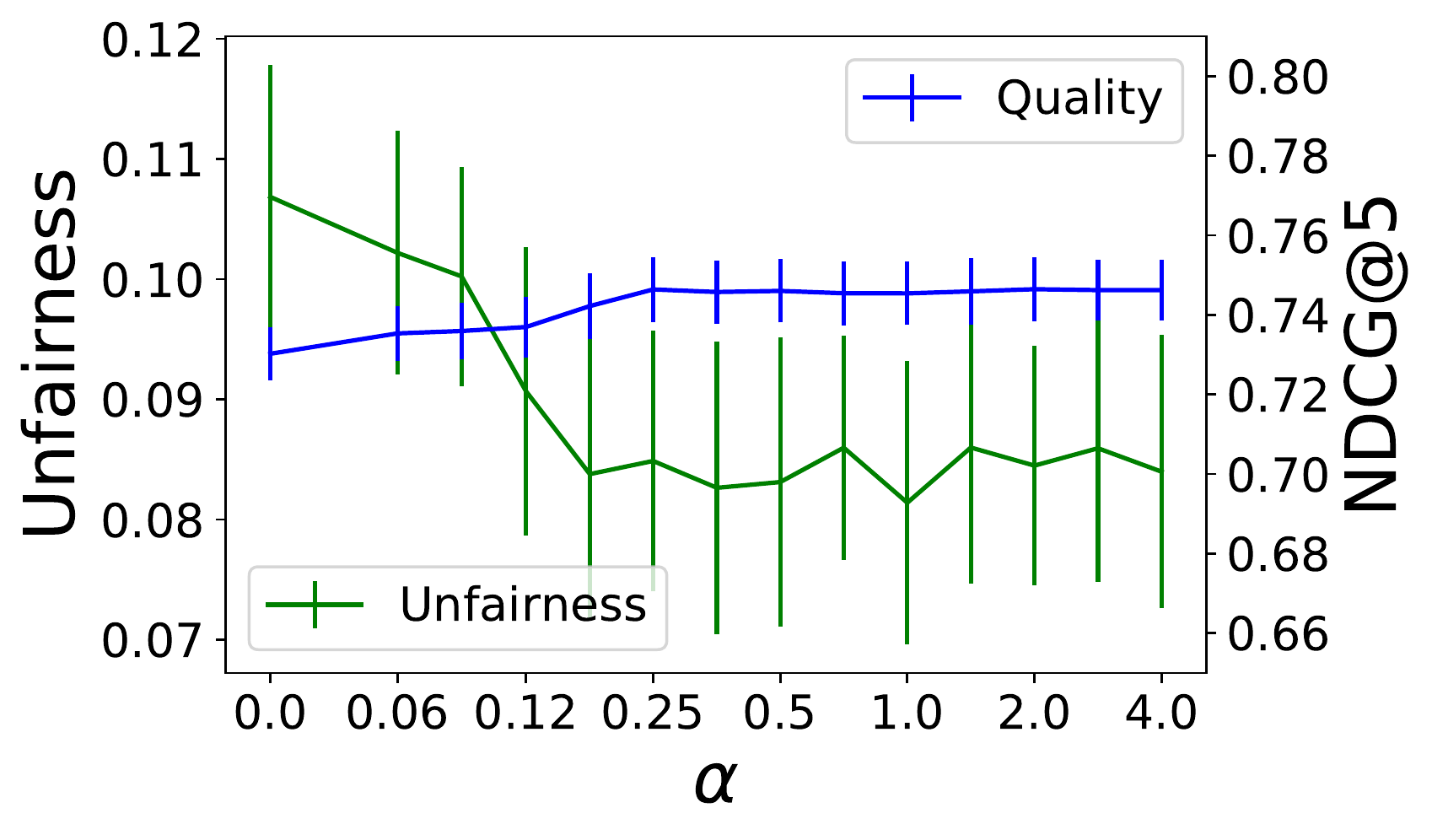}
  \caption{Equalized odds, TREC data}
\end{subfigure}%
\hfill
\begin{subfigure}{.33\textwidth}
  \centering
  \includegraphics[width=.95\linewidth]{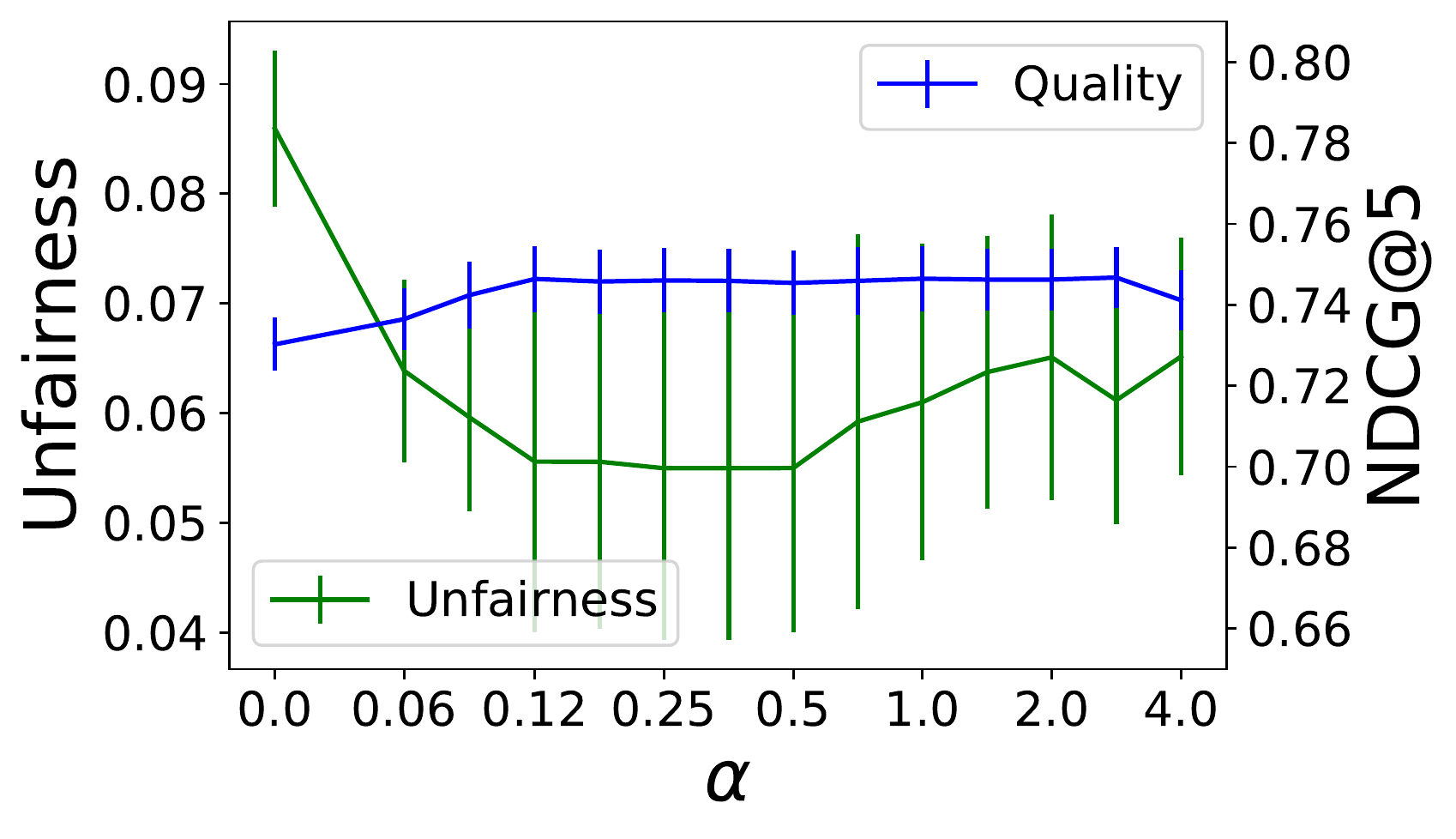}
  \caption{Equality of opportunity, TREC data}
\end{subfigure}
\caption{$k = 5, t = 4$}
\end{figure*}

\begin{figure*}[h]
\begin{subfigure}{.33\textwidth}
  \centering
  \includegraphics[width=.95\linewidth]{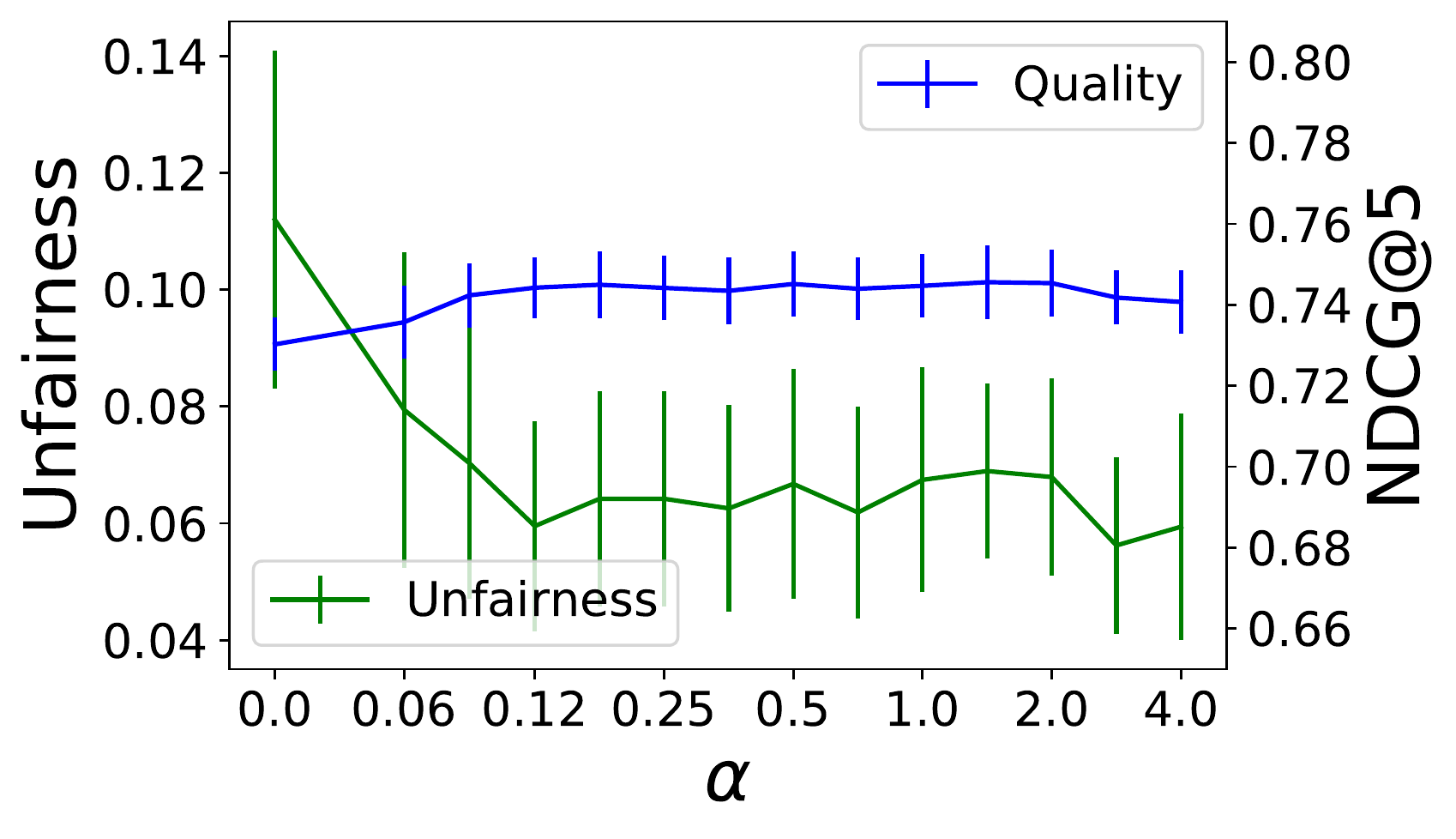}
  \caption{Demographic parity, TREC data}
\end{subfigure}%
\hfill
\begin{subfigure}{.33\textwidth}
  \centering
  \includegraphics[width=.95\linewidth]{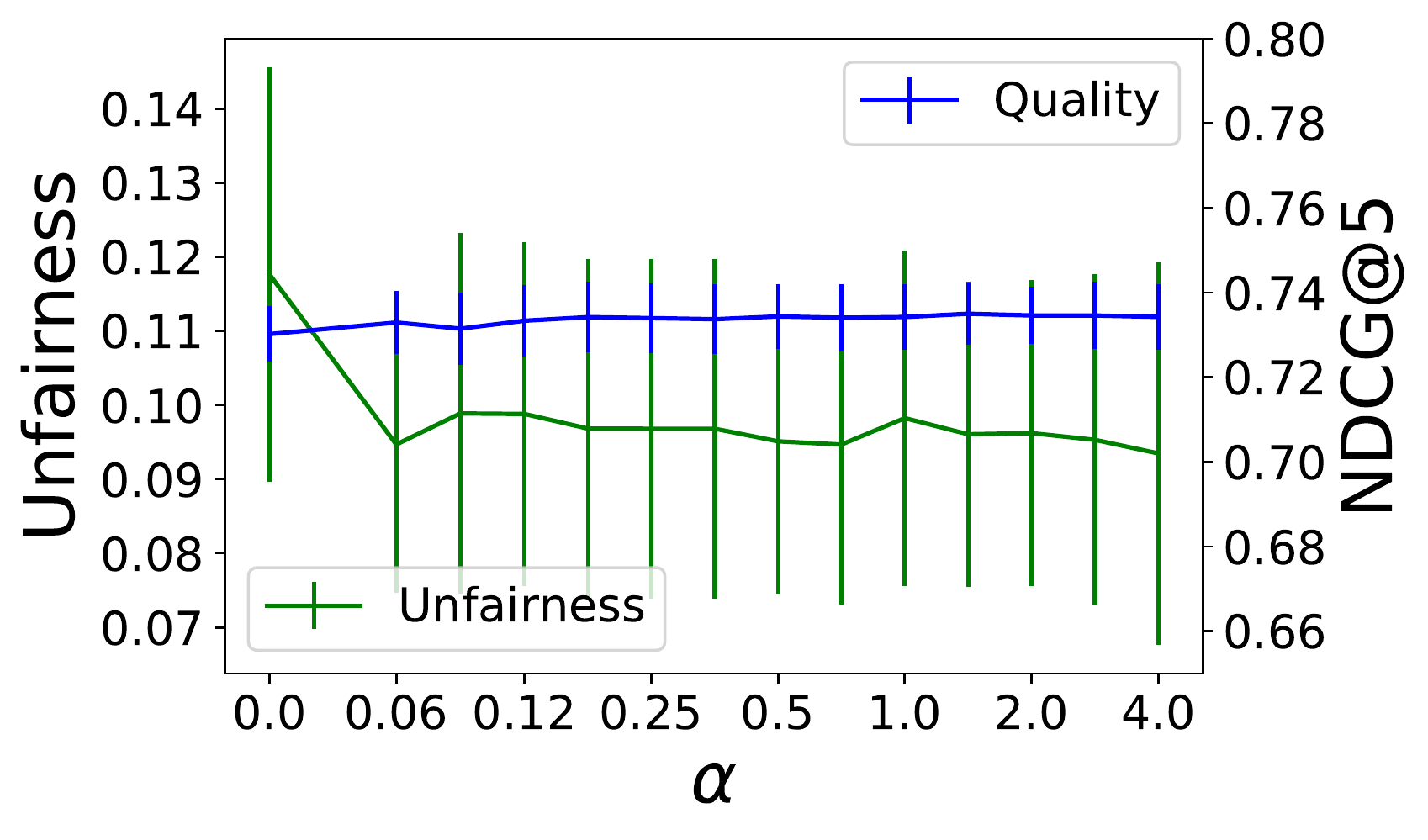}
  \caption{Equalized odds, TREC data}
\end{subfigure}%
\hfill
\begin{subfigure}{.33\textwidth}
  \centering
  \includegraphics[width=.95\linewidth]{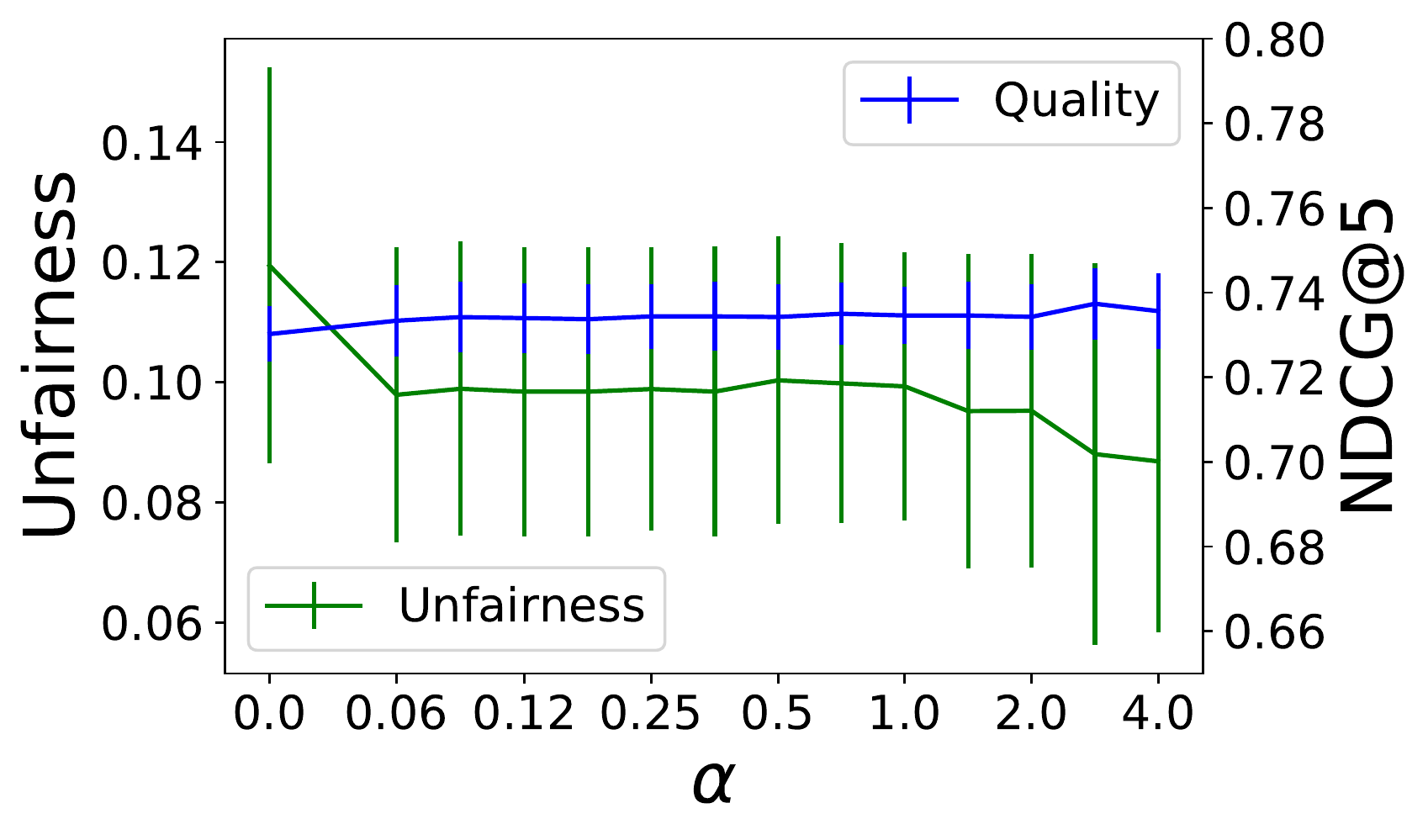}
  \caption{Equality of opportunity, TREC data}
\end{subfigure}
\caption{$k = 5, t = 5$}
\end{figure*}

\clearpage
\subsubsection{MSMARCO results}

Different rows correspond to different values of $k$ and the two different splits into protected groups (\textit{com} and \textit{ext}).


\begin{figure*}[h]
\begin{subfigure}{.33\textwidth}
  \centering
  \includegraphics[width=.95\linewidth]{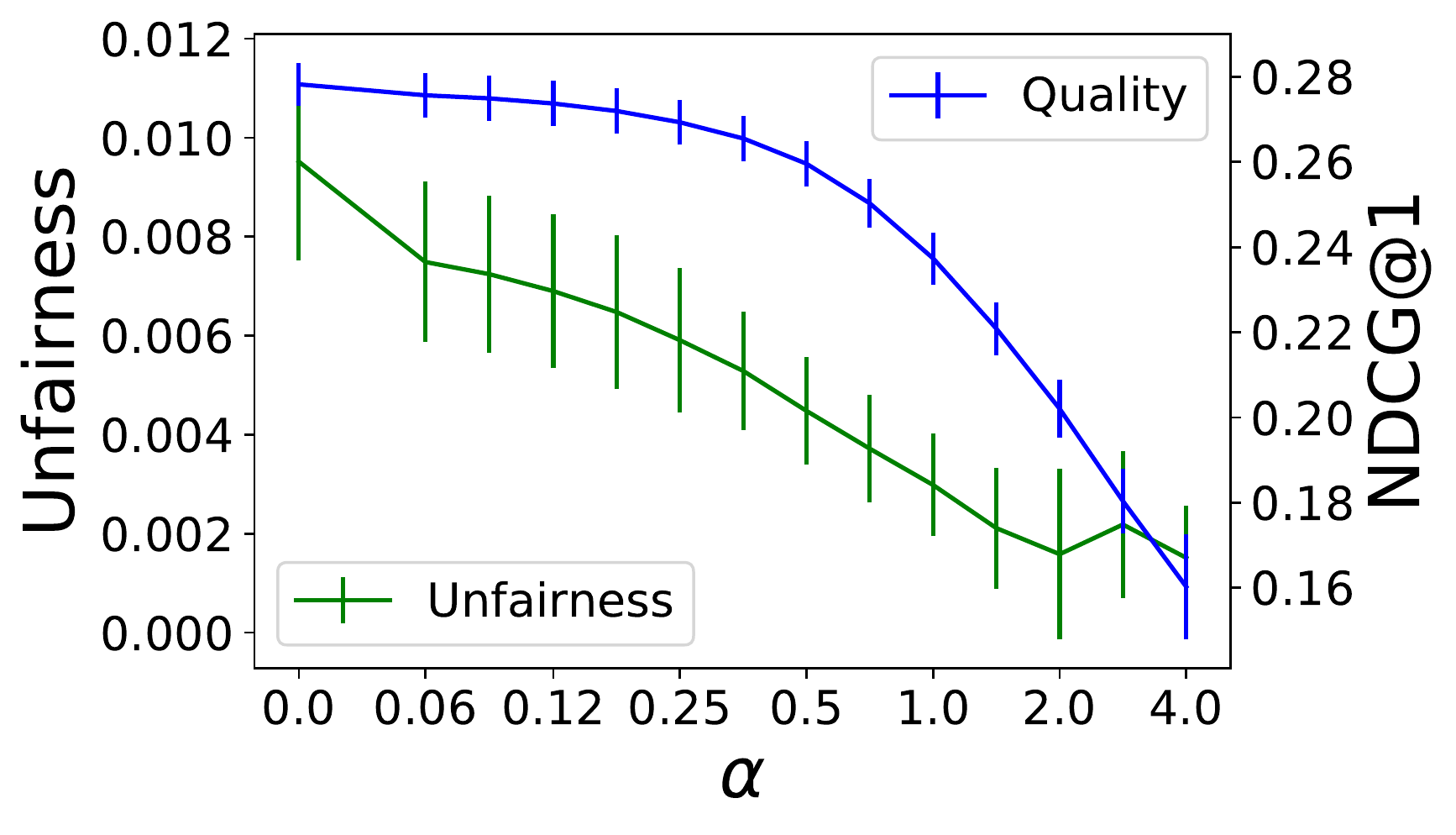}
  \caption{Demographic parity, MSMARCO}
\end{subfigure}%
\hfill
\begin{subfigure}{.33\textwidth}
  \centering
  \includegraphics[width=.95\linewidth]{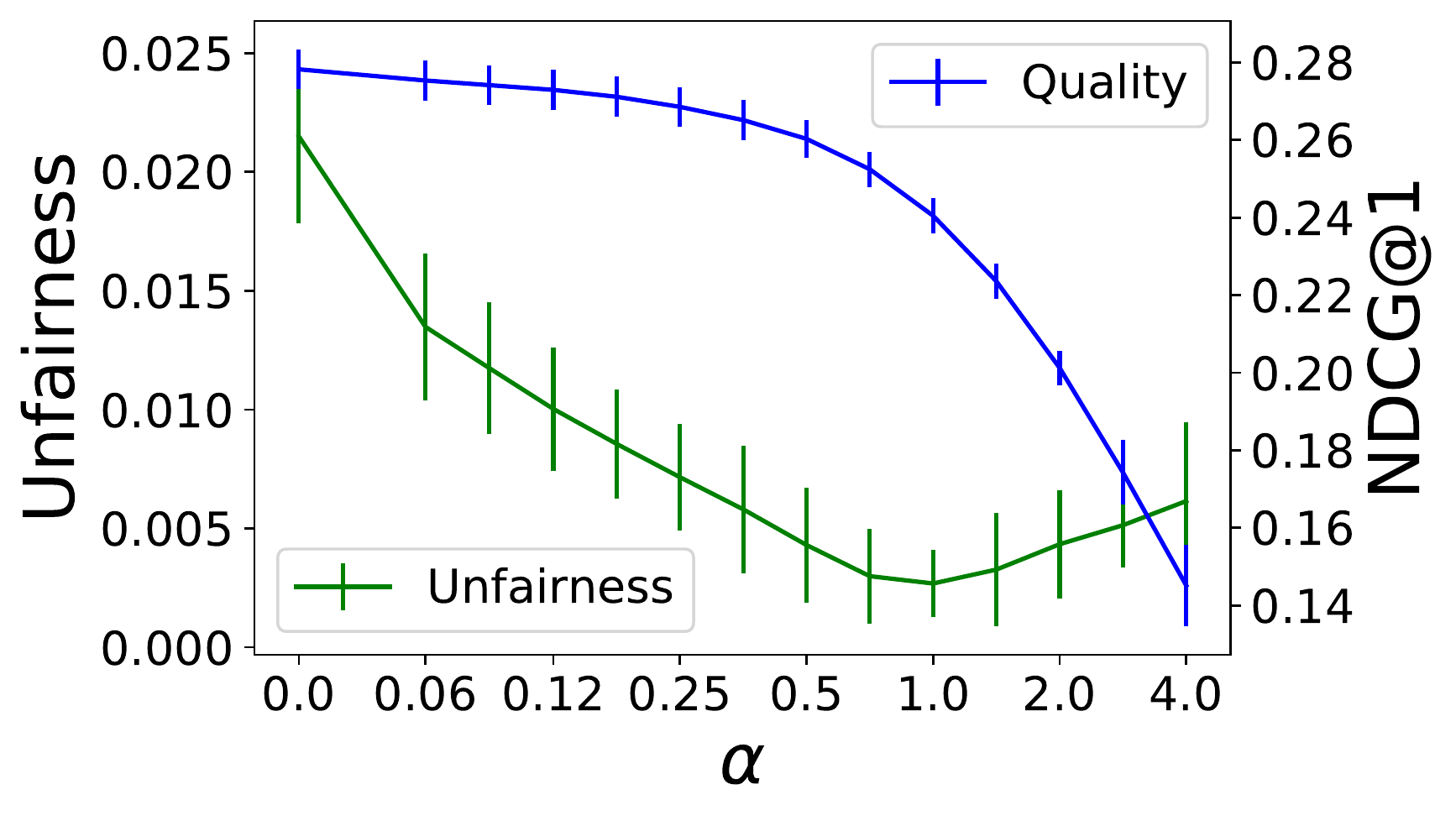}
  \caption{Equalized odds, MSMARCO}
\end{subfigure}%
\hfill
\begin{subfigure}{.33\textwidth}
  \centering
  \includegraphics[width=.95\linewidth]{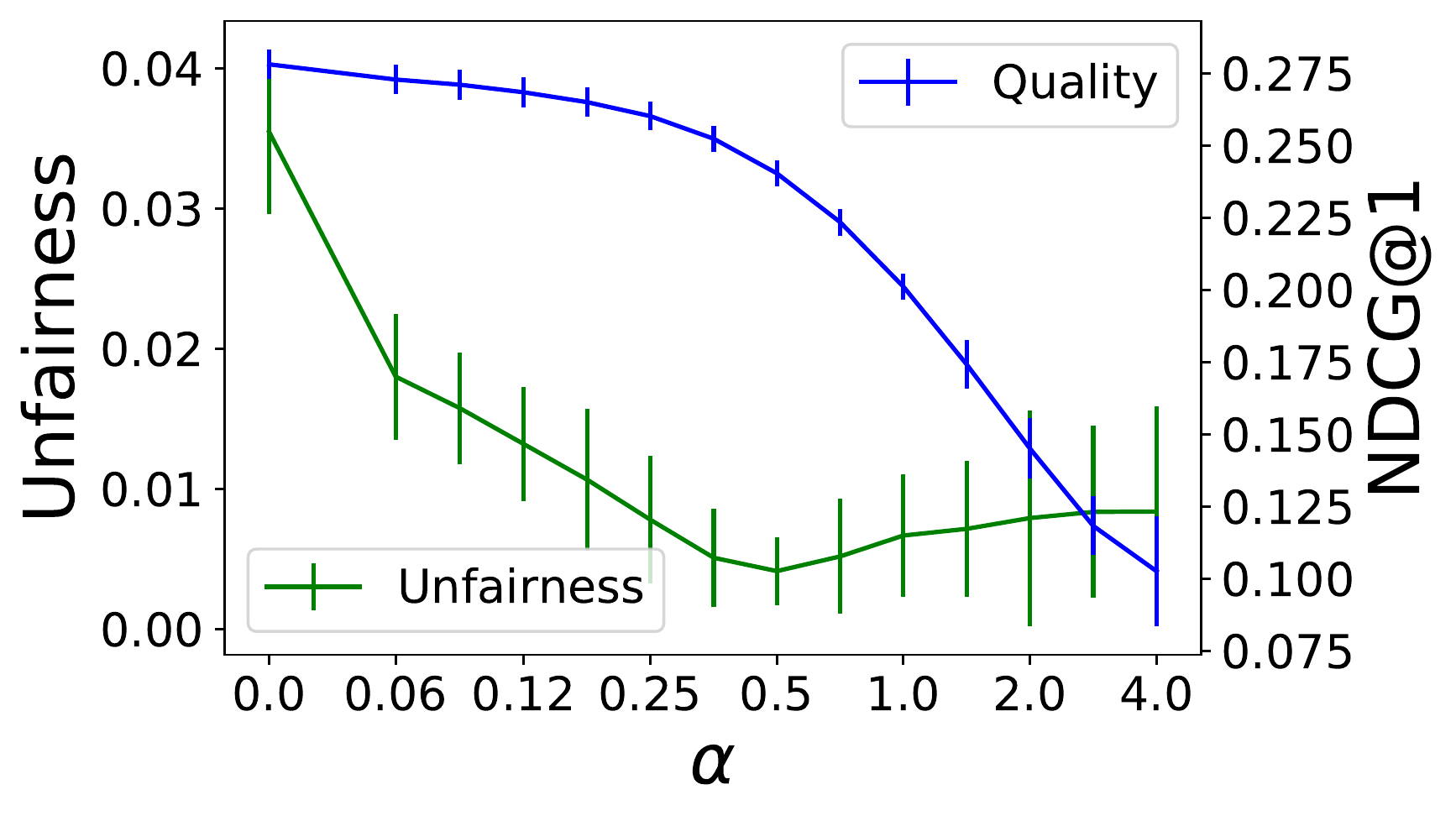}
  \caption{Equality of opportunity, MSMARCO}
\end{subfigure}
\caption{$k = 1$, \textit{com}}
\end{figure*}

\begin{figure*}[h]
\begin{subfigure}{.33\textwidth}
  \centering
  \includegraphics[width=.95\linewidth]{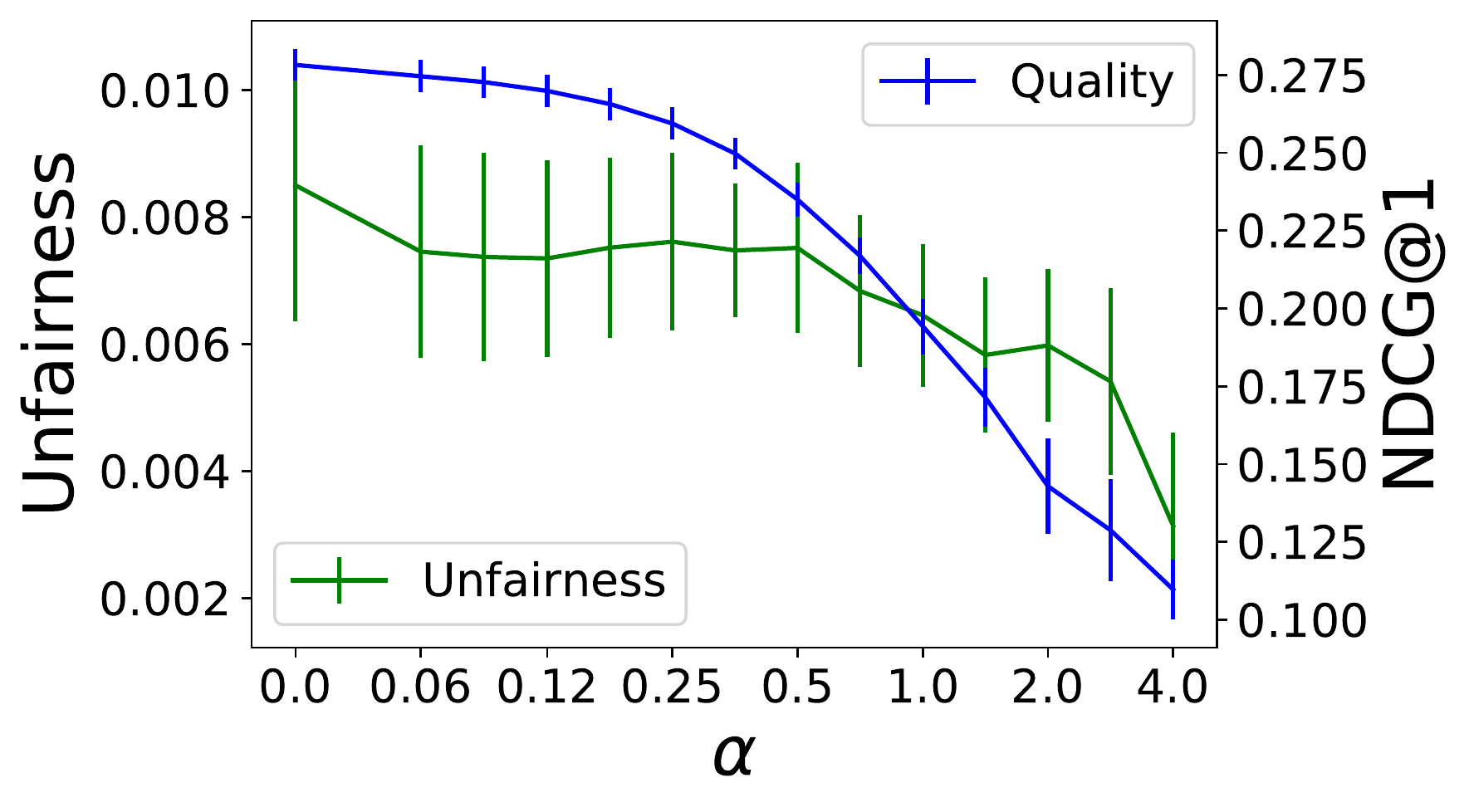}
  \caption{Demographic parity, MSMARCO}
\end{subfigure}%
\hfill
\begin{subfigure}{.33\textwidth}
  \centering
  \includegraphics[width=.95\linewidth]{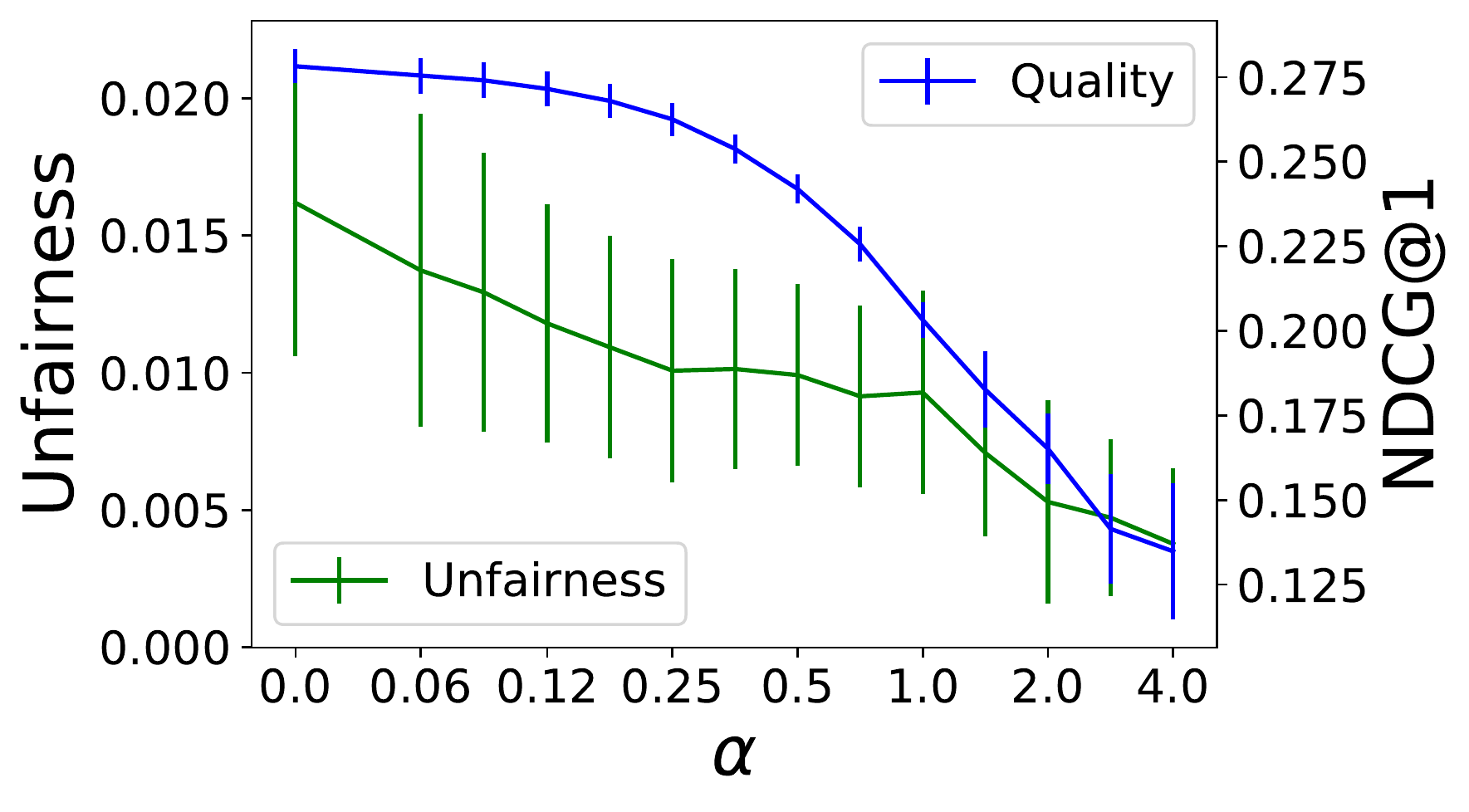}
  \caption{Equalized odds, MSMARCO}
\end{subfigure}%
\hfill
\begin{subfigure}{.33\textwidth}
  \centering
  \includegraphics[width=.95\linewidth]{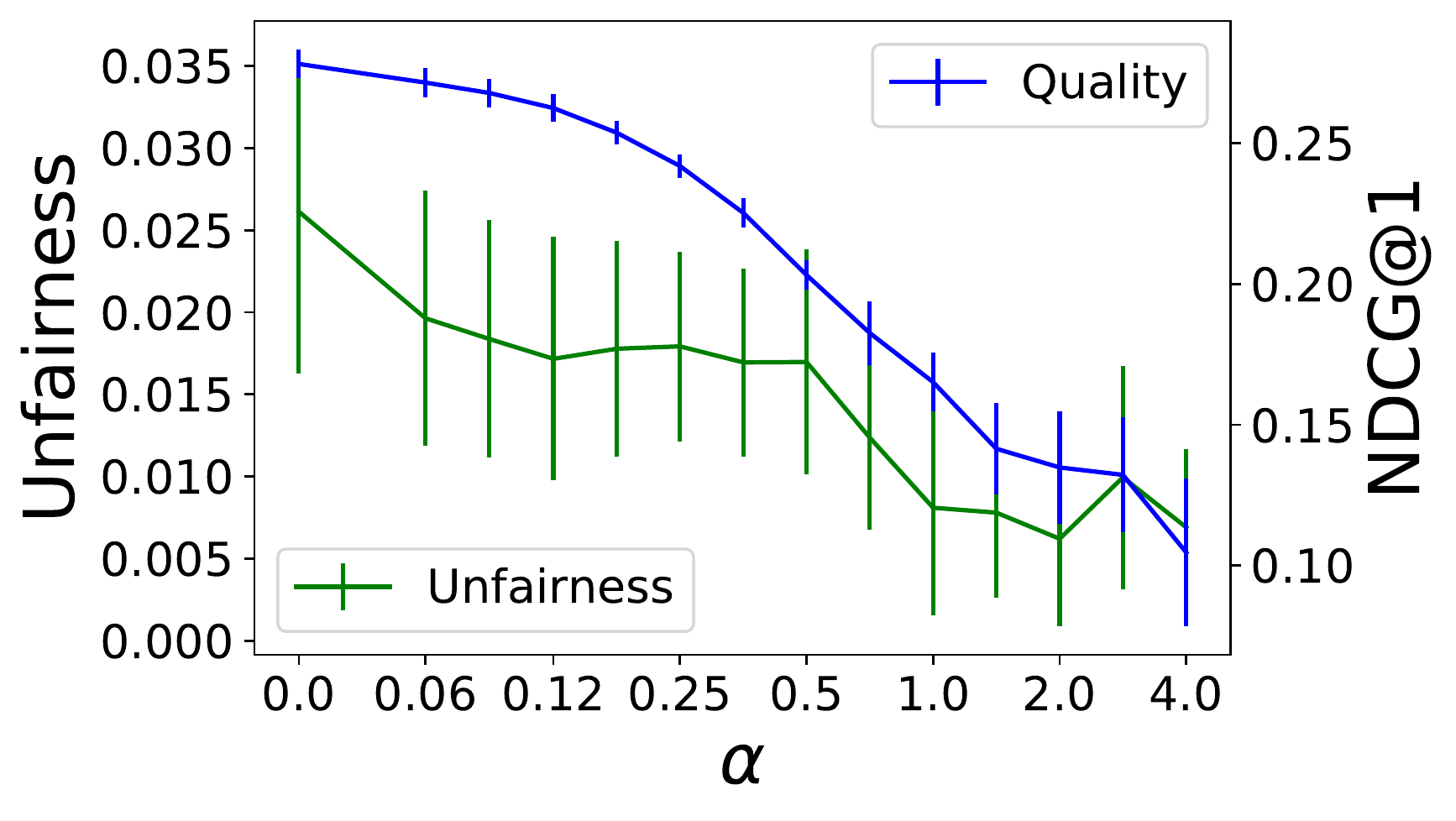}
  \caption{Equality of opportunity, MSMARCO}
\end{subfigure}
\caption{$k = 1$, \textit{ext}}
\end{figure*}


\begin{figure*}[h]
\begin{subfigure}{.33\textwidth}
  \centering
  \includegraphics[width=.95\linewidth]{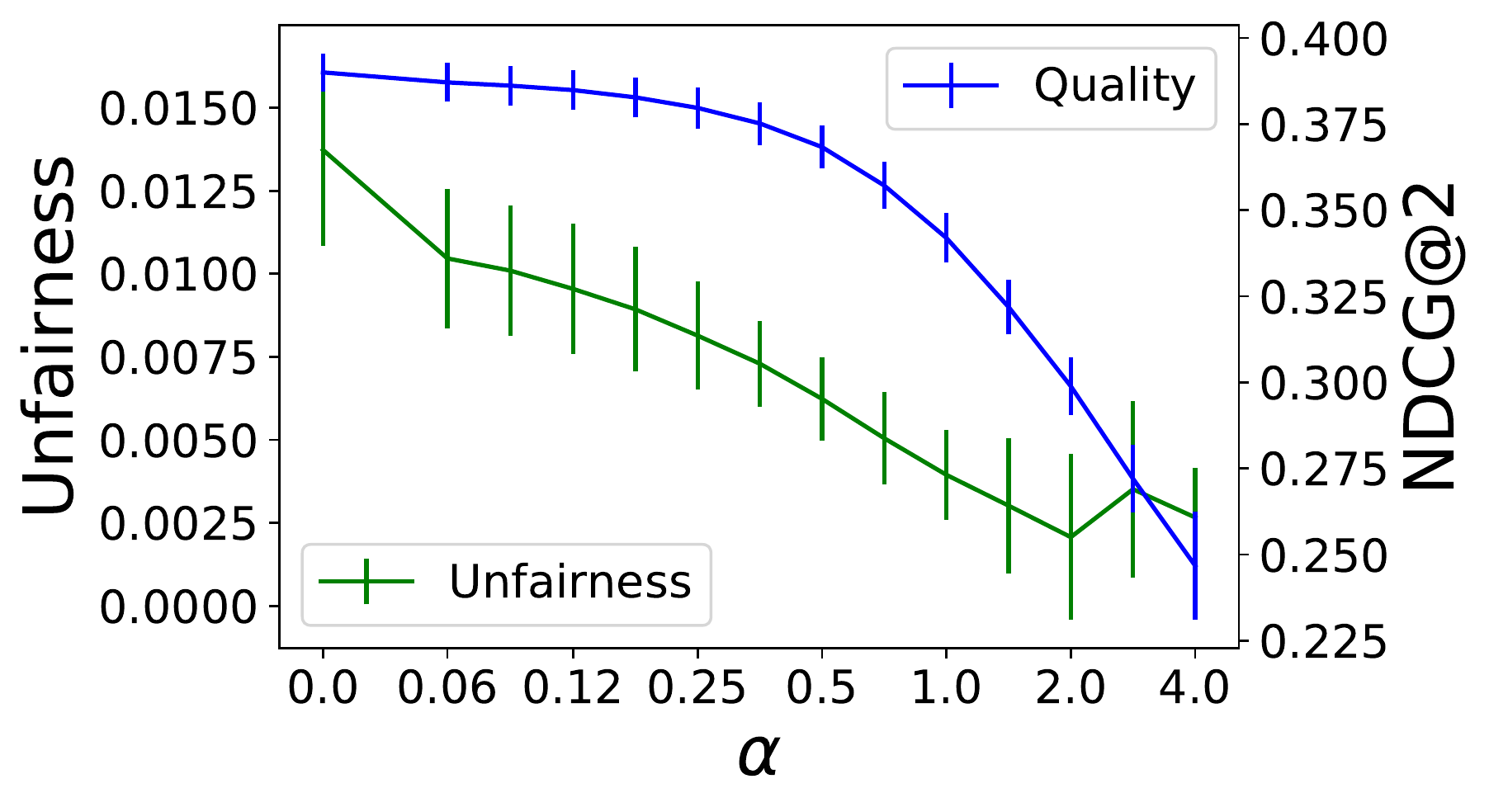}
  \caption{Demographic parity, MSMARCO}
\end{subfigure}%
\hfill
\begin{subfigure}{.33\textwidth}
  \centering
  \includegraphics[width=.95\linewidth]{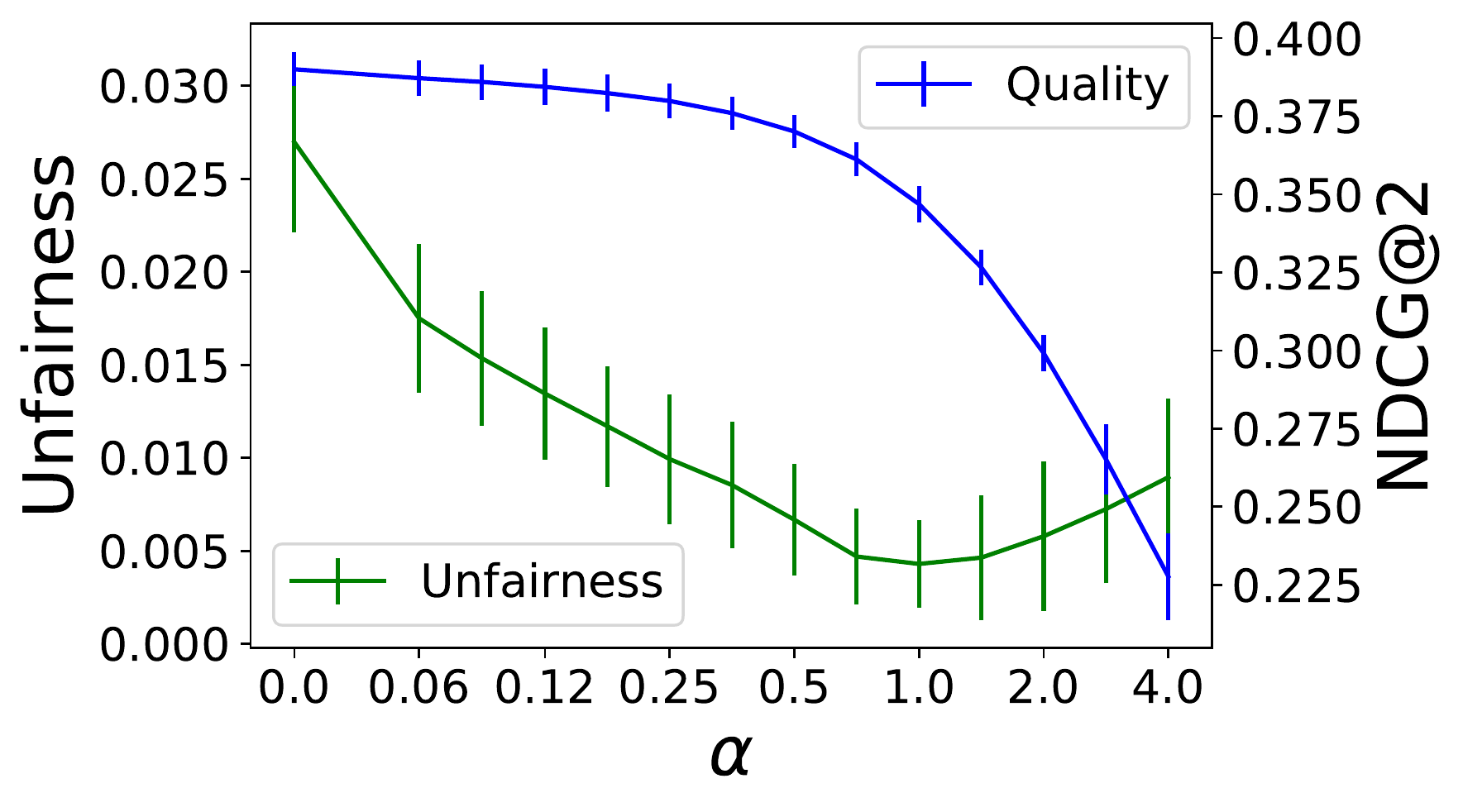}
  \caption{Equalized odds, MSMARCO}
\end{subfigure}%
\hfill
\begin{subfigure}{.33\textwidth}
  \centering
  \includegraphics[width=.95\linewidth]{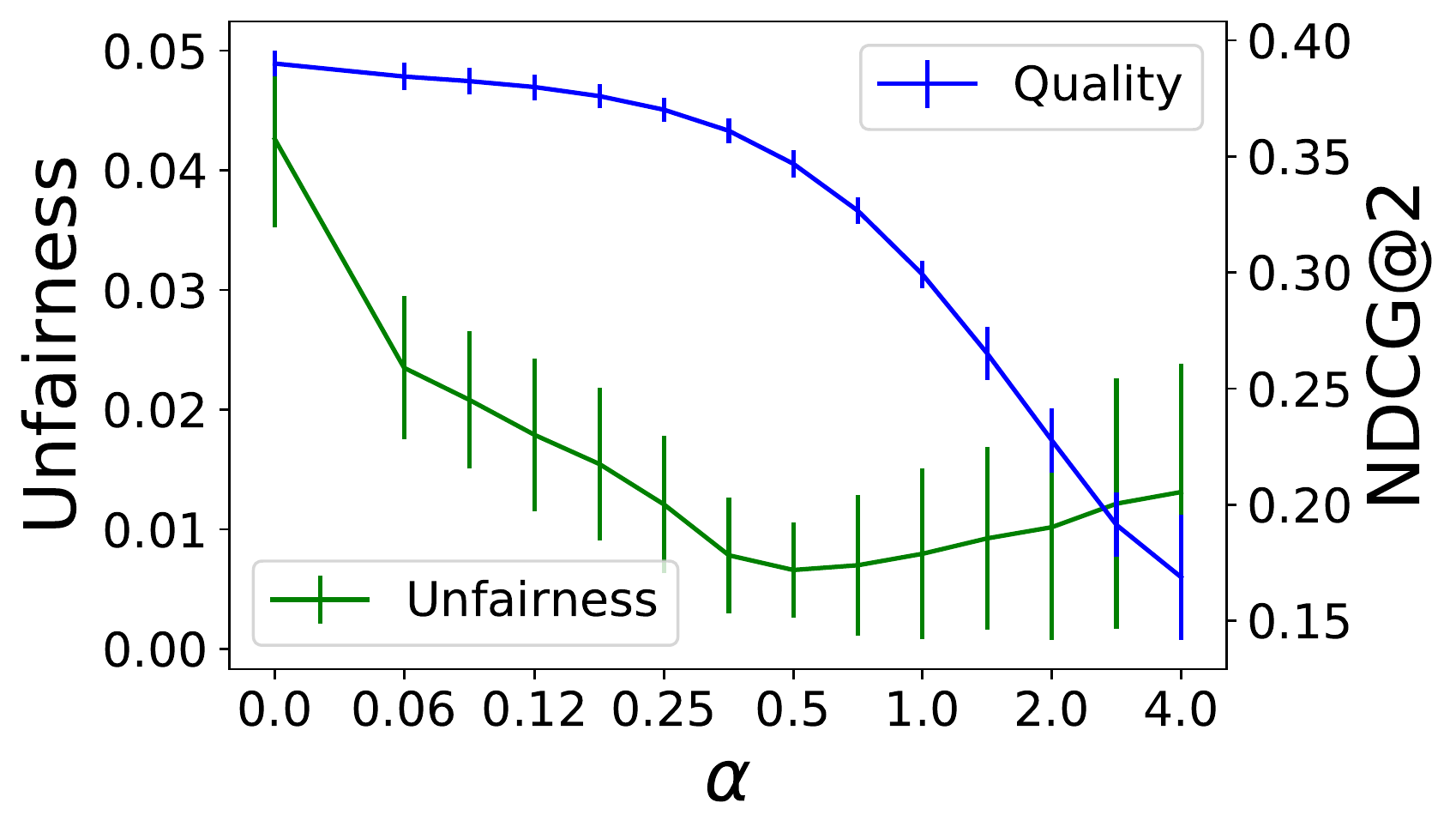}
  \caption{Equality of opportunity, MSMARCO}
\end{subfigure}
\caption{$k = 2$, \textit{com}}
\end{figure*}

\begin{figure*}[h]
\begin{subfigure}{.33\textwidth}
  \centering
  \includegraphics[width=.95\linewidth]{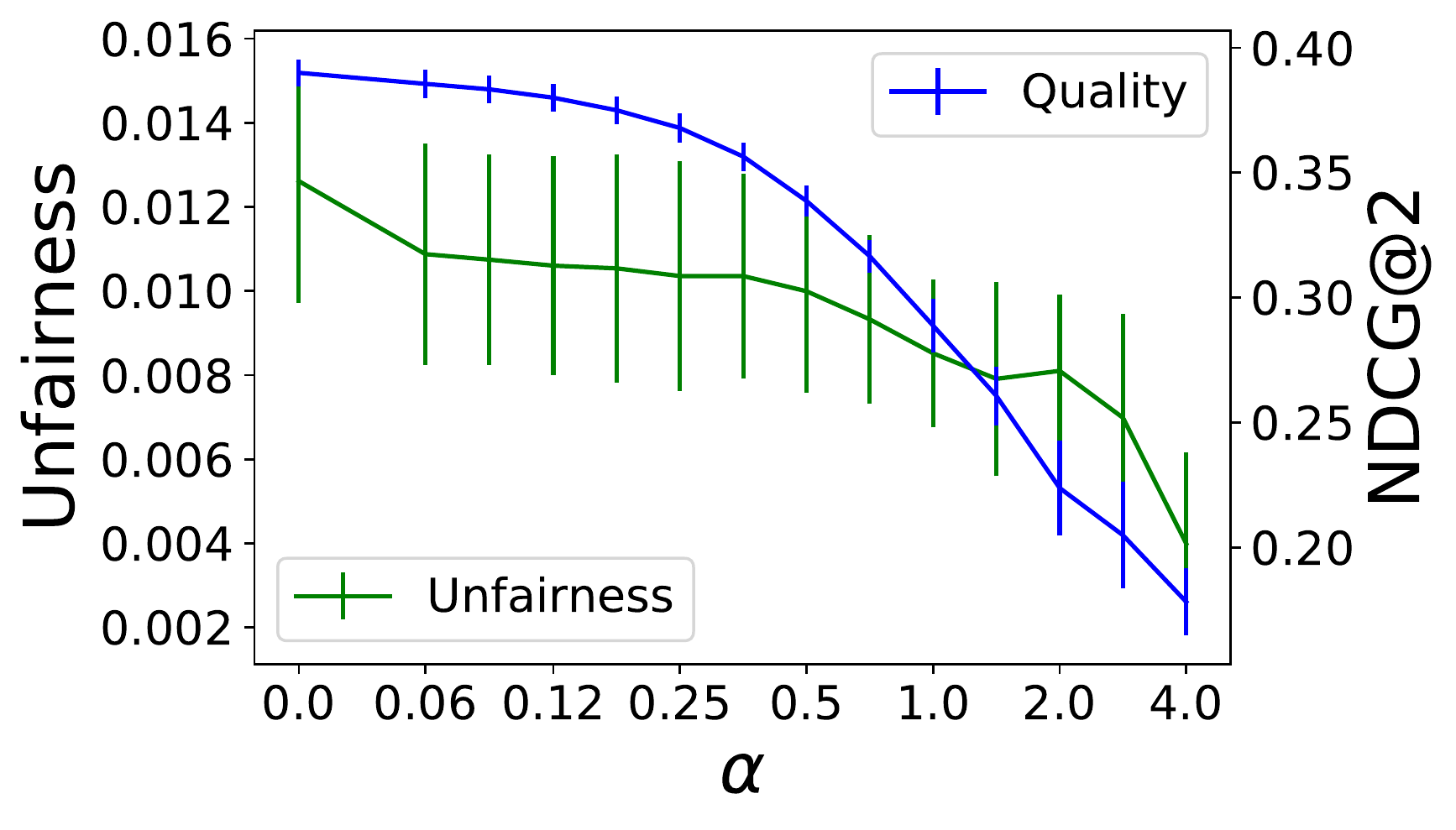}
  \caption{Demographic parity, MSMARCO}
\end{subfigure}%
\hfill
\begin{subfigure}{.33\textwidth}
  \centering
  \includegraphics[width=.95\linewidth]{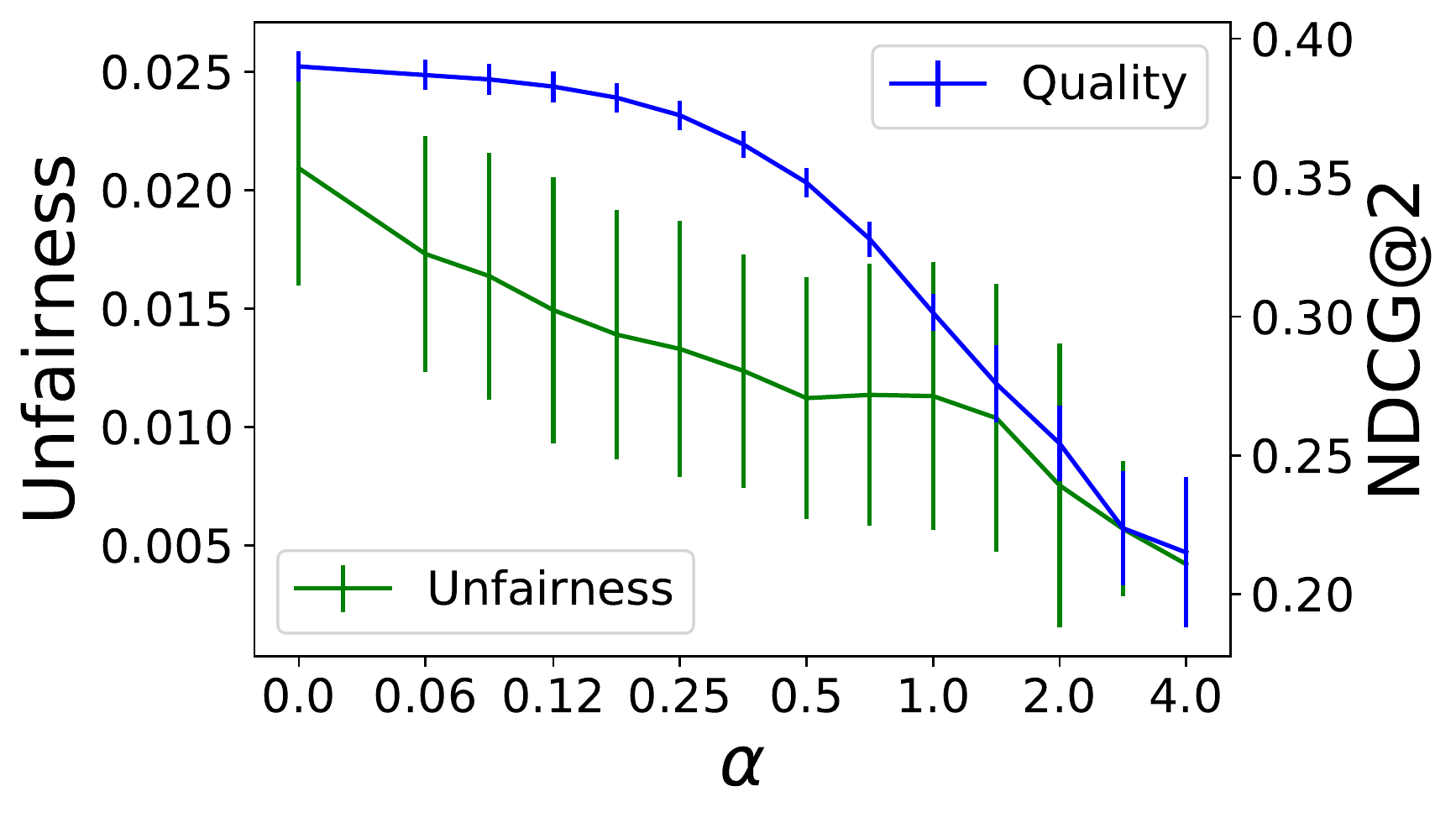}
  \caption{Equalized odds, MSMARCO}
\end{subfigure}%
\hfill
\begin{subfigure}{.33\textwidth}
  \centering
  \includegraphics[width=.95\linewidth]{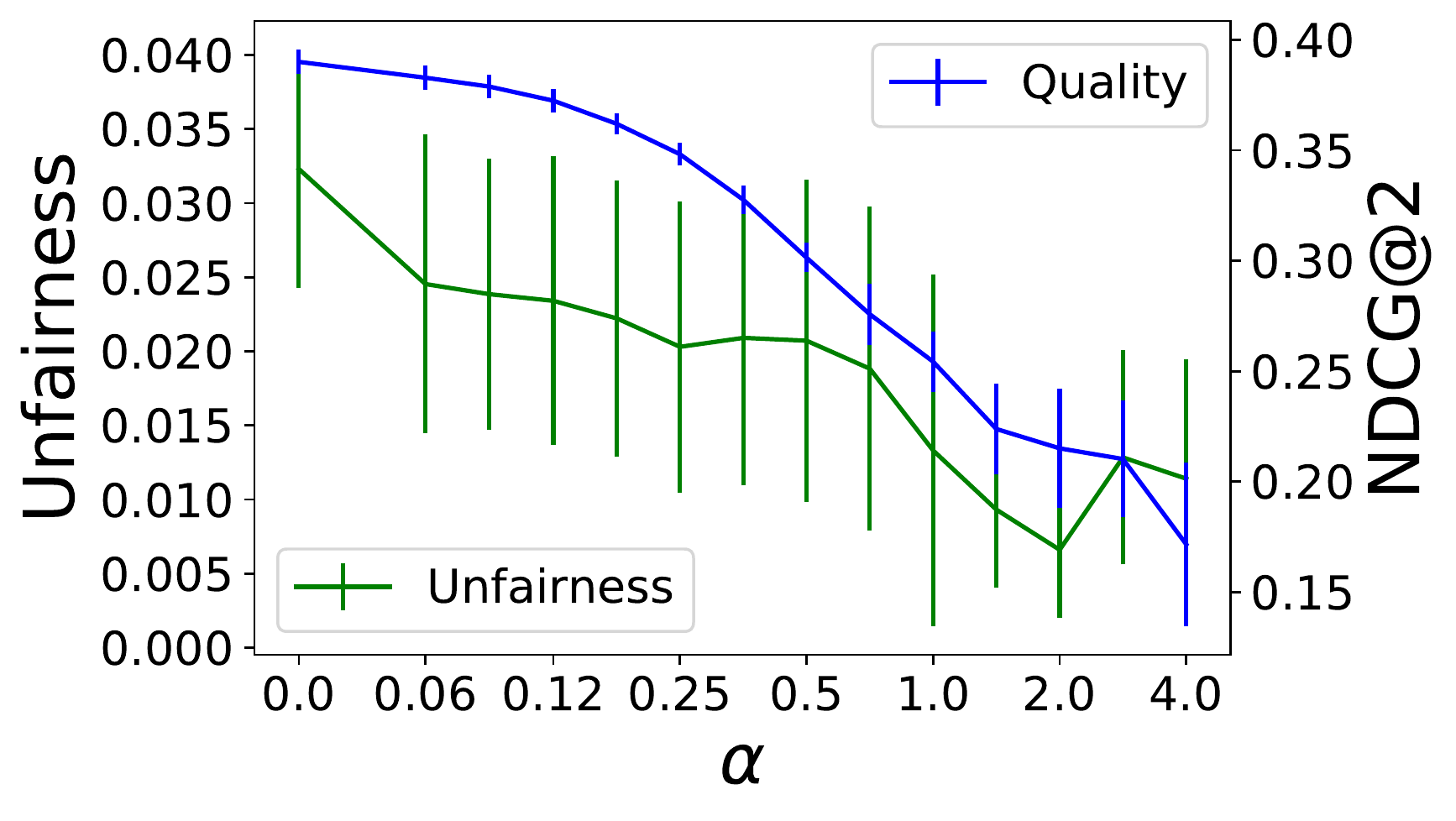}
  \caption{Equality of opportunity, MSMARCO}
\end{subfigure}
\caption{$k = 2$, \textit{ext}}
\end{figure*}


\begin{figure*}[h]
\begin{subfigure}{.33\textwidth}
  \centering
  \includegraphics[width=.95\linewidth]{MSMARCO_NDCG_demographic_parity_amortized_com_0_3}
  \caption{Demographic parity, MSMARCO}
\end{subfigure}%
\hfill
\begin{subfigure}{.33\textwidth}
  \centering
  \includegraphics[width=.95\linewidth]{MSMARCO_NDCG_equal_odds_amortized_com_0_3}
  \caption{Equalized odds, MSMARCO}
\end{subfigure}%
\hfill
\begin{subfigure}{.33\textwidth}
  \centering
  \includegraphics[width=.95\linewidth]{MSMARCO_NDCG_equal_opp_amortized_com_0_3}
  \caption{Equality of opportunity, MSMARCO}
\end{subfigure}
\caption{$k = 3$, \textit{com}}
\end{figure*}

\begin{figure*}[h]
\begin{subfigure}{.33\textwidth}
  \centering
  \includegraphics[width=.95\linewidth]{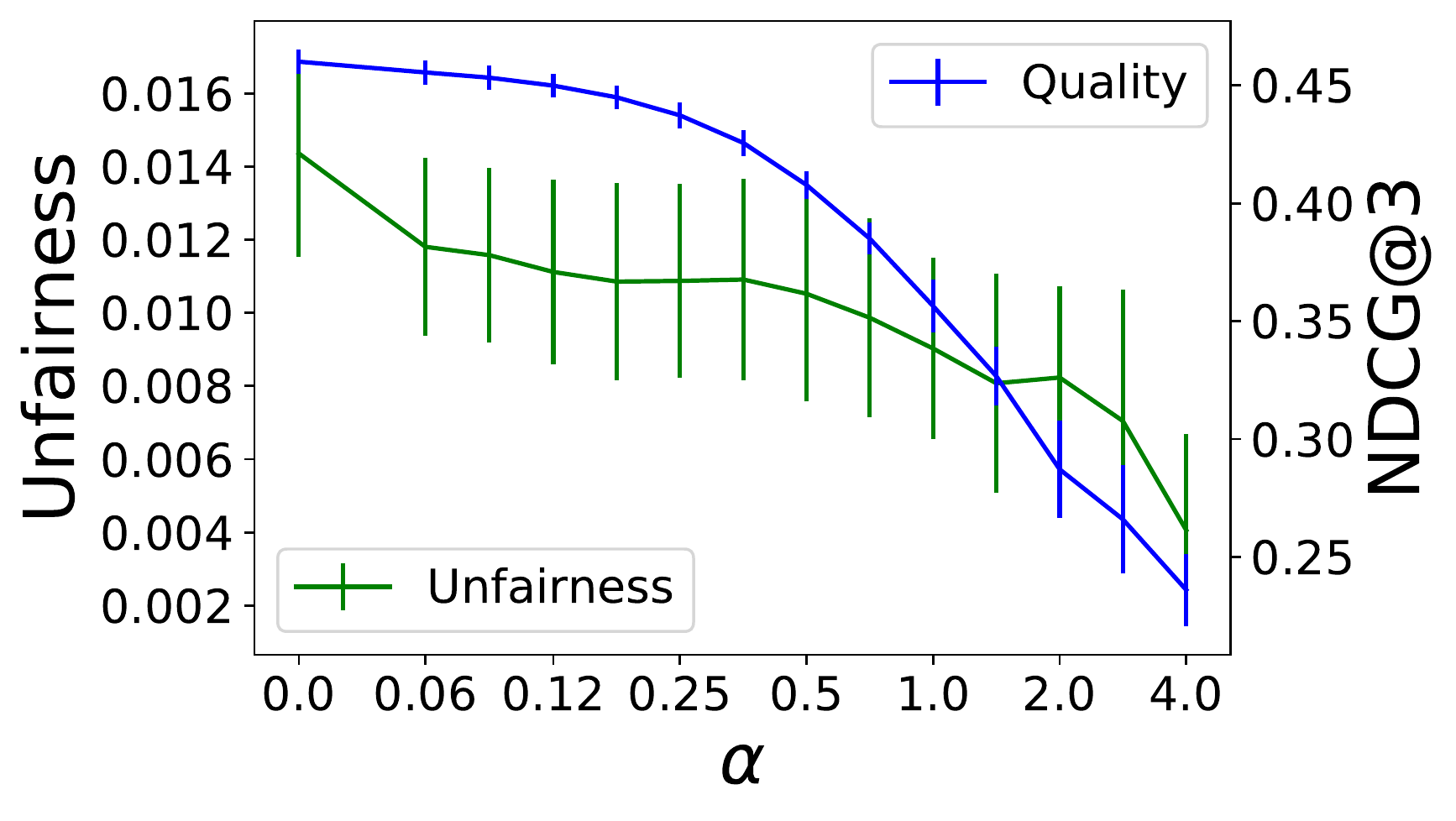}
  \caption{Demographic parity, MSMARCO}
\end{subfigure}%
\hfill
\begin{subfigure}{.33\textwidth}
  \centering
  \includegraphics[width=.95\linewidth]{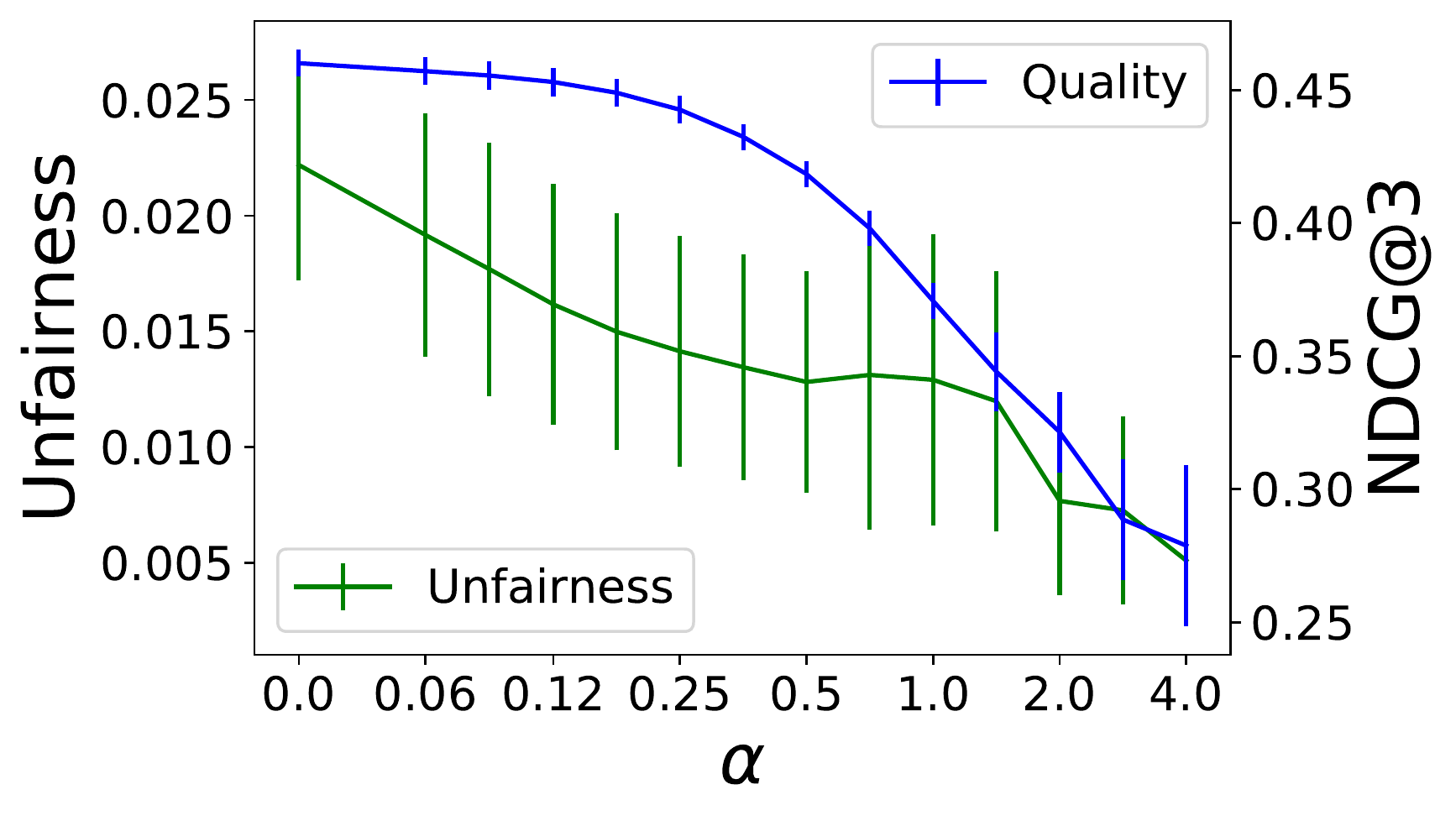}
  \caption{Equalized odds, MSMARCO}
\end{subfigure}%
\hfill
\begin{subfigure}{.33\textwidth}
  \centering
  \includegraphics[width=.95\linewidth]{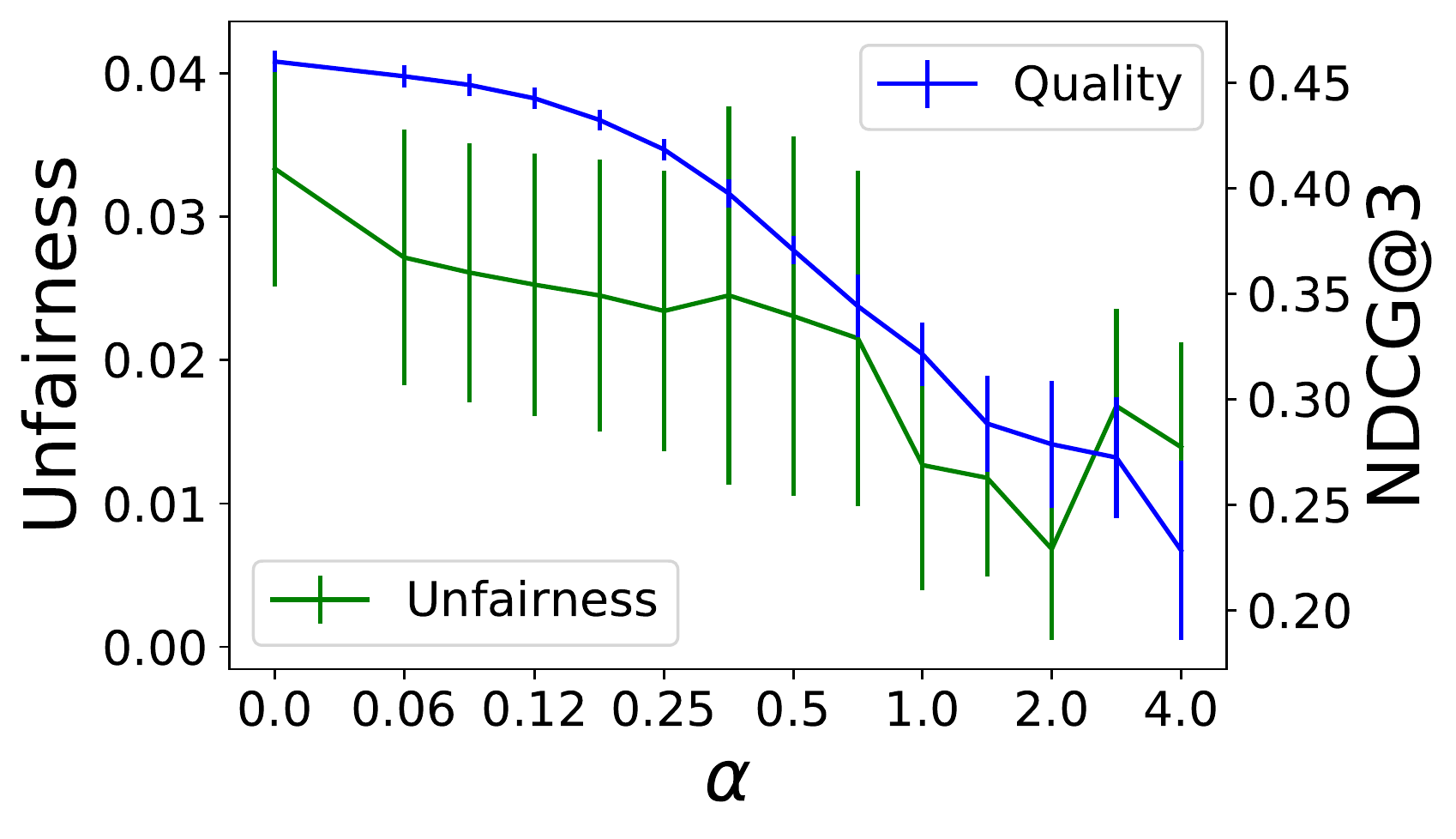}
  \caption{Equality of opportunity, MSMARCO}
\end{subfigure}
\caption{$k = 3$, \textit{ext}}
\end{figure*}


\begin{figure*}[h]
\begin{subfigure}{.33\textwidth}
  \centering
  \includegraphics[width=.95\linewidth]{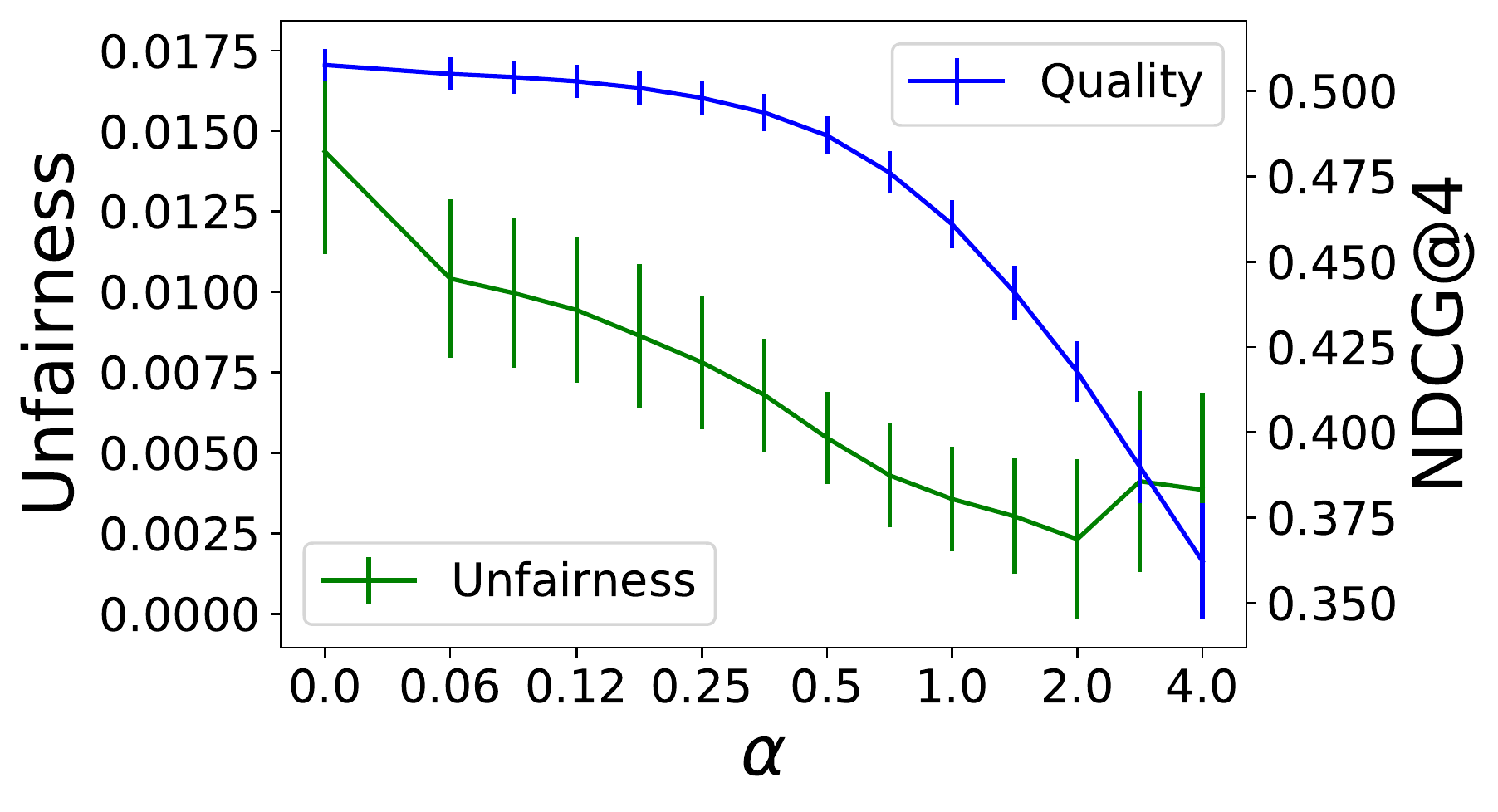}
  \caption{Demographic parity, MSMARCO}
\end{subfigure}%
\hfill
\begin{subfigure}{.33\textwidth}
  \centering
  \includegraphics[width=.95\linewidth]{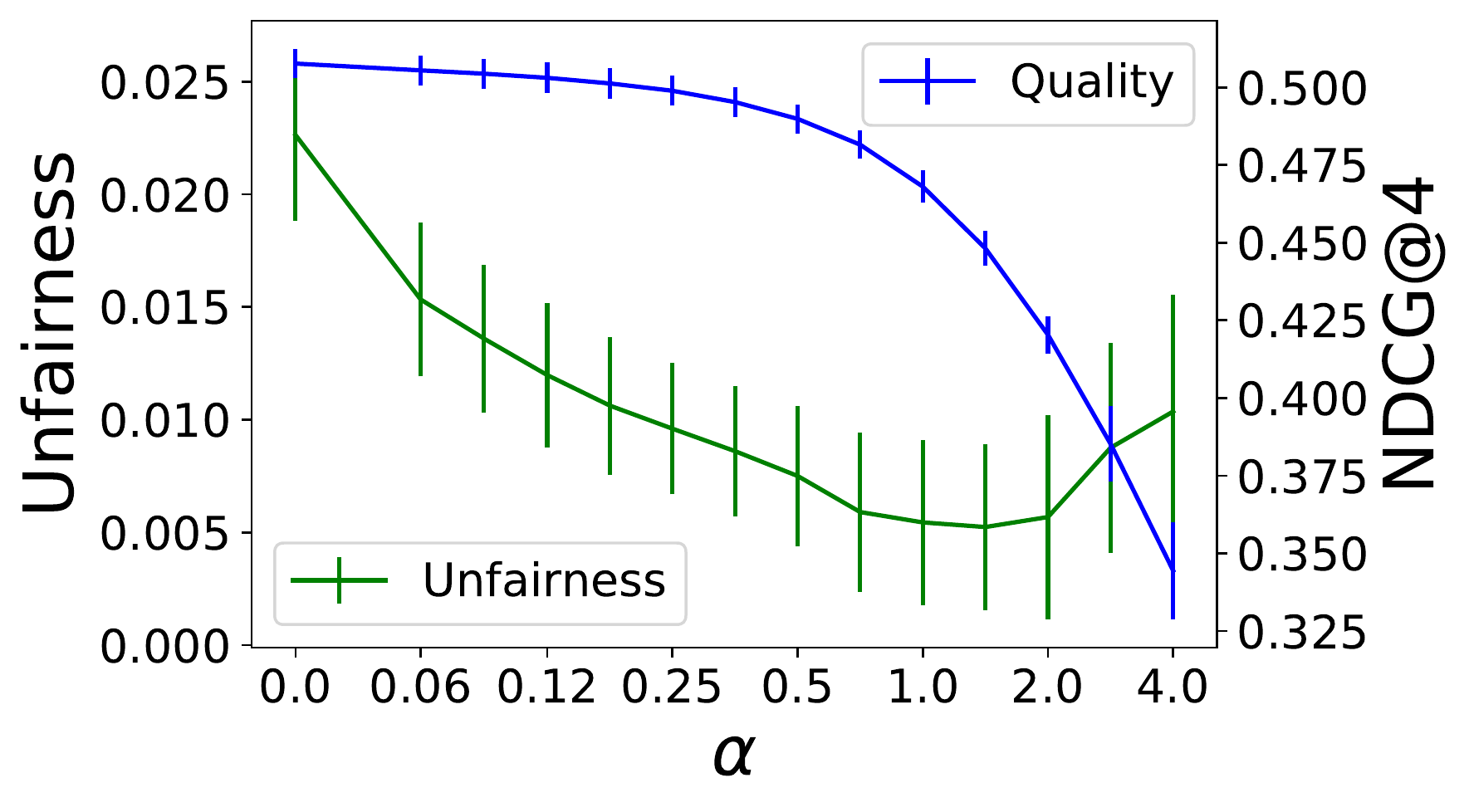}
  \caption{Equalized odds, MSMARCO}
\end{subfigure}%
\hfill
\begin{subfigure}{.33\textwidth}
  \centering
  \includegraphics[width=.95\linewidth]{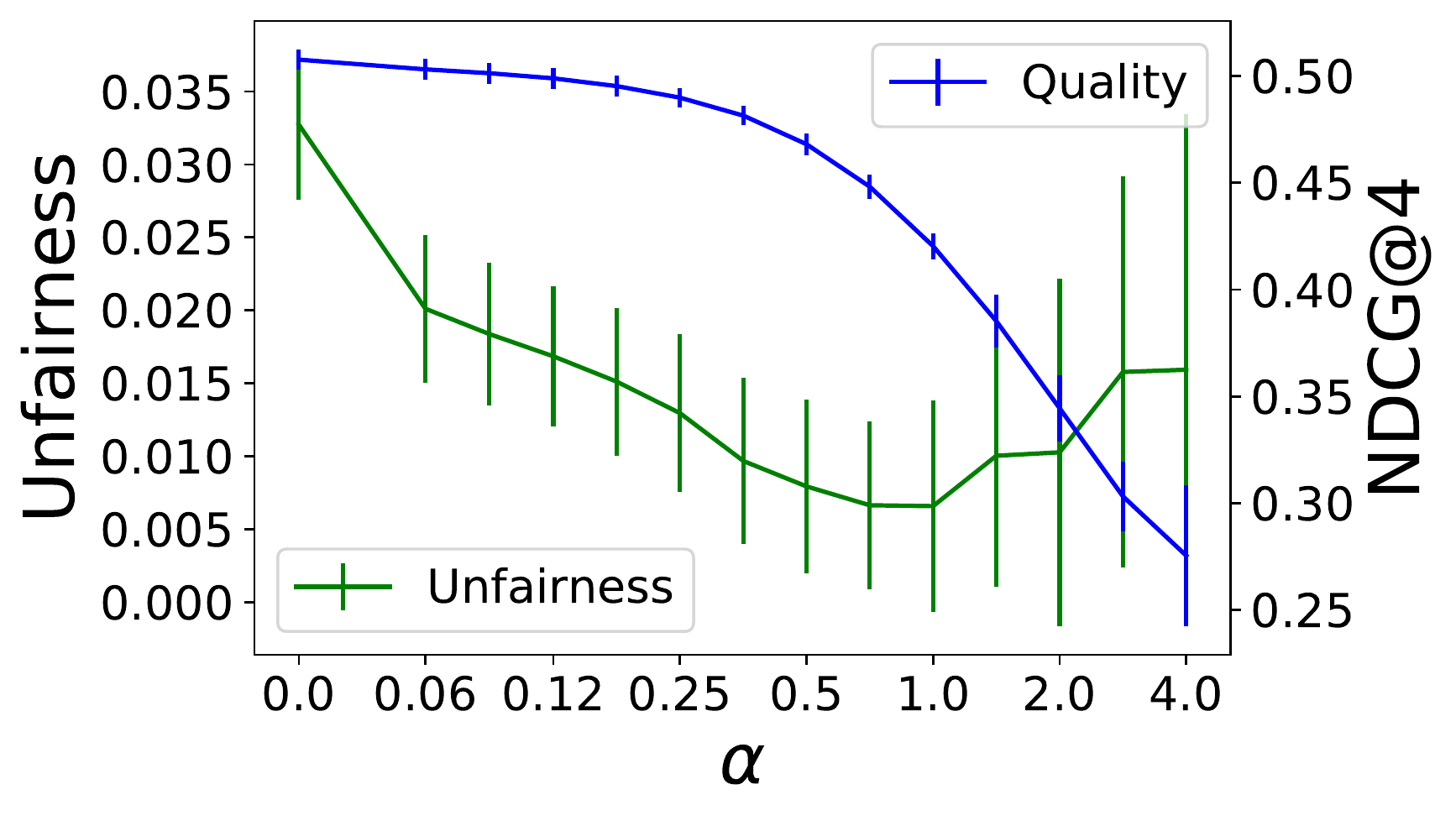}
  \caption{Equality of opportunity, MSMARCO}
\end{subfigure}
\caption{$k = 4$, \textit{com}}
\end{figure*}

\begin{figure*}[h]
\begin{subfigure}{.33\textwidth}
  \centering
  \includegraphics[width=.95\linewidth]{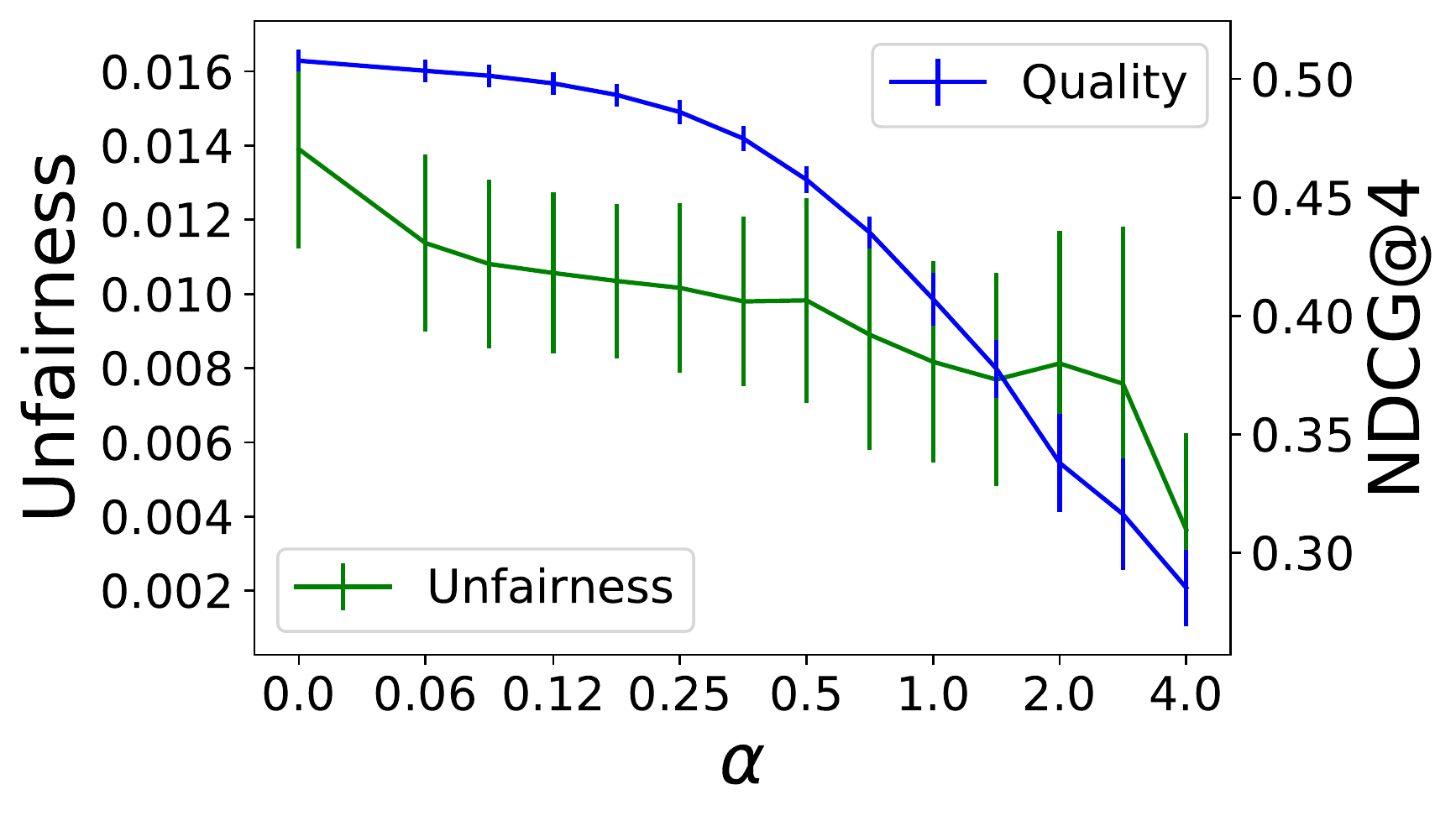}
  \caption{Demographic parity, MSMARCO}
\end{subfigure}%
\hfill
\begin{subfigure}{.33\textwidth}
  \centering
  \includegraphics[width=.95\linewidth]{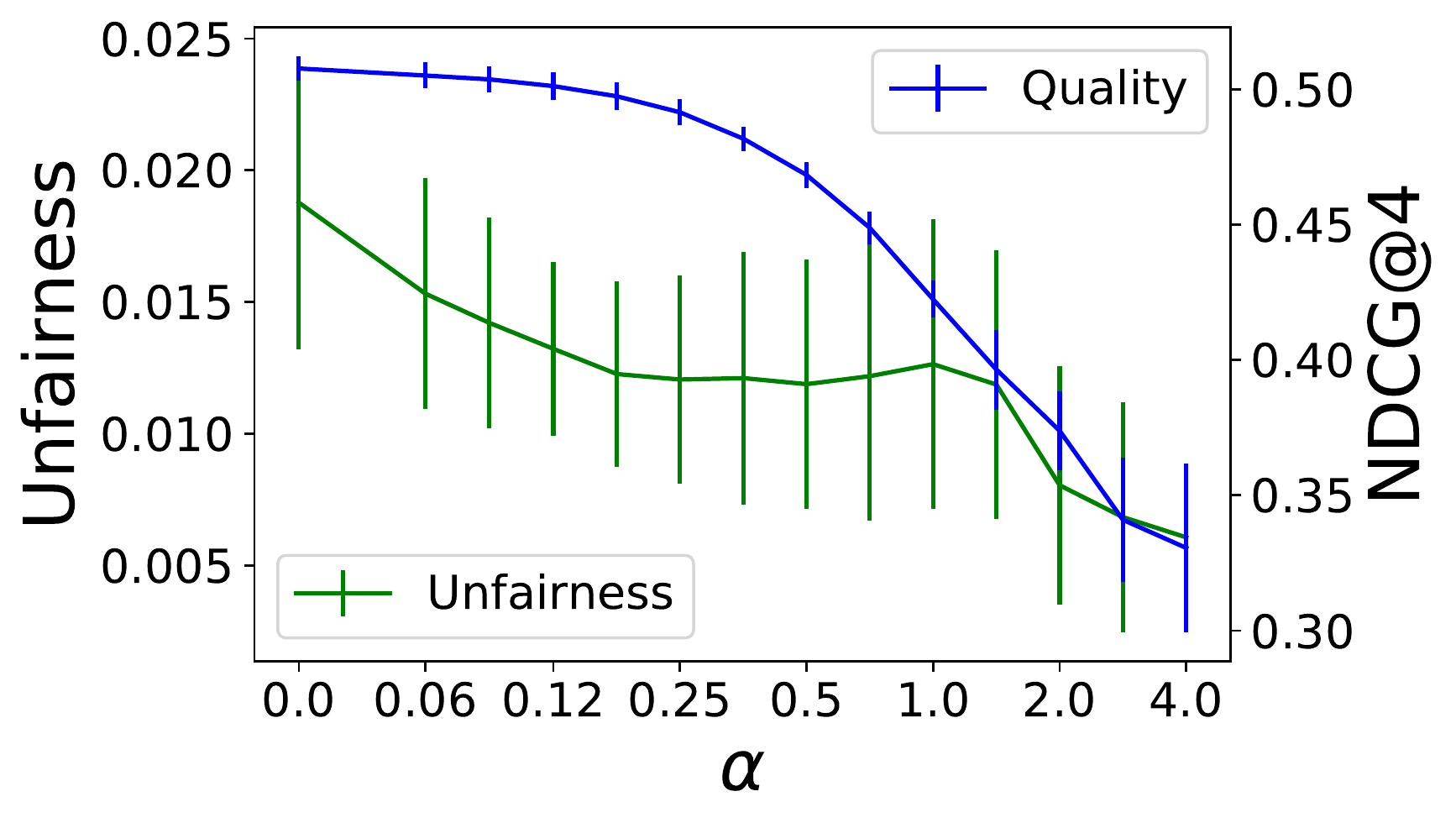}
  \caption{Equalized odds, MSMARCO}
\end{subfigure}%
\hfill
\begin{subfigure}{.33\textwidth}
  \centering
  \includegraphics[width=.95\linewidth]{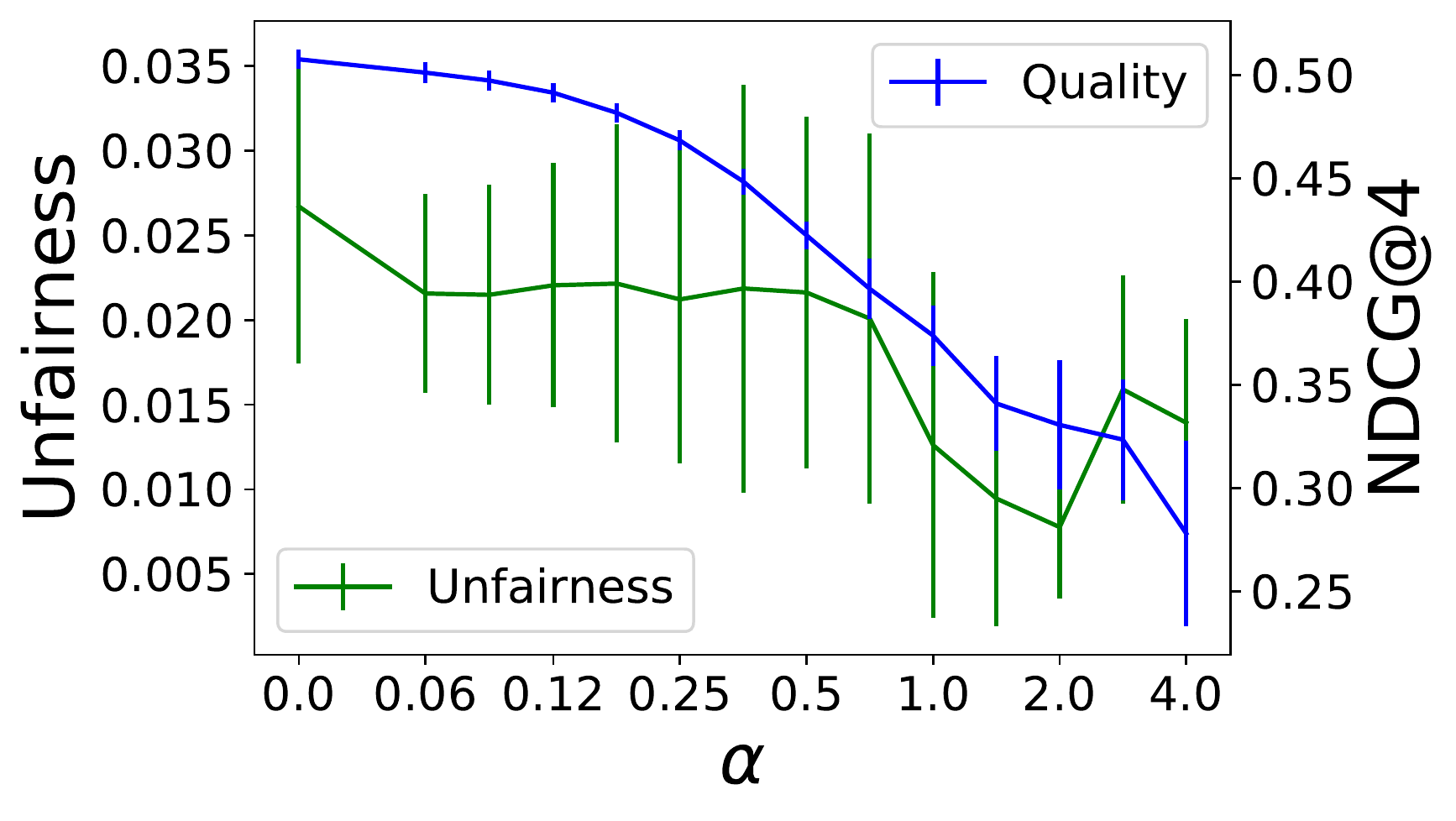}
  \caption{Equality of opportunity, MSMARCO}
\end{subfigure}
\caption{$k = 4$, \textit{ext}}
\end{figure*}


\begin{figure*}[h]
\begin{subfigure}{.33\textwidth}
  \centering
  \includegraphics[width=.95\linewidth]{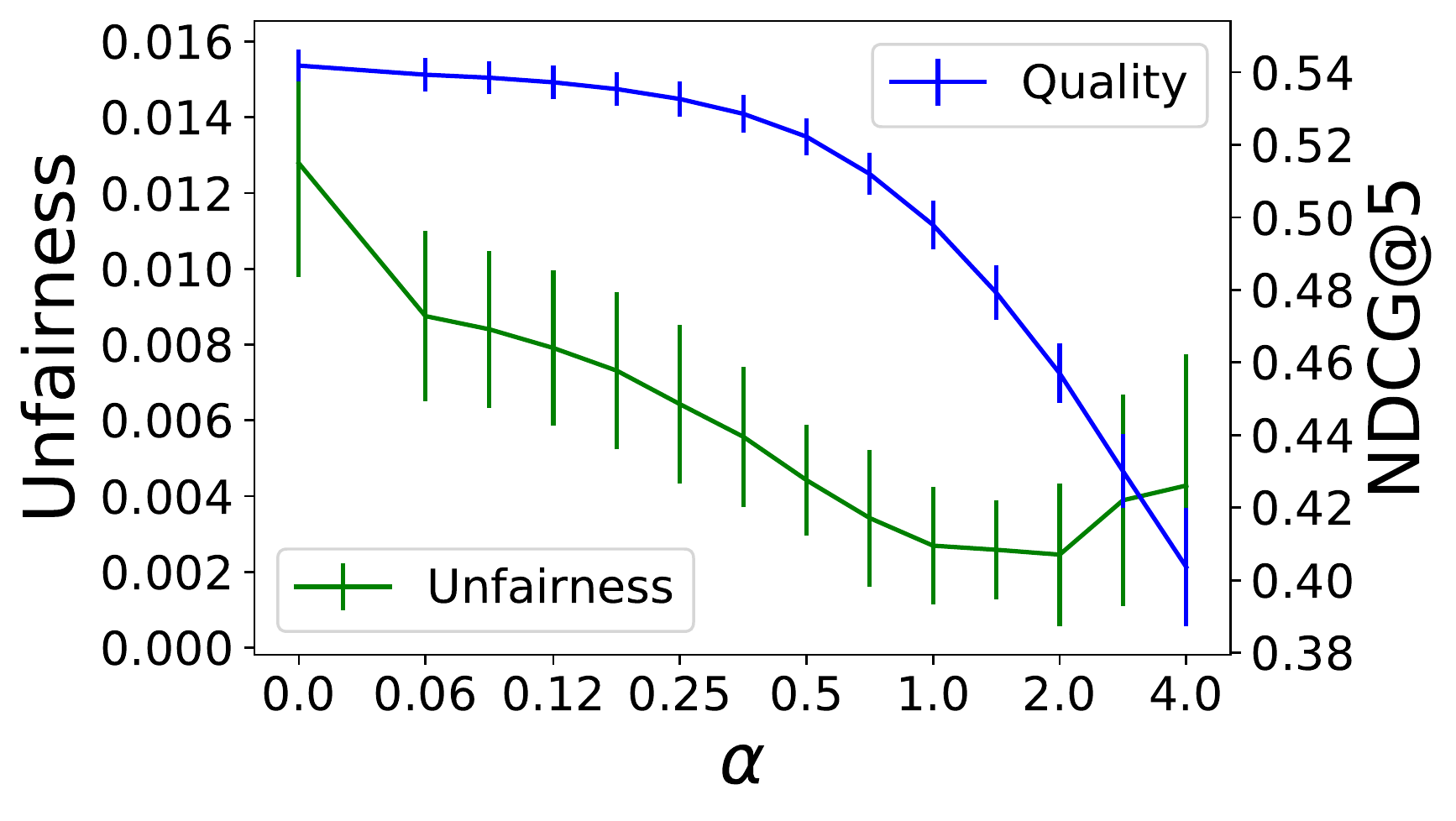}
  \caption{Demographic parity, MSMARCO}
\end{subfigure}%
\hfill
\begin{subfigure}{.33\textwidth}
  \centering
  \includegraphics[width=.95\linewidth]{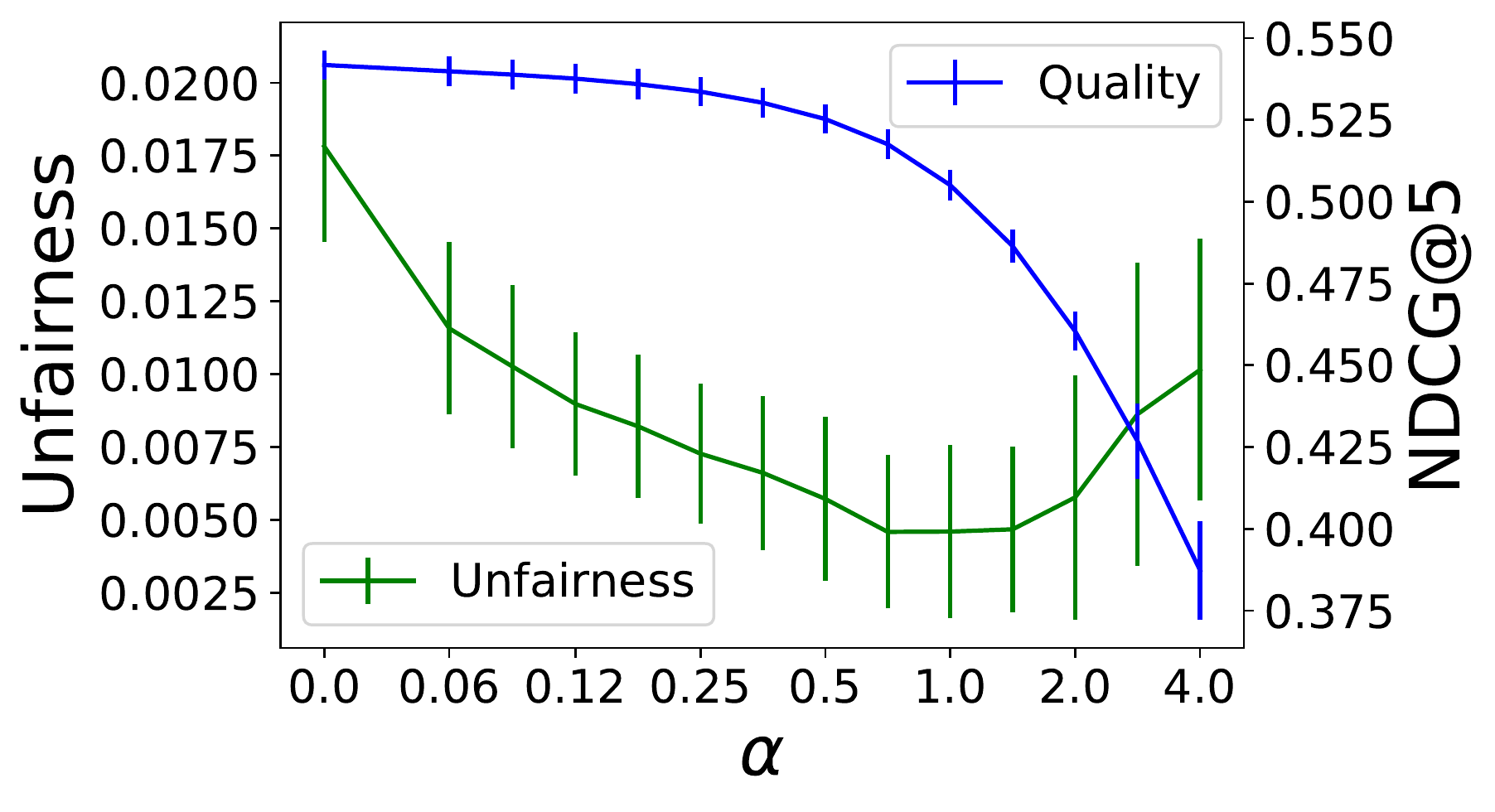}
  \caption{Equalized odds, MSMARCO}
\end{subfigure}%
\hfill
\begin{subfigure}{.33\textwidth}
  \centering
  \includegraphics[width=.95\linewidth]{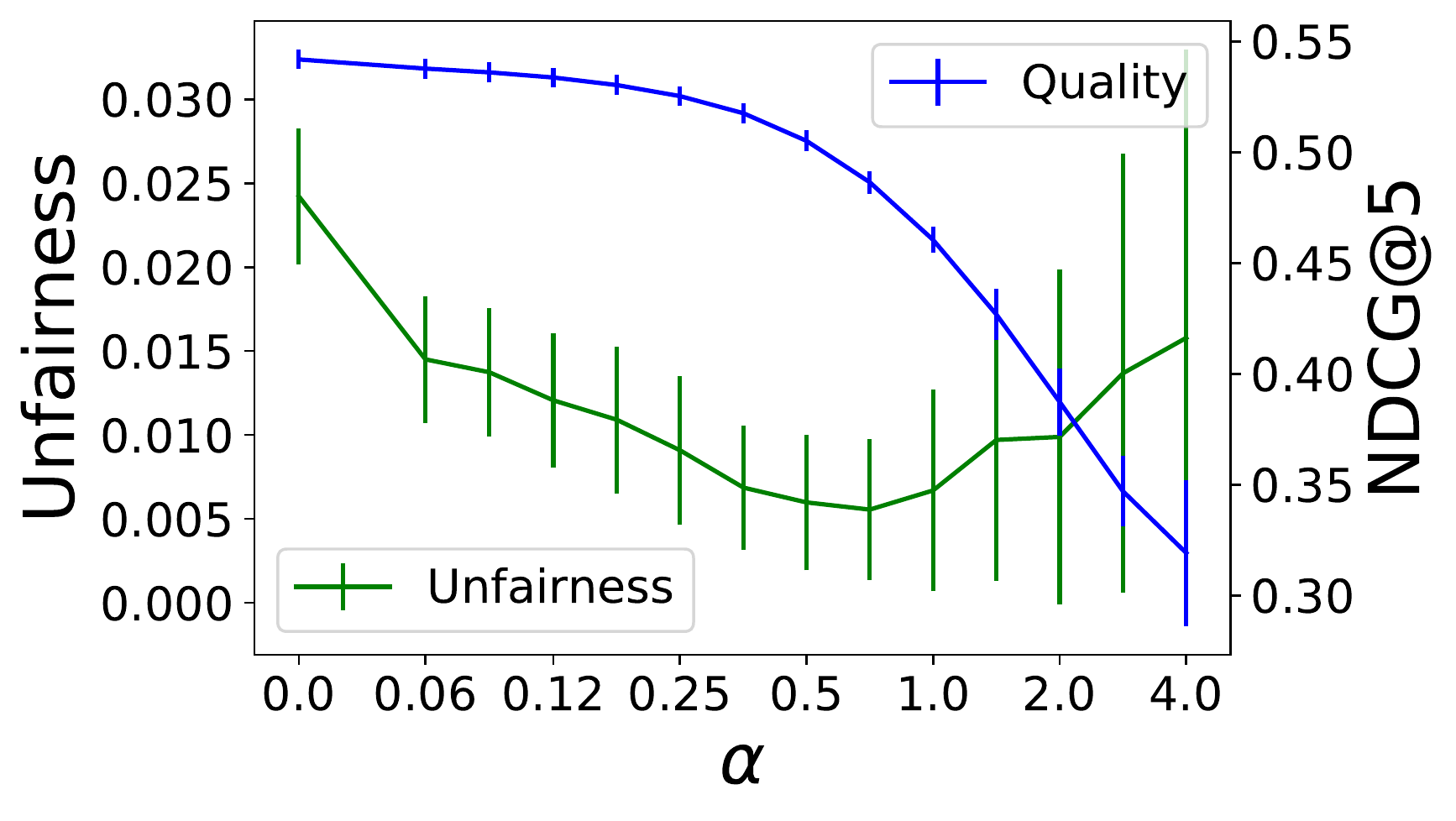}
  \caption{Equality of opportunity, MSMARCO}
\end{subfigure}
\caption{$k = 5$, \textit{com}}
\end{figure*}

\begin{figure*}[h]
\begin{subfigure}{.33\textwidth}
  \centering
  \includegraphics[width=.95\linewidth]{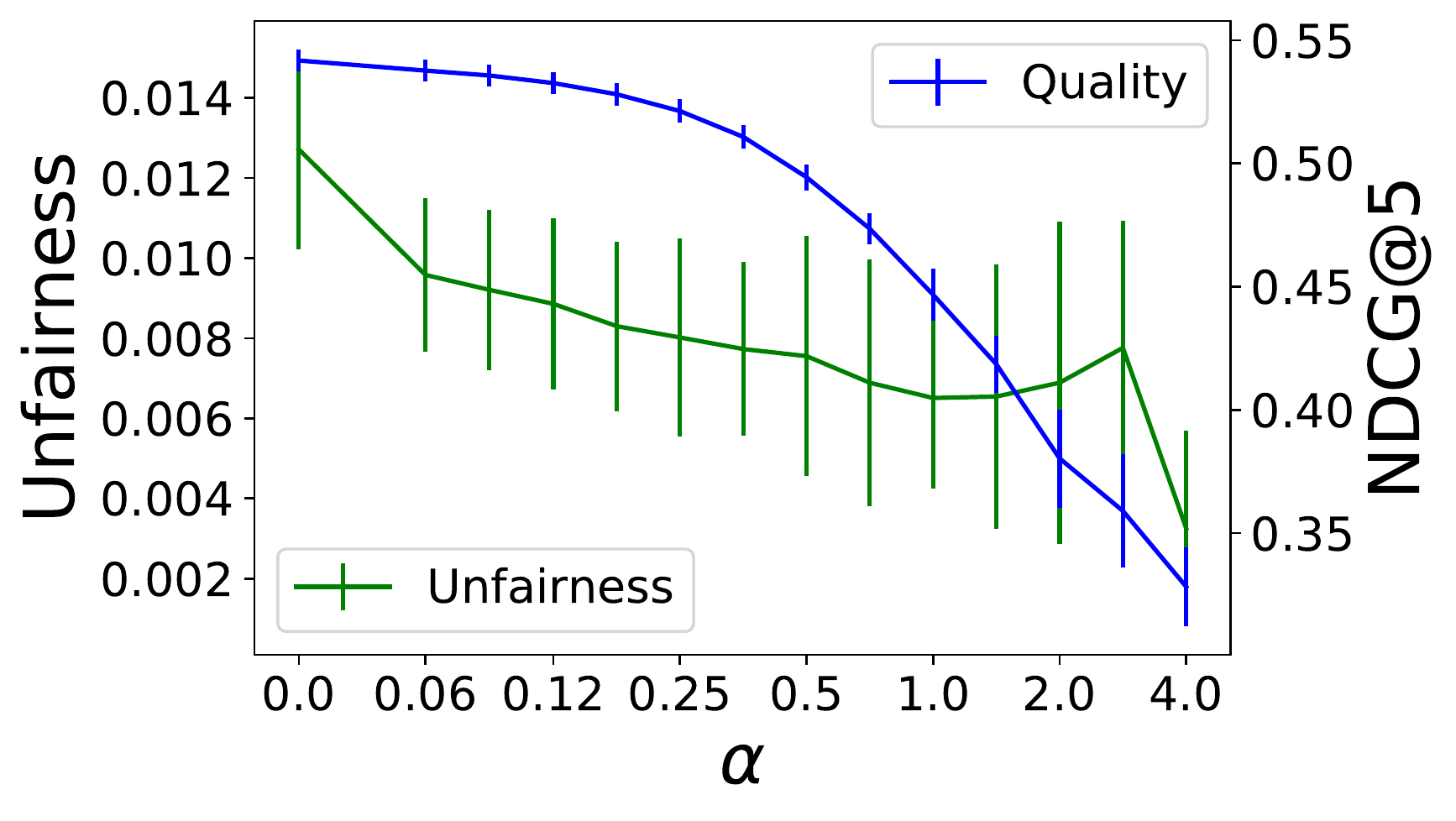}
  \caption{Demographic parity, MSMARCO}
\end{subfigure}%
\hfill
\begin{subfigure}{.33\textwidth}
  \centering
  \includegraphics[width=.95\linewidth]{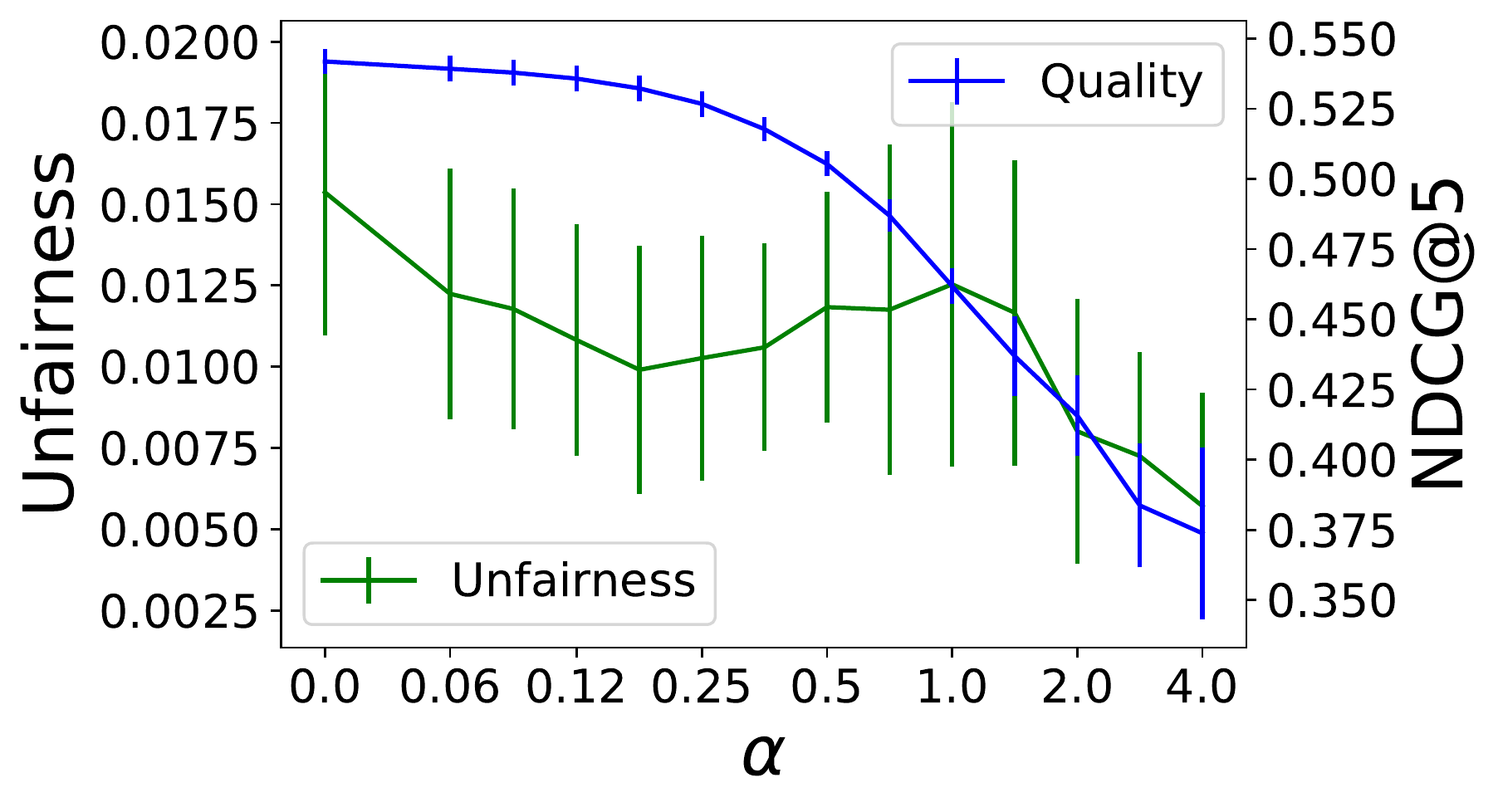}
  \caption{Equalized odds, MSMARCO}
\end{subfigure}%
\hfill
\begin{subfigure}{.33\textwidth}
  \centering
  \includegraphics[width=.95\linewidth]{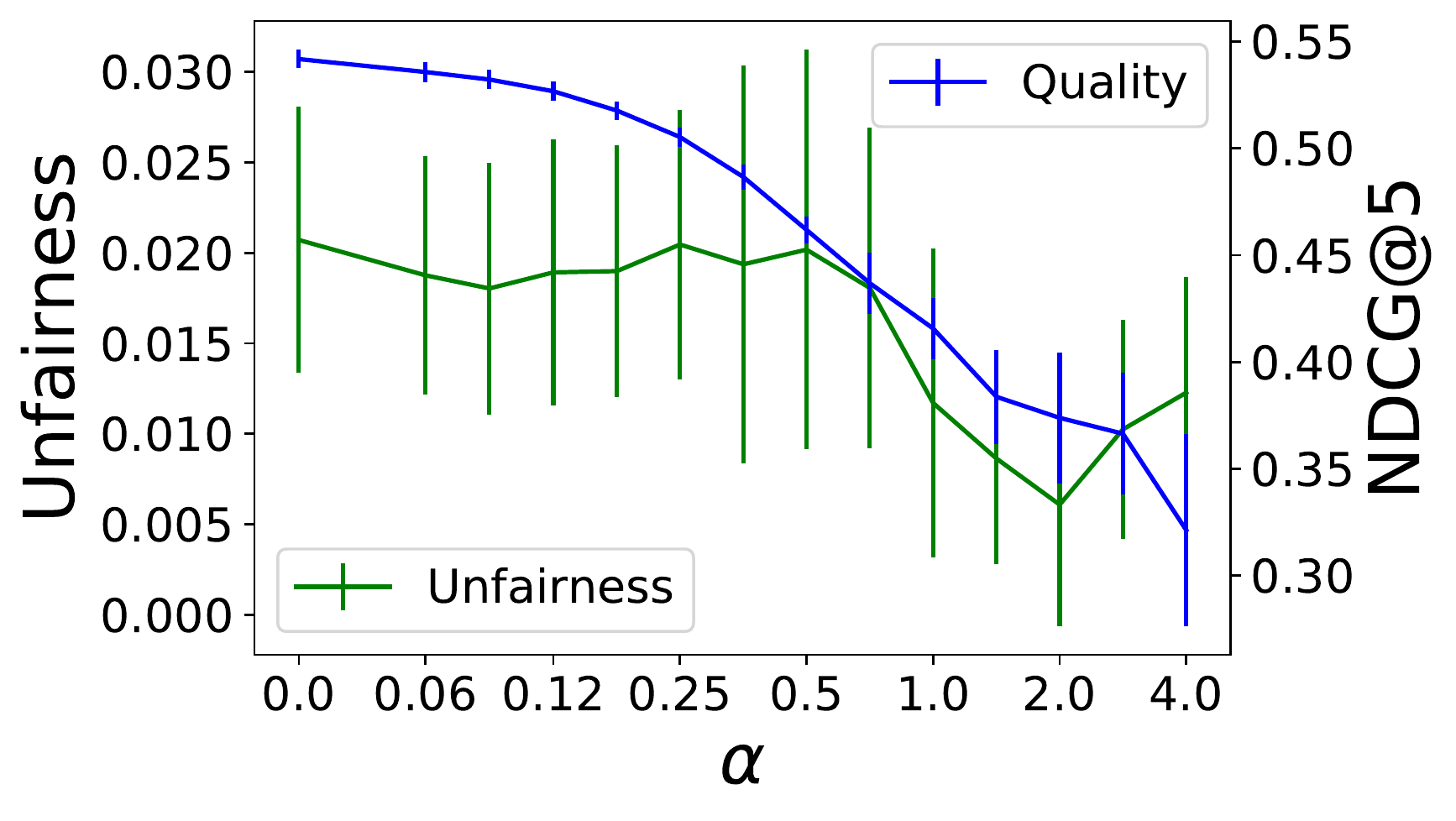}
  \caption{Equality of opportunity, MSMARCO}
\end{subfigure}
\caption{$k = 5$, \textit{ext}}
\end{figure*}

\end{document}